\newcommand{\SGLD}{\textsc{SGLD}~}  
\newcommand{\IF}{\text{ if }{}}
\newcommand{\OTHERWISE}{\text{ otherwise }{}}
\newcommand{\OTHERWISEIF}{\text{ otherwise if }{}}
\newcommand{\FORALLTEXT}{\text{ for all }{}}
\newcommand{\DERIV}{\mathrm{d}}
\newcommand{\LCHAR}{\L\text{ }}
\newcommand{\GLDTEXT}{\textsc{GLD}}
\newcommand{\GLDTEXTSPACE}{\textsc{GLD }}
\newcommand{\SGLDTEXT}{\textsc{SGLD}}
\newcommand{\SGLDTEXTSPACE}{\textsc{SGLD }}
\newcommand{\GFTEXT}{\textsc{GF}}
\newcommand{\GFTEXTSPACE}{\textsc{GF }}
\newcommand{\OPNORM}{\mathrm{op}{}}
\newcommand{\vecW}{\mathbf{w}}
\newcommand{\vecZ}{\mathbf{z}}
\newcommand{\vecV}{\mathbf{v}}
\newcommand{\vecB}{\mathbf{B}}
\newcommand{\vecU}{\mathbf{u}}
\newcommand{\vecOrigin}{\vec{\mathbf{0}}}
\newcommand{\vecEps}{\pmb{\varepsilon}}
\newcommand{\CPI}{\normalfont \textsf{C}_{\textsc{pi}}}
\newcommand{\arxiv}[1]{\iftoggle{neurips}{}{#1}}
\newcommand{\neurips}[1]{\iftoggle{neurips}{#1}{}}
\newcommand{\neutralize}[1]{\expandafter\let\csname c@#1\endcsname\count@}
\declaretheorem[name=Theorem,parent=section]{theorem}
\declaretheorem[name=Lemma,parent=section]{lemma}
\declaretheorem[name=Assumption, parent=section]{assumption}
\declaretheorem[name=Condition, parent=section]{condition}
\declaretheorem[qed=$\triangleleft$,name=Example,style=definition, parent=section]{example}
\declaretheorem[name=Remark, parent=section]{remark}
\declaretheorem[name=Proposition, parent=section]{proposition}
  \renewenvironment{proof}[1][Proof]%
  {%
   \par\noindent{\bfseries\upshape {#1.}\ }%
  }%
  {\qed\newline}
\newtheorem{definition}{Definition}[section]
\xpatchcmd{\proof}{\itshape}{\normalfont\proofnameformat}{}{}
\renewcommand{\eqref}[1]{\texorpdfstring{\hyperref[#1]{Eq.~(\ref*{#1})}}{Eq.~(\ref*{#1})}}
\Crefname{assumption}{Assumption}{Assumptions}
\Crefname{subsubsection}{Section}{Sections}
    \let\Cref\crtCref
    \let\cref\crtcref
\DeclareDocumentCommand{\XDeclarePairedDelimiter}{mm}
 {
  \__egreg_delimiter_clear_keys: % reset to the default
  \keys_set:nn { egreg/delimiters } { #2 }
  \use:x % we want to expand the values of the token variables set with the keys
   {
    \exp_not:n {\NewDocumentCommand{#1}{sO{}m} }
     {
      \exp_not:n { \IfBooleanTF{##1} }
       {
        \exp_not:N \egreg_paired_delimiter_expand:nnnn
         { \exp_not:V \l_egreg_delimiter_left_tl }
         { \exp_not:V \l_egreg_delimiter_right_tl }
         { \exp_not:n { ##3 } }
         { \exp_not:V \l_egreg_delimiter_subscript_tl }
       }
       {
        \exp_not:N \egreg_paired_delimiter_fixed:nnnnn 
         { \exp_not:n { ##2 } }
         { \exp_not:V \l_egreg_delimiter_left_tl }
         { \exp_not:V \l_egreg_delimiter_right_tl }
         { \exp_not:n { ##3 } }
         { \exp_not:V \l_egreg_delimiter_subscript_tl }
       }
     }
   }
 }
\XDeclarePairedDelimiter{\supnorm}{
  left=\lVert,
  right=\rVert,
  subscript=\infty
  }
\newtheorem{theorem}{Theorem}
\newtheorem{lemma}{Lemma}
\newtheorem{remark}{Remark}
\newtheorem{corollary}{Corollary}
\newtheorem{definition}{Definition}
\title{Langevin Dynamics: A Unified Perspective on  Optimization via Lyapunov Potentials}
\author{%
August Y. Chen$^{\dag}$  \qquad Ayush Sekhari\(^{\diamondsuit}\)  \qquad Karthik Sridharan$^{\dag}$ 
\vspace{10pt} 
\\
\small{$^\dag$Cornell University \quad \(^\diamondsuit\)MIT} 
}}
\let\vec\undefined
\newcommand\blfootnote[1]{%
  \begingroup
  \renewcommand\thefootnote{}\footnote{#1}%
  \addtocounter{footnote}{-1}%
  \endgroup
}
\theoremstyle{definition}  %Sets style of subsequent newtheorems to 'definition'
\theoremstyle{plain}
\newtheorem{corollary}{Corollary}} 
\newtheorem*{theorem*}{Theorem}
\xpatchcmd{\proof}{\itshape}{\normalfont\proofnameformat}{}{}
\newcommand{\proofnameformat}{\bfseries}
\newcommand{\pref}[1]{\prettyref{#1}}
\newcommand{\savehyperref}[2]{\texorpdfstring{\hyperref[#1]{#2}}{#2}}
\newcommand\numberthis{\addtocounter{equation}{1}\tag{\theequation}}
\DeclarePairedDelimiter{\abs}{\lvert}{\rvert} 
\DeclarePairedDelimiter{\brk}{[}{]}
\DeclarePairedDelimiter{\crl}{\{}{\}}
\DeclarePairedDelimiter{\prn}{(}{)}
\DeclarePairedDelimiter{\nrm}{\|}{\|}
\DeclarePairedDelimiter{\tri}{\langle}{\rangle}
\DeclarePairedDelimiter{\floor}{\lfloor}{\rfloor}
\def\ddefloop#1{\ifx\ddefloop#1\else\ddef{#1}\expandafter\ddefloop\fi}
\def\ddef#1{\expandafter\def\csname bb#1\endcsname{\ensuremath{\mathbb{#1}}}}
\def\ddefloop#1{\ifx\ddefloop#1\else\ddef{#1}\expandafter\ddefloop\fi}
\def\ddef#1{\expandafter\def\csname b#1\endcsname{\ensuremath{\mathbf{#1}}}}
\def\ddef#1{\expandafter\def\csname c#1\endcsname{\ensuremath{\mathcal{#1}}}}
\def\ddef#1{\expandafter\def\csname h#1\endcsname{\ensuremath{\widehat{#1}}}}
\def\ddef#1{\expandafter\def\csname hc#1\endcsname{\ensuremath{\widehat{\mathcal{#1}}}}}
\def\ddef#1{\expandafter\def\csname t#1\endcsname{\ensuremath{\widetilde{#1}}}}
\def\ddef#1{\expandafter\def\csname tc#1\endcsname{\ensuremath{\widetilde{\mathcal{#1}}}}}
\newcommand{\ball}{\mathbb{B}}
\newcommand{\grad}{\nabla}
\renewcommand{\epsilon}{\varepsilon}
\begin{document}
\maketitle 

\begin{abstract}
We study the problem of non-convex optimization using Stochastic Gradient Langevin Dynamics (SGLD).
SGLD is a natural and popular variation of stochastic gradient descent where at each step, appropriately scaled Gaussian noise is added.
To our knowledge, the only strategy for showing global convergence of SGLD on the loss function is to show that SGLD can sample from a stationary distribution which assigns larger mass when the function is small (the Gibbs measure), and then to convert these guarantees to optimization results. 

We employ a new strategy to analyze the convergence of SGLD to global minima, based on Lyapunov potentials and optimization. We convert the same mild conditions from previous works on SGLD into geometric properties based on Lyapunov potentials. 
This adapts well to the case with a stochastic gradient oracle, which is natural for machine learning applications where one wants to minimize population loss but only has access to stochastic gradients via minibatch training samples.
Here we provide 1) improved rates in the setting of previous works studying SGLD for optimization, 2) the first finite gradient complexity guarantee for SGLD where the function is Lipschitz and the Gibbs measure defined by the function satisfies a Poincar\'e Inequality, and 3) prove if continuous-time Langevin Dynamics succeeds for optimization, then discrete-time SGLD succeeds under mild regularity assumptions.
\end{abstract}
\label{sec:abstract}

\section{Introduction}\label{sec:introduction}
We consider the minimization problem
\[ \arg \min_{\vecW \in \mathbb{R}^d} F(\vecW).\]
More specifically we are interested in returning a vector $\vecW$ such that $F(\vecW)-\min_{\vecW}F(\vecW) \le \epsilon$ for some desired sub-optimality $\epsilon>0$. In Machine Learning (ML) settings, $F$ can be thought of as population loss and $\vecW$ as the parameters of a model we are using for the learning problem. Additionally, in ML one does not have direct access to $F$ but only via samples $\vecZ_1,\ldots,\vecZ_n$ drawn iid from some unknown but fixed distribution $D$ and we assume that $\mathbb{E}_{\vecZ\sim D}[f(\vecW;\vecZ)] =  F(\vecW)$. Here the $\vecZ_i$ can be thought of as input-output pairs and $f(\vecW;\vecZ)$ can be thought of as the loss of the model parametrized by weights $\vecW$ on instance $\vecZ$. When the objective function/loss function is differentiable (or sub-differentiable), then a common method of choice in practice is to use gradient descent (GD), stochastic gradient descent (SGD) and its variants to perform the optimization. To understand their properties theoretically, we aim to understand how many gradient computations are necessary to find an $\epsilon$-suboptimal $\vecW$, and for which functions $F$ this is possible. Under geometric conditions such as convexity, the properties of GD and SGD are well-understood. For convex functions, methods from acceleration to variance reduction have been developed to speed up runtime in a variety of settings. Matching lower and upper bounds exist for both exact and stochastic gradients for convex functions and smaller classes such as strongly convex functions \citep{bubeck2015convex}.\blfootnote{ %$^\star$Equal Contribution. \\ \null\;\;\;\;\; 
{\scriptsize~~\texttt{Emails:}~\{ayc74@cornell.edu, sekhari@mit.edu, ks999@cornell.edu\}}} 

In recent years, machine learning has seen an explosion of success employing non-convex models. However, despite intensive study, the empirical success of optimizing non-convex functions to global optima is not at all well-understood theoretically. Beyond convexity, GD/SGD converges to global minima under general conditions such as Polyak-\L ojasiewicz (P\L) \citep{polyak1963gradient} \citep{lojasiewicz1963topological} and Kurdyka-\L ojasiewicz (K\L) \citep{kurdyka1998gradients} functions. Much more general geometric properties where GD/SGD can converge to global minima were found in \citep{priorpaper}, by considering what properties hold if and only if gradient flow succeeds. Additionally, researchers have proved GD/SGD with appropriate initialization can find global minima of particular non-convex problems such as matrix square root \citep{jain2017global} \citep{priorpaper}, matrix completion \citep{jin2016provable}, phase retrieval \citep{candes2015phase} \citep{chen2019gradient} \citep{tan2023online} \citep{priorpaper}, and dictionary learning \citep{arora2015simple}. 

While gradient descent/stochastic gradient descent has been shown to be successful in the aforementioned cases, there are well-known cases where GD/SGD does not work. 
A natural variant of gradient descent that is used for optimization is \textit{perturbed} gradient descent, where Gaussian noise is added to the iterates of stochastic gradient descent -- known as Langevin Dynamics -- is frequently analyzed. Formally, the iterates of \textit{Gradient Langevin Dynamics} (GLD) are given as follows:
\[ \vecW_{t+1} \leftarrow \vecW_t - \eta \nabla F(\vecW_t)+\sqrt{2\eta\beta^{-1}}\vecEps_t\numberthis\label{eq:SGLDiterates}.\]
Here $\eta>0$ is the step size, $\vecEps_t \sim \mathcal{N}(0,\bbI_d)$ is a $d$-dimensional standard Gaussian, and $\beta>0$ is the \textit{inverse temperature parameter} (when larger, noise is weighted less). When we use a stochastic gradient oracle $\nabla f(\vecW_t;\vecZ_t)$ in place of $\nabla F(\vecW_t)$, these iterates become those of \textit{Stochastic Gradient Langevin Dynamics} (SGLD). Langevin Dynamics has been shown to work in several highly non-convex settings where even gradient descent fails \citep{raginsky2017non}.

The continuous time version of \pref{eq:SGLDiterates} is the following Stochastic Differential Equation (SDE):
\[ \DERIV \vecW(t) = -\grad F(\vecW(t))\DERIV t+\sqrt{2\beta^{-1}}\DERIV \vecB(t)\numberthis\label{eq:LangevinSDE}.\]
Here $\vecB(t)$ denotes a standard Brownian motion in $\mathbb{R}^d$. This is known as the \textit{Langevin Diffusion}. Broadly, all of these recursions are known as \textit{Langevin Dynamics}. Note as $\beta\rightarrow\infty$, these iterates become exactly those of GD/SGD (for \pref{eq:SGLDiterates}) or Gradient Flow (for \pref{eq:LangevinSDE}). 

The only strategy in literature we know for proving \textit{global optimization guarantees} for GLD is by first showing sampling guarantees, and then connecting it back to optimization. 
Consider the Gibbs measure $\mu_{\beta} = e^{-\beta F}/Z$, where $Z$ denotes the partition function. 
It is well known that the continuous-time Langevin Diffusion with inverse temperature $\beta$ \pref{eq:LangevinSDE} converges to $\mu_{\beta}$ \citep{chiang1987diffusion} (although this is in fact false in discrete time).
When $\beta$ is sufficiently large, one can use this convergence to  get optimization guarantees. 
This was exactly the strategy of the works \citet{raginsky2017non}, \citet{xu2018global}, and \citet{zou2021faster}.
These works prove that under their conditions, this measure $\mu_{\beta}$ can be sampled from, and therefore non-convex optimization can succeed. Sampling from $\mu_{\beta}$ is generally known as \textit{Langevin Monte Carlo} (LMC).

The most general condition under which LMC has been proven to be successful is when $\mu_{\beta}$ satisfies a \textit{Poincar\'e Inequality} \citep{chewi21analysis}. 
A Poincar\'e Inequality is defined as follows:
\begin{definition}
A measure $\mu$ on $\mathbb{R}^d$ satisfies a Poincar\'e Inequality with Poincar\'e constant $\CPI(\mu)$ if for all infinitely differentiable functions $f:\mathbb{R}^d\rightarrow \mathbb{R}$, we have 
\[ \int_{\mathbb{R}^d} f^2 \DERIV\mu - \prn*{\int_{\mathbb{R}^d} f \DERIV\mu}^2 \le \CPI(\mu) \int_{\mathbb{R}^d} \nrm*{\grad f}^2 \DERIV\mu.\]
If the above is not satisfied, following the convention, we set  $\CPI(\mu)=\infty$. 
\end{definition}
There is evidence that in several cases, LMC does not succeed efficiently under looser conditions on $\mu_{\beta}$ such as a weak Poincar\'e Inequality \citep{mousavi2023towards}. Ultimately, a Poincar\'e Inequality being satisfied by $\mu_{\beta}$ is a geometric condition on $F$. 

A Poincar\'e Inequality is quite natural.
For instance, when $F$ is convex ($\mu_{\beta}$ is log-concave), $\mu_{\beta}$ satisfies a Poincar\'e Inequality. This is a famous result of Bobkov \citep{bobkov1999isoperimetric}.
But a Poincar\'e Inequality is in fact much more general. It is stable under bounded perturbations, hence covering a wide range of cases that log-concave measures (when $F$ is convex) does not (see Proposition 4.2.7, \citet{bakry2014analysis}). Thus starting with a convex $F_{\text{old}}$ and creating $F$ by arbitrarily perturbing $F_{\text{old}}$, perhaps creating \textit{exponentially many local minima or maxima}, the resulting measure $\mu_{\beta}=e^{-\beta F(\vecW)}/Z$ still satisfies a Poincar\'e Inequality (at the expense of worsening the Poincar\'e constant). Poincar\'e Inequalities are also stable under convolutions and mixtures, in the sense that for distributions which all satisfy a Poincar\'e Inequality, their mixture or convolutions between any two of them will also satisfy a Poincar\'e Inequality (again, at the expense of worsening the Poincar\'e constant; see Propositions 2.3.7 and 2.3.8, \citet{chewi2024log}). What a Poincar\'e Inequality fundamentally says is the existence of a spectral gap (in terms of variance) for the Langevin Diffusion \pref{eq:LangevinSDE}, see Theorem 4.2.5, \citet{bakry2014analysis}. It should be noted that the Poincar\'e constant $\CPI(\mu_{\beta})$ can behave in many ways, including as a constant. For example when $\mu_{\beta}$ is isotropic and $F$ is convex, the famous conjecture of Kannan, Lov\'asz, and Simonovits claims that $\CPI(\mu_{\beta}) = O(1)$ \citep{kannan1995isoperimetric}. This has been resolved up to polylog by the series of works \citet{leeeldan}, \citet{chen2021almost}, \citet{klartag2022bourgain}, \citet{jambulapati2022slightly}, and \citet{klartag2023logarithmic}, the best result known being $\CPI(\mu_{\beta}) = O(\sqrt{\log d})$ from \citet{klartag2023logarithmic}.
For further details on Poincar\'e Inequalities, we refer the reader to the excellent survey \citet{funcineqgeometrysurvey}. 

However, the approach of studying optimization guarantees for GLD/SGLD via sampling is not necessarily optimal. It does not handle stochastic gradients well (the more relevant setting for optimization), only works well when $F$ is approximately smooth, and converting sampling results back to optimization guarantees often incurs extra runtime. Moreover, it is not clear whether sampling, i.e. proving mixing, is necessary to study optimization. 

\subsection{Our Contributions}\label{subsec:ourcontributions}
In our work, we offer a different perspective: we aim to prove optimization results for GLD/SGLD through \textit{Lyapunov potentials} that are implied by Poincar\'e Inequalities. To our knowledge, this is the first time such a proof has been used to analyze global convergence of GLD/SGLD. Techniques to analyze sampling of GLD/SGLD generally go through a Girsanov change of measure style argument \citep{raginsky2017non, balasubramanian2022towards, chewi21analysis}. This is both fragile, and does not work as well for the more natural case of stochastic gradients (SGLD).
In contrast, our Lyapunov-potential based method is more direct, robust, and naturally handles stochastic gradients. Rather than in sampling or even expected suboptimality, our geometric properties allow us to study the \textit{hitting time} of GLD/SGLD. This leads to better bounds for optimizing non-convex functions, both in general and especially via SGLD.

Below we summarize our main contributions. The full statements are given in \pref{subsec:results}: 
\begin{enumerate}
    \item \pref{thm:poincareoptlipschitz} and \pref{thm:poincareoptlipschitzestimate}: Consider the case where $F$ is $s$-H\"{o}lder continuous for some $0 \le s \le 1$, there exists $\gamma \ge 2s$ such that for some $m,b>0$ we have $\tri*{\vecW, \grad F(\vecW)} \ge m\nrm*{\vecW}^{\gamma}-b$, and $\mu_{\beta}$ satisfies a Poincar\'e Inequality with constant $\CPI(\mu_{\beta})$ for $\beta = \widetilde{\Omega}\prn*{\frac{d}{\epsilon}}$. This is the setting of \citet{balasubramanian2022towards} and \citet{chewi21analysis}\footnote{Although these works do not make our assumption on the tail growth of $F$, this assumption is mild and natural for non-convex optimization problems motivated by machine learning.}. For both GLD and SGLD, with probability at least $1-\delta$ we will reach a $\vecW$ with $\epsilon$-suboptimality to the global minimum using at most
    \[ \widetilde{O}\prn*{\max\crl*{d^3 \max(\CPI(\mu_{\beta}), 1)^3, \frac{d^{2+\frac{s}2} \max(\CPI(\mu_{\beta}), 1)^{2+\frac{s}2}}{\epsilon^{2+\frac{s}2}}} \log\prn*{\nicefrac{1}{\delta}}} \]
gradient/stochastic gradient evaluations. Here, the $\tilde{O}$ hides universal constants and polynomial $\log$ factors in $\beta, d, \epsilon$.
    \item \pref{thm:poincareoptlipschitz} and \pref{thm:poincareoptlipschitzestimate}, special case: Consider the case where $F$ is Lipschitz and $\mu_{\beta}$ satisfies a Poincar\'e Inequality with constant $\CPI(\mu_{\beta})$ for $\beta = \widetilde{\Omega}\prn*{\frac{d}{\epsilon}}$. Here, unlike the above, we do not need lower bounds on the tails of $F$. For both GLD and SGLD, with probability at least $1-\delta$ we will reach a $\vecW$ with $\epsilon$-suboptimality to the global minimum using at most
    \[ \widetilde{O}\prn*{\max\crl*{d^3 \max(\CPI(\mu_{\beta}), 1)^3, \frac{d^2 \max(\CPI(\mu_{\beta}), 1)^2}{\epsilon^2}} \log\prn*{\nicefrac{1}{\delta}}} \]
gradient/stochastic gradient evaluations. 
    \item \pref{thm:smoothdissipativesettingpoincareopt}: Consider the case when $F$ is smooth ($\grad F$ is Lipschitz) and $(m,b)$-dissipative (that is, there exist $m, b > 0$ such that $\tri*{\vecW, \nabla F(\vecW)} \ge m\nrm*{\vecW}^2-b$; see \citet{raginsky2017non}, \citet{xu2018global}, \citet{zou2021faster}, and \citet{mou2022improved} for more details on dissipativeness). By $F$ smooth and dissipative, one can show that $\mu_{\beta}$ satisfies a Poincar\'e Inequality for $\beta = \widetilde{\Omega}\prn*{\frac{d}{\epsilon}}$; see Proposition 9 of \citet{raginsky2017non}. For both GLD and SGLD, with probability at least $1-\delta$ we will reach a $\vecW$ with $\epsilon$-suboptimality using
\[ \widetilde{O}\prn*{\max\crl*{d^3 \max(\CPI(\mu_{\beta}), 1)^3, \frac{d^2 \max(\CPI(\mu_{\beta}), 1)^2}{\epsilon^2}} \log\prn*{\nicefrac{1}{\delta}}} \]
gradient/stochastic gradient evaluations. 
    \item \pref{thm:maingeometriccondition}: We show a tight connection between $\mu_{\beta}$ satisfying a Poincar\'e Inequality and the hitting time of the Langevin Diffusion to the set of $\epsilon$-suboptimal global minima of $F$. This is a corollary of literature in probability theory and partial differential equations (PDEs) \citep{cattiaux2013poincare, cattiaux2017hitting}; we believe we are the first to connect these results to optimization.
    \item \pref{thm:gradexactFeps}, \ref{thm:gradestimateFeps}, and \ref{thm:discretizationloosenoracle}: A stronger condition is when the Langevin Diffusion works for optimization in the expected sense: when $\mathbb{E}\brk*{F(\vecW(t))}- \min_{\vecW}F(\vecW)$ is upper bounded by a rate that depends on $t$ and initialization. This is a stronger assumption than a Poincar\'e Inequality, which is tied to the hitting time of the Langevin Diffusion. Under this condition, we prove an optimization rate on the average suboptimality of the iterates for GLD/SGLD of $O \prn*{\nicefrac1{\epsilon^2}}$, which is dimension independent. This shows that when the continuous-time Langevin Diffusion works for optimization, discrete-time GLD/SGLD works as well. 
\end{enumerate}

Note that there are several caveats for using sampling as a way to show global optimization results. As mentioned earlier, $\beta$ must be sufficiently large relative to the tolerance $\epsilon>0$ we want to optimize $F$ to: we need $\beta = \widetilde{\Omega}(\frac{d}{\epsilon})$. 
Specifically, consider when $F(\vecW)=\nrm*{\vecW}^2$, thus $\mu_{\beta}$ is a Gaussian with covariance $\frac1{\beta}\bbI_d$. This is by no means an adversarial example: $F$ is strongly convex, smooth, and well-conditioned. By standard results on Gaussian concentration about mean \citep{vershynin2018high}, we see that we need $\beta = \widetilde{\Omega}(\frac{d}{\epsilon})$ in order for even exact oracle access to $\mu_{\beta}$ to succeed as an efficient optimization strategy. The expected number of queries is $\frac1{\mu_{\beta}\prn*{\{\vecW:F(\vecW)<\epsilon\}}}$ with exact oracle access; if $\epsilon = {o}(\frac{d}{\beta})$, then $\mu_{\beta}\prn*{\{\vecW:F(\vecW)<\epsilon\}}$ is exponentially small in $d$. Note this is reflected in our work, for instance due to the $\mu_{\beta}\prn*{\{\vecW:F(\vecW)<\epsilon\}}$ term in \pref{thm:maingeometriccondition} (see \pref{lem:measureoflargeF}). This is also reflected in \citet{raginsky2017non}, \citet{xu2018global}, and \citet{zou2021faster}, which use the sampling result to upper bound $\mathbb{E}_{\vecW \sim \mu_T}\brk*{F(\vecW)} - \mathbb{E}_{\vecW \sim \mu_{\beta}}\brk*{F(\vecW)}$, where $\mu_T$ denotes the distribution upon running \GLDTEXT/SGLD after $T$ iterations. The expected suboptimality of $F$ under $\mu_{\beta}$, $\mathbb{E}_{\vecW \sim \mu_{\beta}}\brk*{F(\vecW)}$, behaves like $\tilde{\Theta}\prn*{\frac{d}{\beta}}$, and hence $\beta = \Omega\prn*{\frac{d}{\epsilon}}$ is required for optimization. The upper bound here can be proven quite generally, and again consider the Gaussian example for the lower bound. 

Additionally, sampling and optimization runtime guarantees are \textit{not} the same. As mentioned above, as done in \citet{raginsky2017non}, \citet{ xu2018global}, and \citet{zou2021faster}, one uses the sampling result to upper bound $\mathbb{E}_{\vecW \sim \mu_T}\brk*{F(\vecW)} - \mathbb{E}_{\vecW \sim \mu_{\beta}}\brk*{F(\vecW)}$. However, techniques to do this can and often do pick up extra dependence in $d$, $\epsilon$, and isoperimetric constants such as $\CPI(\mu_{\beta})$, depending on the information metric the sampling guarantee is for. Moreover, for papers such as \citet{chewi21analysis} and \citet{balasubramanian2022towards} which study sampling in the constant temperature regime, when converting their results to optimization, we must scale their smoothness parameter by $\beta$, which again changes the runtime. Therefore, the runtime for optimization for other papers may not reflect the runtime written in said paper for sampling, as we compute the rate implied by the literature for our task of optimization (which requires low temperature, that is, large $\beta = \Omega\prn*{\frac{d}{\epsilon}}$): refer to \pref{subsec:poincareoptcomparisontoliteratureproofs} for full derivation of the rates of literature.

We summarize the comparison to literature in \pref{tab:resultscompare}. Note in our comparisons, we assume other results in literature are done with an $O(1)$ warm-start, which is the most favorable for pre-existing literature (i.e. the least favorable comparisons for our results).\footnote{For simplicity, in our comparisons we assume $\CPI(\mu_{\beta}) = \tilde{\Omega}(1)$, which is generally the case (for example this is true if $\mu_{\beta}$ is isotropic and $F$ is convex, and perturbations to $F$ will increase $\CPI(\mu_{\beta})$). All explicit expressions for our rates and those of the literature are given, so one can still perform these comparisons when $\CPI(\mu_{\beta}) = o(1)$.}
\begin{remark}\label{rem:cleversamplingtoopt}
We additionally note that unlike the strategy for converting sampling to optimization guarantees outlined in \citet{raginsky2017non} and followed in \citet{xu2018global} and \citet{zou2021faster}, which is to upper bound $\mathbb{E}_{\vecW \sim \mu_T}\brk*{F(\vecW)} - \mathbb{E}_{\vecW \sim \mu_{\beta}}\brk*{F(\vecW)}$ using sampling guarantees, there is a more elegant and faster approach. To our knowledge it has not been mentioned in literature. The approach is to simply sample until $TV\prn*{\mu_T, \mu_{\beta}} \le 0.1 = O(1)$. For any $\epsilon>0$, denote the set $\{\vecW:F(\vecW)\le\epsilon\}$ by $\cA_{\epsilon}$. For $\beta=\Omega\prn*{\frac{d}{\epsilon}}$, one can show (see \pref{lem:measureoflargeF}) that $\mu_{\beta}\prn*{\cA_{\epsilon}} \ge 0.5$. Therefore $\mu_T\prn*{\cA_{\epsilon}} \ge 0.4$ by definition of TV distance -- that is, the probability our iterate $\vecW_T \in \cA_{\epsilon}$ is at least 0.4. When $\beta=o\prn*{\frac{d}{\epsilon}}$, $\mu_{\beta}\prn*{\cA_{\epsilon}}$ can be exponentially small in $d$ as seen from the Gaussian example, so this strategy still requires large $\beta$. \pref{tab:resultscompare} shows the results using the strategy from \citet{raginsky2017non} known in the literature, but below we discuss the comparisons using both methods. While the rates of literature do improve, our rates are still more favorable.
\end{remark}
Here we expand on these comparisons:
\begin{enumerate}
    \item Consider the case where $F$ is $s$-H\"{o}lder continuous, there exists $\gamma \ge 2s$ such that $\tri*{\vecW, \grad F(\vecW)} \ge m\nrm*{\vecW}^{\gamma}-b$, and $\mu_{\beta}$ satisfies a Poincar\'e Inequality for $\beta = \widetilde{\Omega}\prn*{\frac{d}{\epsilon}}$. This case has been studied in \citet{chewi21analysis} and \citet{balasubramanian2022towards}.
    
    In the GLD case, using the strategy of \citet{raginsky2017non}, Theorem 7 of \citet{chewi21analysis} obtains a rate of $\tilde{O}\prn*{\frac{d^{2+\frac3{s}}\CPI(\mu_{\beta})^{1+\frac1{s}} }{\epsilon^{\frac{4}{s}}}}$. Following the method suggested by \pref{rem:cleversamplingtoopt}, the rate becomes $\tilde{O}\prn*{\frac{d^{1+\frac2{s}}\CPI(\mu_{\beta})^{1+\frac1{s}} }{\epsilon^{\frac{2}{s}}}}$. When $s \le \frac12$, our result from \pref{thm:poincareoptlipschitz} is always better or equal to both of these in all parameters. When $s \in (\frac12, 1]$, our result from \pref{thm:poincareoptlipschitz} is superior to the rate obtained following the strategy of \citet{raginsky2017non} when $\epsilon < \frac{d^{\frac14(3-s)}}{\CPI(\mu_{\beta})^{\frac12(s-\frac12)}}$. \pref{thm:poincareoptlipschitz} is superior to the rate obtained following \pref{rem:cleversamplingtoopt} when $\epsilon < \frac{d^{1-s}}{\CPI(\mu_{\beta})^{s-\frac12}}$.
    
    When $s \le \frac12$, Corollary 19 of \citet{balasubramanian2022towards} improves on \citet{chewi21analysis}. Using the strategy of \citep{raginsky2017non}, the rate is $\tilde{O}\prn*{\frac{d^{\frac{6}{1+s}+8-3s} \CPI(\mu_{\beta})^3}{\epsilon^{\frac{16-2s}{1+s}}}}$, which is using $s \le \frac12$ at least $\tilde{O}\prn*{\frac{d^{10.5} \CPI(\mu_{\beta})^3}{\epsilon^{10}}}$. Following \pref{rem:cleversamplingtoopt}, the rate becomes $\tilde{O}\prn*{\frac{d^{3+\frac{6}{1+s}-2s} \CPI(\mu_{\beta})^3 }{\epsilon^{\frac{6}{1+s}}}}$, which using $s \le \frac12$ is at least $\tilde{O}\prn*{\frac{d^{8} \CPI(\mu_{\beta})^3 }{\epsilon^{6}}}$. Our result from \pref{thm:poincareoptlipschitz} is superior or equal to both of these in all parameters, oftentimes by a significant amount.
    
    In the SGLD case, our rate from \pref{thm:poincareoptlipschitzestimate} is the \textit{first finite gradient complexity guarantee.} 
    \item Consider the case when $F$ is Lipschitz and $\mu_{\beta}$ satisfies a Poincar\'e Inequality for $\beta = \widetilde{\Omega}\prn*{\frac{d}{\epsilon}}$. This has not been well-studied in the sampling or optimization literature, and the only work we know of with finite gradient complexity is \citet{balasubramanian2022towards}, namely $s=0$ in Corollary 19, in the GLD case. The rate here using the strategy of \citet{raginsky2017non} is $\widetilde{O}\prn*{\frac{d^{14} \CPI(\mu_{\beta})^3}{\epsilon^{16}}}$, or following \pref{rem:cleversamplingtoopt}, is $\widetilde{O}\prn*{\frac{d^{9} \CPI(\mu_{\beta})^3}{\epsilon^{6}}}$. Our rate from \pref{thm:poincareoptlipschitz} is superior or equal to both of these in every parameter, oftentimes by a significant amount. Again, \pref{thm:poincareoptlipschitzestimate} is the first finite gradient complexity guarantee for the SGLD case. 
    \item Consider the case for SGLD and when $F$ is smooth and dissipative, which has been well-studied in the works \citet{raginsky2017non}, \citet{xu2018global}, and \citet{zou2021faster}. Theorem 1 of \citet{raginsky2017non} requires gradient noise $\delta$ to be potentially exponentially small in $d$, which does not make sense (we only require gradient noise of constant order, which is more realistic). 

    For using the results from \citet{xu2018global} and \citet{zou2021faster}, we must account for total gradient complexity for a stochastic gradient oracle with $O(1)$ noise. After doing so we obtain $\widetilde{O}\prn*{\frac{d^7}{\epsilon^5 \lambda_{*}^5}}$ for \citet{xu2018global} and a rate of $\widetilde{O}\prn*{\frac{d^5}{\lambda_{*}^4 \epsilon^4}}$ for variance-reduced SGLD from \citet{xu2018global}. Here, $\lambda_{*}$ is a quantity similar to $\frac1{\CPI(\mu_{\beta})}$ (but not directly comparable)\footnote{It is the spectral gap of discrete-time SGLD.}. Our rate from \pref{thm:smoothdissipativesettingpoincareopt} thus is generally superior to both of these in every parameter. (The results of \citet{xu2018global}, being phrased directly in optimization, can't be directly improved using \pref{rem:cleversamplingtoopt}.) The rate from \citet{zou2021faster} is, using Cheeger's Inequality, at least $\widetilde{O}\prn*{\frac{d^8 \CPI(\mu_{\beta})^2}{\epsilon^4}}$ using the strategy of \citet{raginsky2017non}. Thus our rate is superior when $\epsilon < \frac{d^{1.25}}{\CPI(\mu_{\beta})^{0.25}}$. Following \pref{rem:cleversamplingtoopt}, \citet{zou2021faster} yields a rate of $\widetilde{O}\prn*{\frac{d^{6} \CPI(\mu_{\beta})^2 }{\epsilon^2}}$; our rate is superior when $\epsilon < \frac{d^{1.5}}{\CPI(\mu_{\beta})^{0.5}}$.
    \item We additionally touch on other discretizations of the Langevin Diffusion. To our knowledge, the only other discretization of \pref{eq:LangevinSDE} successful beyond log-concavity is the Proximal Sampler first introduced in \citet{lee2021structured}.  With exact gradients, \citet{altschuler2023faster} showed it succeeds under a Poincar\'e Inequality when $F$ is smooth; the Proximal Sampler can only be implementable with smoothness for non-convex $F$. In the stochastic gradient setting, the only work we are aware of showing its success is Theorems 4.1 and 4.2 of \citet{huang2024faster}, showing the Proximal Sampler succeeds under smoothness and a Log-Sobolev Inequality (which is satisfied in the smooth and dissipative setting as shown in Proposition 9 of \citet{raginsky2017non}). The rate from there is, using the strategy of \citet{raginsky2017non}, $\tilde{O}\prn*{\frac{d^{5.5} \CPI(\mu_{\beta})^3}{\epsilon^5}}$. Or following \pref{rem:cleversamplingtoopt}, the rate is $\tilde{O}\prn*{\frac{d^{3.5} \CPI(\mu_{\beta})^3}{\epsilon^3}}$. Our rate from \pref{thm:smoothdissipativesettingpoincareopt} is superior or equal in every parameter, often by a significant amount.
\end{enumerate}
\arxiv{ 
\begin{table}[h!] 
\centering 
\renewcommand{\arraystretch}{1.15}  
\begin{tabular}{|>{\centering\arraybackslash}m{1.5in}|>{\centering\arraybackslash}m{2.1in}|>{\centering\arraybackslash}m{2.1in}|}
\toprule 
\textbf{Problem Setting} & \textbf{Our Result} & \textbf{Best in Literature} \\
\hline  
%\hline
GLD Poincar\'e \& Lipschitz & $\widetilde{O}\prn*{\max\crl*{d^3 \CPI(\mu_{\beta})^3, \frac{d^2 \CPI(\mu_{\beta})^2}{\epsilon^2}}}$ & \begin{minipage}{1in}  
        \centering 
     \vspace{5pt}
$\widetilde{O}\prn*{\frac{d^{14} \CPI(\mu_{\beta})^3}{\epsilon^{16}}}$ \\ \vspace{2pt} \citep{balasubramanian2022towards}   
\vspace{5pt}
\end{minipage} \\
\hline 
        
SGLD Poincar\'e \& Lipschitz& $\widetilde{O}\prn*{\max\crl*{d^3 \CPI(\mu_{\beta})^3, \frac{d^2 \CPI(\mu_{\beta})^2}{\epsilon^2}}}$&  \begin{minipage}{1.5in}  
        \centering 
     \vspace{8pt}
No finite guarantee   
\vspace{8pt} 
\end{minipage}
 \\
 \hline
 SGLD smooth \& dissipative & {\centering $\widetilde{O}\prn*{\max\crl*{d^3 \CPI(\mu_{\beta})^3, \frac{d^2 \CPI(\mu_{\beta})^2}{\epsilon^2}}}$} &
 \begin{minipage}{1.6in}  
        \centering 
     \vspace{10pt}
$\widetilde{O}\prn*{\min\crl*{\frac{ d^{8}\CPI(\mu_{\beta})^2}{\epsilon^4},\frac{d^7}{\epsilon^5 \lambda_{*}^5}}}$ \\     \vspace{2pt} \citep{xu2018global, zou2021faster} 
\vspace{10pt}
\end{minipage} \\
%\hline
\bottomrule 
\end{tabular}
\vspace{0.5mm} 
\caption{Gradient complexity comparisons. In the table, $d$ refers to dimension and $\epsilon$ refers to tolerance. $\beta = \widetilde{\Theta}\prn*{\frac{d}{\epsilon}}$, and $\CPI(\mu_{\beta})$ denotes the Poincar\'e constant of $\mu_{\beta}$. $\lambda_{*}$ is a spectral gap comparable to $\CPI(\mu_{\beta})$.
} 
\label{tab:resultscompare} 
\end{table}
}

\paragraph{Notation.} Unless otherwise specified the domain is $\mathbb{R}^d$, with origin $\vecOrigin$. We denote the Laplacian (sum of second derivatives) of a twice-differentiable function $f$ by $\Delta f$. Here $\ball(p,R)$ denotes the Euclidean $l_2$ ball centered at $p\in\mathbb{R}^d$ with radius $R \ge 0$. $\mathcal{S}^{d-1}$ denotes the surface of the $d$-dimensional unit sphere. $\widetilde{\Omega}$, $\widetilde{\Theta}$, $\widetilde{O}$ hide universal constants, $\log$ factors in $\beta, d, \epsilon$, as well as $\vecW_0$-dependence. Sometimes we will write exponentials as $\text{exp}$ for readability. When we write vectors $\vecW_t$ this denotes time $t$ in discrete time, and when we write $\vecW(t)$ this denotes time $t$ in continuous time. 
%Consistent with this, $\vecB(t)$ denotes a standard Brownian motion. 
Unless indicated otherwise, $\mathbb{E}$ refers to expectation over the Brownian motion/random variables $\vecEps_t$ (as well as the data samples $\vecZ_t$ in the SGLD case), and $\mathbb{E}_{\vecW}$ denotes the same expectation when the stochastic processes is initialized at $\vecW$. For any set $\cU \subset\mathbb{R}^d$, let the hitting time of the Langevin Diffusion \pref{eq:LangevinSDE} initialized at $\vecW$ to $\cU$ be $\tau_{\cU}(\vecW)$. 
We assume that first order tensors, i.e. vectors, are equipped with $l_2$ Euclidean norm and that all second order tensors (i.e. matrices) and above are equipped with operator norm. When we write $\nrm*{\cdot}$ without specifying the norm, we implicitly mean the $l_2$ Euclidean norm of a vector. For some $f$ differentiable to $k$ orders, we will let $\nabla^{k} f$ denote the tensor of all the $k$-th order derivatives of $f$, and $\nrm*{\cdot}_{\OPNORM}$ denotes the corresponding tensor's operator norm. 
%We will refer to the measure on $\mathbb{R}^d$ proportional to $e^{-\beta F(\vecW)}$ by $\mu_{\beta}$ (the subscript shows the dependence on the temperature, which is crucial for optimization). 

\section{Lyapunov Potentials and Optimization}\label{sec:contributions}
In the rest of this paper, suppose $F$ has a global minimum $\vecW^{\star}$, which need not be unique (thus $\vecW^{\star}$ can refer to any of these). Furthermore, without loss of generality, assume that $F(\vecW^{\star})=0$. 
\subsection{Our Strategy}\label{subsec:strategy}
Optimization under Langevin Dynamics can ultimately be posed as a question of \textit{hitting time}: how long does it take to reach a point $\vecW$ such that  $F(\vecW)\le \epsilon$? In the probability theory and stochastic partial differential equations (PDEs) literature, an extensive program has been devoted to studying the connection between isoperimetric inequalities such as a Poincar\'e Inequality, hitting times of the Langevin Diffusion to sets $A \subset \mathbb{R}^d$, and \textit{Lyapunov potentials}.
A subset of this literature includes \citet{carmona1983exponential, meyn1993stability, down1995exponential, bakry2008simple, cattiaux2009lyapunov, cattiaux2010functional, meyn2012markov, cattiaux2013poincare, cattiaux2017hitting}. 
As mentioned in \pref{sec:introduction}, Poincar\'e Inequalities are the loosest conditions under which sampling and in turn global optimization guarantees for Langevin Dynamics have been well-studied. This literature connects these inequalities to the \textit{geometry of $F$}.

\begin{definition}
Say a non-negative function $\Phi:\mathbb{R}^d\rightarrow\mathbb{R}$ is a \textit{Lyapunov potential} (for Langevin Dynamics at inverse temperature $\beta$ given in \pref{eq:LangevinSDE}) if $\Phi \ge 1$ and on the set $\{\vecW:F(\vecW) > \epsilon\}$ we have
\[ \tri*{\grad \Phi(\vecW), \grad F(\vecW)} \ge \lambda \Phi(\vecW)+\frac1{\beta}\Delta\Phi(\vecW),  \numberthis\label{eq:lyapunovbaby}\]
where  $\beta$ refers to the inverse temperature of \pref{eq:LangevinSDE}.
\end{definition}

Our main method to study optimization is to track the progress of GLD/SGLD using the Lyapunov potential $\Phi(\vecW)$. Suppose such a Lyapunov potential existed: from here, we can study the hitting time of GLD/SGLD to the set $\cA_{\epsilon}=\{\vecW: F(\vecW)\le\epsilon\}$.  

The fundamental idea is as follows. Consider $\tau_{\cA_{\epsilon}}(\vecW_0)$, the hitting time of GLD/SGLD initialized at $\vecW_0$ to $\cA_{\epsilon}$. Denote this by $\tau$ for short in the following. Consider the random variable $X :=\frac1{\tau} \sum_{t=0}^{\tau-1} \lambda\Phi(\vecW_t)$. 
Suppose that $\Phi$ is $L$-smooth and $L$-Hessian Lipschitz. The idea is that, by the following, we can make $X$ relatively small if $\tau$ is relatively large, by Taylor expanding $\Phi$ to third order and using \pref{eq:poincaregeomcondition} (it turns out to be possible to control the higher order discretization terms). 

However, by definition none of $\vecW_0, \ldots, \vecW_{\tau-1}$ lie in $\cA_{\epsilon}$. Clearly $X$ is lower-bounded by $\lambda$, since $\Phi \ge 1$. But we just showed $X$ is small if $\tau$ is relatively large. This gives contradiction! Hence, we can upper bound $\tau$. This idea, while currently informal, can be made rigorous (using discrete-time Dynkin's formula, Theorem 11.3.1, page 277 of \citet{meyn2012markov}). See  \pref{sec:PIoptproofs} for details. 

To show how $X$ can be made small, using definition \pref{eq:SGLDiterates}, we Taylor expand $\Phi$ to third order (using that it is $L$-smooth and $L$-Hessian Lipschitz) to obtain
\begin{align*}
\Phi(\vecW_{t+1}) &= \Phi\prn*{\vecW_t - \eta \grad F(\vecW_t) + \sqrt{2\eta\beta^{-1}} \vecEps_t} \\
&\le \Phi(\vecW_t) + \tri*{-\eta \grad F(\vecW_t), \grad \Phi(\vecW_t)} + \tri*{\sqrt{2\eta\beta^{-1}} \vecEps_t, \grad \Phi(\vecW_t)}\\
&\hspace{1in}+ \frac12 \tri*{\grad^2 \Phi(\vecW_t)\prn*{- \eta \grad F(\vecW_t) + \sqrt{2\eta\beta^{-1}} \vecEps_t}, - \eta \grad F(\vecW_t) + \sqrt{2\eta\beta^{-1}} \vecEps_t} \\
&\hspace{1in}+ \frac{L}6 \nrm*{- \eta \grad F(\vecW_t) + \sqrt{2\eta\beta^{-1}} \vecEps_t}^3.
\end{align*}
We first use \pref{eq:lyapunovbaby}, which gives 
\[ \tri*{-\eta \grad F(\vecW_t), \grad \Phi(\vecW_t)} \le -\eta \lambda \Phi(\vecW_t) -\frac{\eta}{\beta}\Delta\Phi(\vecW_t). \]
Now, take expectations with respect to $\vecEps_t$. The term $\tri*{\sqrt{2\eta\beta^{-1}} \vecEps_t, \grad \Phi(\vecW_t)}$ disappears, in addition to the cross term $-2\eta \sqrt{2\eta\beta^{-1}} \tri*{\grad^2 \Phi(\vecW_t) \vecEps_t, \grad F(\vecW_t)}$ from the second-order term. Note now that 
\[ \mathbb{E}\brk*{\frac12\tri*{\grad^2 \Phi(\vecW_t) \cdot \sqrt{2\eta\beta^{-1}} \vecEps_t,\sqrt{2\eta\beta^{-1}} \vecEps_t}} = \frac{\eta}{\beta}\Delta\Phi(\vecW_t).\]
Therefore, the Laplacian terms $\frac{\eta}{\beta}\Delta\Phi(\vecW_t)$ cancel in the above after taking expectations, and what we obtain is (upon dividing by $\eta$)
\[ \lambda\mathbb{E}\brk*{\Phi(\vecW_t)} \le \mathbb{E}\brk*{\Phi(\vecW_t)}-\mathbb{E}\brk*{\Phi(\vecW_{t+1})} + \crl{\text{higher order discretization error terms}}.\] 
Summing and telescoping this relation, and using that $\Phi$ is non-negative, we obtain
\[ \mathbb{E}\brk*{X} = \frac1{\tau} \sum_{t=0}^{\tau-1} \lambda\mathbb{E}\brk*{\Phi(\vecW_t)} \le \frac{\Phi(\vecW_0)}{\tau} + \frac1{\tau} \cdot \crl{\text{higher order discretization error terms}}.\] 
If we can control higher order discretization error terms, which it turns out we can do as discussed in \pref{sec:PIoptproofs}, then if $\tau$ is large then $\mathbb{E}\brk*{X}$ will be small. But as discussed earlier $X \ge \lambda$ pointwise, hence $\mathbb{E}\brk*{X} \ge \lambda$. This lets us control $\tau$, the hitting time of GLD/SGLD to the set $\cA_{\epsilon}$. 

One might note this idea of considering the hitting time of SGLD to $\cA_{\epsilon}$ bears resemblance to the style of proof from \citet{zhang2017hitting}. However, \citet{zhang2017hitting} considered the hitting time to second-order stationary points, and so our results (in addition to the techniques) are fairly different. 

To fully generalize this, using \pref{lem:thirdordersmoothfromregularity}, this idea can be extended to cover essentially all Lyapunov functions of interest (far beyond when $\Phi$ is smooth and Hessian Lipschitz). Due to the stochasticity already present in GLD, our analysis for GLD vs SGLD is extremely similar.

The geometric condition \pref{eq:lyapunovbaby} turns out to be closely linked to a Poincar\'e Inequality: as a corollary of \citet{cattiaux2017hitting} we obtain the following: 
\begin{theorem}\label{thm:maingeometriccondition}
Assume that $\mu_{\beta}$ satisfies a Poincar\'e inequality with constant $\CPI(\mu_{\beta})$ and has finite second second moment for some $\beta = \widetilde{\Omega}\prn*{\frac{d}{\epsilon}}$. Then on $\cA_{\epsilon}^c=\{\vecW:F(\vecW) > \epsilon\}$,
\[ \tri*{\grad F(\vecW), \grad \Phi(\vecW)} \ge\lambda\Phi(\vecW) + \frac1{\beta} \Delta\Phi(\vecW)  \quad \text{ for } \quad \lambda \in \brk*{\frac1{8\beta} \min\prn*{\frac1{{\CPI}(\mu_{\beta})}, \frac12}, \frac1{4\beta} \min\prn*{\frac1{{\CPI}(\mu_{\beta})}, \frac12}},\numberthis\label{eq:poincaregeomconditioninformal}\] 
for some non-negative $\Phi$ that is differentiable to all orders such that on $\cA_{\epsilon}^c$, $\Phi$ takes the explicit form
\[ \Phi(\vecW') = \mathbb{E}_{\vecW'}\brk*{\exp\prn*{\lambda \tau_{\cA_{\epsilon}}}}.\] 
\end{theorem}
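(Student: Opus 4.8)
The plan is to read \pref{eq:poincaregeomconditioninformal} as a Lyapunov (sub-invariance) inequality for the infinitesimal generator of the Langevin diffusion \pref{eq:LangevinSDE}, and then to invoke the quantitative equivalence between a Poincar\'e inequality and exponential integrability of hitting times developed in \citet{cattiaux2013poincare, cattiaux2017hitting}. Writing $\mathcal{L} \ldef \frac1\beta \Delta - \tri*{\grad F, \grad \cdot}$ for the generator of \pref{eq:LangevinSDE}, the claimed bound on $\cA_\epsilon^c$ is exactly $\mathcal{L}\Phi \le -\lambda\Phi$, and the candidate that turns it into an identity is $\Phi(\vecW') = \mathbb{E}_{\vecW'}\brk*{\exp\prn*{\lambda\tau_{\cA_\epsilon}}}$ on $\cA_\epsilon^c$ (and $\Phi \equiv 1$ on $\cA_\epsilon$, to be smoothed afterwards): by the strong Markov property $t \mapsto \exp\prn*{\lambda\,(t\wedge\tau_{\cA_\epsilon})}\,\Phi\prn*{\vecW(t\wedge\tau_{\cA_\epsilon})}$ is a local martingale, which formally forces $\mathcal{L}\Phi = -\lambda\Phi$ in the interior of $\cA_\epsilon^c$ with boundary value $1$ on $\partial\cA_\epsilon$, while $\Phi \ge 1$ since $\tau_{\cA_\epsilon}\ge 0$. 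So the theorem reduces to (i) finiteness of this exponential moment for every $\lambda$ in the stated interval, and (ii) $C^\infty$ regularity of the resulting $\Phi$.

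For (i), I would first apply \pref{lem:measureoflargeF}: since $\beta = \widetilde{\Omega}(d/\epsilon)$ one gets $\mu_\beta(\cA_\epsilon)\ge \frac12$, so $\cA_\epsilon$ has large mass. The Poincar\'e inequality for $\mu_\beta$ then yields a spectral gap for $-\mathcal{L}$ with Dirichlet conditions on $\cA_\epsilon^c$: for $f$ supported in $\cA_\epsilon^c$, $\int f^2\DERIV\mu_\beta = \mathrm{Var}_{\mu_\beta}(f) + \prn*{\int f\DERIV\mu_\beta}^2 \le \CPI(\mu_\beta)\int\nrm*{\grad f}^2\DERIV\mu_\beta + \mu_\beta(\cA_\epsilon^c)\int f^2\DERIV\mu_\beta$, using the Poincar\'e inequality and Cauchy--Schwarz on the support of $f$; since $\mu_\beta(\cA_\epsilon^c)\le\frac12$ this rearranges to $\frac1\beta\int\nrm*{\grad f}^2\DERIV\mu_\beta \ge \frac{1}{2\beta\,\CPI(\mu_\beta)}\int f^2\DERIV\mu_\beta$. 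Hence the bottom $\lambda_1$ of the Dirichlet spectrum of $-\mathcal{L}$ on $\cA_\epsilon^c$ is at least of order $\frac1\beta\min\prn*{\frac1{{\CPI}(\mu_\beta)},1}$, and for any $\lambda<\lambda_1$ the Feynman--Kac representation $\Phi = \mathbb{E}_{\cdot}\brk*{\exp\prn*{\lambda\tau_{\cA_\epsilon}}}$ is finite and solves $\mathcal{L}\Phi = -\lambda\Phi$ on $\cA_\epsilon^c$; the explicit interval $\brk*{\frac1{8\beta}\min\prn*{\frac1{{\CPI}(\mu_\beta)},\frac12},\ \frac1{4\beta}\min\prn*{\frac1{{\CPI}(\mu_\beta)},\frac12}}$ is chosen to sit strictly below $\lambda_1$, and the universal constants $\frac18,\frac14$ together with the truncation at $\frac12$ are exactly what the quantitative hitting-time estimates of \citet{cattiaux2013poincare, cattiaux2017hitting} produce.

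For (ii), on the open set $\crl*{\vecW:F(\vecW)>\epsilon}$ the function $\Phi$ solves the uniformly elliptic equation $\frac1\beta\Delta\Phi - \tri*{\grad F,\grad\Phi} + \lambda\Phi = 0$ with $C^\infty$ coefficients, so interior elliptic regularity bootstraps $\Phi$ to $C^\infty$ there. Because \pref{eq:poincaregeomconditioninformal} is only required to hold on $\cA_\epsilon^c$, I would then extend $\Phi$ off $\cA_\epsilon^c$ to a globally $C^\infty$ and non-negative function by any smooth extension (e.g. a Seeley/Whitney extension followed by a smoothed truncation from below by a positive constant), which leaves the inequality on $\cA_\epsilon^c$ untouched.

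I expect the main obstacle to be making the Feynman--Kac step fully rigorous on the non-compact and possibly irregular domain $\cA_\epsilon^c$: one must ensure that the diffusion \pref{eq:LangevinSDE} is non-explosive and that $\tau_{\cA_\epsilon}<\infty$ almost surely (this is where the finite-second-moment hypothesis on $\mu_\beta$, inherited from \citet{cattiaux2017hitting}, is used), verify that the exponential moment is finite exactly in the sub-gap regime $\lambda<\lambda_1$, and deal with the boundary $\partial\cA_\epsilon$ — most cleanly by first perturbing $\epsilon$ to a nearby regular value of $F$ so that $\cA_\epsilon$ has smooth boundary and then passing back to the limit. These are precisely the technical points resolved in \citet{cattiaux2013poincare, cattiaux2017hitting}, so the statement follows as a corollary once the factor $\beta^{-1}$ in front of $\Delta$ is carried through their estimates, which is what produces the $\frac1\beta$ scaling in the admissible range of $\lambda$.
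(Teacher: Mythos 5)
Your proposal is correct in outline and lands on the same object as the paper (the moment generating function $\Phi(\vecW')=\mathbb{E}_{\vecW'}[\exp(\lambda\tau_{\cA_\epsilon})]$, characterized as a solution of $\cL\Phi=-\lambda\Phi$ on $\cA_\epsilon^c$ with boundary value $1$, smoothed and extended off $\cA_\epsilon^c$), but it gets there by a different middle step than the paper. The paper does not argue through a Dirichlet spectral gap at all: it invokes Theorem 2.1 of \citet{cattiaux2017hitting} as a black box, which under the Poincar\'e inequality directly supplies an auxiliary smooth Lyapunov function $\Phi_0\ge\delta'>0$ with $\cL\Phi_0\le-\lambda'\Phi_0$ on $\cA_\epsilon^c$ and with the specific constant $\lambda'=\mu_\beta(\cA_\epsilon)\min(\tfrac1{4\CPI(\mu_\beta)},\tfrac18)$; it then feeds $\Phi_0$ into Dynkin's formula applied to $(s,\vecW)\mapsto e^{\beta\lambda s}\Phi_0(\vecW)$ to prove finiteness of the exponential moment, and finally identifies the MGF as the solution of the ``created'' Dirichlet problem on $\cU_{\epsilon,R}$ (via Result 1 of Section 7.2 of \citet{peskir2006optimal}) and lets $R\to\infty$, with ellipticity giving $C^\infty$ regularity and Whitney extension plus bump functions giving the global non-negative smooth $\Phi$. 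Your route instead derives, by restricting the Poincar\'e inequality to functions vanishing on $\cA_\epsilon$ and using \pref{lem:measureoflargeF}, a Dirichlet gap $\lambda_1\ge\tfrac1{2\beta\CPI(\mu_\beta)}$ on $\cA_\epsilon^c$, and then appeals to ``exponential integrability of $\tau_{\cA_\epsilon}$ for $\lambda<\lambda_1$'' via Feynman--Kac; this restriction computation is a nice, elementary piece the paper never spells out, and it does show the stated interval of $\lambda$ sits strictly below the gap. Two caveats. First, the step you yourself flag is the genuinely non-elementary one: pointwise finiteness of $\mathbb{E}_{\vecW'}[\exp(\lambda\tau_{\cA_\epsilon})]$ for \emph{every} starting point on an unbounded, non-compact domain does not follow from the $L^2(\mu_\beta)$ Dirichlet gap alone, and the paper's Dynkin-with-Lyapunov-function argument is precisely the device that replaces it; so your proof is only complete to the extent that you import that implication from \citet{cattiaux2013poincare,cattiaux2017hitting}, which is the same level of reliance on the literature as the paper, just at a different point. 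Second, on constants: the interval $[\tfrac1{8\beta}\min(\tfrac1{\CPI(\mu_\beta)},\tfrac12),\tfrac1{4\beta}\min(\tfrac1{\CPI(\mu_\beta)},\tfrac12)]$ is not ``produced by the hitting-time estimates'' in your argument; in the paper it comes from $\lambda=\tfrac1\beta\mu_\beta(\cA_\epsilon)\min(\tfrac1{4\CPI(\mu_\beta)},\tfrac18)$ together with $\mu_\beta(\cA_\epsilon)\in[\tfrac12,1]$ from \pref{lem:measureoflargeF}; your gap bound merely contains that interval, which suffices for the statement but should be said explicitly rather than attributed to the cited estimates.
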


\begin{remark}\label{rem:philowerboundremark}
Note that on $\cA_{\epsilon}^c$, $\Phi \ge 1$. Also note $\Phi$ generally behaves in a `dimension free' manner, depending on how $\tau_{\cA_{\epsilon}}(\vecW')$ behaves, as $\lambda \le \frac1{4\beta} \min\prn*{\frac1{{\CPI}(\mu_{\beta})}, \frac12}$ is very small. 
\end{remark}
We note that in \citet{bakry2008simple} and \citet{cattiaux2013poincare}, the condition \pref{eq:poincaregeomconditioninformal} is shown to imply that $\mu_{\beta}$ satisfies a Poincar\'e Inequality if $\cA_{\epsilon}$ is connected. The proof of this direction requires connectedness. However, note \pref{eq:poincaregeomconditioninformal} implies the moment generating function $\mathbb{E}_{\vecW'}\brk*{\exp\prn*{\lambda \tau_{\cA_{\epsilon}}}} < \infty$. Make the very mild assumption that $\cA_{\epsilon}$ lies in $\ball(\vecOrigin, R)$ for $R<\infty$ large enough. Since pointwise $\tau_{\cA_{\epsilon}} \ge \tau_{\ball(\vecOrigin, R)}$, this implies $\mathbb{E}_{\vecW'}\brk*{\exp\prn*{\lambda \tau_{\ball(\vecOrigin, R)}}} < \infty$. From here, \citet{cattiaux2013poincare} shows a geometric property analogous to \pref{eq:poincaregeomconditioninformal} holds where the Lyapunov function is now $\mathbb{E}_{\vecW'}\brk*{\exp\prn*{\lambda \tau_{\ball(\vecOrigin, R)}}}$, and in turn that $\mu_{\beta}$ satisfies a Poincar\'e Inequality. The moment generating function satisfying \pref{eq:lyapunovbaby} (that is, the moment generating function being a valid Lyapunov potential in the sense here) and isoperimetric inequalities are thus linked very tightly, as equivalent for the Langevin diffusion.

\subsection{Results}\label{subsec:results}
Now, we state our results in full detail. Complete statements and proofs, including all explicit dependencies, are in \pref{sec:PIoptproofs}. For all of our results, recall from \pref{subsec:ourcontributions} that the desired tolerance $\epsilon= \widetilde{\Omega}\prn*{\frac{d}{\beta}}$; no results so far in literature yield meaningful optimization guarantees for smaller tolerance levels. 

Before we state our results more explicitly, we state our assumptions, which are in fact necessary. Our first assumption, generalized to higher order derivatives from \citet{priorpaper}, is that the Lyapunov potential $\Phi$ satisfies `self-bounding regularity' in the following sense:
\begin{definition}\label{def:selfboundingregularity}
A $k$ times differentiable function $f:\mathbb{R}^d\rightarrow\mathbb{R}$ satisfies \textit{$k$-th order self-bounding regularity} if 
\[ \nrm*{\nabla^{k} f(\vecW)}_{\OPNORM}\le \rho_{f,k}(\abs{f(\vecW)})\] 
for some increasing function $\rho_{f,k}:\mathbb{R}\rightarrow\mathbb{R}_{\ge 0}$.

We say $f$ satisfies \textit{polynomial-like self-bounding regularity at order $k$} if we can express $\rho_{f,k}(z)=\sum_{j=0}^{n} c_j z^{d_j}$ where all $d_j \ge 0$. Note without loss of generality we can assume all $c_j, d_j \ge 0$ and $\rho_{f,k}(z)=A(z+1)^p$ or $\rho_{f,k}(z)=A+Az^p$ by the AM-GM Inequality.
\end{definition}
\textit{Such an assumption is necessary} for discrete-time optimization to succeed: Theorem 3 from \citet{priorpaper} shows even for Gradient Flow/Gradient Descent, there are examples where discrete-time optimization fails when continuous-time optimization succeeds. This is exactly what allow for control of higher order discretization terms arising in discrete-time optimization. As such we will assume the following: 
\begin{assumption}\label{ass:selfboundingPhipoly}
Suppose $\Phi$ satisfies first, second, and third order polynomial-like self-bounding regularity where the monomials in the self-bounding regularity functions have degree at most 1.
\end{assumption}
Note \pref{ass:selfboundingPhipoly} is satisfied by many Lyapunov functions, e.g. when the Lyapunov function $\Phi$ has tail growth polynomial in $\nrm*{\vecW}$ or of the form $e^{r\nrm*{\vecW}^s}$ for $s\le 1$, going well beyond smoothness. Since we have explicit knowledge of $\Phi$ via \pref{thm:maingeometriccondition}, this is just saying the MGF of $\tau_{\{\vecW:F(\vecW) < \epsilon\}}$ is reasonably well-behaved as a function of the initialization $\vecW$ of the continuous-time Langevin Diffusion.

Now we state our assumptions on $F$. We consider the most general setting of previous works \citep{chewi21analysis,balasubramanian2022towards} for analyzing LMC where we assume $F$ is H\"{o}lder continuous with parameter $0 \le s \le 1$: 
\begin{assumption}[H\"{o}lder continuity]\label{ass:holderF}
Suppose $\nabla F$ satisfies $L$-H\"{o}lder continuity for some $0 \le s \le 1$:
\[ \nrm*{\nabla F(\mathbf{u})-\nabla F(\mathbf{v})} \le L\nrm*{\mathbf{u}-\mathbf{v}}^s.\]
\end{assumption}
When $s>0$, that is $F$ is not Lipschitz, we also require an assumption on the growth of $F$. This significantly generalizes the dissipation assumption (when $s=1$ and $\gamma=2$) made in several previous works studying non-convex optimization \citep{raginsky2017non,xu2018global,zou2021faster,mou2022improved}.
\begin{assumption}\label{ass:weakdissipation}
There exists $\gamma \ge 2s$ such that for some $m,b>0$ and all $\vecW\in\mathbb{R}^d$,
\[ \tri*{\vecW, \grad F(\vecW)} \ge m\nrm*{\vecW}^{\gamma}-b.\]
\end{assumption}
Analyzing growth rates, we can see $\gamma \le s+1$, which leads to no issues for $0 \le s \le 1$. Note this assumption is quite reasonable: in some sense it states that the gradient will push us towards the origin when we are sufficiently far away. Moreover, all critical points of $F$ are in $\ball(\vecOrigin,(b/m)^{1/\gamma})$. However, we allow for arbitrary non-convexity inside this ball. In fact, by adding a suitable regularizer penalizing solutions lying outside $\ball(\vecOrigin,(b/m)^{1/\gamma})$, we can ensure $F$ satisfies the above, which is discussed on page 15 of \citet{raginsky2017non}.

\begin{theorem}\label{thm:poincareoptlipschitz}
Suppose that $F$ satisfies \pref{ass:holderF} and \pref{ass:weakdissipation}, $\mu_{\beta}$ satisfies a Poincar\'e Inequality with constant $\CPI(\mu_{\beta})$ for $\beta = \widetilde{\Omega}\prn*{\frac{d}{\epsilon}}$, and $\mu_{\beta}$ has finite second moment $S<\infty$. (In our results dependence on $S$ will be logarithmic.) Suppose $\Phi$ (from \pref{thm:maingeometriccondition}) satisfies \pref{ass:selfboundingPhipoly} with $p\le 1$. Then running \GLDTEXT, with probability at least $1-\delta$, across all the runs we will reach a $\vecW$ with $F(\vecW)\le\epsilon$ in at most 
\[ \widetilde{O}\prn*{\max\crl*{d^3 \max(\CPI(\mu_{\beta}), 1)^3, \frac{d^{2+\frac{s}2} \max(\CPI(\mu_{\beta}), 1)^{2+\frac{s}2}}{\epsilon^{2+\frac{s}2}}} \log\prn*{\frac1{\delta}}}\numberthis\label{eq:runtimeguaranteelipschitz}\]
gradient evaluations.
\end{theorem}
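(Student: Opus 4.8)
The plan is to instantiate the informal argument from \pref{subsec:strategy} with the explicit Lyapunov potential $\Phi$ supplied by \pref{thm:maingeometriccondition}, and to carefully bound the higher-order discretization terms using the self-bounding regularity of $\Phi$ (\pref{ass:selfboundingPhipoly}) together with the tail-growth and H\"older assumptions on $F$ (\pref{ass:holderF}, \pref{ass:weakdissipation}). First I would fix the Lyapunov potential $\Phi$ from \pref{thm:maingeometriccondition}, which satisfies $\Phi \ge 1$ on $\cA_\epsilon^c$ and the drift inequality $\tri*{\grad F, \grad\Phi} \ge \lambda\Phi + \tfrac1\beta\Delta\Phi$ with $\lambda = \widetilde\Theta\prn*{\tfrac1{\beta\max(\CPI(\mu_\beta),1)}}$. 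Letting $\tau = \tau_{\cA_\epsilon}(\vecW_0)$ be the hitting time of discrete-time \GLDTEXT{} to $\cA_\epsilon$, I would Taylor-expand $\Phi(\vecW_{t+1})$ to third order as displayed in \pref{subsec:strategy}, substitute the drift inequality for the $\tri*{-\eta\grad F(\vecW_t),\grad\Phi(\vecW_t)}$ term, and take conditional expectation over $\vecEps_t$. The linear-in-$\vecEps_t$ term and the Hessian--gradient cross term vanish, while $\mathbb{E}\brk*{\tfrac12\tri*{\grad^2\Phi(\vecW_t)\sqrt{2\eta\beta^{-1}}\vecEps_t, \sqrt{2\eta\beta^{-1}}\vecEps_t}} = \tfrac{\eta}\beta\Delta\Phi(\vecW_t)$ exactly cancels the $-\tfrac\eta\beta\Delta\Phi(\vecW_t)$ coming from the drift inequality. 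Dividing by $\eta$ yields $\lambda\mathbb{E}\brk*{\Phi(\vecW_t)} \le \mathbb{E}\brk*{\Phi(\vecW_t)} - \mathbb{E}\brk*{\Phi(\vecW_{t+1})} + E_t$, where $E_t$ collects the discretization error.

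The crux is bounding $E_t$, which consists of the $\eta^2$-order Hessian terms $\tfrac{\eta^2}2\tri*{\grad^2\Phi(\vecW_t)\grad F(\vecW_t),\grad F(\vecW_t)}$ and the third-order remainder $\tfrac{L}6\nrm*{-\eta\grad F(\vecW_t)+\sqrt{2\eta\beta^{-1}}\vecEps_t}^3$ (with $L$ replaced by the relevant self-bounding-regularity bound via \pref{lem:thirdordersmoothfromregularity}, since $\Phi$ need not be globally smooth). Here I would use \pref{ass:selfboundingPhipoly} ($p\le 1$) to bound $\nrm*{\grad^j\Phi(\vecW_t)}_{\OPNORM} \lesssim \Phi(\vecW_t)+1 \lesssim \Phi(\vecW_t)$ for $j=1,2,3$, and \pref{ass:holderF}--\pref{ass:weakdissipation} to bound $\nrm*{\grad F(\vecW_t)}$ in terms of a power of $\nrm*{\vecW_t}$, which in turn I control by tracking a drift/moment bound on $\nrm*{\vecW_t}$ (using \pref{ass:weakdissipation} and the sub-Gaussian noise) — this is where the $d,\epsilon$ dependence of the final rate enters through $\beta=\widetilde\Theta(d/\epsilon)$ and through the magnitude of $\grad F$ near the critical-point ball $\ball(\vecOrigin,(b/m)^{1/\gamma})$. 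The goal is to choose $\eta = \widetilde\Theta$(something small) so that $E_t \le \tfrac\lambda2\mathbb{E}\brk*{\Phi(\vecW_t)}$, absorbing the error into the main term and giving $\tfrac\lambda2\mathbb{E}\brk*{\Phi(\vecW_t)} \le \mathbb{E}\brk*{\Phi(\vecW_t)}-\mathbb{E}\brk*{\Phi(\vecW_{t+1})} + (\text{negligible})$; optimizing $\eta$ against the H\"older exponent $s$ is what produces the $\epsilon^{-(2+s/2)}$ scaling.

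With the per-step inequality in hand, I would apply the discrete-time Dynkin formula (Theorem 11.3.1 of \citet{meyn2012markov}) to the stopped process, summing and telescoping over $t=0,\dots,\tau-1$ and using $\Phi\ge 0$ to get $\tfrac\lambda2\,\mathbb{E}\brk*{\sum_{t=0}^{\tau-1}\Phi(\vecW_t)} \le \Phi(\vecW_0) + \widetilde O(1)$. Since $\Phi\ge 1$ on $\cA_\epsilon^c$ and none of $\vecW_0,\dots,\vecW_{\tau-1}$ lie in $\cA_\epsilon$, the left side is at least $\tfrac\lambda2\mathbb{E}\brk*{\tau}$, giving $\mathbb{E}\brk*{\tau} \le \widetilde O\prn*{\Phi(\vecW_0)/\lambda} = \widetilde O\prn*{\beta\max(\CPI(\mu_\beta),1)}$ steps in continuous time, i.e. $\widetilde O\prn*{\beta\max(\CPI(\mu_\beta),1)/\eta}$ iterations after dividing by the step size. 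Plugging in $\beta=\widetilde\Theta(d/\epsilon)$ and the chosen $\eta$ yields the claimed $\widetilde O\prn*{\max\crl*{d^3\max(\CPI(\mu_\beta),1)^3, d^{2+s/2}\max(\CPI(\mu_\beta),1)^{2+s/2}/\epsilon^{2+s/2}}}$. Finally, to boost the in-expectation bound to a high-probability, all-runs guarantee, I would run $\widetilde O(\log(1/\delta))$ independent copies (or restart), applying Markov's inequality to each run so that at least one hits $\cA_\epsilon$ within a constant multiple of $\mathbb{E}\brk*{\tau}$ except with probability $\delta$; this contributes the $\log(1/\delta)$ factor. The main obstacle I anticipate is the discretization-error control: getting the powers of $d$, $\beta$, and $\epsilon$ right in the bound $E_t \le \tfrac\lambda2\mathbb{E}\brk*{\Phi(\vecW_t)}$ requires simultaneously (i) a uniform-in-$t$ moment bound on $\nrm*{\vecW_t}$ robust to the injected noise, (ii) converting self-bounding regularity of $\Phi$ into pointwise bounds that stay linear in $\Phi$ (so the telescoping does not blow up), and (iii) tracking how $L$ from \pref{lem:thirdordersmoothfromregularity} scales — and then balancing all of this against the H\"older exponent $s$ to extract the optimal $\eta$.
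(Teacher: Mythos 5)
Your proposal follows essentially the same route as the paper's proof (formalized as \pref{thm:poincareoptformal}): the potential from \pref{thm:maingeometriccondition}, a third-order expansion handled through \pref{lem:thirdordersmoothfromregularity} with the Laplacian cancellation, iterate-moment bounds from \pref{ass:weakdissipation} (\pref{lem:smoothdissipativeinball}), the stopped-process discrete Dynkin argument (\pref{corr:discretetimedynkinconvenient}), and Markov plus log-boosting. The one refinement the paper makes, which resolves exactly your anticipated obstacle (ii), is to telescope $\theta\prn*{\Phi(\vecW_t)}$ rather than $\Phi(\vecW_t)$ and to lower-bound the per-step drift by the constant $\eta\lambda\,\theta'\prn*{\Phi(\vecW_t)}\Phi(\vecW_t) \ge \frac{\eta\lambda}{2A}$ (using $\Phi\ge 1$ on $\cA_{\epsilon}^c$ and $p\le 1$), so the error only needs to be small relative to a constant rather than to $\mathbb{E}\brk*{\Phi(\vecW_t)}$.
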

We note considering \pref{ass:holderF} for any $s \ge 0$ and a Poincar\'e Inequality is quite natural. In terms of growth of $F$, a Poincar\'e Inequality implies at least linear tail growth of $F$ but nothing further, as discussed on page 7 of \cite{chewi21analysis}. Thus, \pref{ass:holderF} for any $s \ge 0$ and a Poincar\'e Inequality are not only compatible but natural to study in tandem.

We now move on to the stochastic gradient oracle case. Some control over the stochastic gradient estimates is necessary: if they are very inaccurate, following them will be meaningless.
\begin{assumption}[Bound of variance of gradient estimates]\label{ass:gradnoiseassumption}
The unbiased gradient estimate $\nabla f(\vecW;\vecZ)$ of $\nabla F(\vecW)$ satisfies the sub-Gaussian property that for all $\vecW\in\mathbb{R}^d$ and $t\ge 0$,
\[ \mathbb{P}_{\vecZ}\prn*{\nrm*{\nabla f(\vecW;\vecZ)-\nabla F(\vecW)}_2 \ge t} \le e^{-t^2/\sigma_F^2}. \numberthis\label{eq:gradnoisestandard}\]
\end{assumption}
This covers the classic setting of stochastic optimization where $\nabla f(\vecW;\vecZ) = \nabla F(\vecW)+\vecEps_t$ where $\vecEps_t$ is sub-Gaussian with mean 0 and variance $\sigma_F^2$ \citep{nemirovskistochastic}. We expect our techniques to hold when gradient noise scales in function value, a more general setting discussed in \citet{priorpaper}, but for simplicity we work with \pref{ass:gradnoiseassumption}. 

We also need the following assumption made in \citet{raginsky2017non} studying stochastic optimization in this setting. This is quite reasonable: it essentially says the stochastic gradients contain reasonable signal and also will push us towards the origin when sufficiently far away.
\begin{assumption}\label{ass:stochasticgradcontrol}
For every $\vecZ$, $\grad f(\vecW;\vecZ)$ satisfy \pref{ass:holderF} and \pref{ass:weakdissipation}. (Note they may be satisfied with larger $L$ and $b$ and smaller $m$.)
\end{assumption}
Then, we have the following:
\begin{theorem}\label{thm:poincareoptlipschitzestimate}
Suppose $\mu_{\beta}$, $F$, $\Phi$ satisfy the same assumptions as in \pref{thm:poincareoptlipschitz}. Then running \SGLDTEXTSPACE with a stochastic gradient oracle satisfying \pref{ass:gradnoiseassumption} and \pref{ass:stochasticgradcontrol}, we obtain the same guarantee \pref{eq:runtimeguaranteelipschitz} of the query complexity of our stochastic gradient oracle as in \pref{thm:poincareoptlipschitz}.
\end{theorem}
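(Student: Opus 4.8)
The plan is to reduce \pref{thm:poincareoptlipschitzestimate} to the analysis already carried out for \pref{thm:poincareoptlipschitz}, exploiting the fact that the \SGLDTEXT iterates differ from the \GLDTEXT iterates only in replacing $\grad F(\vecW_t)$ by the unbiased estimate $\grad f(\vecW_t;\vecZ_t)$, and that the core argument sketched in \pref{subsec:strategy} only used $\grad F(\vecW_t)$ in two places: inside the inner product $\tri*{-\eta\grad F(\vecW_t),\grad\Phi(\vecW_t)}$, which we controlled via the Lyapunov inequality \pref{eq:lyapunovbaby}, and inside the higher-order discretization (Taylor) error terms. So the first step is to write down the third-order Taylor expansion of $\Phi(\vecW_{t+1})$ for the \SGLDTEXT update and split the drift term as $\grad f(\vecW_t;\vecZ_t)=\grad F(\vecW_t)+\vecxi_t$ where $\vecxi_t$ has the sub-Gaussian tail of \pref{ass:gradnoiseassumption}. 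The leading term $\tri*{-\eta\grad F(\vecW_t),\grad\Phi(\vecW_t)}$ is handled exactly as before by \pref{eq:lyapunovbaby}; the new term $\tri*{-\eta\vecxi_t,\grad\Phi(\vecW_t)}$ has mean zero conditioned on $\vecW_t$ (since $\vecZ_t$ is independent of $\vecEps_t$ and of the past), so it vanishes after taking expectations, just as the Gaussian cross term $\tri*{\sqrt{2\eta\beta^{-1}}\vecEps_t,\grad\Phi(\vecW_t)}$ did.

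The second step is to re-do the bookkeeping on the second- and third-order Taylor terms, now containing $-\eta\grad f(\vecW_t;\vecZ_t)+\sqrt{2\eta\beta^{-1}}\vecEps_t$ in place of $-\eta\grad F(\vecW_t)+\sqrt{2\eta\beta^{-1}}\vecEps_t$. The key observation is that \pref{ass:stochasticgradcontrol} forces $\grad f(\vecW;\vecZ)$ to obey \pref{ass:holderF} and \pref{ass:weakdissipation} (possibly with worse constants $L,b,m$), so $\nrm*{\grad f(\vecW_t;\vecZ_t)}$ admits the same polynomial-in-$\nrm*{\vecW_t}$ growth bound that $\nrm*{\grad F(\vecW_t)}$ did, and by \pref{ass:gradnoiseassumption} its deviation $\nrm*{\vecxi_t}$ has sub-Gaussian tails, contributing only lower-order terms in $\eta$ and an extra additive $\sigma_F$; the noise $\vecEps_t$ is Gaussian so all its moments are controlled. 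Concretely, the $\mathbb{E}[\frac12\tri*{\grad^2\Phi\,(\sqrt{2\eta\beta^{-1}}\vecEps_t),\sqrt{2\eta\beta^{-1}}\vecEps_t}]=\frac{\eta}{\beta}\Delta\Phi(\vecW_t)$ identity still cancels the $-\frac{\eta}{\beta}\Delta\Phi(\vecW_t)$ from \pref{eq:lyapunovbaby}, and all remaining second-order terms (those quadratic in $\eta$, those mixing $\eta\vecxi_t$ with $\vecEps_t$, and $\eta^2\nrm*{\grad f}^2$-type terms) plus the third-order $\frac{L}{6}\nrm*{-\eta\grad f(\vecW_t;\vecZ_t)+\sqrt{2\eta\beta^{-1}}\vecEps_t}^3$ term are $O(\eta^{3/2})$ after expectation, hence absorbed into the discretization error, provided $\eta$ is chosen small enough (polynomially in $1/d,1/\beta,\epsilon$, as in \pref{thm:poincareoptlipschitz}). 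One then needs \pref{lem:thirdordersmoothfromregularity} and \pref{ass:selfboundingPhipoly} with $p\le 1$ exactly as in the \GLDTEXT case to turn the self-bounding regularity of $\Phi$ into honest bounds on $\grad\Phi,\grad^2\Phi,\grad^3\Phi$ along the trajectory, using \pref{ass:weakdissipation} to argue the iterates stay in a bounded region with high probability.

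The third step is to assemble the telescoping/Dynkin's-formula argument unchanged: defining $\tau=\tau_{\cA_\epsilon}(\vecW_0)$ and $X=\frac1\tau\sum_{t=0}^{\tau-1}\lambda\Phi(\vecW_t)$, we get $\mathbb{E}[X]\le \frac{\Phi(\vecW_0)}{\tau}+\frac1\tau\cdot(\text{discretization error})$ while $X\ge\lambda$ pointwise since $\Phi\ge1$ on $\cA_\epsilon^c$; combined with \pref{thm:maingeometriccondition}'s lower bound $\lambda\ge\frac1{8\beta}\min(\tfrac1{\CPI(\mu_\beta)},\tfrac12)$, this caps the expected hitting time, and a high-probability bound with the $\log(1/\delta)$ factor follows by restarting/Markov as in \pref{thm:poincareoptlipschitz}. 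Since all constants and the dependence on $L,b,m,\sigma_F$ enter only logarithmically (via $\widetilde{O}$) or through the choice of step size, the final gradient complexity is identical to \pref{eq:runtimeguaranteelipschitz}. I expect the main obstacle to be the second step: carefully verifying that every new stochastic-gradient-induced term in the Taylor expansion — in particular the cross terms between $\vecxi_t$ and $\grad^2\Phi(\vecW_t)$, and the $\nrm*{\vecxi_t}^3$ contribution to the third-order remainder — is genuinely $o(\eta)$ after taking expectations, uniformly over the (random, a priori unbounded) location $\vecW_t$, which is where the interaction between \pref{ass:gradnoiseassumption}, \pref{ass:stochasticgradcontrol}, and the polynomial self-bounding regularity of \pref{ass:selfboundingPhipoly} has to be balanced just right; everything else is a near-verbatim repeat of the \GLDTEXT proof because the added noise already present in \GLDTEXT makes the two analyses structurally the same.
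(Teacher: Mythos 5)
Your proposal takes essentially the same route as the paper, which proves the SGLD case inside \pref{thm:poincareoptformal} exactly as you describe: a third-order expansion of $\theta\circ\Phi$ via \pref{lem:thirdordersmoothfromregularity}, the Laplacian term cancelled by the injected noise, the stochastic-gradient cross term killed by unbiasedness, moments of $\nrm*{\grad f(\vecW_t;\vecZ_t)}$ controlled through \pref{ass:gradnoiseassumption} and \pref{ass:stochasticgradcontrol} (\pref{lem:stochasticgradsetting}, \pref{lem:upperboundgradFholder}), and discrete-time Dynkin's formula plus Markov and log-boosting to bound the hitting time. The only cosmetic difference is that the paper controls the iterates through expected moment bounds from dissipativity (\pref{lem:smoothdissipativeinball}) rather than a high-probability bounded region, which is all the expectation-based Dynkin argument requires.
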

To our knowledge, our result \pref{thm:poincareoptlipschitzestimate} is the first finite iteration guarantee for the setting of $F$ H\"{o}lder-continuous and $\mu_{\beta}$ satisfying a Poincar\'e Inequality with a stochastic gradient oracle. The stronger assumption of smoothness is not satisfied by many canonical non-convex optimization problems \citep{priorpaper}, so analyzing optimization with a stochastic gradient oracle in this more general setting is highly relevant to study.

Recall from our conditions \pref{ass:holderF} and \pref{ass:weakdissipation} that by analyzing the implied growth rates of $F$, we have $2s \le \gamma \le s+1$. Thus when $s=1$, $\gamma=1$ is forced, so this recovers as a special case of our assumption the smooth and dissipative setting from \citet{raginsky2017non}, \citet{xu2018global}, and \citet{zou2021faster}. In turn, $s=1$, $\gamma=1$ actually implies $\mu_{\beta}$ satisfies a Poincar\'e Inequality for all $\beta \ge \frac2m$ \citep{raginsky2017non}. In this setting we have the following result which is stronger than directly applying \pref{thm:poincareoptlipschitz}: 
\begin{theorem}\label{thm:smoothdissipativesettingpoincareopt}
Suppose $F$ is $L$-smooth and $(m,b)$-dissipative (that is, there exist $m, b > 0$ such that $\tri*{\vecW, \nabla F(\vecW)} \ge m\nrm*{\vecW}^2-b$). Running either \GLDTEXTSPACE or \SGLDTEXTSPACE with a stochastic gradient oracle satisfying \pref{ass:gradnoiseassumption} and \pref{ass:stochasticgradcontrol}, with probability at least $1-\delta$, across all the runs we will reach a $\vecW$ with $F(\vecW)\le\epsilon$ in at most 
\[ \widetilde{O}\prn*{\max\crl*{d^3 \max(\CPI(\mu_{\beta}), 1)^3, \frac{d^{2} \max(\CPI(\mu_{\beta}), 1)^{2}}{\epsilon^{2}}} \log\prn*{\frac1{\delta}}} \]
gradient/stochastic gradient evaluations.
\end{theorem}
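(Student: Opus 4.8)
The plan is to run the Lyapunov-tracking scheme from \pref{subsec:strategy} that underlies \pref{thm:poincareoptlipschitz}, but to exploit the $L$-smoothness of $F$ to afford a larger discretization step. Since $F$ is $L$-smooth and $(m,b)$-dissipative, Proposition~9 of \citet{raginsky2017non} shows $\mu_\beta$ satisfies a Poincar\'e (indeed a Log-Sobolev) Inequality for every $\beta\ge 2/m$, so \pref{thm:maingeometriccondition} applies and produces a Lyapunov potential with $\Phi\ge 1$, $\Phi(\vecW') = \En_{\vecW'}[\exp(\lambda\tau_{\cA_\epsilon})]$ on $\cA_\epsilon^c$, and $\tri*{\grad F,\grad\Phi}\ge\lambda\Phi+\tfrac1\beta\Delta\Phi$ on $\cA_\epsilon^c$ with $\lambda = \widetilde{\Theta}\prn*{\tfrac1{\beta\max(\CPI(\mu_\beta),1)}}$. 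Unlike in \pref{thm:poincareoptlipschitz}, where \pref{ass:selfboundingPhipoly} is assumed, here it must be \emph{proven}: on $\cA_\epsilon^c$, $\Phi$ solves the elliptic equation $\tfrac1\beta\Delta\Phi-\tri*{\grad F,\grad\Phi}=-\lambda\Phi$ with boundary value $1$ on $\partial\cA_\epsilon$, hence is $C^\infty$ by elliptic regularity, and $(m,b)$-dissipativity together with the smallness of $\lambda$ forces this exit-time MGF and its first three derivatives to grow at most linearly in $\Phi$ (the kind of estimate developed in \citet{cattiaux2013poincare,cattiaux2017hitting}); this gives $\nrm*{\grad^k\Phi}_\OPNORM=\widetilde{O}(\Phi)$ for $k=1,2,3$.

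I would then record the a priori confinement of the iterates. For $(m,b)$-dissipative $F$ (here $\gamma=2$), a standard one-step drift computation shows that after a burn-in of $\widetilde{O}(1)$ steps all \GLDTEXT/\SGLDTEXT iterates lie in $\ball(\vecOrigin,R)$ with $R^2=\widetilde{O}\prn*{\tfrac bm+\tfrac d{m\beta}}=\widetilde{O}(1)$ (using $\beta=\widetilde{\Theta}(d/\epsilon)$ and absorbing $m,b$), so $\nrm*{\grad F(\vecW_t)}\le L(R+\nrm*{\vecW^\star})=\widetilde{O}(1)$ along the trajectory, and likewise for the stochastic gradients by \pref{ass:stochasticgradcontrol}. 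This bound is what, in the $s=1$ boundary case, replaces and sharpens the H\"older bookkeeping behind \pref{thm:poincareoptlipschitz}. The core estimate is then a third-order Taylor expansion of $\Phi$ along $\vecW_{t+1}=\vecW_t+\Delta_t$ with $\Delta_t = -\eta\grad F(\vecW_t)+\sqrt{2\eta\beta^{-1}}\vecEps_t$ (or $\grad f(\vecW_t;\vecZ_t)$ in place of $\grad F$): applying \pref{eq:lyapunovbaby} to the first-order drift and taking $\En[\cdot\mid\vecW_t]$, the (gradient-)noise cross terms vanish, the It\^o term $\En[\tfrac12\tri*{\grad^2\Phi(\vecW_t)\sqrt{2\eta\beta^{-1}}\vecEps_t,\sqrt{2\eta\beta^{-1}}\vecEps_t}]=\tfrac\eta\beta\Delta\Phi(\vecW_t)$ cancels the $-\tfrac\eta\beta\Delta\Phi(\vecW_t)$ from the drift, and what remains is $\En[\Phi(\vecW_{t+1})\mid\vecW_t]\le(1-\eta\lambda)\Phi(\vecW_t)+E_t$, where $E_t$ collects $\tfrac{\eta^2}2\tri*{\grad^2\Phi\,\grad F,\grad F}$, the third-order remainder $\tfrac L6\En\nrm*{\Delta_t}^3=\widetilde{O}(\eta^3+\eta^{3/2}\beta^{-3/2}d^{3/2})$, and, for \SGLDTEXT, the gradient-noise terms, which under the \emph{norm}-sub-Gaussian \pref{ass:gradnoiseassumption} are dimension-free, $\widetilde{O}(\eta^2+\eta^3)$. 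Using $\nrm*{\grad^2\Phi}_\OPNORM=\widetilde{O}(\Phi)$ and $\nrm*{\grad F}=\widetilde{O}(1)$ this is $E_t=\widetilde{O}\prn*{\eta^2\Phi(\vecW_t)+\eta^{3/2}\beta^{-3/2}d^{3/2}}$, so picking
\[ \eta = \widetilde{\Theta}\prn*{\min\crl*{\lambda,\ \lambda^2\beta^3 d^{-3}}} \]
makes $E_t\le\tfrac12\eta\lambda\Phi(\vecW_t)$ (since $\Phi\ge1$), i.e. the geometric drift $\En[\Phi(\vecW_{t+1})\mid\vecW_t]\le(1-\tfrac12\eta\lambda)\Phi(\vecW_t)$ for all $\vecW_t\in\cA_\epsilon^c$.

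Finally, discrete-time Dynkin's formula (Theorem~11.3.1 of \citet{meyn2012markov}) applied to the stopped process $\Phi(\vecW_{t\wedge\tau_{\cA_\epsilon}})$ turns the geometric drift into $\En_{\vecW_0}[\tau_{\cA_\epsilon}]\le 2\Phi(\vecW_0)/(\eta\lambda)$ plus an exponential tail $\mathbb{P}(\tau_{\cA_\epsilon}>t)\le\Phi(\vecW_0)\exp(-\tfrac12\eta\lambda t)$. Since $\tfrac1{\eta\lambda}=\widetilde{\Theta}\prn*{\max\crl*{\lambda^{-2},\,d^3\lambda^{-3}\beta^{-3}}}$, substituting $\lambda=\widetilde{\Theta}(\tfrac1{\beta\max(\CPI(\mu_\beta),1)})$ and $\beta=\widetilde{\Theta}(d/\epsilon)$ gives $\widetilde{O}\prn*{\max\crl*{d^3\max(\CPI(\mu_\beta),1)^3,\ \tfrac{d^2\max(\CPI(\mu_\beta),1)^2}{\epsilon^2}}}$ steps, the two terms in the max corresponding to which of the two constraints on $\eta$ is binding, and the exponential tail (or $O(\log(1/\delta))$ independent restarts) supplies the $\log(1/\delta)$ factor; because the stochastic-gradient terms entered $E_t$ only as already-dominated contributions, the \SGLDTEXT bound equals the \GLDTEXT one. \emph{The main obstacle} is the implicit control of $\Phi$: it is defined only as an exit-time MGF / PDE solution, with no a priori smoothness or Lipschitz constants, so proving the degree-$\le1$ self-bounding regularity purely from $L$-smoothness and $(m,b)$-dissipativity of $F$, and making the Dynkin/optional-stopping step rigorous for a potentially unbounded test function (via truncation), are the delicate points; the rest is routine discretization bookkeeping, and it is exactly the smoothness of $F$ that allows the admissible step size to be as large as $\widetilde{\Theta}(\min\{\lambda,\lambda^2\beta^3 d^{-3}\})$, which is the source of the improvement over \pref{thm:poincareoptlipschitz}.
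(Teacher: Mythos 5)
Your high-level strategy is the same as the paper's: invoke \pref{thm:maingeometriccondition} to get the Lyapunov potential $\Phi=\En_{\vecW'}[\exp(\lambda\tau_{\cA_\epsilon})]$ with $\lambda=\widetilde\Theta(1/(\beta\max(\CPI(\mu_\beta),1)))$, Taylor-expand to third order so the It\^o-type term cancels the $\tfrac1\beta\Delta\Phi$ from the drift, exploit $s=1$ (smoothness plus dissipativity) to get uniform control of the gradients along the trajectory, and convert the drift into a hitting-time bound via discrete-time Dynkin; your step-size accounting $\eta=\widetilde\Theta(\min\{\lambda,\lambda^2\beta^3d^{-3}\})$, $T\sim 1/(\eta\lambda)$ reproduces exactly the paper's two-term maximum. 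However, there are genuine gaps in how you execute it. First, you propose to \emph{prove} the regularity of $\Phi$ ($\nrm{\grad^k\Phi}_{\OPNORM}=\widetilde O(\Phi)$, $k=1,2,3$) from elliptic regularity and dissipativity, but elliptic regularity only gives local smoothness of the PDE solution, not global growth bounds relating derivatives of the exit-time MGF to its value; the paper does not attempt this and instead carries it as \pref{ass:selfboundingPhipoly} even in the formal statement \pref{thm:poincareoptformal}. Second, even granting self-bounding regularity, you expand $\Phi$ directly with a remainder $\tfrac L6\nrm{\Delta_t}^3$, i.e.\ a \emph{constant} Hessian-Lipschitz bound; since the third derivative is only bounded by (roughly) $A(\Phi+1)$ at the intermediate Taylor point, this step fails. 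The paper's Lemma \pref{lem:thirdordersmoothfromregularity} exists precisely to fix this: it composes with a concave $\theta$ so the third-order remainder constant becomes absolute, and then works with $\theta(\Phi)$, using $\Phi\,\theta'(\Phi)\ge 1/(2A)$ on $\cA_\epsilon^c$ to retain an additive per-step decrease $\eta\lambda/(2A)$.

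Third, your pointwise geometric drift $\En[\Phi(\vecW_{t+1})\mid\vecW_t]\le(1-\tfrac12\eta\lambda)\Phi(\vecW_t)$ on $\cA_\epsilon^c$ rests on the claim that after a burn-in all iterates lie in a fixed ball, so that $\nrm{\grad F(\vecW_t)}=\widetilde O(1)$ pointwise. With Gaussian (or sub-Gaussian stochastic-gradient) noise the iterates are not almost surely confined, so the error term cannot be bounded pointwise by $\tfrac12\eta\lambda\Phi(\vecW_t)$, and the clean supermartingale/exponential-tail argument does not go through as written. The paper avoids this by keeping the discretization error as a random quantity, bounding only its \emph{expectation} via the uniform second- and third-moment bounds of \pref{lem:smoothdissipativeinball} (this is exactly where $s=1$, $\gamma=2$ buys the improvement to the $d^2\CPI(\mu_\beta)^2/\epsilon^2$ term), and then applying the additive-drift form of Dynkin (\pref{corr:discretetimedynkinconvenient}) to bound $\En[\tau_{\cA_\epsilon,T}]$, finishing with Markov plus independent restarts for the $\log(1/\delta)$. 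To repair your argument you would either have to restructure it in expectation as the paper does, or introduce and separately control an exit time from a large ball; as written, the confinement claim and the constant third-order remainder are the steps that would fail.
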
 

% \section{Proof idea}
% \subfile{sections/ProofIdea}

\section{Dimension Free Rates for SGLD Under Stronger Conditions}\label{sec:beyondpoincare}
Our results in the above upper bound the hitting-time of GLD/SGLD to $\{\vecW:F(\vecW)<\epsilon\}$. As discussed in \pref{subsec:strategy}, there is a close connection between a Poincar\'e Inequality, geometric properties, and hitting times of the Langevin Diffusion, so those results make intuitive sense. However, hitting times are a weaker guarantee than average suboptimality which is commonly studied in optimization. It is thus natural to ask if there are stronger conditions where we have guarantees not about not the hitting time but the average suboptimality $\frac1T \sum_{t=1}^T F(\vecW_t)$? This is indeed the case as we now discuss.

This idea is similar to that of \citet{priorpaper}. In \citet{priorpaper}, an analogous setup was considered where continuous-time gradient flow led to geometric properties and in discrete-time, the average suboptimality of gradient descent/stochastic gradient descent was studied under these geometric properties. This is the It\^{o} calculus analogy of the work in \citet{priorpaper}. 

Define a rate function for average suboptimality as follows: fix a desired tolerance $\epsilon=\widetilde{\Omega}\prn*{\frac{d}{\beta}}$. Define 
\[ F_{\epsilon}(\vecW)=F(\vecW) \mathbf{1}_{\{\vecW:F(\vecW) \ge \epsilon\}}.\numberthis\label{eq:Fepsdef}\]
Here $\mathbf{1}_{\cA}$ denotes the indicator function of $\cA \subset\mathbb{R}^d$. Suppose we had some non-negative rate function $R(\vecW,t)$ upper bounding the suboptimality of $F$, namely such that $R(\vecW, t) \ge \mathbb{E}\brk*{F_{\epsilon}(\vecW(t))}$ for all $t \ge 0$. By rate function, we mean that the following conditions, natural for optimization, hold: $\lim_{t\rightarrow\infty}R(\vecW(0),t)=0$ for all $\vecW(0)$, and: 
\[ \mathbb{E}\brk*{R(\vecW(s),t)} \le R(\vecW(0),s+t) \FORALLTEXT s, t \ge 0, \vecW(0)\in\mathbb{R}^d. \numberthis\label{eq:ldrateineq}\]
That is, in expectation more information about the Langevin Dynamics path improves the rate. Note from any rate function $R(\vecW,t)$, there is an equivalent rate function satisfying \pref{eq:ldrateineq}; see \citet{priorpaper} for more discussion. 

It is important to note the following: we cannot have $R(\vecW,t) \ge \mathbb{E}\brk*{F(\vecW(t))}$: say after reaching $\epsilon$-suboptimality, gradients are very small and Langevin dynamics approximates a random walk. Then for all $t$ large enough, $\mathbb{E}\brk*{F(\vecW(t))} \approx c_0\epsilon$ for some $c_0=\Theta\prn*{1}$. We thus have for all $t$ large enough,
\[ R(\vecW, t)\ge \mathbb{E}\brk*{R(\vecW(t), 0)}\ge \mathbb{E}\brk*{F(\vecW(t))} \approx c_0\epsilon,\]
contradicting that $\lim_{t\rightarrow\infty}R(\vecW,t)=0$. However, our definition \pref{eq:Fepsdef} \textit{resolves this problem}: now for large enough $t$, $\mathbb{E}\brk*{F_{\epsilon}\prn*{\vecW(t)}}=0$, so \pref{eq:ldrateineq} now holds. 
Moreover, such a rate function clearly implies Langevin Dynamics works as an optimization strategy. In fact, such a rate function implies similar geometric properties to \pref{eq:lyapunovbaby} from \pref{thm:maingeometriccondition}:
\begin{definition}[Admissible potential]
A non-negative function $\Phi(\vecW)$ is an admissible potential with respect to the cost function $F_{\epsilon}(\vecW)$ if 
\[ \tri*{ \nabla\Phi(\vecW),\nabla F(\vecW)} \ge F_{\epsilon}(\vecW)+\frac1{\beta} \Delta \Phi(\vecW).\numberthis\label{eq:admissablepotentialF}\]
\end{definition}
\begin{theorem}[From rate functions to potentials]  \label{thm:constructadmissablepotential}
Under mild assumptions on $F$ and $R$, suppose $R(\vecW,t)$ satisfies the relationship \pref{eq:ldrateineq} and $\int_0^\infty R(\vecW,t) \DERIV t<\infty$ always holds true. Then $\Phi(\vecW)=\int_0^\infty R(\vecW,t) \DERIV t$ is an admissable potential.
\end{theorem}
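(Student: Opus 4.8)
The plan is to read \eqref{eq:admissablepotentialF} as a one-sided (super-solution) statement about the generator of the Langevin diffusion \eqref{eq:LangevinSDE}. Writing $\cL g \vcentcolon= -\tri*{\grad F, \grad g} + \tfrac1\beta \Delta g$ for that generator, the admissible-potential inequality $\tri*{\grad\Phi,\grad F}\ge F_\epsilon + \tfrac1\beta\Delta\Phi$ is literally equivalent to $\cL\Phi(\vecW)\le -F_\epsilon(\vecW)$ for all $\vecW$. So it suffices to establish this differential inequality for $\Phi(\vecW)=\int_0^\infty R(\vecW,t)\,\DERIV t$ (nonnegativity of $\Phi$ being immediate from $R\ge 0$). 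Let $P_s g(\vecW)\vcentcolon=\mathbb{E}_{\vecW}[g(\vecW(s))]$ be the diffusion semigroup; under the mild regularity hypotheses on $F$ and $R$ (which I will use to guarantee that $\Phi$ is finite, $C^2$, and has growth mild enough for Dynkin's formula to apply), one has the generator identity $\cL\Phi(\vecW)=\lim_{s\downarrow 0}\tfrac1s\bigl(P_s\Phi(\vecW)-\Phi(\vecW)\bigr)$ — equivalently $P_s\Phi(\vecW)-\Phi(\vecW)=\mathbb{E}_{\vecW}\int_0^s \cL\Phi(\vecW(u))\,\DERIV u$, whose difference quotient converges to $\cL\Phi(\vecW)$ by continuity of $\cL\Phi$ and dominated convergence.

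The core estimate is a two-line semigroup computation driven by the rate inequality \eqref{eq:ldrateineq}. Using Tonelli to interchange $\mathbb{E}_\vecW$ with the nonnegative time integral, then \eqref{eq:ldrateineq}, then the substitution $u=s+t$,
\[ P_s\Phi(\vecW) = \mathbb{E}_{\vecW}\Bigl[\int_0^\infty R(\vecW(s),t)\,\DERIV t\Bigr] = \int_0^\infty \mathbb{E}_{\vecW}\bigl[R(\vecW(s),t)\bigr]\,\DERIV t \le \int_0^\infty R(\vecW,s+t)\,\DERIV t = \int_s^\infty R(\vecW,u)\,\DERIV u. \]
Hence $P_s\Phi(\vecW)-\Phi(\vecW)\le -\int_0^s R(\vecW,u)\,\DERIV u$. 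Dividing by $s$, letting $s\downarrow 0$, and using the generator identity on the left together with right-continuity of $t\mapsto R(\vecW,t)$ at $t=0$ on the right, we get $\cL\Phi(\vecW)\le -R(\vecW,0)$.

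To finish, recall that $R$ was assumed to upper-bound the truncated suboptimality, i.e. $R(\vecW,t)\ge \mathbb{E}_{\vecW}[F_\epsilon(\vecW(t))]$ for all $t\ge 0$; evaluating at $t=0$ and noting $\vecW(0)=\vecW$ gives $R(\vecW,0)\ge F_\epsilon(\vecW)$. Combining, $\cL\Phi(\vecW)\le -F_\epsilon(\vecW)$, which is exactly \eqref{eq:admissablepotentialF}. The only genuinely delicate point — and the only place the unstated ``mild assumptions'' are needed — is the analytic bookkeeping: that $\Phi$ is finite (from $\int_0^\infty R\,\DERIV t<\infty$) and $C^2$ with controlled growth, that the interchanges of $\int_0^\infty$ with $P_s$ and with $\mathbb{E}_\vecW$ are legitimate (Tonelli for the first, a growth/domination bound for differentiating $P_s\Phi$), and that Dynkin's formula applies to $\Phi$ so the one-sided difference quotient of $P_s\Phi$ really converges to $\cL\Phi$. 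Given these, the argument is just the short computation above; this also parallels the way \pref{thm:maingeometriccondition} identifies the MGF of a hitting time as the Lyapunov potential, with $\int_0^\infty R\,\DERIV t$ playing the analogous role here.
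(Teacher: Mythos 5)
Your proposal is correct under the same kind of unspecified regularity the theorem itself invokes, but it takes a genuinely different route from the paper. The paper proceeds in two stages: \pref{lem:deriveadmissibleformal} applies It\^{o}'s formula to $R(\vecW(s),t)$ for each fixed $t$, kills the stochastic-integral term in expectation, and passes $s\downarrow 0$ to obtain the pointwise-in-$t$ differential inequality $\partial_t R \ge -\tri*{\nabla R,\nabla F}+\frac1\beta\Delta R$; then \pref{thm:constructadmissablepotentialformal} integrates that inequality over $t\in[0,\infty)$, using $\lim_{t\to\infty}R(\vecW,t)=0$ together with $R(\vecW,0)\ge F_\epsilon(\vecW)$ and two dominated-convergence interchanges to identify $\int_0^\infty \nabla R\,\DERIV t=\nabla\Phi$ and $\int_0^\infty\Delta R\,\DERIV t=\Delta\Phi$. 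You instead integrate first and differentiate once: Tonelli plus \pref{eq:ldrateineq} gives $P_s\Phi(\vecW)\le\int_s^\infty R(\vecW,u)\,\DERIV u$, hence $P_s\Phi-\Phi\le-\int_0^s R(\vecW,u)\,\DERIV u$, and dividing by $s$ yields $\cL\Phi\le -R(\vecW,0)\le -F_\epsilon$. What your route buys is brevity and weaker hypotheses on $R$ in the time variable: you never need $\partial_t R$ to exist or the decay $R(\vecW,t)\to 0$, only right-continuity of $R(\vecW,\cdot)$ at $t=0$ and integrability. What it costs is that the technical weight migrates to the single assertion $\lim_{s\downarrow 0}\frac1s\prn*{P_s\Phi-\Phi}=\cL\Phi$: to justify that $\Phi$ is $C^2$, that $\cL\Phi$ is the classical expression $-\tri*{\nabla F,\nabla\Phi}+\frac1\beta\Delta\Phi$, and that Dynkin's formula applies with vanishing martingale term, one essentially re-imports the same interchanges $\nabla\Phi=\int\nabla R\,\DERIV t$, $\Delta\Phi=\int\Delta R\,\DERIV t$ and the same It\^{o}-level integrability bookkeeping that the paper performs explicitly for $R$ (and, in \pref{thm:conditionimpliesrateproofformal}, for $\Phi$). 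So the two arguments are comparably demanding once made fully rigorous, but yours is the cleaner semigroup formulation and would be a legitimate alternative proof at the theorem's stated level of precision.
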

\begin{remark}
Note the Langevin Diffusion with $\beta=\infty$ becomes gradient flow (\GFTEXT). In \citet{priorpaper}, the success of an analogous rate function for GF implied
the very similar condition
\[ \tri*{\nabla F(\vecW), \nabla \Phi(\vecW)} \ge F(\vecW).\numberthis\label{eq:oldconditiongf}\]
Note \pref{eq:oldconditiongf} implies \pref{eq:admissablepotentialF} with $\beta=\infty$. The confirms the intuition that Langevin Dynamics optimizes a larger class of functions than \GFTEXTSPACE can.
\end{remark}
It turns out in the idealized continuous-time setting, \pref{eq:admissablepotentialF} is enough to show that $F$ can be optimized: see \pref{subsec:optrategeometryformal} and  \pref{thm:conditionimpliesrateproofformal}.
But as Theorem 3 from \citet{priorpaper} showed, to go to discrete time, we need assumptions on $\Phi$ such as self-bounding regularity assumptions.
\begin{assumption}\label{ass:polyselfbounding}
$\Phi$ satisfies polynomial self-bounding regularity (\pref{def:selfboundingregularity}) for degrees one through three.
\end{assumption}

Moreover, our potential function needs to capture reasonable information (e.g. it cannot remains small while the iterates $\vecW_t$ escapes to infinity). 
We make this precise as follows.
\begin{assumption}\label{ass:iteratesinballassumption}
Suppose we initialize $\vecW_0$ in $\ball(\vecOrigin, R_1')$ for some $R_1'>0$. Letting $\sup_{\vecW\in \ball(\vecOrigin, R_1')} \Phi(\vecW) = B'$, suppose that there is some $\kappa'>1$ and $R_1 > 0$ such that $\{\vecW:\Phi(\vecW)<\kappa' B'\} \subset \ball(\vecOrigin, R_1)$.
\end{assumption}
For $F$, we slightly loosen the assumptions compared to \pref{sec:contributions}, and just assume \pref{ass:holderF} holds for any $s \ge 0$.

From here, we have the following results for GLD and SGLD:
\begin{theorem}\label{thm:gradexactFeps}
Suppose \pref{eq:admissablepotentialF} holds for some $\beta=\tilde{\Omega}\prn*{\frac{d}{\epsilon}}$, where the Lyapunov function $\Phi \ge 0$ and where $F_{\epsilon}$ is defined from \pref{eq:Fepsdef}. Suppose $F$ satisfies \pref{ass:holderF}, and $\Phi$ satisfies \pref{ass:polyselfbounding} and \pref{ass:iteratesinballassumption}. Then running discrete-time \GLDTEXTSPACE with constant step size $\eta$ will, with probability at least $1-\delta$, yield a sequence of iterates $\vecW_t$ with $\frac1T \sum_{t=1}^T F(\vecW_t) = \widetilde{O}\prn*{\epsilon}$ in at most $T=O\prn*{\frac{1}{\epsilon^2}\log\prn*{\nicefrac1{\delta}}}$ iterations.
\end{theorem}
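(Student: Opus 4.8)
The plan is to carry out the Lyapunov bookkeeping of \pref{subsec:strategy} with the admissible potential $\Phi$ of \pref{eq:admissablepotentialF} and the clipped objective $F_\epsilon$ of \pref{eq:Fepsdef} in place of the hitting-time potential, obtain a one-step drift inequality for $\Phi$ along \GLDTEXTSPACE, and then telescope it into an average-suboptimality bound. In detail, let $\tau$ be the first time $t$ with $\Phi(\vecW_t) \ge \kappa' B'$ for the constants $\kappa', B', R_1$ of \pref{ass:iteratesinballassumption}; for $t < \tau$ all iterates so far lie in $\ball(\vecOrigin, R_1)$, where \pref{ass:holderF} bounds $\nrm*{\grad F}$ and \pref{ass:polyselfbounding} together with \pref{lem:thirdordersmoothfromregularity} bound $\nrm*{\grad\Phi}$, $\nrm*{\grad^2\Phi}_{\OPNORM}$ and $\nrm*{\grad^3\Phi}_{\OPNORM}$ by finite constants depending only on $d$, $R_1$, $\kappa'B'$ and the regularity parameters. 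Taylor-expanding $\Phi$ to third order along the step $\vecW_{t+1} = \vecW_t - \eta\grad F(\vecW_t) + \sqrt{2\eta\beta^{-1}}\vecEps_t$, taking $\mathbb{E}$ over $\vecEps_t$ so that the $\vecEps_t$-linear term and the $\grad F$--$\grad^2\Phi$--$\vecEps_t$ cross term vanish while $\mathbb{E}\brk*{\tfrac12\tri*{\grad^2\Phi(\vecW_t)\sqrt{2\eta\beta^{-1}}\vecEps_t,\, \sqrt{2\eta\beta^{-1}}\vecEps_t}} = \tfrac{\eta}{\beta}\Delta\Phi(\vecW_t)$, and applying \pref{eq:admissablepotentialF} to the drift term $-\eta\tri*{\grad F(\vecW_t), \grad\Phi(\vecW_t)}$, the two $\tfrac{\eta}{\beta}\Delta\Phi(\vecW_t)$ contributions cancel and I obtain, for $t < \tau$,
\[ \mathbb{E}\brk*{\Phi(\vecW_{t+1}) \mid \mathcal F_t} \;\le\; \Phi(\vecW_t) - \eta\, F_\epsilon(\vecW_t) + \xi, \qquad \xi = O\prn*{\eta^2} + O\prn*{\eta^{3/2}(d/\beta)^{3/2}}, \]
with the hidden constants absorbing the uniform derivative bounds. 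Since $\beta = \widetilde{\Omega}(d/\epsilon)$ the second error term is $\widetilde{O}(\eta^{3/2}\epsilon^{3/2})$ and is never binding, so fixing the step size $\eta = \widetilde{\Theta}(\epsilon)$ --- a sufficiently small constant multiple of $\epsilon$ --- makes $\xi/\eta$ an arbitrarily small constant times $\epsilon$ (shrinking that constant costs only a corresponding constant factor in the iteration count).

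Next, the displayed inequality shows that $G_t := \Phi(\vecW_{t\wedge\tau}) + \eta\sum_{s=0}^{(t\wedge\tau)-1}F_\epsilon(\vecW_s) - (t\wedge\tau)\,\xi$ is a supermartingale with $G_0 = \Phi(\vecW_0) \le B'$; because $\Phi \ge 0$, this yields $\eta\,\mathbb{E}\brk*{\sum_{t=0}^{(T\wedge\tau)-1} F_\epsilon(\vecW_t)} \le B' + T\xi$, hence with $T = \widetilde{\Theta}(B'/(\eta\epsilon)) = \widetilde{\Theta}(1/\epsilon^2)$ we get $\mathbb{E}\brk*{\tfrac1T\sum_{t=0}^{(T\wedge\tau)-1} F_\epsilon(\vecW_t)} \le B'/(\eta T) + \xi/\eta = O(\epsilon)$. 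To turn this into the claimed high-probability statement I first need confinement, i.e. $\tau > T$: here $G_t + T\xi$ is a non-negative supermartingale dominating $\Phi(\vecW_{t\wedge\tau})$, so Ville's maximal inequality bounds $\Pr\prn*{\exists\, t \le T:\ \Phi(\vecW_{t\wedge\tau}) \ge \kappa'B'}$ by $(B' + T\xi)/(\kappa'B')$, and calibrating $\eta$ so that $T\xi$ is a small fraction of $(\kappa'-1)B'$ makes this a constant strictly below $1$; thus $\{\tau > T\}$ occurs with some constant probability $p_0 > 0$, and on that event $\tfrac1T\sum_{t=1}^T F_\epsilon(\vecW_t) = \tfrac1T\sum_{t=1}^{T\wedge\tau} F_\epsilon(\vecW_t)$. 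Combining this with Markov's inequality on the stopped average (inflated so its failure probability is below $p_0/2$) and the pointwise bound $F(\vecW) \le F_\epsilon(\vecW) + \epsilon$ (from \pref{eq:Fepsdef} and $F\ge 0$), a single run of length $T = \widetilde{O}(1/\epsilon^2)$ yields $\tfrac1T\sum_{t=1}^T F(\vecW_t) = \widetilde{O}(\epsilon)$ with a constant probability; repeating over $O(\log(\nicefrac{1}{\delta}))$ independent runs --- a concatenated trajectory of length $\widetilde{O}(\tfrac1{\epsilon^2}\log(\nicefrac{1}{\delta}))$ --- boosts this to probability $\ge 1-\delta$. (Alternatively, a single run of length $\widetilde{\Theta}(\tfrac1{\epsilon^2}\log(\nicefrac{1}{\delta}))$ suffices if Markov and restarts are replaced by Freedman's inequality applied to the martingale part of $G_t$, whose per-step increments on the ball are sub-Gaussian with parameter $O(\sqrt{\eta\epsilon})$, making the deviation over $T$ steps lower-order.)

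The step I expect to be the main obstacle is confinement: nothing \emph{a priori} prevents the \GLDTEXTSPACE iterates from leaving $\ball(\vecOrigin, R_1)$, where the uniform derivative bounds of the first step (and hence the exact cancellation of the $\Delta\Phi$ terms against the discretization error) fail, and \pref{ass:iteratesinballassumption} supplies only a fixed margin $\kappa' > 1$ that we are not free to enlarge. This is precisely why $\eta$ must be chosen not merely $O(\epsilon)$ but small enough that the discretization drift $T\xi$ accumulated over the entire run is a small fraction of $(\kappa'-1)B'$ --- costing only constant factors in $T$ (absorbed into $\widetilde{O}$) while keeping the supermartingale confinement bound nontrivial. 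The remaining technical ingredient, controlling the third-order Taylor remainder of $\Phi$ from its \emph{polynomial} self-bounding regularity (rather than from a global Hessian-Lipschitz constant) through \pref{lem:thirdordersmoothfromregularity}, is localized to the good ball and is otherwise routine.
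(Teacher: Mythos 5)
Your overall architecture matches the paper's: a third-order Taylor expansion of a potential along one \GLDTEXTSPACE step, exact cancellation of the $\tfrac{\eta}{\beta}\Delta\Phi$ term produced by the noise against the Laplacian term in \pref{eq:admissablepotentialF}, a stopped-supermartingale confinement argument keyed to \pref{ass:iteratesinballassumption}, telescoping plus Markov for the average of $F_\epsilon$, the conversion $F \le F_\epsilon + \epsilon$, and log-boosting. Your use of Ville's maximal inequality for confinement is a legitimate (and somewhat cleaner) alternative to the paper's Azuma--Hoeffding contradiction argument, \emph{provided} the one-step drift inequality you start from is actually valid.

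That is where there is a genuine gap. You Taylor-expand $\Phi$ itself and absorb the remainder into $\xi$ using ``uniform derivative bounds'' on $\ball(\vecOrigin,R_1)$. But \pref{ass:polyselfbounding} only bounds $\nrm*{\nabla^k\Phi(\vecW)}_{\OPNORM}$ by $\rho_{\Phi,k}(\Phi(\vecW))$ at the point of evaluation, and the third-order Taylor remainder is evaluated at an intermediate point of the segment $[\vecW_t,\vecW_{t+1}]$. The event $\{t<\tau\}$ controls $\Phi(\vecW_t)$ but says nothing about $\Phi$ along that segment: $\vecW_{t+1}$ contains the injected noise (Gaussian in your version, hence unbounded), and controlling $\Phi(\vecW_{t+1})$ is exactly what the drift inequality is supposed to deliver, so the argument as written is circular. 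This is precisely the problem \pref{lem:thirdordersmoothfromregularity} exists to solve: the paper never expands $\Phi$, it expands $\theta(\Phi)$ with $\theta' = 1/\rho_{\Phi}$, for which the third-order remainder is bounded by an absolute constant $C$ \emph{globally} (no confinement of intermediate points needed), while $\theta''\le 0$ lets one discard the extra $\theta''\tri*{\nabla\Phi,\vecU}^2$ term so the Laplacian cancellation survives with the factor $\theta'(\Phi(\vecW_t))$ on both sides. The price is that the recursion bounds $\sum_t F_\epsilon(\vecW_t)\,\theta'(\Phi(\vecW_t))$ rather than $\sum_t F_\epsilon(\vecW_t)$, and confinement is then used (only) to lower-bound $\theta'(\Phi(\vecW_t))$ by $\theta'\prn*{\rho_{\Phi}^{-1}\prn*{\kappa'\rho_{\Phi}(\Phi(\vecW_0))}}$, and to bound $\nrm*{\nabla F(\vecW_t)}$ via \pref{ass:holderF}. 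Your proof needs this $\theta$-composition (or some substitute argument controlling $\nabla^3\Phi$ along the whole random segment) to make the one-step inequality --- and hence the supermartingale underlying both your Ville bound and your telescoping --- legitimate. A secondary point: the paper's algorithm for this theorem deliberately replaces the Gaussian by noise uniform on $\sqrt{d}\,\mathcal{S}^{d-1}$ so that $\nrm*{\vecEps_t}=\sqrt d$ deterministically; with genuinely Gaussian noise your per-step increments are sub-exponential rather than bounded/sub-Gaussian (the quadratic term), so the Azuma/Freedman step also needs adjustment, though this is a fixable, lower-order issue compared to the remainder control.
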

\begin{theorem}\label{thm:gradestimateFeps}
Suppose $F$, $\Phi$ satisfy the same assumptions as in \pref{thm:gradexactFeps}. Then running \SGLDTEXTSPACE with a stochastic gradient oracle satisfying \pref{ass:gradnoiseassumption}, we obtain the same guarantees as \pref{thm:gradexactFeps}.
\end{theorem}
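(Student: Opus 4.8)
The plan is to rerun the argument behind \pref{thm:gradexactFeps} essentially unchanged, exploiting that \GLDTEXTSPACE is already a noisy recursion so that switching to a stochastic gradient only adds a conditionally mean-zero, sub-Gaussian perturbation of the drift. Write the \SGLDTEXTSPACE update from \pref{eq:SGLDiterates} as
\[ \vecW_{t+1} = \vecW_t - \eta\grad F(\vecW_t) - \eta\mathbf{g}_t + \sqrt{2\eta\beta^{-1}}\,\vecEps_t, \qquad \mathbf{g}_t := \grad f(\vecW_t;\vecZ_t)-\grad F(\vecW_t), \]
where by \pref{ass:gradnoiseassumption} the term $\mathbf{g}_t$ is, conditionally on $\vecW_t$, mean zero, independent of the injected Gaussian $\vecEps_t$, and sub-Gaussian with parameter $\sigma_F$. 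First I would Taylor-expand the admissible potential $\Phi$ to third order along this recursion, using \pref{ass:polyselfbounding} to control the third-order remainder by $\rho_{\Phi,3}(\abs{\Phi(\vecW_t)})\,\nrm*{\vecW_{t+1}-\vecW_t}^3$.

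Taking the conditional expectation given $\vecW_t$, every first-order term and every cross term pairing $\mathbf{g}_t$ or $\vecEps_t$ with a deterministic quantity (or with one another) vanishes, so the first-order contribution reduces to $-\eta\tri*{\grad\Phi(\vecW_t),\grad F(\vecW_t)}$, which by \pref{eq:admissablepotentialF} is at most $-\eta F_{\epsilon}(\vecW_t) - \tfrac{\eta}{\beta}\Delta\Phi(\vecW_t)$. In the second-order term, $\mathbb{E}\brk*{\tfrac12\tri*{\grad^2\Phi(\vecW_t)\sqrt{2\eta\beta^{-1}}\vecEps_t,\sqrt{2\eta\beta^{-1}}\vecEps_t}\mid\vecW_t} = \tfrac{\eta}{\beta}\Delta\Phi(\vecW_t)$ cancels the Laplacian exactly as in the exact-gradient case, and what survives is $\tfrac{\eta^2}{2}\tri*{\grad^2\Phi(\vecW_t)\grad F(\vecW_t),\grad F(\vecW_t)} + \tfrac{\eta^2}{2}\mathbb{E}\brk*{\tri*{\grad^2\Phi(\vecW_t)\mathbf{g}_t,\mathbf{g}_t}\mid\vecW_t}$, the second bounded by $O\prn*{\eta^2\sigma_F^2 d\,\nrm*{\grad^2\Phi(\vecW_t)}_{\OPNORM}}$. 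Together with the third-order remainder --- dominated by its $O(\eta^{3/2})$ injected-noise piece, plus a smaller $\sigma_F$-dependent contribution from $\mathbf{g}_t$ --- this gives, on the event $\vecW_t\in\ball(\vecOrigin,R_1)$ of \pref{ass:iteratesinballassumption} (where \pref{ass:holderF} bounds $\nrm*{\grad F(\vecW_t)}$ and \pref{ass:polyselfbounding} bounds $\nrm*{\grad^k\Phi(\vecW_t)}_{\OPNORM}$ by constants),
\[ \eta F_{\epsilon}(\vecW_t) \le \Phi(\vecW_t) - \mathbb{E}\brk*{\Phi(\vecW_{t+1})\mid\vecW_t} + \widetilde{O}\prn*{\eta^{3/2} + \eta^2}, \]
i.e.\ exactly the one-step drift inequality of \pref{thm:gradexactFeps} up to additional, lower-order, $\sigma_F$-dependent terms.

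From here the argument is identical to \pref{thm:gradexactFeps}: take full expectations, telescope over $t=1,\dots,T$, and use $\Phi\ge 0$ to get $\tfrac1T\sum_{t=1}^T\mathbb{E}\brk*{F_{\epsilon}(\vecW_t)} \le \tfrac{\Phi(\vecW_0)}{\eta T} + \widetilde{O}\prn*{\eta^{1/2}+\eta}$; choosing $\eta$ of order $\epsilon$ (up to constants and a polynomial factor in $\sigma_F, d$) and $T = O\prn*{\tfrac1{\epsilon^2}\log\tfrac1\delta}$ as in \pref{thm:gradexactFeps} makes the right-hand side $\widetilde{O}(\epsilon)$, and since $F(\vecW)\le F_{\epsilon}(\vecW)+\epsilon$ pointwise by \pref{eq:Fepsdef}, the same bound transfers to the average of $F(\vecW_t)$. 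To upgrade the in-expectation bound to a $1-\delta$ high-probability statement I would reuse the martingale/stopping-time argument behind \pref{thm:gradexactFeps}: $\Phi(\vecW_t)+\eta\sum_{s<t}F_{\epsilon}(\vecW_s)$ is a supermartingale up to an $\widetilde{O}(\eta^{3/2})$ per-step drift, so a Freedman/Ville-type inequality controls $\eta\sum_{s<T}F_{\epsilon}(\vecW_s)$ in terms of $\Phi(\vecW_0)$ with the stated $\log\tfrac1\delta$ dependence; the only new ingredient is that the martingale increments now also contain the terms $-\eta\tri*{\grad\Phi(\vecW_t),\mathbf{g}_t}$, which on the good event are sub-Gaussian of scale $\widetilde{O}(\eta\sigma_F)$ and are absorbed into the same concentration bound.

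The step I expect to be the main obstacle is controlling the probability that the iterates leave $\ball(\vecOrigin,R_1)$. Unlike the bounded-per-step injected Gaussian (whose tail is already built into the GLD analysis), $\mathbf{g}_t$ is only sub-Gaussian, so without care a single unlucky draw could push $\vecW_t$ into a region where the self-bounding bounds on $\grad^k\Phi$ and the \Holder{} bound on $\grad F$ no longer produce constants and the per-step error blows up. The remedy, following \pref{thm:gradexactFeps} and \pref{ass:iteratesinballassumption}, is to run the whole analysis only up to the stopping time $\tau := \inf\crl*{t : \Phi(\vecW_t)\ge\kappa' B'}$, which, since $\crl*{\vecW:\Phi(\vecW)<\kappa' B'}\subset\ball(\vecOrigin,R_1)$, forces $\vecW_t\in\ball(\vecOrigin,R_1)$ for all $t<\tau$. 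Because $\Phi$ is a near-supermartingale with (on the good event) bounded sub-Gaussian fluctuations, $\Phi$ started from $\Phi(\vecW_0)\le B'$ crosses $\kappa' B'$ within the horizon $T$ only with probability at most $\delta$, so one may condition on $\crl*{\tau>T}$ at a cost of $\delta$ in the failure probability. Making this escape estimate quantitative --- trading the sub-Gaussian tail of \pref{ass:gradnoiseassumption} against $T$, $\kappa'$, and the slack between $B'$ and $\kappa' B'$ --- is the one place the argument genuinely departs from the exact-gradient proof.
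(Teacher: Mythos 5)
Your overall skeleton matches the paper's: a one-step expansion in which the injected Gaussian's second-order term cancels the $\frac{\eta}{\beta}\Delta\Phi$ coming from \pref{eq:admissablepotentialF}, confinement of the iterates via a stopping time built from \pref{ass:iteratesinballassumption}, telescoping, and a concentration argument plus log-boosting. The treatment of the stochastic gradient (conditionally mean-zero, moments controlled via \pref{ass:gradnoiseassumption}, cf.\ \pref{lem:stochasticgradsetting} and \pref{lem:onesteprecursionstochastic}) is also in the spirit of the paper.

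However, there is a genuine gap at the very first step: you Taylor-expand $\Phi$ itself to third order and claim the remainder is bounded by $\rho_{\Phi,3}\prn*{\abs{\Phi(\vecW_t)}}\nrm*{\vecW_{t+1}-\vecW_t}^3$. Taylor's theorem puts $\nabla^3\Phi$ at an \emph{intermediate} point on the segment from $\vecW_t$ to $\vecW_{t+1}$, and under \pref{ass:polyselfbounding} this is only bounded by $\rho_{\Phi,3}$ of $\Phi$ at that intermediate point. Since the step contains an unbounded Gaussian and an unbounded sub-Gaussian gradient error $\mathbf{g}_t$, the intermediate point is not confined to $\ball(\vecOrigin,R_1)$ even when $\vecW_t$ is, so $\Phi$ there is uncontrolled and the claimed per-step inequality does not follow; the same problem infects your later claim that $\Phi(\vecW_t)$ is a ``near-supermartingale with bounded sub-Gaussian fluctuations'' (the increments of $\Phi$ can be heavy-tailed when $\nrm{\nabla\Phi}$ grows with $\Phi$). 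The paper's proof avoids exactly this by never expanding $\Phi$ directly: \pref{lem:thirdordersmoothfromregularity} constructs a concave increasing $\theta$ with $\theta'=1/\rho_\Phi$ so that the third derivative of $\alpha\mapsto\theta\prn*{\Phi(\vecW+\alpha\vecU)}$ is bounded by $C\nrm{\vecU}^3$ \emph{uniformly in $\vecW,\vecU$}, and the whole analysis (one-step recursion \pref{lem:onesteprecursionstochastic}, the supermartingale and Azuma--Hoeffding confinement argument of \pref{lem:actuallypotentialstochasticsetting}, and the final conversion using a lower bound on $\theta'$ on the confinement event) is run on $\theta(\Phi)$, whose increments are controllable; the paper additionally uses bounded spherical noise $\vecEps_t\sim\sqrt{d}\,\cS^{d-1}$ and a high-probability uniform bound on $\nrm{\grad f(\vecW_t;\vecZ_t)}$ over all $T$ steps (the source of the extra $\log$ factors) so that Azuma's bounded-increment hypothesis holds. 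Your argument goes through only if $\Phi$ is genuinely smooth and Hessian-Lipschitz with uniform constants, which is strictly stronger than what \pref{thm:gradestimateFeps} assumes; to repair it in the stated generality you need the $\theta$-composition (or an equivalent device) before the Taylor expansion, and a corresponding fix to the concentration step.
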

We defer the proofs to \pref{sec:corediscreteproofs}. The idea is similar to the sketch from \pref{subsec:strategy}; we again use self-bounding regularity to control the higher order discretization terms. Note this implies the following result: \textit{using GLD/SGLD, we can not only optimize (via GLD) but also learn (via SGLD) any function for which Langevin Dynamics can optimize with a rate function well-behaved for optimization} (one that is admissible). 

However, we would like to loosen our condition \pref{eq:admissablepotentialF} for it to hold for function classes of interest in optimization. By modifying our proofs, we can show success if we have the looser condition
\[ \tri*{ \nabla \Phi(\vecW), \nabla F(\vecW)} \ge F_{\epsilon}(\vecW) +\min\prn*{0, \frac1{\beta} \Delta\Phi(\vecW)}\numberthis\label{eq:loosenedcondition}.\]
whenever we can query $\tri*{ \nabla \Phi(\vecW), \nabla F(\vecW)} - F_{\epsilon}(\vecW)$. Note this is realistic, for example, if $\Phi = F$. This difference represents local gradient domination; if non-negative, gradient descent locally succeeds and we should not add noise. Otherwise, we add noise, and \pref{eq:loosenedcondition} guarantees the noise will cancel the Laplacian (as in \pref{subsec:strategy}). This algorithm is described formally as \pref{alg:sgldmodlaplacian} in \pref{subsec:loosenconditionproofs}. Note \pref{eq:loosenedcondition} subsumes the condition \pref{eq:oldconditiongf} implied by the success of gradient flow: in particular it contains P\LCHAR functions and K\LCHAR functions (see \pref{subsec:loosenconditiondiscussion}).

\begin{theorem}\label{thm:discretizationloosenoracle}
Suppose that we have \pref{eq:loosenedcondition} for some $\beta =\tilde{\Omega}\prn*{\frac{d}{\epsilon}}$, and $F$, $\Phi$ satisfy the same assumptions as \pref{thm:gradexactFeps}.
Moreover suppose we have query access to $\tri*{ \nabla \Phi(\vecW), \nabla F(\vecW)} - F_{\epsilon}(\vecW)$. Now run modified Langevin Dynamics as described above with constant step size $\eta$. This will, with probability at least $1-\delta$, yield a sequence of iterates $\vecW_t$ with $\frac1T \sum_{t=1}^T F(\vecW_t) = \widetilde{O}\prn*{\epsilon}$ in at most $O\prn*{\frac{1}{\epsilon^2}\log\prn*{\nicefrac1{\delta}}}$ iterations. 
\end{theorem}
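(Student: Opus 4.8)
The plan is to follow the template of \pref{thm:gradexactFeps}/\pref{thm:gradestimateFeps} essentially verbatim, with the only change being how the drift term $\tri*{\nabla\Phi(\vecW_t),\nabla F(\vecW_t)}$ is handled in the one-step expansion. As in \pref{subsec:strategy}, Taylor-expand $\Phi$ to third order along one step $\vecW_{t+1}=\vecW_t-\eta\, g_t + \sqrt{2\eta\beta^{-1}}\vecEps_t$, where $g_t$ is the (stochastic or exact) gradient used by \pref{alg:sgldmodlaplacian} at that step. Using \pref{ass:polyselfbounding} to bound the third-order remainder and taking expectation over $\vecEps_t$, the linear noise term vanishes and the quadratic noise term produces exactly $\frac{\eta}{\beta}\Delta\Phi(\vecW_t)$. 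The key point is that the algorithm \emph{branches} on the sign of $q_t := \tri*{\nabla\Phi(\vecW_t),\nabla F(\vecW_t)}-F_{\epsilon}(\vecW_t)$, which it can query: when $q_t\ge 0$ it runs a pure gradient step (no noise, so no Laplacian term appears), and when $q_t<0$ it runs a Langevin step. In the first branch, $-\eta\tri*{\nabla\Phi(\vecW_t),\nabla F(\vecW_t)}\le -\eta F_{\epsilon}(\vecW_t)$ directly (no Laplacian to cancel, and we pay no noise-induced error). In the second branch, \pref{eq:loosenedcondition} with $\min(0,\tfrac1\beta\Delta\Phi)=\tfrac1\beta\Delta\Phi$ gives $-\eta\tri*{\nabla\Phi(\vecW_t),\nabla F(\vecW_t)}\le -\eta F_{\epsilon}(\vecW_t)-\tfrac{\eta}{\beta}\Delta\Phi(\vecW_t)$, and the $-\tfrac{\eta}{\beta}\Delta\Phi(\vecW_t)$ is exactly cancelled by the Laplacian term coming from the Gaussian noise. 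In both cases we arrive at the same drift inequality
\[ \mathbb{E}\brk*{\Phi(\vecW_{t+1})\mid\vecW_t} \le \Phi(\vecW_t) - \eta\,F_{\epsilon}(\vecW_t) + (\text{higher-order discretization error}). \]

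From here the argument is identical to \pref{thm:gradexactFeps}: sum over $t=0,\ldots,T-1$, telescope, and use $\Phi\ge 0$ to get $\eta\sum_{t=0}^{T-1}\mathbb{E}\brk*{F_{\epsilon}(\vecW_t)} \le \Phi(\vecW_0) + T\cdot(\text{error})$. Choosing $\eta$ small enough (polynomially in $\epsilon,\beta,d$ and the self-bounding constants) makes the aggregate error $\widetilde{O}(\eta T\epsilon)$, giving $\frac1T\sum_t \mathbb{E}\brk*{F_{\epsilon}(\vecW_t)}=\widetilde{O}(\epsilon)$, and since $F(\vecW)\le F_{\epsilon}(\vecW)+\epsilon$ pointwise, $\frac1T\sum_t\mathbb{E}\brk*{F(\vecW_t)}=\widetilde{O}(\epsilon)$. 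The high-probability statement follows by a martingale / Markov argument exactly as in \pref{sec:corediscreteproofs}, costing the extra $\log(1/\delta)$ factor, with the $T=O(\epsilon^{-2}\log(1/\delta))$ bound coming from the choice $\eta=\Theta(\epsilon)$ up to the polynomial factors absorbed in $\widetilde{O}$.

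The part requiring care is controlling the higher-order discretization error uniformly over both branches. The noise-free branch has deterministic step of size $\eta\nrm*{g_t}$, whose cube is controlled by \pref{ass:holderF} (giving at-most-polynomial growth of $\nrm*{\nabla F}$ in $\nrm*{\vecW}$) together with \pref{ass:polyselfbounding} for $\nabla^3\Phi$; the noisy branch additionally has the $\sqrt{\eta/\beta}\vecEps_t$ contribution whose higher moments are dimension-dependent, which is where the $d$-dependence in the allowed step size enters — this is handled exactly as in the proof of \pref{thm:gradexactFeps} via \pref{lem:thirdordersmoothfromregularity}-type bounds. One also needs \pref{ass:iteratesinballassumption} to ensure the iterates (with high probability) stay in a bounded region $\ball(\vecOrigin,R_1)$ where all the polynomial-growth bounds are quantitatively uniform; this is argued by noting that whenever $\Phi(\vecW_t)$ would exceed $\kappa' B'$ the drift inequality forces it back down, so a stopping-time argument confines the trajectory. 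In the stochastic-gradient case one additionally invokes \pref{ass:gradnoiseassumption} to replace $g_t$ by $\nabla F(\vecW_t)$ up to a sub-Gaussian error, whose contribution to the drift is mean-zero to first order and whose second/third-order contributions are again absorbed into the discretization error by taking $\eta$ small; crucially \pref{eq:loosenedcondition} is a statement about the true $F$, and the branching quantity $q_t$ is assumed queryable exactly, so the sign decision is not corrupted by gradient noise. I expect the main obstacle to be purely bookkeeping: verifying that a single choice of $\eta$ simultaneously tames the error in both the deterministic and the stochastic branches while keeping the iterates confined, rather than any new conceptual difficulty beyond what \pref{thm:gradexactFeps} already required.
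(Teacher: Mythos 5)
Your proposal is correct and takes essentially the same route as the paper: its \pref{lem:onesteprecursionloosen} performs exactly your branch analysis (in noisy rounds the queried quantity being negative forces $\min\prn*{0,\frac1{\beta}\Delta\Phi}=\frac1{\beta}\Delta\Phi$, so the Laplacian from \pref{eq:loosenedcondition} is cancelled by the injected noise; in noiseless rounds a second-order expansion and the gradient-domination inequality suffice, with no noise-induced error), and the rest reuses the supermartingale/confinement/telescoping machinery of \pref{thm:gradexactFeps} essentially verbatim. The only cosmetic difference is that the paper states the one-step drift for $\theta(\Phi)$ with the $\theta'\prn*{\Phi(\vecW_t)}$ weight (later lower-bounded on the confinement event) rather than for $\Phi$ itself, which is what your appeal to \pref{lem:thirdordersmoothfromregularity} amounts to anyway.
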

We defer the proof to \pref{subsec:loosenconditionproofs}, since it heavily relies on the analysis done in \pref{sec:corediscreteproofs}. Again, note in all these results \pref{thm:gradexactFeps}, \ref{thm:gradestimateFeps}, and \ref{thm:discretizationloosenoracle} that $\beta = \widetilde{\Omega}\prn*{\frac{d}{\epsilon}}$.

\subsubsection*{Acknowledgements} 
AS thanks Adam Block and Sasha Rakhlin for useful discussions and acknowledges support from the Simons Foundation and 
NSF through award DMS-2031883, as well as from DOE through the award DE-SC0022199. 

\clearpage

% \section{Discussion} 
% \subfile{sections/Ending}

%\section{References}
\bibliography{sources}

\begin{thebibliography}{59}
\providecommand{\natexlab}[1]{#1}
\providecommand{\url}[1]{\texttt{#1}}
\expandafter\ifx\csname urlstyle\endcsname\relax
  \providecommand{\doi}[1]{doi: #1}\else
  \providecommand{\doi}{doi: \begingroup \urlstyle{rm}\Url}\fi

\bibitem[Agarwal et~al.(2021)Agarwal, Kakade, Lee, and Mahajan]{agarwal2021theory}
Alekh Agarwal, Sham~M Kakade, Jason~D Lee, and Gaurav Mahajan.
\newblock On the theory of policy gradient methods: Optimality, approximation, and distribution shift.
\newblock \emph{Journal of Machine Learning Research}, 22\penalty0 (98):\penalty0 1--76, 2021.

\bibitem[Allen-Zhu et~al.(2019)Allen-Zhu, Li, and Song]{allen2019convergence}
Zeyuan Allen-Zhu, Yuanzhi Li, and Zhao Song.
\newblock A convergence theory for deep learning via over-parameterization.
\newblock In \emph{International Conference on Machine Learning}, pages 242--252. PMLR, 2019.

\bibitem[Altschuler and Chewi(2023)]{altschuler2023faster}
Jason~M Altschuler and Sinho Chewi.
\newblock Faster high-accuracy log-concave sampling via algorithmic warm starts.
\newblock In \emph{2023 IEEE 64th Annual Symposium on Foundations of Computer Science (FOCS)}, pages 2169--2176. IEEE, 2023.

\bibitem[Arora et~al.(2015)Arora, Ge, Ma, and Moitra]{arora2015simple}
Sanjeev Arora, Rong Ge, Tengyu Ma, and Ankur Moitra.
\newblock Simple, efficient, and neural algorithms for sparse coding.
\newblock In \emph{Conference on Learning Theory}, pages 113--149. PMLR, 2015.

\bibitem[Bakry et~al.(2008)Bakry, Barthe, Cattiaux, and Guillin]{bakry2008simple}
Dominique Bakry, Franck Barthe, Patrick Cattiaux, and Arnaud Guillin.
\newblock A simple proof of the poincar{\'e} inequality for a large class of probability measures.
\newblock \emph{Electronic Communications in Probability [electronic only]}, 13:\penalty0 60--66, 2008.

\bibitem[Bakry et~al.(2014)Bakry, Gentil, Ledoux, et~al.]{bakry2014analysis}
Dominique Bakry, Ivan Gentil, Michel Ledoux, et~al.
\newblock \emph{Analysis and geometry of Markov diffusion operators}, volume 103.
\newblock Springer, 2014.

\bibitem[Balasubramanian et~al.(2022)Balasubramanian, Chewi, Erdogdu, Salim, and Zhang]{balasubramanian2022towards}
Krishna Balasubramanian, Sinho Chewi, Murat~A Erdogdu, Adil Salim, and Shunshi Zhang.
\newblock Towards a theory of non-log-concave sampling: first-order stationarity guarantees for langevin monte carlo.
\newblock In \emph{Conference on Learning Theory}, pages 2896--2923. PMLR, 2022.

\bibitem[Bi et~al.(2022)Bi, Zhang, and Lavaei]{bi2022local}
Yingjie Bi, Haixiang Zhang, and Javad Lavaei.
\newblock Local and global linear convergence of general low-rank matrix recovery problems.
\newblock \emph{Proceedings of the AAAI Conference on Artificial Intelligence}, 36\penalty0 (9):\penalty0 10129--10137, 2022.

\bibitem[Bobkov(1999)]{bobkov1999isoperimetric}
Sergey~G Bobkov.
\newblock Isoperimetric and analytic inequalities for log-concave probability measures.
\newblock \emph{The Annals of Probability}, 27\penalty0 (4):\penalty0 1903--1921, 1999.

\bibitem[Bonnefont(2022)]{funcineqgeometrysurvey}
Michel Bonnefont.
\newblock Poincar\'e inequality with explicit constant in dimension $d\ge1$.
\newblock \emph{https://www.math.u-bordeaux.fr/~mibonnef/Poincare\_\_Toulouse.pdf}, 2022.

\bibitem[Bu et~al.(2019)Bu, Mesbahi, Fazel, and Mesbahi]{bu2019lqr}
Jingjing Bu, Afshin Mesbahi, Maryam Fazel, and Mehran Mesbahi.
\newblock Lqr through the lens of first order methods: Discrete-time case.
\newblock \emph{arXiv preprint arXiv:1907.08921}, 2019.

\bibitem[Bubeck et~al.(2015)]{bubeck2015convex}
S{\'e}bastien Bubeck et~al.
\newblock Convex optimization: Algorithms and complexity.
\newblock \emph{Foundations and Trends in Machine Learning}, 8\penalty0 (3-4):\penalty0 231--357, 2015.

\bibitem[Candes et~al.(2015)Candes, Li, and Soltanolkotabi]{candes2015phase}
Emmanuel~J Candes, Xiaodong Li, and Mahdi Soltanolkotabi.
\newblock Phase retrieval via wirtinger flow: Theory and algorithms.
\newblock \emph{IEEE Transactions on Information Theory}, 61\penalty0 (4):\penalty0 1985--2007, 2015.

\bibitem[Carmona and Klein(1983)]{carmona1983exponential}
Ren{\'e} Carmona and Abel Klein.
\newblock Exponential moments for hitting times of uniformly ergodic markov processes.
\newblock \emph{The Annals of Probability}, 11\penalty0 (3):\penalty0 648--655, 1983.

\bibitem[Cattiaux and Guillin(2017)]{cattiaux2017hitting}
Patrick Cattiaux and Arnaud Guillin.
\newblock Hitting times, functional inequalities, lyapunov conditions and uniform ergodicity.
\newblock \emph{Journal of Functional Analysis}, 272\penalty0 (6):\penalty0 2361--2391, 2017.

\bibitem[Cattiaux et~al.(2009)Cattiaux, Guillin, Wang, and Wu]{cattiaux2009lyapunov}
Patrick Cattiaux, Arnaud Guillin, Feng-Yu Wang, and Liming Wu.
\newblock Lyapunov conditions for super poincar{\'e} inequalities.
\newblock \emph{Journal of Functional Analysis}, 256\penalty0 (6):\penalty0 1821--1841, 2009.

\bibitem[Cattiaux et~al.(2010)Cattiaux, Gozlan, Guillin, and Roberto]{cattiaux2010functional}
Patrick Cattiaux, Nathael Gozlan, Arnaud Guillin, and Cyril Roberto.
\newblock Functional inequalities for heavy tailed distributions and application to isoperimetry.
\newblock \emph{Electronic Journal of Probability}, 15, 2010.

\bibitem[Cattiaux et~al.(2013)Cattiaux, Guillin, and Zitt]{cattiaux2013poincare}
Patrick Cattiaux, Arnaud Guillin, and Pierre~Andr{\'e} Zitt.
\newblock Poincar{\'e} inequalities and hitting times.
\newblock \emph{Annales de l'IHP Probabilit{\'e}s et Statistiques}, 49\penalty0 (1):\penalty0 95--118, 2013.

\bibitem[Chen(2021)]{chen2021almost}
Yuansi Chen.
\newblock An almost constant lower bound of the isoperimetric coefficient in the kls conjecture.
\newblock \emph{Geometric and Functional Analysis}, 31:\penalty0 34--61, 2021.

\bibitem[Chen et~al.(2019)Chen, Chi, Fan, and Ma]{chen2019gradient}
Yuxin Chen, Yuejie Chi, Jianqing Fan, and Cong Ma.
\newblock Gradient descent with random initialization: Fast global convergence for nonconvex phase retrieval.
\newblock \emph{Mathematical Programming}, 176:\penalty0 5--37, 2019.

\bibitem[Chewi(2024)]{chewi2024log}
Sinho Chewi.
\newblock Log-concave sampling.
\newblock \emph{Book draft available at https://chewisinho. github.io}, 2024.

\bibitem[Chewi et~al.(2022)Chewi, Erdogdu, Li, Shen, and Zhang]{chewi21analysis}
Sinho Chewi, Murat~A Erdogdu, Mufan Li, Ruoqi Shen, and Shunshi Zhang.
\newblock Analysis of langevin monte carlo from poincare to log-sobolev.
\newblock In Po-Ling Loh and Maxim Raginsky, editors, \emph{Proceedings of Thirty Fifth Conference on Learning Theory}, volume 178 of \emph{Proceedings of Machine Learning Research}, pages 1--2. PMLR, 02--05 Jul 2022.
\newblock URL \url{https://proceedings.mlr.press/v178/chewi22a.html}.

\bibitem[Chiang et~al.(1987)Chiang, Hwang, and Sheu]{chiang1987diffusion}
Tzuu-Shuh Chiang, Chii-Ruey Hwang, and Shuenn~Jyi Sheu.
\newblock Diffusion for global optimization in r\^{}n.
\newblock \emph{SIAM Journal on Control and Optimization}, 25\penalty0 (3):\penalty0 737--753, 1987.

\bibitem[De~Sa et~al.(2022)De~Sa, Kale, Lee, Sekhari, and Sridharan]{priorpaper}
Christopher~M De~Sa, Satyen Kale, Jason~D Lee, Ayush Sekhari, and Karthik Sridharan.
\newblock From gradient flow on population loss to learning with stochastic gradient descent.
\newblock \emph{Advances in Neural Information Processing Systems}, 35:\penalty0 30963--30976, 2022.

\bibitem[Down et~al.(1995)Down, Meyn, and Tweedie]{down1995exponential}
Douglas Down, Sean~P Meyn, and Richard~L Tweedie.
\newblock Exponential and uniform ergodicity of markov processes.
\newblock \emph{The Annals of Probability}, 23\penalty0 (4):\penalty0 1671--1691, 1995.

\bibitem[Fatkhullin and Polyak(2021)]{fatkhullin2021optimizing}
Ilyas Fatkhullin and Boris Polyak.
\newblock Optimizing static linear feedback: Gradient method.
\newblock \emph{SIAM Journal on Control and Optimization}, 59\penalty0 (5):\penalty0 3887--3911, 2021.

\bibitem[Huang et~al.(2024)Huang, Zou, Ma, Dong, and Zhang]{huang2024faster}
Xunpeng Huang, Difan Zou, Yi-An Ma, Hanze Dong, and Tong Zhang.
\newblock Faster sampling via stochastic gradient proximal sampler.
\newblock \emph{arXiv preprint arXiv:2405.16734}, 2024.

\bibitem[Jain et~al.(2017)Jain, Jin, Kakade, and Netrapalli]{jain2017global}
Prateek Jain, Chi Jin, Sham Kakade, and Praneeth Netrapalli.
\newblock Global convergence of non-convex gradient descent for computing matrix squareroot.
\newblock In \emph{Artificial Intelligence and Statistics}, pages 479--488. PMLR, 2017.

\bibitem[Jambulapati et~al.(2022)Jambulapati, Lee, and Vempala]{jambulapati2022slightly}
Arun Jambulapati, Yin~Tat Lee, and Santosh~S Vempala.
\newblock A slightly improved bound for the kls constant.
\newblock \emph{arXiv preprint arXiv:2208.11644}, 2022.

\bibitem[Jin et~al.(2016)Jin, Kakade, and Netrapalli]{jin2016provable}
Chi Jin, Sham~M Kakade, and Praneeth Netrapalli.
\newblock Provable efficient online matrix completion via non-convex stochastic gradient descent.
\newblock \emph{Advances in Neural Information Processing Systems}, 29, 2016.

\bibitem[Kannan et~al.(1995)Kannan, Lov{\'a}sz, and Simonovits]{kannan1995isoperimetric}
Ravi Kannan, L{\'a}szl{\'o} Lov{\'a}sz, and Mikl{\'o}s Simonovits.
\newblock Isoperimetric problems for convex bodies and a localization lemma.
\newblock \emph{Discrete \& Computational Geometry}, 13:\penalty0 541--559, 1995.

\bibitem[Klartag(2023)]{klartag2023logarithmic}
Bo'az Klartag.
\newblock Logarithmic bounds for isoperimetry and slices of convex sets.
\newblock \emph{arXiv preprint arXiv:2303.14938}, 2023.

\bibitem[Klartag and Lehec(2022)]{klartag2022bourgain}
Bo’az Klartag and Joseph Lehec.
\newblock Bourgain’s slicing problem and kls isoperimetry up to polylog.
\newblock \emph{Geometric and Functional Analysis}, 32\penalty0 (5):\penalty0 1134--1159, 2022.

\bibitem[Kurdyka(1998)]{kurdyka1998gradients}
Krzysztof Kurdyka.
\newblock On gradients of functions definable in o-minimal structures.
\newblock \emph{Annales de l'institut Fourier}, 48\penalty0 (3):\penalty0 769--783, 1998.

\bibitem[Lee and Vempala(2024)]{leeeldan}
Yin~Tat Lee and Santosh~S Vempala.
\newblock Eldan’s stochastic localization and the kls conjecture: Isoperimetry, concentration and mixing.
\newblock \emph{Annals of Mathematics}, 199\penalty0 (3), 2024.

\bibitem[Lee et~al.(2021)Lee, Shen, and Tian]{lee2021structured}
Yin~Tat Lee, Ruoqi Shen, and Kevin Tian.
\newblock Structured logconcave sampling with a restricted gaussian oracle.
\newblock In \emph{Conference on Learning Theory}, pages 2993--3050. PMLR, 2021.

\bibitem[Lojasiewicz(1963)]{lojasiewicz1963topological}
Stanislaw Lojasiewicz.
\newblock A topological property of real analytic subsets.
\newblock \emph{Coll. du CNRS, Les {\'e}quations aux d{\'e}riv{\'e}es partielles}, 117\penalty0 (87-89):\penalty0 2, 1963.

\bibitem[Mei et~al.(2020)Mei, Xiao, Szepesvari, and Schuurmans]{mei2020global}
Jincheng Mei, Chenjun Xiao, Csaba Szepesvari, and Dale Schuurmans.
\newblock On the global convergence rates of softmax policy gradient methods.
\newblock In \emph{International Conference on Machine Learning}, pages 6820--6829. PMLR, 2020.

\bibitem[Mei et~al.(2021)Mei, Gao, Dai, Szepesvari, and Schuurmans]{mei2021leveraging}
Jincheng Mei, Yue Gao, Bo~Dai, Csaba Szepesvari, and Dale Schuurmans.
\newblock Leveraging non-uniformity in first-order non-convex optimization.
\newblock In \emph{International Conference on Machine Learning}, pages 7555--7564. PMLR, 2021.

\bibitem[Meyn and Tweedie(1993)]{meyn1993stability}
Sean~P Meyn and Richard~L Tweedie.
\newblock Stability of markovian processes iii: Foster--lyapunov criteria for continuous-time processes.
\newblock \emph{Advances in Applied Probability}, 25\penalty0 (3):\penalty0 518--548, 1993.

\bibitem[Meyn and Tweedie(2012)]{meyn2012markov}
Sean~P Meyn and Richard~L Tweedie.
\newblock \emph{Markov chains and stochastic stability}.
\newblock Springer Science \& Business Media, 2012.

\bibitem[Mou et~al.(2022)Mou, Flammarion, Wainwright, and Bartlett]{mou2022improved}
Wenlong Mou, Nicolas Flammarion, Martin~J Wainwright, and Peter~L Bartlett.
\newblock Improved bounds for discretization of langevin diffusions: Near-optimal rates without convexity.
\newblock \emph{Bernoulli}, 28\penalty0 (3):\penalty0 1577--1601, 2022.

\bibitem[Mousavi-Hosseini et~al.(2023)Mousavi-Hosseini, Farghly, He, Balasubramanian, and Erdogdu]{mousavi2023towards}
Alireza Mousavi-Hosseini, Tyler~K Farghly, Ye~He, Krishna Balasubramanian, and Murat~A Erdogdu.
\newblock Towards a complete analysis of langevin monte carlo: Beyond poincar{\'e} inequality.
\newblock In \emph{The Thirty Sixth Annual Conference on Learning Theory}, pages 1--35. PMLR, 2023.

\bibitem[Nemirovski et~al.(2009)Nemirovski, Juditsky, Lan, and Shapiro]{nemirovskistochastic}
A~Nemirovski, A~Juditsky, G~Lan, and A~Shapiro.
\newblock Stochastic approximation approach to stochastic programming.
\newblock \emph{SIAM Journal on Optimization}, 19\penalty0 (4):\penalty0 1574--1609, 2009.

\bibitem[{\O}ksendal(2003)]{oksendal2003stochastic}
Bernt {\O}ksendal.
\newblock \emph{Stochastic differential equations}.
\newblock Springer, 2003.

\bibitem[Peskir and Shiryaev(2006)]{peskir2006optimal}
Goran Peskir and Albert Shiryaev.
\newblock \emph{Optimal stopping and free-boundary problems}.
\newblock Springer, 2006.

\bibitem[Polyak(1963)]{polyak1963gradient}
Boris~T Polyak.
\newblock Gradient methods for the minimisation of functionals.
\newblock \emph{USSR Computational Mathematics and Mathematical Physics}, 3\penalty0 (4):\penalty0 864--878, 1963.

\bibitem[Raginsky et~al.(2017)Raginsky, Rakhlin, and Telgarsky]{raginsky2017non}
Maxim Raginsky, Alexander Rakhlin, and Matus Telgarsky.
\newblock Non-convex learning via stochastic gradient langevin dynamics: a nonasymptotic analysis.
\newblock In \emph{Conference on Learning Theory}, pages 1674--1703. PMLR, 2017.

\bibitem[Street(2018)]{street2018else}
Brian Street.
\newblock What else about… hypoellipticity?
\newblock \emph{Notices of the AMS}, 65\penalty0 (4), 2018.

\bibitem[Tan and Vershynin(2023)]{tan2023online}
Yan~Shuo Tan and Roman Vershynin.
\newblock Online stochastic gradient descent with arbitrary initialization solves non-smooth, non-convex phase retrieval.
\newblock \emph{Journal of Machine Learning Research}, 24\penalty0 (58):\penalty0 1--47, 2023.

\bibitem[Vempala and Wibisono(2019)]{vempala2019rapid}
Santosh Vempala and Andre Wibisono.
\newblock Rapid convergence of the unadjusted langevin algorithm: Isoperimetry suffices.
\newblock \emph{Advances in Neural Information Processing Systems}, 32, 2019.

\bibitem[Vershynin(2018)]{vershynin2018high}
Roman Vershynin.
\newblock \emph{High-dimensional probability: An introduction with applications in data science}, volume~47.
\newblock Cambridge university press, 2018.

\bibitem[Xu et~al.(2018)Xu, Chen, Zou, and Gu]{xu2018global}
Pan Xu, Jinghui Chen, Difan Zou, and Quanquan Gu.
\newblock Global convergence of langevin dynamics based algorithms for nonconvex optimization.
\newblock \emph{Advances in Neural Information Processing Systems}, 31, 2018.

\bibitem[Yang(2020)]{yang2020hypoellipticity}
Zhipeng Yang.
\newblock What is hypoellipticity?
\newblock \emph{https://www.yzpmath.com/post/2020-2/2020-2.pdf}, 2020.

\bibitem[Yuan et~al.(2022)Yuan, Gower, and Lazaric]{yuan2022general}
Rui Yuan, Robert~M Gower, and Alessandro Lazaric.
\newblock A general sample complexity analysis of vanilla policy gradient.
\newblock In \emph{International Conference on Artificial Intelligence and Statistics}, pages 3332--3380. PMLR, 2022.

\bibitem[Zeng et~al.(2018)Zeng, Ouyang, Lau, Lin, and Yao]{zeng2018global}
Jinshan Zeng, Shikang Ouyang, Tim Tsz-Kit Lau, Shaobo Lin, and Yuan Yao.
\newblock Global convergence in deep learning with variable splitting via the kurdyka-{\l}ojasiewicz property.
\newblock \emph{arXiv preprint arXiv:1803.00225}, 9, 2018.

\bibitem[Zhang et~al.(2024)Zhang, Frei, and Bartlett]{zhang2024trained}
Ruiqi Zhang, Spencer Frei, and Peter~L Bartlett.
\newblock Trained transformers learn linear models in-context.
\newblock \emph{Journal of Machine Learning Research}, 25\penalty0 (49):\penalty0 1--55, 2024.

\bibitem[Zhang et~al.(2017)Zhang, Liang, and Charikar]{zhang2017hitting}
Yuchen Zhang, Percy Liang, and Moses Charikar.
\newblock A hitting time analysis of stochastic gradient langevin dynamics.
\newblock In \emph{Conference on Learning Theory}, pages 1980--2022. PMLR, 2017.

\bibitem[Zou et~al.(2021)Zou, Xu, and Gu]{zou2021faster}
Difan Zou, Pan Xu, and Quanquan Gu.
\newblock Faster convergence of stochastic gradient langevin dynamics for non-log-concave sampling.
\newblock In \emph{Uncertainty in Artificial Intelligence}, pages 1152--1162. PMLR, 2021.

\end{thebibliography}

\clearpage 

\clearpage 
\renewcommand{\contentsname}{Contents of Appendix}
\tableofcontents
\addtocontents{toc}{\protect\setcounter{tocdepth}{3}}

% \section{Completing the picture in continuous time}\label{sec:continuoustime}
% \subfile{arxiv_main/ContinuousTime}

% \section{Discrete time optimization under success of Langevin Dynamics}\label{sec:Langevinopt}
% \subfile{arxiv_main/DiscreteTimeOptimizationLangevin}

% \section{Discrete time optimization with Poincar\'e Inequality}\label{sec:PIopt}
% \subfile{arxiv_main/DiscreteTimeOptimizationPoincare}

% \section{Summary and extensions}
% \subfile{arxiv_main/FutureDirections}

\clearpage 

\section{Setup for Rest of Paper}\label{sec:setup}
The appendix is organized as follows. We derive our `continuous time' results (\pref{thm:maingeometriccondition} and \pref{thm:constructadmissablepotential}) in \pref{sec:continuoustimeproofs}. 
We present our proofs for \pref{sec:beyondpoincare} first in \pref{sec:corediscreteproofs}, proving \pref{thm:gradexactFeps}, \ref{thm:gradestimateFeps}, and \ref{thm:discretizationloosenoracle}; later our proofs of \pref{thm:poincareoptlipschitz}, \ref{thm:poincareoptlipschitzestimate}, and \ref{thm:smoothdissipativesettingpoincareopt} rely on some of this work. Finally, in \pref{sec:PIoptproofs} we prove \pref{thm:poincareoptlipschitz}, \ref{thm:poincareoptlipschitzestimate}, and \ref{thm:smoothdissipativesettingpoincareopt}.

\subsection{Additional Notation} 
In the following, $\log$ always denotes natural logarithm. The notation $U\prn*{[a,b]}$ refers to the uniform distribution on $[a,b]$. The notation $\delta_{\cA}$ denotes the Dirac Delta on some event $\cA$. The notation $\Gamma$ refers to the Gamma function.

The notation $d(p, \cA)$ refers to the minimum distance from a point $p \in \mathbb{R}^d$ to a set $\cA \subset \mathbb{R}^d$. For a set $\cU \subset \mathbb{R}^d$, $\partial \cU$ denotes its boundary. For a vector $\vecW\in\mathbb{R}^d$, $\vecW_i$ refers to its $i$-th coordinate. For a $k$-th order tensor operator $T$ and $\vecV_1, \ldots, \vecV_k\in\mathbb{R}^d$, $T\brk*{\vecV_1,\ldots,\vecV_k}$ refers to applying $T$ to the $k$-th order tensor $\vecV_1 \otimes \dots \otimes \vecV_k$, that is, $\tri*{T, \vecV_1 \otimes \dots \otimes \vecV_k}$. 

Again, we will refer to the measure on $\mathbb{R}^d$ proportional to $e^{-\beta F(\vecW)}$ by $\mu_{\beta}$ (the subscript shows the dependence on the temperature, which is crucial for optimization). When we write $Z$, it refers to the normalizing constant $\int_{\mathbb{R}^d} e^{-\beta F(\vecW)} \DERIV\vecW$ of the measure, unless specified otherwise (so it may change line-to-line if we refer to different measures). For any set $\cU \subset\mathbb{R}^d$, let the hitting time of the SDE \pref{eq:langevindiffusioncorrect} initialized at $\vecW$ to $\cU$ be $\tau'_{\cU}(\vecW)$.

Before we apply results from probability regarding the continuous-time Langevin Diffusion, consider the SDE 
\[ \DERIV \vecW(t) = -\beta\grad  F(\vecW(t)) \DERIV t + \sqrt{2} \DERIV \vecB(t). \numberthis\label{eq:langevindiffusioncorrect}\]
We refer to this SDE when we directly use results from \citet{cattiaux2013poincare} and \citet{cattiaux2017hitting}, so that our convention for Poincar\'e and Log-Sobolev constants will match theirs. Note \pref{eq:langevindiffusioncorrect} is equivalent to \pref{eq:LangevinSDE}. For a given realization of a Brownian motion driving both SDEs, both SDEs will trace out the same path. However in \pref{eq:langevindiffusioncorrect} time passes `$\beta$ times faster' than in \pref{eq:LangevinSDE}. Hence for any set $\cU \subset\mathbb{R}^d$, the hitting time of the SDE \pref{eq:langevindiffusioncorrect} to $\cU$ is $\frac1{\beta}$ (i.e. faster if $\beta \ge 1$) than that of the hitting time of \pref{eq:LangevinSDE} to $\cU$, if both SDEs are driven by the same Brownian motion. That is, using our notation, we have $\tau'_{\cU} = \frac1{\beta} \tau_{\cU}$ for all $\cU \subset \mathbb{R}^d$. 

\section{Proofs for Continuous Time} \label{sec:continuoustimeproofs}
\subsection{Proof of \pref{thm:maingeometriccondition} and Related Results}\label{subsec:probabilitylyapunovresultsappendix}
Now we restate \pref{thm:maingeometriccondition} formally here. Note \pref{thm:maingeometricconditionformal} requires us to control $\mu\prn*{\cA_{\epsilon}}$ in the $\lambda$ from \pref{thm:maingeometriccondition}, for which we need \pref{lem:measureoflargeF}. \pref{lem:measureoflargeF} is precisely where we need $\epsilon = \widetilde{\Omega}\prn*{\frac{d}{\beta}}$. This leads to consistency between our results and our discussion from \pref{subsec:ourcontributions}. We defer \pref{lem:measureoflargeF} to later in this section and note \pref{thm:maingeometriccondition} follows immediately from combining \pref{thm:maingeometricconditionformal} and \pref{lem:measureoflargeF}.
\begin{theorem}\label{thm:maingeometricconditionformal}
Assume that $\mu_{\beta}$ satisfies a Poincar\'e inequality with constant $\CPI(\mu_{\beta})$. Then there exists a non-negative Lyapunov function $\Phi$ differentiable to all others such that on $\cA_{\epsilon}^c$, we have $\Phi \ge 1$ and
\[ -\tri*{\grad F(\vecW), \grad \Phi(\vecW)} + \frac1{\beta} \Delta\Phi(\vecW) \le-\lambda\Phi(\vecW),\numberthis\label{eq:poincaregeomcondition}\] 
where
\[ \lambda = \frac1{\beta}\mu_{\beta}\prn*{\cA_{\epsilon}} \min\prn*{\frac1{4\CPI(\mu_{\beta})}, \frac18}.\]
In fact, on $\cA_{\epsilon}^c$, $\Phi$ has the explicit form
\[ \Phi(\vecW') = \mathbb{E}_{\vecW'}\brk*{\exp\prn*{\lambda \tau_{\cA_{\epsilon}}}}.\]
\end{theorem}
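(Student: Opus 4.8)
The plan is to derive this statement as essentially a change-of-variables restatement of the hitting-time characterization of Poincar\'e inequalities due to \citet{cattiaux2017hitting} (see also \citet{cattiaux2013poincare, bakry2008simple}); the mathematical content lies entirely in those references, and what remains is to match their conventions to ours. The generator of the Langevin diffusion \eqref{eq:LangevinSDE} is $\mathcal{L} = -\tri*{\grad F,\grad\cdot} + \frac1\beta\Delta$, so \eqref{eq:poincaregeomcondition} is precisely the drift inequality $\mathcal{L}\Phi \le -\lambda\Phi$ on $\cA_\epsilon^c$. As recorded in \pref{sec:setup}, the rescaled SDE \eqref{eq:langevindiffusioncorrect} has generator $\mathcal{L}' = \beta\mathcal{L} = \Delta - \beta\tri*{\grad F,\grad\cdot}$, invariant measure $\mu_\beta \propto e^{-\beta F}$, and hitting times $\tau'_{\cU} = \frac1\beta\tau_{\cU}$ --- which is exactly the normalization under which the cited works operate, and for which the Poincar\'e constant $\CPI(\mu_\beta)$ is unchanged (being a property of the measure, not the dynamics).

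First I would invoke the key input. Since $\mu_\beta$ satisfies a Poincar\'e inequality, the diffusion \eqref{eq:langevindiffusioncorrect} is positive recurrent, so $\tau'_{\cA_\epsilon} < \infty$ almost surely from every starting point; and \citet{cattiaux2017hitting} shows that for $\lambda' := \mu_\beta(\cA_\epsilon)\min\prn*{\frac1{4\CPI(\mu_\beta)},\frac18}$ the function $u(\vecW') := \mathbb{E}_{\vecW'}\brk*{\exp(\lambda'\tau'_{\cA_\epsilon})}$ is finite at every point, satisfies $u \equiv 1$ on $\cA_\epsilon$ and $u \ge 1$ on $\cA_\epsilon^c$ (as $\tau'_{\cA_\epsilon}\ge 0$), and solves $\mathcal{L}'u = -\lambda'u$ on the open set $\cA_\epsilon^c$. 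Pointwise (rather than $\mu_\beta$-a.e.) finiteness follows from the $\mu_\beta$-integrated bound by a standard local/strong-Feller bootstrap, and one checks that the two regimes $\frac1{4\CPI(\mu_\beta)}$ and $\frac18$ are exactly the two bounds in the Cattiaux--Guillin--Zitt hitting-time estimate.

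Next I would translate back to the normalization of \eqref{eq:LangevinSDE}. Since $\mathcal{L} = \frac1\beta\mathcal{L}'$, the equation $\mathcal{L}'u = -\lambda'u$ becomes $\mathcal{L}u = -\frac{\lambda'}\beta u$; and since $\tau'_{\cA_\epsilon} = \frac1\beta\tau_{\cA_\epsilon}$ we have $\lambda'\tau'_{\cA_\epsilon} = \frac{\lambda'}\beta\tau_{\cA_\epsilon}$. Thus, setting $\lambda := \frac{\lambda'}\beta = \frac1\beta\mu_\beta(\cA_\epsilon)\min\prn*{\frac1{4\CPI(\mu_\beta)},\frac18}$, we obtain $u(\vecW') = \mathbb{E}_{\vecW'}\brk*{\exp(\lambda\tau_{\cA_\epsilon})}$ and $\mathcal{L}u = -\lambda u$ on $\cA_\epsilon^c$, i.e. \eqref{eq:poincaregeomcondition} holds with equality (in particular the claimed inequality holds), with the stated $\lambda$ and the stated explicit form. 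Finally, to produce a globally $C^\infty$, non-negative $\Phi$ agreeing with $u$ on $\cA_\epsilon^c$: interior elliptic regularity applied to $\mathcal{L}'u = -\lambda'u$ gives that $u$ is smooth on the open set $\cA_\epsilon^c$ (inheriting as much regularity as $\grad F$ permits there; where $F$ is not smooth one replaces $u$ by a mollification, perturbing the drift identity by an amount that can be absorbed into a slightly smaller $\lambda$), and then I would glue $u$ to a smooth non-negative patch on $\cA_\epsilon$ via a cutoff, non-negativity being automatic since $u \ge 1$ near the gluing region.

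I expect the bulk of the work to be bookkeeping: checking that the Poincar\'e-constant and time normalizations of \citet{cattiaux2013poincare, cattiaux2017hitting} line up with ours and extracting the precise constant $\lambda'$ (in particular the $\min$ of the two regimes). The one genuinely analytic point is the boundary-regularity/extension step: if the level set $\{F=\epsilon\}$ is irregular one cannot expect $u$ to extend smoothly across $\partial\cA_\epsilon$, but this is harmless because one may perform the gluing slightly inside $\{F>\epsilon\}$ and absorb the resulting $O(1)$-factor change of $\epsilon$, which is immaterial since throughout we only consider $\epsilon = \widetilde{\Omega}(d/\beta)$.
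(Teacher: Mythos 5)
Your proposal follows essentially the same route as the paper's proof: invoke \citet{cattiaux2017hitting}, do the $\beta$ time-change bookkeeping between \pref{eq:LangevinSDE} and \pref{eq:langevindiffusioncorrect}, identify the Lyapunov function with the moment generating function of the hitting time through the generator/Dirichlet-problem characterization, get smoothness from ellipticity, and extend to a globally smooth non-negative $\Phi$ (your gluing slightly inside $\{F>\epsilon\}$ is an acceptable substitute for the paper's Whitney-extension step, given the boundary-regularity caveats the paper itself makes).

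The one step you cannot wave through is pointwise finiteness of $\mathbb{E}_{\vecW'}\brk*{\exp\prn*{\lambda'\tau'_{\cA_{\epsilon}}}}$. Your ``local/strong-Feller bootstrap from the $\mu_{\beta}$-integrated bound'' does not work as stated: the Markov-property estimate $\mathbb{E}_{\vecW'}\brk*{e^{\lambda'\tau'}} \le e^{\lambda'}\prn*{1+\int p_1(\vecW',\mathbf{y})\,\mathbb{E}_{\mathbf{y}}\brk*{e^{\lambda'\tau'}}\DERIV \mathbf{y}}$ only converts an a.e.\ bound into a pointwise one if the transition kernel $p_1(\vecW',\cdot)$ is dominated by a ($\vecW'$-dependent) multiple of $\mu_{\beta}$, an ultracontractivity-type estimate that is not implied by a Poincar\'e inequality and is not assumed here. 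Theorem 2.1 of \citet{cattiaux2017hitting} (as used in the paper) supplies something different: an abstract smooth function $\Phi\ge\delta'>0$ on $\cU^c$ with $\cL\Phi\le-\lambda'\Phi$. Pointwise finiteness of the MGF is then \emph{derived} from this $\Phi$ by applying Dynkin's formula to $(s,\vecW)\mapsto e^{\beta\lambda s}\Phi(\vecW)$ with the truncated stopping time $t\wedge\tau'_{\cA_{\epsilon}\cup\ball(\vecOrigin,R)^c}$ and letting $R,t\to\infty$, which gives $\mathbb{E}_{\vecW'}\brk*{e^{\beta\lambda\tau'_{\cA_{\epsilon}}}}\le 2\Phi(\vecW')/\delta'$. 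Likewise, the claim that $u$ ``solves'' the eigenvalue equation on $\cA_{\epsilon}^c$ requires the created-process Dirichlet-problem argument on the bounded truncations $\cA_{\epsilon}^c\cap\{\nrm*{\vecW}<R\}$ followed by the limit $R\to\infty$, since $\cA_{\epsilon}^c$ is unbounded; neither of these is a literal quotation from the reference, so they must be carried out. Once those two pieces are supplied, the rest of your sketch (time change, constants, equality on $\cA_{\epsilon}^c$, interior elliptic regularity, gluing) matches the paper.
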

\begin{proof}
%\ascomment{\(\bbR, \bbC, \bbM, \cR, \cC, \cM\)}
%\ascomment{Convert \(D\) to \(\cD\).}%
We first need to introduce some concepts from Markov processes and Partial Differential Equations (PDEs). First, we introduce the concept of the (infinitesimal) generator of a Markov process, which will make this exposition much more natural. We give only what is needed for our proof and refer the reader to \citet{chewi2024log} for more details. 
\begin{definition}
The (infinitesimal) generator of a Markov process $\vecW(t)$ is the operator $\mathcal{L}$ defined on all (sufficiently differentiable) functions $f$ by 
\[ \mathcal{L}f(\vecW) = \lim_{t\rightarrow0}\frac{\mathbb{E}\brk*{f(\vecW(t))}-f(\vecW)}{t}.\]
\end{definition}
It is well-known and can be easily checked that for the Langevin Diffusion given in the form \pref{eq:langevindiffusioncorrect}, the generator
\[ \mathcal{L}f(\vecW) = -\tri*{\beta \grad F(\vecW), \grad f(\vecW)} + \Delta f(\vecW).\numberthis\label{eq:generatorlangevin}\]
For example, this calculation can be found in Example 1.2.4 of \citet{chewi21analysis}.

Note the similarity of the above to \pref{eq:lyapunovbaby}. This is no coincidence; our discrete-time proofs, specifically \pref{lem:onesteprecursion} and \pref{lem:onesteprecursionstochastic}, are essentially re-deriving the generator of the Langevin diffusion. In \pref{lem:onesteprecursion} and \pref{lem:onesteprecursionstochastic} we Taylor expand to third order (so we have the full second order quadratic form); intuitively that is all that is needed by It\^{o}'s Lemma.

We also need to introduce the idea of symmetry of the measure $\mu_{\beta}$ with respect to the stochastic process. In particular, we say $\mu_{\beta}$ is \textit{symmetric} (with respect to the Langevin Diffusion \pref{eq:langevindiffusioncorrect}) if for all infinitely differentiable $f, g$, 
\[ \int f \cL g \DERIV\mu_{\beta} = \int \cL f g \DERIV\mu_{\beta}.\]
Here $\cL$ refers to the generator \pref{eq:generatorlangevin} for the Langevin Diffusion \pref{eq:langevindiffusioncorrect}. It is well-known and can be easily checked again that $\mu_{\beta}$ is symmetric, see Example 1.2.18 of \citet{chewi21analysis} or the discussion on page 3 of \citet{cattiaux2017hitting}.

Finally, we need to introduce some ideas from PDE theory. Consider a second-order differential operator 
\[ \mathcal{P} = \frac12\sum_{1 \le i < j \le d} a_{ij}\frac{\partial^2}{\partial \vecW_i \partial \vecW_j} + \sum_{1 \le i \le d} b_i \frac{\partial}{\partial \vecW_i} + c.\]
The following definitions generalize far beyond second-order differential operators, but this is all we need for our work. We say that $\mathcal{P}$ is \textit{elliptic} if, for every $\vecW \neq 0 \in \mathbb{R}^d$, 
\[ \sum_{1 \le i, j \le d} a_{ij} \vecW_i \vecW_j \neq 0.\]
We say $\mathcal{P}$ is uniformly elliptic if we can write 
\[\mathcal{P} = \frac12\sum_{1 \le i < j \le d} \prn*{\mathbf{\sigma} \mathbf{\sigma}^T}_{ij} \frac{\partial^2}{\partial \vecW_i \partial \vecW_j} + \sum_{1 \le i \le d} b_i \frac{\partial}{\partial \vecW_i} +c,\]
for some $\mathbf{\sigma} \in \mathbb{R}^d$ where uniformly on $\mathbb{R}^d$ we have 
\[ \mathbf{\sigma} \mathbf{\sigma}^T\succcurlyeq a > 0\]
in the PSD order \citep{street2018else, cattiaux2017hitting}.

A canonical example of $\mathcal{P}$ that is uniformly elliptic is the Laplacian, where $a_{ij} = 2\delta_{i=j}$ \citep{yang2020hypoellipticity}. Beyond this, note for the Langevin Diffusion \pref{eq:langevindiffusioncorrect}, we have $a_{ij}=2\delta_{i=j}$ as well, from \pref{eq:generatorlangevin}. Thus, it is clear that $\cL$ for the Langevin Diffusion \pref{eq:langevindiffusioncorrect} is uniformly elliptic.

Ellipticity is well-known to imply that solutions $u$ to the Dirichlet problem $\mathcal{P} u = f$ in some open domain $\Omega \subset \mathbb{R}^d$ are smooth, which is all we need here \citep{yang2020hypoellipticity}.\footnote{For this, ellipticity is sufficient but not necessary. The loosest such condition for this is hypoellipticity \citep{street2018else, yang2020hypoellipticity}, which is not relevant for this work.} Ellipticity implies maximal hypoellipticity, which in turn implies strong hypoellipticity/Hormander's condition from \citet{cattiaux2013poincare}, as discussed in \citet{yang2020hypoellipticity}. Thus uniform ellipticity implies strong uniform hypoellipticity as defined in \citet{cattiaux2013poincare}. Using the results of \citet{cattiaux2013poincare} requires strong uniform hypoellipticity and symmetry with respect to the stochastic process, and \citet{cattiaux2017hitting} requires uniform ellipticity and symmetry. We have uniform ellipticity and symmetry, and so can use all those results.

Now we move to the main proof. Our main tool is Theorem 2.1 of \citet{cattiaux2017hitting}, which connects Poincar\'e Inequalities to more explicit geometric conditions that we can use in an `optimization-styled' proof analysis later.\footnote{We presume here $F$ is sufficiently differentiable to use the results of \citet{cattiaux2013poincare} and \citet{cattiaux2017hitting}, for example this holds if $F$ is infinitely differentiable. The careful reader will notice that $F$ can be approximated by an infinitely differentiable function to arbitrary precision. We also assume the boundary $\partial \cA_{\epsilon} = \{\vecW:F(\vecW)=\epsilon\}$ is differentiable to all orders, non-characteristic for \pref{eq:langevindiffusioncorrect} in the sense described in \citet{cattiaux2013poincare} and \citet{cattiaux2017hitting}, and has Lebesgue measure 0. In the $F$ Lipschitz case we assume this set is bounded and hence compact; boundedness and hence compactness follows from \pref{ass:weakdissipation} in all other cases. Since we can approximate $F$ by an infinitely differentiable function to arbitrary precision, this boundary in turn will be infinitely differentiable.} Specialized to the Langevin Diffusion \pref{eq:langevindiffusioncorrect} on the domain $\cD=\mathbb{R}^d$, it states the following:
\begin{theorem}[Theorem 2.1 of \citet{cattiaux2017hitting}]\label{thm:cattiauxopenresult}
Suppose that $\mu_{\beta}$ satisfies a Poincar\'e Inequality with constant $\CPI(\mu_{\beta})$. Then for all open subsets $\cU$ of $\mathbb{R}^d$, there exists a function $\Phi$ differentiable to all orders such that on $\cU^c$ we have $\Phi \ge \delta' > 0$ for some $\delta'$, as well as
\begin{align*} \mathcal{L}\Phi(\vecW)=-\tri*{\beta \grad F(\vecW), \grad \Phi(\vecW)} + \Delta\Phi(\vecW) \le -\lambda'\Phi(\vecW),\numberthis\label{eq:originalphiineq}\end{align*}
where $\lambda'=\mu_{\beta}\prn*{\mathcal{U}} \min\prn*{\frac1{4\CPI(\mu_{\beta})}, \frac18}$.
\end{theorem}
Note to prove this result in $\cD=\mathbb{R}^d$ all that is needed is ellipticity, which is clearly satisfied here in the case of the Langevin diffusion (following the discussion on page 9 of \citet{cattiaux2017hitting}). Hence, applying \pref{thm:cattiauxopenresult} with $\cU=\{\vecW:F(\vecW)<\epsilon\}$ which is clearly open, this gives the existence of such a $\Phi$.

Suppose $\{\vecW:\Phi(\vecW) \le \frac{\delta'}2\} \neq \emptyset$. In this case, consider $\{\vecW:\Phi(\vecW) \le \frac{\delta'}2\} \subset \{\vecW:\Phi(\vecW) < \frac{3\delta'}4\} \subset \{\vecW:F(\vecW)<\epsilon\}$. Apply the standard construction of bump functions to the compact set $\{\vecW:\Phi(\vecW) \le \frac{\delta'}2\}$ contained in the open set $\{\vecW:\Phi(\vecW) < \frac{3\delta'}4\}$ to obtain a function $\chi$ differentiable to all orders supported on $\{\vecW:\Phi(\vecW) < \frac{3\delta'}4\}$ and identically 1 on $\{\vecW:\Phi(\vecW) \le \frac{\delta'}2\}$. Let $B = \inf \Phi \le \frac{\delta'}2$. It is easy to check that $\Phi + \prn*{\frac{\delta'}2 + \max\prn*{0, -B}}\chi \ge \frac{\delta'}2$, and differentiable to all orders as $\Phi$ and $\chi$ are, and is identical to $\Phi$ on $\{\vecW:F(\vecW)\ge\epsilon\}$. Taking $\Phi \leftarrow \Phi + \prn*{\frac{\delta'}2 + \max\prn*{0, -B}}\chi \ge \frac{\delta'}2$, this gives us the existence of $\Phi \ge \frac{\delta'}2$ differentiable to all orders where we know on $\{\vecW:F(\vecW)\ge\epsilon\}$, it satisfies \pref{eq:originalphiineq}.

Notice $\mu_{\beta}\prn*{\{\vecW:F(\vecW)<\epsilon\}} = \mu_{\beta}\prn*{\cA_{\epsilon}}$, since $\mu_{\beta}\prn*{\partial \cA_{\epsilon}}=\mu_{\beta}\prn*{\{\vecW:F(\vecW)=\epsilon\}}$ is simply a positive constant times the Lebesgue measure of $\partial \cA_{\epsilon}$, and hence is 0. Therefore we know for this $\Phi$,
\[\mathcal{L}\Phi(\vecW)=-\tri*{\beta \grad F(\vecW), \grad \Phi(\vecW)} + \Delta\Phi(\vecW) \le-\lambda'\Phi(\vecW)=-\beta \lambda \Phi(\vecW).\numberthis\label{eq:geomconditionproof}\]
We claim with such a $\Phi$, the moment generating function $\mathbb{E}_{\vecW'}\brk*{\exp\prn*{\beta\lambda \tau'_{\cA_{\epsilon}}}}$ exists (i.e. is finite). The argument is done explicitly on page 8 of \citet{cattiaux2013poincare} (connectivity of $\cA$ is not necessary, as one will see below). We write it here explicitly here for the reader. Clearly this MGF is finite for $\vecW' \in \cA_{\epsilon}$, so consider any $\vecW' \in \cA_{\epsilon}^c$. Consider any $t<\infty$, any $R < \infty$ and consider the hitting time $\tau'_{\cA_{\epsilon} \cup \ball\prn*{\vecOrigin, R}^c}$. Denote $\tau'_{t,\epsilon,R} := t \land \tau'_{\cA_{\epsilon} \cup \ball\prn*{\vecOrigin, R}^c}$ for short, which is clearly a stopping time. Apply Dynkin's Formula to the map $(s,\vecW) \rightarrow e^{\beta \lambda s} \Phi(\vecW)$ with the stopping time $\tau'_{t,\epsilon,R}$; thus for all $s<\tau'_{t,\epsilon,R}$, we know $\Phi\prn*{\vecW(s)}$ satisfies \pref{eq:geomconditionproof}. We obtain: 
\begin{align*}
\frac{\delta'}2 \mathbb{E}_{\vecW'}\brk*{\text{exp}\prn*{\beta \lambda \tau'_{t,\epsilon,R}}} &\le \mathbb{E}_{\vecW'}\brk*{\text{exp}\prn*{\beta \lambda\tau'_{t,\epsilon,R}} \Phi\prn*{\vecW(\tau'_{t,\epsilon,R})}} \\
&= \Phi(\vecW') + \mathbb{E}_{\vecW'}\brk*{ \int_0^{\tau'_{t,\epsilon,R}} \text{exp}\prn*{\beta \lambda s}\prn*{\beta \lambda\Phi\prn*{\vecW(s)} + \mathcal{L}\Phi\prn*{\vecW(s)}} \DERIV s} \\
&\le \Phi(\vecW') + \mathbb{E}_{\vecW'}\brk*{ \int_0^{\tau'_{t,\epsilon,R}} \text{exp}\prn*{\beta \lambda s}\prn*{\beta \lambda\Phi\prn*{\vecW(s)} - \beta \lambda \Phi\prn*{\vecW(s)}} \DERIV s} \\
&= \Phi(\vecW').
\end{align*}
For justification, the first line above follows as $\Phi(\vecW) \ge \frac{\delta'}2$. Dynkin's Formula and then Chain Rule and It\^{o}'s Lemma are used in the second line (an analogous calculation is done formally on page 121, \citet{peskir2006optimal}). The third line uses the geometric condition \pref{eq:geomconditionproof} that we know $\Phi\prn*{\vecW(s)}$ satisfies for $s<\tau'_{t,\epsilon,R}$. %\prn*{-\tri*{\beta \grad F(\vecW_s), \grad \Phi(\vecW_s)} + \Delta\Phi(\vecW_s)}

Thus, we have for all $t<\infty$, $R<\infty$ that 
\[ \mathbb{E}_{\vecW'}\brk*{\text{exp}\prn*{\beta \lambda\tau'_{t,\epsilon,R}}} \le \frac{2\Phi(\vecW')}{\delta'}<\infty.\]
Recalling $\delta'>0$ is independent of $R,t$, letting first $R \rightarrow \infty$ and then $t \rightarrow \infty$, Dominated Convergence Theorem gives the result $\mathbb{E}_{\vecW'}\brk*{\exp\prn*{\beta\lambda \tau'_{\cA_{\epsilon}}}} \le \frac{2\Phi(\vecW')}{\delta'} < \infty$ (since the right hand side above is a finite upper bound independent of $R,t$). 

We now claim the moment generating function $\mathbb{E}_{\vecW'}\brk*{\exp\prn*{\beta\lambda \tau'_{\cA_{\epsilon}}}}$, which we now know exists, satisfies \pref{eq:poincaregeomcondition}. In fact this holds as an \textit{equality} on $\cA_{\epsilon}^c$ (although we don't need this). This is shown on page 8 of \citet{cattiaux2013poincare} and discussed on page 12 of \citet{cattiaux2017hitting}. Thus, here we just give a sketch; it follows by literature on PDEs, specifically Dirichlet problems. The result used to prove this is result 1 of Section 7.2 of \citet{peskir2006optimal}:
\begin{theorem}[Result 1 of Section 7.2 of \citet{peskir2006optimal}]\label{thm:fundamentaldirichletproblem}
Let $\cU$ be a bounded, open subset of $\mathbb{R}^d$. Given a continuous function $L:\cU \rightarrow\mathbb{R}$ define
\[ F(\vecW) = \mathbb{E}_{\vecW}\brk*{\int_0^{\tau'_{\cU^c}} L\prn*{\vecW(t)}\DERIV t},\]
where $\vecW(t)$ here denotes the iterates of any diffusion process and $\tau'_{\cU^c}$ denotes the hitting time of $\vecW(t)$ to $\cU^c$. Then $F$ solves the Dirichlet problem 
\[ \mathcal{L} F = -L \text{ in }\cU, F|_{\partial \cU}=0.\]
Here, $\mathcal{L}$ is the generator of this diffusion.
\end{theorem}
Consider any $R<\infty$. Consider $\cU_{\epsilon,R} := \cA_{\epsilon}^c \cap \{\vecW:\nrm*{\vecW}<R\}$, which is clearly open. Now, we apply the same reasoning as Result 4 of Section 7.2 of \citet{peskir2006optimal} (the \textit{killed} version of the Dirichlet problem), except now we want to study the \textit{created} version of the Dirichlet problem\footnote{See Section 5.4, \citep{peskir2006optimal}.}. There is not much difference, thus we just give a sketch and refer the reader to Result 4 of Section 7.2 of \citet{peskir2006optimal} and again page 8 of \citet{cattiaux2013poincare}. Let $L \equiv \beta \lambda$ be a constant function and now let $\vecW(t)$ denotes the iterates of the Langevin diffusion \pref{eq:langevindiffusioncorrect}. Consider
\[ F(\vecW) = \mathbb{E}_{\vecW}\brk*{\int_0^{\tau'_{\cU_{\epsilon,R}^c}} e^{\beta \lambda t} \beta \lambda \DERIV t} = \mathbb{E}_{\vecW}\brk*{\int_0^{\tau'_{\cU_{\epsilon,R}^c}} e^{\beta \lambda t} L\prn*{\vecW(t)}\DERIV t}. \]
where $\tau'_{\cU^c}$ now is consistent with our definition from \pref{sec:setup}, being for the Langevin Diffusion \pref{eq:langevindiffusioncorrect}. Observe that
\[ F(\vecW)+1=\mathbb{E}_{\vecW}\brk*{1+\int_0^{\tau'_{\cU_{\epsilon,R}^c}} e^{\beta \lambda t} \beta \lambda \DERIV t} = \mathbb{E}_{\vecW}\brk*{e^{\beta \lambda \tau'_{\cU_{\epsilon,R}^c}}} \le \mathbb{E}_{\vecW}\brk*{e^{\beta \lambda \tau'_{\cA_{\epsilon}}}} < \infty,\]
since $\frac{\partial}{\partial t} e^{\beta \lambda t} = \beta \lambda e^{\beta \lambda t}$, $\tau'_{\cU_{\epsilon,R}^c} \le \tau'_{\cA_{\epsilon}}$. Hence, $F(\vecW)<\infty$ and so we may continue to analyze it.

Now consider $\tilde{\vecW}(t) := e^{\beta \lambda t}\vecW(t)$ (the created process). By the same reasoning as in Result 4 of Section 7.2 of \citet{peskir2006optimal} but for the created rather than killed process, we have $F(\vecW) = \mathbb{E}_{\vecW}\brk*{\int_0^{\tilde{\tau'}_{\cU_{\epsilon,R}^c}} L\prn*{\vecW(t)}\DERIV t}$ where $\tilde{\tau'}_{\cU_{\epsilon,R}^c}$ denotes the hitting time of $\tilde{\vecW}(t)$ to $\cU_{\epsilon,R}^c$. Let the generator of $\tilde{\vecW}(t)$ be $\tilde{\cL}$. Now, \pref{thm:fundamentaldirichletproblem} implies that $F(\vecW)$ solves the Dirichlet problem 
\[ \tilde{\cL}F = -L = -\beta \lambda \text{ in }\cU_{\epsilon,R}, F|_{\partial \cU_{\epsilon,R}} = 0.\]
It can be readily seen that by Chain Rule that $\tilde{\cL} = \cL + \beta\lambda$; this calculation is done formally on page 121, \citet{peskir2006optimal}. Therefore, we have
\[ -\beta \lambda = \tilde{\cL}F = \cL F + \beta \lambda F\text{ in } \cU_{\epsilon,R}, F|_{\partial \cU_{\epsilon,R}} = 0.\]
Therefore, $\Phi_R = F+1$ satisfies (note $\cL \Phi_R = \cL F$)
\[ \cL \Phi_R = \cL F = -\beta \lambda (F+1) = -\beta \lambda \Phi_R\text{ in } \cU_{\epsilon,R}, \Phi_R|_{\partial \cU_{\epsilon,R}}=1.\]
Note we showed earlier
\[ \Phi_R(\vecW)=F(\vecW)+1=\mathbb{E}_{\vecW}\brk*{1+\int_0^{\tau'_{\cU_{\epsilon,R}^c}} e^{\beta \lambda t} \beta \lambda \DERIV t} = \mathbb{E}_{\vecW}\brk*{e^{\beta \lambda \tau'_{\cU_{\epsilon,R}^c}}}.\]
Finally, since we've already shown $\mathbb{E}_{\vecW}\brk*{e^{\beta \lambda \tau'_{\cA_{\epsilon}}}}<\infty$, the same argument of page 8 of \citet{cattiaux2013poincare} shows that the pointwise limit
\[ \Phi(\vecW) := \mathbb{E}_{\vecW}\brk*{e^{\beta \lambda \tau'_{\cA_{\epsilon}}}} = \lim_{R\rightarrow\infty} \mathbb{E}_{\vecW}\brk*{e^{\beta \lambda \tau'_{\cU_{\epsilon,R}^c}}} \]
exists and solves the Dirichlet Problem
\[ \cL \Phi = -\beta \lambda \Phi \text{ in }\lim_{R\rightarrow\infty}\cU_{\epsilon,R} \cap \{\vecW:\nrm*{\vecW}<R\} =\cA_{\epsilon}^c.\]
Thus, it satisfies \pref{eq:poincaregeomcondition}. Moreover, since $\cL$ is elliptic (and therefore hypoelliptic), the resulting solution
\[ \Phi(\vecW) = \mathbb{E}_{\vecW}\brk*{e^{\beta \lambda \tau'_{\cA_{\epsilon}}}}\]
is differentiable to all orders in $\lim_{R\rightarrow\infty}\cA_{\epsilon}^c \cap \{\vecW:\nrm*{\vecW}<R\} = \cA_{\epsilon}^c$. Note since the quantity in the exponential is always non-negative pointwise, $\Phi(\vecW) \ge 1$ on $\cA_{\epsilon}^c$. 

Since the boundary $\partial \cA_{\epsilon}=\{\vecW:F(\vecW)=\epsilon\}$ is compact and differentiable to all orders, through a standard compactness and $\delta-\epsilon$ argument we can show by defining
\[ \Phi(\vecW) = \lim_{\vecW' \rightarrow \vecW, \vecW' \in \cA_{\epsilon}^c} \Phi(\vecW') \FORALLTEXT \vecW \in \partial \cA_{\epsilon},\]
the resulting $\Phi$ is differentiable to all orders on $\cA_{\epsilon}^c \cup \partial \cA_{\epsilon}$ (when we define derivatives as the limits coming from outside $\cA_{\epsilon}^c$). (Compactness here is important.) As $\cA_{\epsilon}^c \cup \partial \cA_{\epsilon}$ is closed, applying Whitney's Extension Theorem as mentioned in \cite{cattiaux2013poincare}, $\Phi$ above can be extended to a function differentiable to all orders on all of $\mathbb{R}^d$ so that \pref{eq:poincaregeomcondition} holds on $\{\vecW:F(\vecW) \ge \epsilon\}$. Note $\Phi \ge 1$ on $\{\vecW:F(\vecW) \ge \epsilon\}$.

Suppose the resulting $\Phi$ from the extension was not non-negative. Let $B:=\inf \Phi <0$. Observe $\{\vecW:\Phi(\vecW) \le 0\} \subset \{\vecW:\Phi(\vecW) < \frac12\} \subset \cA_{\epsilon}$. Apply the standard construction of bump functions to the compact set $\{\vecW:\Phi(\vecW) \le 0\}$ contained in the open set $\{\vecW:\Phi(\vecW) < \frac12\}$ to obtain a function $\chi$ differentiable to all orders supported on $\{\vecW:\Phi(\vecW) < \frac12\}$ and identically 1 on $\{\vecW:\Phi(\vecW) \le 0\}$. Then $\Phi-B\chi$ is non-negative (recall $B<0$) and differentiable to all orders, and is identical to $\Phi$ on $\cA^c_{\epsilon}$. Taking $\Phi \leftarrow \Phi-B\chi$, this gives us the existence of $\Phi \ge 0$ differentiable to all orders where we have its explicit form and know it satisfies \pref{eq:geomconditionproof} and therefore \pref{eq:poincaregeomcondition} (upon dividing both sides by $\beta>0$) on $\cA^c_{\epsilon}$.

To conclude, note from our remarks from \pref{sec:setup} that
\[ \tau'_{\cA_{\epsilon}}(\vecW') = \frac1{\beta} \tau_{\cA_{\epsilon}}(\vecW').\]
Therefore on $\cA_{\epsilon}^c$ we can also write
\[ \Phi(\vecW') = \mathbb{E}_{\vecW'}\brk*{\exp\prn*{\lambda \tau_{\cA_{\epsilon}}}} \ge 1.\]
This completes the proof.
\end{proof}

Now we prove \pref{lem:measureoflargeF}.
\begin{lemma}\label{lem:measureoflargeF}
Suppose $F$ satisfies \pref{ass:holderF} and $\mu_{\beta}$ has finite second moment $S<\infty$. Then for $\epsilon \ge \frac{2d}{\beta} \log(4\pi e \beta L d S)$, we have $\mu_{\beta}\prn*{\cA_{\epsilon}} \ge \frac12$.
\end{lemma}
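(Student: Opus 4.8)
The plan is to reduce the claim to an upper bound on $\mathbb{E}_{\mu_\beta}[F]$ and then finish with Markov's inequality. Since $F(\vecW^{\star})=0$ we have $F\ge 0$, so $\mu_\beta(\cA_{\epsilon}^c)=\mu_\beta(\{F>\epsilon\})\le \mathbb{E}_{\mu_\beta}[F]/\epsilon$, and hence it suffices to prove $\mathbb{E}_{\mu_\beta}[F]\le \tfrac{d}{\beta}\log(4\pi e\beta L d S)$, which is at most $\epsilon/2$ precisely under the stated hypothesis on $\epsilon$. The naive route — bounding $F(\vecW)\le L\nrm*{\vecW-\vecW^{\star}}^{1+s}$ pointwise and integrating against $\mu_\beta$ — only yields a bound \emph{polynomial} in the second moment $S$, which is far too weak; the whole point is to obtain a bound that is only \emph{logarithmic} in $S,L,\beta,d$.

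To get the logarithmic dependence I would pass through the entropy/partition-function decomposition. Writing $\log\mu_\beta=-\beta F-\log Z$ and integrating against $\mu_\beta$ gives the identity
\[ \beta\,\mathbb{E}_{\mu_\beta}[F]=H(\mu_\beta)-\log Z,\qquad Z=\int_{\mathbb{R}^d}e^{-\beta F(\vecW)}\,\DERIV\vecW, \]
where $H$ denotes differential entropy (this is legitimate because finiteness of $S$ forces $\mathbb{E}_{\mu_\beta}[F]$, and hence $H(\mu_\beta)$, to be finite). Now bound the two terms separately. For the entropy, the Gaussian maximizes differential entropy at fixed covariance, so by AM--GM on the covariance eigenvalues $H(\mu_\beta)\le\tfrac d2\log\bigl(2\pi e\,\mathrm{tr}(\mathrm{Cov}(\mu_\beta))/d\bigr)$, and $\mathrm{tr}(\mathrm{Cov}(\mu_\beta))\le \mathbb{E}_{\mu_\beta}[\nrm*{\vecW}^2]=S$, giving $H(\mu_\beta)\le\tfrac d2\log(2\pi e S/d)$.

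For the partition function I would use the growth control implied by \pref{ass:holderF}. Since $\vecW^{\star}$ is a global minimum, $\grad F(\vecW^{\star})=0$; integrating $\grad F$ along the segment from $\vecW^{\star}$ to $\vecW$ and applying \pref{ass:holderF} together with $F(\vecW^{\star})=0$ gives $F(\vecW)\le\tfrac{L}{1+s}\nrm*{\vecW-\vecW^{\star}}^{1+s}\le L\nrm*{\vecW-\vecW^{\star}}^{1+s}$. Hence $Z\ge\int_{\mathbb{R}^d}e^{-\beta L\nrm*{\vecW-\vecW^{\star}}^{1+s}}\DERIV\vecW\ge e^{-1}\,\mathrm{Vol}\bigl(\ball(\vecW^{\star},(\beta L)^{-1/(1+s)})\bigr)=e^{-1}\tfrac{\pi^{d/2}}{\Gamma(d/2+1)}(\beta L)^{-d/(1+s)}$, by restricting to the ball on which the exponent is at least $-1$. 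Taking $-\log$, using $\tfrac1{1+s}\le 1$, $\log\Gamma(d/2+1)\le\tfrac d2\log d$, and $\beta L\ge 1$ (which holds in the regime $\beta=\widetilde\Omega(d/\epsilon)$, the complementary case being only easier), yields $-\log Z\le 1+\tfrac d2\log(d\beta^2L^2)$. Combining with the entropy bound, $\beta\,\mathbb{E}_{\mu_\beta}[F]\le 1+\tfrac d2\log(2\pi e S\beta^2 L^2)$, and routine simplification (using $\beta LdS\ge 1$, again automatic since $\beta$ is large) puts the right-hand side below $d\log(4\pi e\beta L d S)$; dividing by $\beta$ closes the argument.

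The main obstacle is the conceptual step of \emph{not} estimating $\mathbb{E}_{\mu_\beta}[F]$ directly through the pointwise polynomial bound, but instead through $H(\mu_\beta)-\log Z$: this is what makes the $S$-dependence enter only via the max-entropy term $\log S$, while the $\beta L$-dependence enters via $-\log Z$, which is merely polynomial in $\beta L$ and hence logarithmic after the $\log$. Everything else is bookkeeping: the segment-integration estimate for $F$ near its minimum, the ball-volume/$\Gamma$-function constants, and tracking universal constants carefully enough to land inside $\log(4\pi e\beta L d S)$ (where one also tacitly uses $\beta L d S\ge 1$, automatic since $\beta=\widetilde\Omega(d/\epsilon)$).
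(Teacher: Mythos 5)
Your proposal is correct and follows essentially the same route as the paper: Markov's inequality, the identity $\beta\,\mathbb{E}_{\mu_\beta}[F]=h(\mu_\beta)-\log Z$, the Gaussian max-entropy bound $h(\mu_\beta)\le\tfrac d2\log(2\pi eS/d)$, and a lower bound on $Z$ via $F(\vecW)\le L\nrm*{\vecW-\vecW^{\star}}^{s+1}$. The only (immaterial) difference is that you lower bound $Z$ by the volume of a small ball where the exponent exceeds $-1$, whereas the paper evaluates $\int e^{-\beta L\nrm*{\vecW}^{s+1}}\DERIV\vecW$ exactly with the Gamma function; both yield $-\log Z=O(d\log(\beta Ld))$ and the same conclusion.
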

\begin{proof}
As $F(\vecW)$ is non-negative, by Markov's Inequality, we have
\[ \mu_{\beta}\prn*{\cA_{\epsilon}^c}=\mu_{\beta}\prn*{\{\vecW:F(\vecW) > \epsilon\}} \le \frac{\mathbb{E}_{\vecW\sim\mu_{\beta}}\brk*{F(\vecW)}}{\epsilon}.\]
Now we compute $\mathbb{E}_{\vecW\sim\mu_{\beta}}\brk*{F(\vecW)}$ with the same strategy as in the proof of Proposition 11 of \citet{raginsky2017non}. Write
\[ \mathbb{E}_{\vecW\sim\mu_{\beta}}\brk*{F(\vecW)} = \int_{\mathbb{R}^d} F(\vecW) \mu_{\beta}(\vecW)\DERIV \vecW = \frac1{\beta}\prn*{h(\mu_{\beta}) - \log Z}.\]
Here $Z$ is the partition function of $\mu_{\beta}$ and 
\[ h(\mu_{\beta}) = -\int_{\mathbb{R}^d}\mu_{\beta}(\vecW)\log\mu_{\beta}(\vecW) \DERIV \vecW\]
is the differential entropy of $\mu_{\beta}$.

To upper bound the differential entropy of $\mu_{\beta}$, we use the same derivation as the proof of Proposition 11 of \citet{raginsky2017non}. The assumption that $\int_{\mathbb{R}^d}\nrm*{\vecW}^2 \DERIV \mu_{\beta}(\vecW) \le S$, as well as the fact that the differential entropy of a measure with finite second moment is upper bounded by the differential entropy of a Gaussian with the same second moment, yields
\[ h(\mu_{\beta}) \le \frac{d}2 \log\prn*{\frac{2\pi e S}{d}}.\]
Now we aim to lower bound the partition function $Z$. Using \pref{lem:upperboundFholder} and \pref{lem:extensionofgaussianintegral}, we obtain
\begin{align*}
\log Z &= \log \int_{\mathbb{R}^d} e^{-\beta F(\vecW)} \DERIV \vecW \\
&\ge \log \int_{\mathbb{R}^d} e^{-\beta L \nrm*{\vecW-\vecW^{\star}}^{s+1}} \DERIV \vecW\\
&= \log \int_{\mathbb{R}^d} e^{-\beta L \nrm*{\vecW}^{s+1}} \DERIV \vecW \\
&= \log \prn*{\frac{2\pi^{d/2}}{\Gamma(d/2)} \cdot \frac1{s+1} \cdot \prn*{\beta L}^{-\frac{d}{s+1}}\cdot \Gamma\prn*{\frac{d}{s+1}}}.
\end{align*}
It is well known that on $\mathbb{R}_{>0}$, $\Gamma(\cdot)$ attains a constant lower bound of at least $\frac12$ (the real value is around 0.8856, but this is all we need for our purposes). Moreover, by well-known properties of $\Gamma(\cdot)$, we have $\Gamma(d/2) = \frac{d}{2} \cdot \frac{d-2}{2}\cdot \cdots\cdot \frac{d-2\lfloor d/2 \rfloor+r'+2}{2}\cdot \Gamma\prn*{\frac{d-2\lfloor d/2 \rfloor+r'}{2}}$, where $r' = 2\prn*{1-d\pmod{2}}$. Since $\frac{d-2\lfloor d/2 \rfloor+r'}{2}\in\{1/2,1\}$ and $\Gamma(1/2)=\sqrt{\pi}$, $\Gamma(1)\le 1$, this gives $\Gamma\prn*{\frac{d-2\lfloor d/2 \rfloor}{2}} \le d^{d/2} \sqrt{\pi}$. This implies (since $\beta L \ge 1$) the following very loose bound:
\begin{align*}
\log Z &\ge \log \prn*{\frac{2\pi^{d/2}}{\Gamma(d/2)} \cdot \frac1{s+1} \cdot \prn*{\beta L}^{-\frac{d}{s+1}}\cdot \Gamma\prn*{\frac{d}{s+1}}} \\
&\ge \log \prn*{\frac{\pi^{d/2}}{2\sqrt{\pi}\prn*{\beta L}^{d} d^{d/2}}} \\
&\ge -d\log\prn*{2\beta L d}.
\end{align*}
Hence, we see 
\[ \mathbb{E}_{\vecW\sim\mu_{\beta}}\brk*{F(\vecW)} = \frac1{\beta}\prn*{h(\mu_{\beta}) - \log Z} \le \frac{d}{\beta}\prn*{\frac12\log\prn*{\frac{2\pi e S}{d}} + \log\prn*{2\beta L d}} \le \frac{d}{\beta} \log(4\pi e \beta L d S).\]
The conclusion follows from our condition on $\beta$ and the original application of Markov's Inequality.

Note it suffices to just take $\epsilon \ge 2\mathbb{E}_{\vecW\sim\mu_{\beta}}\brk*{F(\vecW)}$ to make this proof work; most of our work was to find a suitable upper bound for $\mathbb{E}_{\vecW\sim\mu_{\beta}}\brk*{F(\vecW)}$. Also, $\epsilon =\Omega\prn*{\mathbb{E}_{\vecW\sim\mu_{\beta}}\brk*{F(\vecW)}}$ is necessary, as demonstrated by the Gaussian example in \pref{subsec:ourcontributions}.
\end{proof}

\subsection{Proof of \pref{thm:constructadmissablepotential}}\label{subsec:optrategeometryformal}
We derive the implication of a rate function $R(\vecW,t)$ satisfying the condition \pref{eq:ldrateineq} to a geometric condition, which we described in \pref{sec:beyondpoincare}. First we convert \pref{eq:ldrateineq} into a more tractable condition about the rate function:
\begin{lemma}\label{lem:deriveadmissibleformal}
Assume $R$ has continuous second partials and that
\[ \mathbb{E}\brk*{\abs*{\tri*{R(\vecW,t), \nabla F(\vecW)}}}, \mathbb{E}\brk*{\abs*{\Delta R(\vecW(s),t)}} < \infty, \mathbb{P}\prn*{\int_0^{\infty} \nrm*{R(\vecW(\tau),t)}_2^2 \DERIV\tau<\infty}=1. \]
Then, we have that $R$ satisfying \pref{eq:ldrateineq} implies the condition:
\[ \frac{\partial}{\partial t}R(\vecW,t) \ge -\langle \nabla R(\vecW,t), \nabla F(\vecW)\rangle+\frac1{\beta}\Delta R(\vecW,t) \FORALLTEXT \vecW\in\mathbb{R}^d, t \ge 0.\numberthis\label{eq:derivconditionadmissibile}\]
\end{lemma}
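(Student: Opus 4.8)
The plan is to exploit the fact that \pref{eq:ldrateineq}, read as a function of $s$ with $\vecW$ and $t$ held fixed, is an equality at $s=0$, so that differentiating in $s$ at the boundary must produce a one-sided inequality; carrying out this differentiation reads off the generator of the Langevin Diffusion \pref{eq:LangevinSDE} on the expectation side and $\frac{\partial}{\partial t}R$ on the other.

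Concretely, first I would fix $\vecW\in\mathbb{R}^d$ and $t\ge 0$, let $\vecW(\cdot)$ be the Langevin Diffusion \pref{eq:LangevinSDE} with $\vecW(0)=\vecW$, and define
\[ h(s) \;:=\; R(\vecW, s+t) - \mathbb{E}_{\vecW}\brk*{R(\vecW(s), t)}, \qquad s\ge 0. \]
By \pref{eq:ldrateineq} we have $h(s)\ge 0$ for all $s\ge 0$, while $h(0)=R(\vecW,t)-R(\vecW,t)=0$. Thus $s=0$ minimizes $h$ over $[0,\infty)$, and once we know $h$ is right-differentiable at $0$ this forces $h'(0^{+})\ge 0$ (this last implication is elementary: a differentiable-from-the-right function attaining its minimum at the left endpoint has nonnegative right derivative there). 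The entire content of the lemma is then the computation of $h'(0^{+})$ and the observation that it equals the gap in \pref{eq:derivconditionadmissibile}.

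For the expectation term, write $\mathcal{A}g := -\tri*{\grad F, \grad g} + \tfrac1\beta \Delta g$ for the generator of \pref{eq:LangevinSDE}. Since $R$ has continuous second partials, It\^o's lemma applied to $u\mapsto R(\vecW(u),t)$ gives
\[ R(\vecW(s),t) = R(\vecW,t) + \int_0^s \mathcal{A}R(\cdot,t)\prn*{\vecW(u)}\,\DERIV u + \sqrt{2\beta^{-1}}\int_0^s \tri*{\grad R(\vecW(u),t), \DERIV\vecB(u)}. \]
The three integrability hypotheses are precisely what is needed to pass to expectations: $\mathbb{P}\prn*{\int_0^\infty \nrm*{\grad R(\vecW(\tau),t)}_2^2\,\DERIV\tau < \infty}=1$ makes the stochastic integral a local martingale, and after localizing with stopping times $\tau_n = \inf\{u:\nrm*{\vecW(u)}\ge n\}\wedge\inf\{u:\int_0^u\nrm*{\grad R(\vecW(\tau),t)}_2^2\DERIV\tau\ge n\}$ and using the finiteness of $\mathbb{E}\brk*{\abs*{\tri*{R(\vecW,t),\grad F(\vecW)}}}$ and $\mathbb{E}\brk*{\abs*{\Delta R(\vecW(s),t)}}$ together with dominated convergence, one recovers the Dynkin identity $\mathbb{E}_{\vecW}\brk*{R(\vecW(s),t)} = R(\vecW,t) + \int_0^s \mathbb{E}_{\vecW}\brk*{\mathcal{A}R(\cdot,t)(\vecW(u))}\,\DERIV u$. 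Dividing by $s$ and letting $s\to 0^{+}$, using continuity of $u\mapsto \mathbb{E}_{\vecW}\brk*{\mathcal{A}R(\cdot,t)(\vecW(u))}$ at $u=0$, yields $\tfrac{d}{ds}\big|_{s=0^{+}}\mathbb{E}_{\vecW}\brk*{R(\vecW(s),t)} = -\tri*{\grad R(\vecW,t),\grad F(\vecW)} + \tfrac1\beta\Delta R(\vecW,t)$. For the other term, $R(\vecW,\cdot)$ being $C^1$ in its time argument gives $\tfrac{d}{ds}\big|_{s=0^{+}} R(\vecW,s+t) = \tfrac{\partial}{\partial t}R(\vecW,t)$. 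Combining, $h'(0^{+}) = \tfrac{\partial}{\partial t}R(\vecW,t) - \prn*{-\tri*{\grad R(\vecW,t),\grad F(\vecW)} + \tfrac1\beta\Delta R(\vecW,t)} \ge 0$, which rearranges to \pref{eq:derivconditionadmissibile}; since $\vecW,t$ were arbitrary we are done.

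The main obstacle is the rigorous justification of the Dynkin identity under these deliberately weak hypotheses — in particular upgrading the a.s.\ finiteness of the quadratic variation to a genuine zero-mean martingale via the localization above, and interchanging the $s\to 0^{+}$ limit with the expectation (handled by the stated $L^1$ bounds on the drift and Laplacian terms plus continuity of the integrand in $u$). Everything else — the equality at $s=0$, the boundary-minimum argument for $h'(0^{+})\ge 0$, and the chain-rule computation of the $t$-derivative — is routine.
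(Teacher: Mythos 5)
Your proposal is correct and follows essentially the same route as the paper: differentiate the rate inequality at $s=0$, compute $\tfrac{d}{ds}\big|_{s=0^+}\mathbb{E}\brk*{R(\vecW(s),t)}$ via It\^o's lemma so that the drift produces the generator $-\tri*{\grad R,\grad F}+\tfrac1\beta\Delta R$, kill the stochastic integral using the integrability hypotheses, and identify the other side with $\tfrac{\partial}{\partial t}R(\vecW,t)$. Your $h(s)$/boundary-minimum framing and the explicit localization of the stochastic integral are only cosmetic refinements of the paper's direct difference-quotient and dominated-convergence argument.
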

\begin{proof}
Note from $\mathbb{E}\brk*{R(\vecW(s),t)} \le R(\vecW,s+t) \FORALLTEXT s, t, \vecW = \vecW(0)$ we have
\[ \lim_{s\rightarrow 0} \frac{\mathbb{E}\brk*{R(\vecW(s),t)}-\mathbb{E}\brk*{R(\vecW(0),t)}}{s}\le \lim_{s\rightarrow 0}\frac{R(\vecW,s+t)-R(\vecW,t)}{s}= \frac{\partial}{\partial t}R(\vecW,t).\]
In the following we consider $t$ as a fixed constant. Note $\mathbb{E}\brk*{R(\vecW(\cdot),t)}$ is a deterministic function of the argument and so by definition of partial derivative
\[ \lim_{s\rightarrow 0} \frac{\mathbb{E}\brk*{R(\vecW(s),t)}-\mathbb{E}\brk*{R(\vecW(0),t)}}{s} = \frac{\partial}{\partial s} \mathbb{E}\brk*{R(\vecW(s),t)}\Big|_{s=0}.\]
%\ascomment{Make the above larger!}
Thus the above becomes
\[ \frac{\partial}{\partial s} \mathbb{E}\brk*{R(\vecW(s),t)}\Big|_{s=0} \le \frac{\partial}{\partial t}R(\vecW,t).\]
Now, recall Langevin Dynamics \pref{eq:LangevinSDE} is given by the SDE
\[ \DERIV\vecW(s) = -\nabla F(\vecW(s)) \mathrm{d}s + \sqrt{2/\beta} \mathrm{d}\vecB(s),\]
%\ascomment{$\dif$}
where $\vecB(s)$ is the standard Brownian motion in $\mathbb{R}^d$ and $\beta$ is the inverse temperature parameter. This SDE is a compact way of writing
\[ \vecW(s) = \vecW(0)-\int_0^s \nabla F(\vecW(\tau)) \mathrm{d}\tau + \int_0^s \sqrt{2/\beta} \mathrm{d}\vecB(\tau).\]
The next step is to find the corresponding It\^{o} process that describes $R(\vecW(s), t)$. By It\^{o}'s Lemma, 
\[ \mathrm{d}R(\vecW(s), t) =\sum_{i=1}^d \prn*{\frac{\partial R}{\partial \vecW_i}(\vecW(s),t)} \mathrm{d}\vecW(s)_i + \frac12 \sum_{1 \le i,j\le d} \prn*{\frac{\partial^2 R}{\partial \vecW_i\partial \vecW_j}(\vecW(s),t)} \mathrm{d}\vecW(s)_i \mathrm{d}\vecW(s)_j.\]
Straightforward calculation and the fact that $(\DERIV \vecB(s))_i (\DERIV \vecB(s))_j = \delta_{i=j} \DERIV s$ gives
\begin{align*}
\DERIV \vecW(s)_i \DERIV \vecW(s)_j &= \prn*{-\nabla F(\vecW(s))_i \DERIV s + \sqrt{2/\beta} (\DERIV \vecB(s))_i}\prn*{-\nabla F(\vecW(s))_j \DERIV s + \sqrt{2/\beta} (\DERIV \vecB(s))_j}\\
&=\begin{cases} 0 &\IF i \neq j \\ \frac2{\beta} \DERIV s &\OTHERWISEIF i = j.\end{cases}
\end{align*}
Substituting this into the above we get
\begin{align*}
\DERIV R(\vecW(s),t) &= \tri*{\nabla R(\vecW(s),t), \DERIV \vecW(s)} + \frac12 \sum_{i=1}^d \prn*{\frac{\partial^2 R}{\partial \vecW_i^2}(\vecW(s),t)} \frac2{\beta} \DERIV s \\
&= \tri*{\nabla R(\vecW(s),t), \prn*{-\nabla F(\vecW(s)) \DERIV s+\sqrt{2/\beta} \DERIV \vecB(s)}} + \frac1{\beta} \Delta R(\vecW(s),t) \DERIV s \\
&= -\tri*{\nabla R(\vecW(s),t), \nabla F(\vecW(s))} + \frac1{\beta}\Delta R(\vecW(s),t) \DERIV s \\
&\hspace{1in}+ \sqrt{2/\beta} \tri*{\nabla R(\vecW(s),t), \DERIV \vecB(s)}.
\end{align*}
We can rewrite this as
\begin{align*}
R(\vecW(s),t) &= R(\vecW(0),t) + \int_0^s \prn*{-\tri*{\nabla R(\vecW(\tau),t), \nabla F(\vecW(\tau))} + \frac1{\beta}\Delta R(\vecW(\tau),t)} \DERIV \tau \\
&\hspace{1in}+ \int_0^s \sqrt{2/\beta} \tri*{\nabla R(\vecW(\tau),t), \DERIV \vecB(\tau)}.
\end{align*}
We aim to find an expression for $\mathbb{E}\brk*{R(\vecW(s)),t}$. Note by our conditions, considering Definition 3.1.4, Theorem 3.2.1 and Definition 3.3.2 together in \citet{oksendal2003stochastic}, we see that in fact we have 
\[ \mathbb{E}\brk*{\int_0^s \sqrt{2/\beta} \tri*{\nabla R(\vecW(\tau),t), \DERIV\vecB(\tau)}} = 0 \FORALLTEXT s \ge 0.\]
Thus taking expectations gives
\[ \mathbb{E}\brk*{R(\vecW(s),t)} = R(\vecW(0),t)+\mathbb{E}\brk*{\int_0^s \prn*{-\tri*{\nabla R(\vecW(\tau),t), \nabla F(\vecW(\tau))} + \frac1{\beta}\Delta R(\vecW(\tau),t)} \DERIV\tau}. \]
Now we want to deal with this expectation. By Dominated Convergence, thanks to our assumptions, we can swap the order of expectation and integration. So we obtain
\[ \mathbb{E}\brk*{R(\vecW(s),t)} = R(\vecW(0),t)+\int_0^s\mathbb{E} \brk*{-\tri*{\nabla R(\vecW(\tau),t),\nabla F(\vecW(\tau))} + \frac1{\beta}\Delta R(\vecW(\tau),t)} \DERIV\tau. \] 
Now applying Leibniz Rule gives
\[ \frac{\partial}{\partial s} \mathbb{E}\brk*{R(\vecW(s),t)} = \frac{\partial}{\partial s}(s) \cdot \mathbb{E}\brk*{-\tri*{\nabla R(\vecW(s),t),\nabla F(\vecW(s))} + \frac1{\beta}\Delta R(\vecW(s),t)}.\]
And thus our condition becomes
\begin{align*}
\frac{\partial}{\partial t}R(\vecW,t) &\ge \frac{\partial}{\partial s} \mathbb{E}\brk*{R(\vecW(s),t)} \Big|_{s=0}\\ 
&= \lim_{s\rightarrow 0}\mathbb{E}\brk*{-\tri*{\nabla R(\vecW(s),t),\nabla F(\vecW(s))} + \frac1{\beta}\Delta R(\vecW(s),t)} \\
&= -\langle \nabla R(\vecW,t), \nabla F(\vecW)\rangle+\frac1{\beta}\Delta R(\vecW,t).
\end{align*}
This last step is justified as follows. Our formula for $\frac{\partial}{\partial s} \mathbb{E}\brk*{R(\vecW(s),t)}$ holds for all $s > 0$, and our expression for $\mathbb{E}\brk*{R(\vecW(s),t)}$ is continuous in $s$. Recall our assumptions that $R$ has continuous second partials and 
\[ \mathbb{E}\brk*{\abs*{\tri*{\nabla R(\vecW,t), \nabla F(\vecW)}}}, \mathbb{E}\brk*{\abs*{\Delta R(\vecW(s),t)}}< \infty.\]
Thus, Dominated Convergence Theorem may be used to swap the order of limit and expectation, so we may take the limit of both sides as $s \rightarrow 0^{+}$, yielding
\begin{align*}
\frac{\partial}{\partial t}R(\vecW,t) &\ge -\mathbb{E}[\tri*{\nabla R(\vecW,t), \nabla F(\vecW)}] + \frac1{\beta}\Delta R(\vecW,t)\\
&= -\langle \nabla R(\vecW,t), \nabla F(\vecW)\rangle+ \frac1{\beta}\Delta R(\vecW,t),
\end{align*}
where the expectation clearly drops since we took the limit. 

To justify the application of Dominated Convergence in more detail, note $\vecW = \vecW(0)$ and $t$ here are both fixed and so for $s > 0$ small enough, we have
\[ \abs*{\tri*{\nabla R(\vecW(s),t),\nabla F(\vecW(s)}} \le \abs*{\tri*{\nabla R(\vecW,t), \nabla F(\vecW)}}+1,\]
by continuity of the gradients of $R(\vecW, t)$ and $F$. Now under the expectation with respect to the probability measure given by the Brownian motion up to time $s$ we get
\[ \mathbb{E}\brk*{\abs*{\tri*{\nabla R(\vecW(s),t), \nabla F(\vecW(s)}}}\le \mathbb{E}\brk*{\abs*{\tri*{\nabla R(\vecW,t), \nabla F(\vecW)}}}+1.\]
The same argument, since we have the appropriate conditions, can be used for $\Delta R$.
\end{proof}

Recall that we claimed in \pref{sec:beyondpoincare} that a rate function $R(\vecW, t)$ satisfying \pref{eq:ldrateineq} implied $\Phi(\vecW)=\int_0^\infty R(\vecW,t) \DERIV t$ satisfies the definition of admissible potential \pref{eq:admissablepotentialF}. Using \pref{lem:deriveadmissibleformal}, we show this now.
\begin{theorem} [Constructing an admissible potential; analogy to Theorem 2 from \citet{priorpaper}]\label{thm:constructadmissablepotentialformal}
Assuming the conditions
\[\mathbb{E}\brk*{\abs*{\tri*{\nabla R(\vecW,t), \nabla F(\vecW)}}} < \infty, \nrm*{\int_0^\infty \nabla R(\vecW,t) \DERIV t} < \infty,\int_0^\infty R(\vecW,t) \DERIV t < \infty \]
for all $\vecW$, and the assumption that $R(\vecW, t)$ and $F$ have continuous gradients. Then we know that 
\[ \Phi(\vecW)=\int_0^\infty R(\vecW,t) \DERIV t\]
is an admissable potential if $R(\vecW,t)$ satisfies the relationship \pref{eq:derivconditionadmissibile} given in \pref{lem:deriveadmissibleformal}.
\end{theorem}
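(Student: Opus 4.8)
The plan is to obtain \pref{eq:admissablepotentialF} for $\Phi(\vecW)=\int_0^\infty R(\vecW,t)\DERIV t$ by integrating the differential inequality supplied by \pref{lem:deriveadmissibleformal} over all times. Concretely, \pref{lem:deriveadmissibleformal} (whose hypotheses we also keep in force) gives, pointwise in $\vecW$ and $t\ge 0$,
\[ \frac{\partial}{\partial t} R(\vecW,t) \ge -\tri*{\nabla R(\vecW,t), \nabla F(\vecW)} + \frac1{\beta} \Delta R(\vecW,t). \]
Integrating both sides in $t$ from $0$ to $\infty$ should produce exactly the admissible-potential inequality. I would also note that the $\beta=\infty$ specialization collapses the Laplacian term and recovers Theorem 2 of \citet{priorpaper}.

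For the left-hand side, the Fundamental Theorem of Calculus gives $\int_0^\infty \frac{\partial}{\partial t}R(\vecW,t)\DERIV t = \lim_{t\to\infty}R(\vecW,t)-R(\vecW,0)$, and since $R$ is a rate function (so $\lim_{t\to\infty}R(\vecW,t)=0$) this equals $-R(\vecW,0)$. Because $R$ upper bounds suboptimality, $R(\vecW,0)\ge \mathbb{E}[F_\epsilon(\vecW(0))]=F_\epsilon(\vecW)$, the diffusion sitting deterministically at $\vecW$ at time $0$. For the right-hand side, I would interchange $\int_0^\infty \DERIV t$ with $\nabla$ and with $\Delta$, using the integrability hypotheses ($\nrm*{\int_0^\infty \nabla R(\vecW,t)\DERIV t}<\infty$, $\int_0^\infty R(\vecW,t)\DERIV t<\infty$, finiteness of $\mathbb{E}[\abs*{\tri*{\nabla R(\vecW,t),\nabla F(\vecW)}}]$ and of $\mathbb{E}[\abs*{\Delta R(\vecW,t)}]$) together with continuity of the relevant derivatives, exactly as in the Leibniz-rule steps of \pref{lem:deriveadmissibleformal}; this identifies the integrated right-hand side with $-\tri*{\nabla\Phi(\vecW),\nabla F(\vecW)}+\frac1{\beta}\Delta\Phi(\vecW)$. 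Assembling the two sides yields
\[ -R(\vecW,0) \ge -\tri*{\nabla \Phi(\vecW), \nabla F(\vecW)} + \frac1{\beta} \Delta \Phi(\vecW), \]
and rearranging, together with $R(\vecW,0)\ge F_\epsilon(\vecW)$, gives $\tri*{\nabla\Phi(\vecW),\nabla F(\vecW)}\ge F_\epsilon(\vecW)+\frac1{\beta}\Delta\Phi(\vecW)$. Non-negativity of $\Phi$ is immediate from $R\ge 0$, so $\Phi$ is an admissible potential.

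The main obstacle is purely analytic bookkeeping: justifying the FTC evaluation of $\int_0^\infty \partial_t R\,\DERIV t$ (needing $R(\vecW,\cdot)$ absolutely continuous and vanishing at infinity, both available) and, more delicately, the two interchanges of $\int_0^\infty \DERIV t$ with $\nabla$ and $\Delta$. I expect these to be dispatched by dominated convergence under the stated hypotheses, in the same manner as the limit-expectation swaps carried out in the proof of \pref{lem:deriveadmissibleformal}; nothing conceptually new is required beyond care with which integrability condition licenses which swap.
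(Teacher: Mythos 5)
Your proposal is correct and follows essentially the same route as the paper's proof: integrate the differential inequality from \pref{lem:deriveadmissibleformal} over $t\in[0,\infty)$, evaluate the time-derivative term via the fundamental theorem of calculus together with $\lim_{t\to\infty}R(\vecW,t)=0$ and $R(\vecW,0)\ge F_{\epsilon}(\vecW)$, and justify interchanging $\int_0^\infty \DERIV t$ with $\nabla$ and $\Delta$ by dominated convergence under the stated integrability hypotheses. No substantive difference from the paper's argument.
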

\begin{proof}
Rearrange the condition from \pref{lem:deriveadmissibleformal} to read
\[ -\frac{\partial}{\partial t}R(\vecW,t) + \frac1{\beta}\Delta R(\vecW,t)\le \tri*{ \nabla R(\vecW,t), \nabla F(\vecW)}.\]
Next integrate both sides from $0 \le t < \infty$, yielding
\begin{align*}
\int_0^\infty \prn*{-\frac{\partial}{\partial t}R(\vecW,t) + \frac1{\beta}\Delta R(\vecW,t))} \DERIV t &\le \int_0^\infty \tri*{ \nabla R(\vecW,t), \nabla F(\vecW)} \DERIV t \\
&= \tri*{\int_0^\infty \nabla R(\vecW,t) \DERIV t, \nabla F(\vecW)} \\
&= \tri*{ \nabla \prn*{\int_0^\infty R(\vecW,t) \DERIV t}, \nabla F(\vecW)}.
\end{align*}
The last step follows as for all $1 \le i \le d$ we have again by Dominated Convergence Theorem that
\[ \frac{\partial}{\partial \vecW_i} \int_0^\infty R(\vecW,t) \DERIV t = \int_0^\infty \frac{\partial}{\partial \vecW_i} R(\vecW,t) \DERIV t,\]
by assumption that $\int_0^\infty R(\vecW,t) \DERIV t < \infty \FORALLTEXT \vecW$ (as $R$ is non-negative).

Next, observe that
\begin{align*}
\int_0^\infty \prn*{-\frac{\partial}{\partial t}R(\vecW,t) + \frac1{\beta}\Delta R(\vecW,t))} \DERIV t &= \int_0^\infty -\frac{\partial}{\partial t}R(\vecW,t) 
\DERIV t + \frac{1}{\beta} \int_0^\infty \Delta R(\vecW,t) \DERIV t \\
&= R(\vecW,0) + \frac1{\beta}\int_0^\infty \Delta R(\vecW,t) \DERIV t \\
&\ge F_{\epsilon}(\vecW)+ \frac1{\beta}\int_0^\infty \Delta R(\vecW,t) \DERIV t,
\end{align*}
since $\lim_{t\rightarrow \infty} R(\vecW,t)=0$ and as $R(\vecW,0)\ge F_{\epsilon}(\vecW)$. To complete the proof, note by two applications of Dominated Convergence Theorem that
\[ \int_0^\infty \frac{\partial^2}{\partial \vecW_i^2} R(\vecW,t) \DERIV t = \frac{\partial}{\partial \vecW_i} \int_0^\infty \frac{\partial}{\partial \vecW_i} R(\vecW,t) \DERIV t 
= \frac{\partial^2}{\partial \vecW_i^2} \int_0^\infty R(\vecW,t) \DERIV t,\]
by assumption that $\nrm{\int_0^\infty \nabla R(\vecW,t) \DERIV t} < \infty$ and $\int_0^\infty R(\vecW,t) \DERIV t < \infty \FORALLTEXT \vecW$ (as $R$ is non-negative). Therefore we have
\[\int_0^\infty \Delta R(\vecW,t) \DERIV t = \Delta \int_0^\infty R(\vecW,t)\DERIV t \]
and so by definition of the potential $\Phi$ we have
\[ \tri*{ \nabla \Phi(\vecW), \nabla F(\vecW)} \ge F_{\epsilon}(\vecW)+\frac1{\beta} \Delta \Phi(\vecW),\]
as wanted.
\end{proof}

Recall in \pref{sec:beyondpoincare} we stated \pref{eq:admissablepotentialF} implies that the Langevin Diffusion succeeds as an optimization strategy for rate in expectation. Here we show this.
\begin{theorem} [Getting a rate; analogy to Theorem 1 from \citet{priorpaper}]\label{thm:conditionimpliesrateproofformal}
Suppose that we have
\[\mathbb{E}\brk*{\abs*{\tri*{\nabla \Phi(\vecW,t),\nabla F(\vecW)}}}, \mathbb{E}\brk*{\abs*{\Delta \Phi(\vecW,t)}} < \infty\]
for all $\vecW\in\mathbb{R}^d$, for an admissable potential $\Phi$ with respect to $F$. Then running Langevin Dynamics at temperature $\beta$ starting from $\vecW(0)$, we have that
\[ \frac1t \int_0^t \mathbb{E}\brk*{F_{\epsilon}(\vecW(s))} \DERIV s \le \frac{\Phi(\vecW(0))}{t}.\]
That is, if we uniformly choose a stopping time in $[0,t]$ we obtain a $O(t^{-1})$ rate.
\end{theorem}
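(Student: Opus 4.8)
The plan is to apply It\^o's Lemma to the process $\Phi(\vecW(s))$ along the Langevin Diffusion \pref{eq:LangevinSDE}, exactly as in the proof of \pref{lem:deriveadmissibleformal}, then integrate the resulting drift against the admissible potential inequality \pref{eq:admissablepotentialF}, and finally use $\Phi \ge 0$ to drop the boundary term. This mirrors Theorem 1 of \citet{priorpaper} for gradient flow, with the Laplacian term now appearing from the It\^o correction.

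First I would write down the It\^o expansion. Using $\DERIV\vecW(s) = -\grad F(\vecW(s))\DERIV s + \sqrt{2/\beta}\,\DERIV\vecB(s)$ and $(\DERIV\vecB(s))_i(\DERIV\vecB(s))_j = \delta_{i=j}\DERIV s$ (as computed in \pref{lem:deriveadmissibleformal}), It\^o's Lemma gives
\[ \DERIV\Phi(\vecW(s)) = \prn*{-\tri*{\grad\Phi(\vecW(s)), \grad F(\vecW(s))} + \tfrac1{\beta}\Delta\Phi(\vecW(s))}\DERIV s + \sqrt{2/\beta}\,\tri*{\grad\Phi(\vecW(s)), \DERIV\vecB(s)}. \]
Integrating from $0$ to $t$ and taking expectations, the stochastic integral term has zero expectation — justified by the hypotheses $\mathbb{E}\brk*{\abs*{\tri*{\grad\Phi(\vecW), \grad F(\vecW)}}} < \infty$ and $\mathbb{E}\brk*{\abs*{\Delta\Phi(\vecW)}} < \infty$ together with the It\^o isometry (cf. Definition 3.1.4, Theorem 3.2.1, and Definition 3.3.2 of \citet{oksendal2003stochastic}), precisely the same way it is handled in \pref{lem:deriveadmissibleformal}. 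Swapping the expectation with the (Lebesgue) time integral by Fubini–Tonelli, we obtain
\[ \mathbb{E}\brk*{\Phi(\vecW(t))} = \Phi(\vecW(0)) + \int_0^t \mathbb{E}\brk*{-\tri*{\grad\Phi(\vecW(s)), \grad F(\vecW(s))} + \tfrac1{\beta}\Delta\Phi(\vecW(s))}\DERIV s. \]

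Next I would invoke \pref{eq:admissablepotentialF}, which rearranges pointwise to $-\tri*{\grad\Phi(\vecW), \grad F(\vecW)} + \tfrac1{\beta}\Delta\Phi(\vecW) \le -F_{\epsilon}(\vecW)$. Substituting this bound into the integrand yields $\mathbb{E}\brk*{\Phi(\vecW(t))} \le \Phi(\vecW(0)) - \int_0^t \mathbb{E}\brk*{F_{\epsilon}(\vecW(s))}\DERIV s$, and since $\Phi \ge 0$ the left-hand side is non-negative, so $\int_0^t \mathbb{E}\brk*{F_{\epsilon}(\vecW(s))}\DERIV s \le \Phi(\vecW(0))$; dividing by $t$ gives the claimed inequality. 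The closing remark is then immediate: if $S \sim U([0,t])$ is drawn independently of the diffusion, then $\mathbb{E}\brk*{F_{\epsilon}(\vecW(S))} = \tfrac1t\int_0^t \mathbb{E}\brk*{F_{\epsilon}(\vecW(s))}\DERIV s \le \Phi(\vecW(0))/t = O(t^{-1})$.

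I expect this proof to be essentially routine given \pref{lem:deriveadmissibleformal}; the only delicate points are (i) the vanishing of the expectation of the stochastic integral, i.e.\ confirming the local martingale is a true martingale, and (ii) the Fubini–Tonelli interchange. Point (i) is exactly what the moment hypotheses on $\tri*{\grad\Phi, \grad F}$ and $\Delta\Phi$ buy us, and point (ii) is free because $F_{\epsilon} \ge 0$, so Tonelli applies with no extra assumption. The one piece of bookkeeping to be careful about is the regularity needed to apply It\^o's Lemma to $\Phi$ (continuous second partials), which I would simply carry over from the standing assumptions on $\Phi$ used throughout this section.
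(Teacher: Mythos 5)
Your proposal is correct and follows essentially the same route as the paper: It\^o's formula on $\Phi(\vecW(s))$ as in \pref{lem:deriveadmissibleformal}, vanishing expectation of the stochastic integral under the stated moment hypotheses, the pointwise admissibility inequality, and non-negativity of $\Phi$ to drop the terminal term. The only cosmetic difference is that the paper phrases the step through $\frac{\DERIV}{\DERIV s}\mathbb{E}\brk*{\Phi(\vecW(s))}$ and then integrates, whereas you integrate the It\^o expansion directly — the substance is identical.
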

\begin{proof}
Considering an arbitrary path of \SGLD and then taking expectation with respect to the Brownian motion, the admissability condition \pref{eq:admissablepotentialF} rearranges to
\[ \mathbb{E}\brk*{F_{\epsilon}(\vecW(s))} \le \mathbb{E}\brk*{\tri*{ \nabla\Phi(\vecW(s)), \nabla F(\vecW(s))} - \frac1{\beta}\Delta \Phi(\vecW(s))} \FORALLTEXT s \ge 0,\]
because we have such an inequality pointwise by definition of admissability.

By an analogous application of It\^{o}'s formula as the above in the proof of \pref{lem:deriveadmissibleformal}, since our conditions allow us to apply Dominated Convergence Theorem, we can compute
\[ \frac{\DERIV}{\DERIV s} \mathbb{E}\brk*{\Phi(\vecW(s))}= \mathbb{E}\brk*{-\tri*{\nabla \Phi(\vecW(s)),\nabla F(\vecW(s))} +\frac{1}{\beta} \Delta \Phi(\vecW(s))},\]
and so the condition actually becomes
\[\mathbb{E}\brk*{F_{\epsilon}(\vecW(s))} \le -\frac{\DERIV}{\DERIV s}\mathbb{E}\brk*{\Phi(\vecW(s))}. \]
Integrating this from all $0 \le s \le t$ gives
\[ \int_0^t \mathbb{E}\brk*{F_{\epsilon}(\vecW(s))} \DERIV s \le -\int_0^t \frac{\DERIV }{\DERIV s} \mathbb{E}\brk*{\Phi(\vecW(s))} \DERIV s \le \mathbb{E}\brk*{\Phi(\vecW(0))} = \Phi(\vecW(0)).\]
Dividing by $t$ yields the result.
\end{proof}

%\section{Proofs for \pref{sec:opttosampling}}
%\subfile{appendix/OptimizationToSamplingProofs}

\section{Proofs for \pref{sec:beyondpoincare}}\label{sec:corediscreteproofs}
In this section we prove our results with constant probability guarantees; we can recover our results from \pref{sec:beyondpoincare} easily via the standard log-boosting trick. Moreover, in this section, $\rho_{\Phi}(z)$ is defined in terms of $\rho_{\Phi,1}(z), \rho_{\Phi,2}(z), \rho_{\Phi,3}(z)$ as i \pref{lem:thirdordersmoothfromregularity} (to be stated later in this section) for the general $p$ case.
\subsection{Proof of \pref{thm:gradexactFeps}}\label{subsec:gradientoraclediscretize}
In the exact gradient oracle setting, we have the following result for optimization in discrete time. This is a formal statement of \pref{thm:gradexactFeps}.
\begin{theorem}\label{thm:discretizationsimpleoracleformal}
Consider $F$ and suppose $F$ is differentiable. Suppose that we have \pref{eq:admissablepotentialF}:
\[ \tri*{ \nabla \Phi(\vecW), \nabla F(\vecW)} \ge F_{\epsilon}(\vecW)+\frac1{\beta} \Delta \Phi(\vecW),\]
for some $\beta>0$. Suppose \pref{ass:holderF}, \ref{ass:polyselfbounding}, and \ref{ass:iteratesinballassumption} hold. Moreover suppose $\beta \ge d\sqrt{\frac{\log10}{C(\vecW_0)}}$ where $C(\cdot)$ is defined below. 

Define the following quantities:
\[ A_0(\vecW_0) = \theta\prn*{\rho_{\Phi}^{-1}\prn*{\kappa' \rho_{\Phi}\prn*{\Phi(\vecW_0)}}}-\theta\prn*{\Phi(\vecW_0)} >0, A_1(\vecW_0) = 12\sqrt{2} CB_{\textsc{grad}}^3.\]
Here $C$ comes from \pref{lem:onesteprecursion} and $B_{\textsc{grad}}:=L\prn*{R_1+\nrm*{\vecW^{\star}}}^s$, $L$ comes from \pref{ass:holderF} and $R_1$ comes from \pref{ass:iteratesinballassumption}. (If necessary take $C\leftarrow\max(C,1)$, and $B_{\textsc{grad}}\leftarrow\max(B_{\textsc{grad}},1)$.) Now define
\[ r(\vecW_0) = \min\prn*{1, \frac3{4C}, \frac1{B_{\textsc{grad}}}}, C(\vecW_0) = \min\prn*{1,\frac{A_0(\vecW_0)^2 r(\vecW_0)}{128A_1(\vecW_0)^2}}.\] 
In terms of these define (where $\theta$ comes from \pref{lem:thirdordersmoothfromregularity}),
\[M(\vecW_0) = \frac{10\max\prn*{\theta\prn*{\Phi(\vecW_0)}, 6C B_{\textsc{grad}}^3}}{\theta'\prn*{\rho_{\Phi}^{-1}\prn*{\kappa' \rho_{\Phi}\prn*{\Phi(\vecW_0)}}}} \in (0,\infty).\]

\begin{algorithm}[h!]
\caption{Discrete Time Gradient Langevin Dynamics with slight modifications}
\label{alg:langevingradoracle}
\begin{algorithmic}[1]
\If{$\epsilon \le \min\prn*{1/e, C', \sqrt{\frac{C(\vecW_0)}{\log 10}}}$, where $C'$ is an absolute, dimension and temperature free constant given in \pref{lem:actuallypotentialsimplesetting}:}
    \State Consider some constant choice of $\eta$, $T$ given in \pref{lem:actuallypotentialsimplesetting} and run the following process with $\eta$ for $T$ steps:
\[ \vecW_{t+1} \leftarrow \vecW_t - \eta \nabla F(\vecW_t) + \sqrt{2\eta/\beta} \vecEps_t.\]
Here we sample $\vecEps_t \sim \sqrt{d}\cS^{d-1}$ uniformly (in spirit a Gaussian).
\ElsIf{$\epsilon > \min\prn*{1/e, C', \sqrt{\frac{C(\vecW_0)}{\log 10}}}$:} 
\State Run this process with $\epsilon\leftarrow\min\prn*{1/e, C', \sqrt{\frac{C(\vecW_0)}{\log 10}}}$.
\EndIf
\end{algorithmic}
\end{algorithm}

Now consider running \pref{alg:langevingradoracle}. We claim it has the following guarantees. First, its runtime $T$ is as follows:
\begin{enumerate}
    \item If $\epsilon \le \min\prn*{1/e, C', \sqrt{\frac{C(\vecW_0)}{\log 10}}}$: then
    \[ T \le \begin{cases} \frac{\beta^2}{d^2}&\IF \beta \le \frac{d}{\epsilon/\prn*{\log 1/\epsilon}^2} \\
    \frac{1}{\epsilon^2}\prn*{\log 1/\epsilon}^2&\IF \beta \ge \frac{d}{\epsilon/\prn*{\log 1/\epsilon}^2}. \end{cases} \]
    \item If $\epsilon > \min\prn*{1/e, C', \sqrt{\frac{C(\vecW_0)}{\log 10}}}$: then we have the same runtime guarantee as implied by above with $\min\prn*{1/e, C', \sqrt{\frac{C(\vecW_0)}{\log 10}}}$ in place of $\epsilon$.
\end{enumerate}
In terms of error, we have with probability at least 0.75 (taken over the $\{\epsilon_t\}_{0 \le t \le T-1}$) that 
\begin{align*}
\frac1T \sum_{t=0}^{T-1}F_{\epsilon}(\vecW_t)\le \begin{cases} 6M(\vecW_0) \prn*{\frac{\log(20)+2\log(\beta/d)}{r(\vecW_0)C(\vecW_0)} + 1} \frac{d}{\beta} &\IF \beta \le \frac{d}{\epsilon/(\log 1/\epsilon)^2} \\
6M(\vecW_0) \prn*{\frac1{r(\vecW_0) C(\vecW_0)}+1}\epsilon &\IF \beta \ge \frac{d}{\epsilon/(\log 1/\epsilon)^2}. \end{cases}
\end{align*}
Here $C'$ is an absolute, dimension and temperature free constant given in \pref{lem:actuallypotentialsimplesetting}.

Note now that logarithmic boosting tricks proves that a given one of these guarantees can occur with probability at least $1-\delta$ using at most $T \log(1/\delta)$ steps.
\end{theorem}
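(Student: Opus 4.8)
The plan is to run the Lyapunov argument sketched in \pref{subsec:strategy} in discrete time, with the admissible potential $\Phi$ playing the role of the energy and the self-bounding regularity of $\Phi$ used to absorb the discretization error. The backbone is a one-step recursion (this is \pref{lem:onesteprecursion}): conditioning on $\vecW_t$ and Taylor-expanding $\Phi(\vecW_{t+1}) = \Phi\bigl(\vecW_t - \eta\nabla F(\vecW_t) + \sqrt{2\eta/\beta}\,\vecEps_t\bigr)$ to second order with third-order remainder, one bounds the first-order term $\langle -\eta\nabla F(\vecW_t),\nabla\Phi(\vecW_t)\rangle$ using the admissible-potential inequality \pref{eq:admissablepotentialF} by $-\eta F_\epsilon(\vecW_t) - (\eta/\beta)\Delta\Phi(\vecW_t)$. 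Taking expectation over the noise $\vecEps_t$ (uniform on $\sqrt d\,\cS^{d-1}$, hence with the same first two moments as $\cN(0,\bbI_d)$), every term linear in $\vecEps_t$ drops, the gradient--noise cross term in the quadratic part drops, and the pure-noise quadratic term has expectation exactly $(\eta/\beta)\Delta\Phi(\vecW_t)$, cancelling the Laplacian from \pref{eq:admissablepotentialF}. What remains is $\mathbb{E}[\Phi(\vecW_{t+1})\mid\vecW_t] \le \Phi(\vecW_t) - \eta F_\epsilon(\vecW_t) + \mathcal{E}_t$, where $\mathcal{E}_t$ collects the third-order Taylor remainder and the $\eta^2$-order gradient terms; since $\|\vecEps_t\| = \sqrt d$ deterministically and $\|\nabla F(\vecW_t)\| \le B_{\textsc{grad}} = L(R_1+\|\vecW^{\star}\|)^s$ whenever $\vecW_t\in\ball(\vecOrigin,R_1)$ (this is where \pref{ass:holderF} and \pref{ass:iteratesinballassumption} enter), $\mathcal{E}_t$ is of order $\eta^{3/2}$ times a constant built from $C$, $B_{\textsc{grad}}$, $d/\beta$, and the self-bounding-regularity profile of $\Phi$ along the step.

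To make the dependence of $\mathcal{E}_t$ on $\Phi$ usable I would invoke the change of variables $\theta$ from \pref{lem:thirdordersmoothfromregularity}, which turns the polynomial self-bounding regularity of $\Phi$ (\pref{ass:polyselfbounding}, monomial degree $p\le 1$) into a third-order Taylor bound for $\theta\circ\Phi$ and guarantees that $\Phi$ cannot grow by more than a controlled factor along a short step. This forces the two structural constraints of the theorem: a step-size cap and confinement of the iterates. Concretely, introduce the stopping time $\tau = \min\{t:\Phi(\vecW_t)\ge \rho_{\Phi}^{-1}(\kappa'\rho_{\Phi}(\Phi(\vecW_0)))\}$ (equivalently, the first exit of $\vecW_t$ from $\ball(\vecOrigin,R_1)$, via \pref{ass:iteratesinballassumption}), and show that for $\eta\le C(\vecW_0)$ the process $\theta(\Phi(\vecW_{t\land\tau}))$, after subtracting the nonnegative accumulated term $\sum_{s<t}\eta F_\epsilon(\vecW_s)$, is a supermartingale up to a per-step slack controlled by $A_1(\vecW_0)=12\sqrt 2\,C\,B_{\textsc{grad}}^3$. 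A maximal inequality for this near-supermartingale, together with the gap $A_0(\vecW_0)=\theta(\rho_{\Phi}^{-1}(\kappa'\rho_{\Phi}(\Phi(\vecW_0))))-\theta(\Phi(\vecW_0))$ to the boundary, then yields $\mathbb{P}(\tau\le T)\le 1/4$ provided $\eta\le C(\vecW_0)=\min(1, A_0^2 r/(128 A_1^2))$ and $\eta$ respects the separate cap $r(\vecW_0)=\min(1,3/(4C),1/B_{\textsc{grad}})$ required for the Taylor step itself; this is exactly the origin of the $A_0^2/A_1^2$ shape of $C(\vecW_0)$ and of the constant $M(\vecW_0)$.

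On the event $\{\tau > T\}$ (probability at least $3/4$) all one-step recursions along $\vecW_0,\dots,\vecW_{T-1}$ are valid with $\Phi$ bounded and $B_{\textsc{grad}},\rho_{\Phi}$ uniformly controlled. I would handle the conditioning by telescoping the recursion for the $\tau$-stopped process, getting $\mathbb{E}\bigl[\eta\sum_{t<T\land\tau}F_\epsilon(\vecW_t)\bigr]\le \Phi(\vecW_0) + T\sup_t\mathcal{E}_t$, then applying Markov's inequality to pass to a high-probability statement, losing only absolute constants and, for the $1-\delta$ version, a $\log(1/\delta)$ factor from running independent copies (as the theorem notes). Dividing by $\eta T$ leaves $\frac1T\sum_t F_\epsilon(\vecW_t) \le \frac{\Phi(\vecW_0)}{\eta T} + \frac{1}{\eta}\sup_t\mathcal{E}_t$, the second term being of order $\sqrt\eta$ times the profile constants and a factor $(d/\beta)^{3/2}$. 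The final bookkeeping, packaged in \pref{lem:actuallypotentialsimplesetting}, selects explicit $\eta$ and $T$ so that both terms fall below the effective tolerance $\max(\epsilon,d/\beta)$ while respecting $\eta\le C(\vecW_0)$ and the confinement bound: in the usable regime $\beta\ge d/(\epsilon/(\log 1/\epsilon)^2)$ this costs $T$ of order $\epsilon^{-2}(\log 1/\epsilon)^2$ and yields error $\widetilde{O}(\epsilon)$, while when $\beta$ is too small to reach tolerance $\epsilon$ one instead reaches $\widetilde{O}(d/\beta)$ in $T$ of order $(\beta/d)^2$; the $\epsilon$-too-large branch of \pref{alg:langevingradoracle} is trivial since reaching a smaller tolerance implies the larger one, and the absolute constant $C'$ and the condition $\beta\ge d\sqrt{\log 10/C(\vecW_0)}$ are precisely what make the chosen $\eta$ small enough and the sphere-noise concentration usable.

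The \emph{main obstacle} is the confinement argument of the second paragraph: the one-step recursion is only valid while $\Phi(\vecW_t)$ (equivalently $\|\vecW_t\|$) is bounded, yet the only mechanism available to bound $\Phi(\vecW_t)$ is that very recursion, so one must set up the $\tau$-stopped near-supermartingale and its maximal inequality carefully and tune $\eta\le C(\vecW_0)$ so that the accumulated non-martingale slack never exceeds the gap $A_0(\vecW_0)$ — this is the bootstrapping that pins down all of $A_0,A_1,r,C,M$. A secondary technical point is that $\nabla F$ is only $s$-Hölder (possibly $s=0$, merely bounded gradients), so the $\eta^2$-order terms involving $\nabla F$ must be bounded directly through $\|\nabla F(\vecW_t)\|\le B_{\textsc{grad}}$ rather than through any smoothness constant, which is why $B_{\textsc{grad}}=L(R_1+\|\vecW^{\star}\|)^s$ enters with a cube in $A_1(\vecW_0)$.
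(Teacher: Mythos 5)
Your proposal is correct and follows essentially the same route as the paper's proof: the $\theta\circ\Phi$ composition from \pref{lem:thirdordersmoothfromregularity}, the one-step recursion with the Laplacian cancelled by the sphere-noise quadratic term (\pref{lem:onesteprecursion}), the stopping-time/near-supermartingale confinement argument with Azuma--Hoeffding and the $A_0$-vs-$A_1$ tuning that produces $r(\vecW_0), C(\vecW_0)$ (\pref{lem:actuallypotentialsimplesetting}), and the final $\eta,T$ case analysis on $\beta$ versus $d\,(\log 1/\epsilon)^2/\epsilon$. The only divergences are cosmetic: you telescope the $\tau$-stopped process and then apply Markov, whereas the paper telescopes the unstopped expectation of $\sum_t F_{\epsilon}(\vecW_t)\theta'\prn*{\Phi(\vecW_t)}$ and adds a separate concentration event relating $\sum_t \mathbb{E}\brk*{\nrm*{\nabla F(\vecW_t)}^r}$ to the realized sums on the confinement event (both close the argument, and yours slightly streamlines the union bound provided you re-budget the failure probabilities to total $1/4$); also, in your near-supermartingale the accumulated $\eta F_{\epsilon}$ term (weighted by $\theta'$) should be \emph{added} to $\theta\prn*{\Phi(\vecW_t)}$ rather than subtracted, and the lower bound $\theta'\prn*{\Phi(\vecW_t)}\ge\theta'\prn*{\rho_{\Phi}^{-1}\prn*{\kappa'\rho_{\Phi}\prn*{\Phi(\vecW_0)}}}$ on the confinement event is what produces the denominator of $M(\vecW_0)$.
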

\begin{remark}\label{rem:costfunction}
In our proofs of \pref{thm:gradexactFeps}, \ref{thm:gradestimateFeps}, and \ref{thm:discretizationloosenoracle}, our results hold under the more general condition
\[ \tri*{\grad F(\vecW), \grad\Phi(\vecW)} -\frac1{\beta}\Delta F(\vecW) \ge A(\vecW),\]
for a general non-negative cost function $A(\vecW)$. Note $A(\vecW)$ need not be continuous (for example, $F_{\epsilon}(\vecW)$ is not continuous). 
%Many of our results still hold if $A(\vecW)$ is allowed to be negative as well, namely \pref{lem:onesteprecursion}. 
This lets us use the Lemmas we develop here, in the proofs of \pref{thm:poincareoptlipschitz}, \ref{thm:poincareoptlipschitzestimate}, and \ref{thm:smoothdissipativesettingpoincareopt}.
\end{remark}
The proofs of this result is `optimization style'. We break it into parts. First we perform a one-step discretization bound in expectation by applying \pref{lem:thirdordersmoothfromregularity}, which gives \pref{lem:onesteprecursion}. Then we analyze a stochastic process naturally arising from this setup to show that $\Phi$ indeed is a potential function for the discrete-time algorithm, for appropriate choice of $\eta$ and $T$, which is detailed in \pref{lem:actuallypotentialsimplesetting}. After this, we can conclude upon using the resulting bound and telescoping. 

First, we need to show that with self-bounding regularity, by composing with the appropriate function, we can obtain some analogue of third-order smoothness in order to perform optimization-style discretization. We detail this as follows via the following Lemmas, which are also used later to prove \pref{thm:poincareoptformal}.
\begin{lemma}\label{lem:thirdordersmoothfromregularity}
Let $\Phi$ be any non-negative function that satisfies polynomial self-bounding regularity to first, second, and third orders\footnote{This implicitly assumes $\Phi$ is differentiable through third order.}, that is we have $\nrm*{\nabla^i \Phi(\vecW)}_{\OPNORM} \le \rho_{\Phi,i}\prn*{\Phi(\vecW)}$ for $1 \le i \le 3$, where $\rho_{\Phi,i}(z)= \sum_{j=1}^{n_i} c_{i,j} z^{d_{i,j}}$ for all $z \ge 0$ (where all the $d_{i,j} \ge 0$). Then there exists some $\theta:\mathbb{R}_{\ge 0} \rightarrow\mathbb{R}_{\ge 0}$ such that $\theta'(z) > 0$, $\theta''(z)< 0$, $\theta'''(z) \ge 0$ for all $z \ge 0$, and 
\[ \theta\prn*{\Phi(\vecW+\vecU)} \le \theta\prn*{\Phi(\vecW)}+\theta'\prn*{\Phi(\vecW)} \tri*{\nabla \Phi(\vecW), \vecU}+\frac12\theta'\prn*{\Phi(\vecW)}\tri*{ \nabla^2 \Phi(\vecW) \vecU, \vecU} + \frac{C}6 \nrm*{\vecU}^3,\]
for some constant $C$ that depends only on the form of the functions $\rho_1, \rho_2$, and $\rho_3$.

Moreover, we also have
\[ \theta\prn*{\Phi(\vecW+\vecU)} \le \theta\prn*{\Phi(\vecW)}+\theta'\prn*{\Phi(\vecW)} \tri*{ \nabla \Phi(\vecW), \vecU}+\frac12 \nrm*{\vecU}^2,\]
and
\[\nrm*{\nabla \Phi(\vecW)} \le \rho_{\Phi}\prn*{\Phi(\vecW)}\sqrt{2\theta\prn*{\Phi(\vecW)}}.\]
\end{lemma}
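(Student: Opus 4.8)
The whole statement follows once we exhibit an explicit $\theta$ and apply Taylor's theorem to $g := \theta\circ\Phi$; the construction is the only point requiring care.

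\emph{Reduction and construction of $\theta$.} By the normalization in \pref{def:selfboundingregularity} each $\rho_{\Phi,i}$ is a polynomial with nonnegative coefficients, so with $p := \max_{i,j} d_{i,j}$ and $A := \max\prn*{1,\ \sum_{i,j} c_{i,j}}$ we have $\rho_{\Phi,i}(z) \le A(1+z)^{p}$ for $i=1,2,3$ and all $z \ge 0$. Put $k := \max(3p-2,\ 1)$ and $c := \tfrac1{2kA^2}$, and define $\theta$ by $\theta'(z) = c(1+z)^{-k}$ with $\theta(0)=1$; explicitly $\theta(z) = 1 + \tfrac{c}{k-1}\prn*{1-(1+z)^{1-k}}$ when $k>1$ and $\theta(z) = 1 + c\log(1+z)$ when $k=1$. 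Differentiating gives $\theta'(z) = c(1+z)^{-k}>0$, $\theta''(z) = -ck(1+z)^{-k-1}<0$, $\theta'''(z) = ck(k+1)(1+z)^{-k-2}\ge 0$, and $\theta$ is increasing with $\theta(z)\ge\theta(0)=1$ on $\mathbb{R}_{\ge 0}$, so all the sign requirements hold. Finally set $\rho_{\Phi}(z) := \max_{1\le i\le 3}\rho_{\Phi,i}(z)$.

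\emph{Third-order smoothness of $g=\theta\circ\Phi$.} By the chain rule, $\nabla g = \theta'(\Phi)\nabla\Phi$, $\nabla^2 g = \theta''(\Phi)\,\nabla\Phi\,\nabla\Phi^{\top} + \theta'(\Phi)\nabla^2\Phi$, and $\nabla^3 g = \theta'''(\Phi)(\nabla\Phi)^{\otimes 3} + \theta''(\Phi)\prn*{\nabla^2\Phi\otimes\nabla\Phi + \text{two permutations}} + \theta'(\Phi)\nabla^3\Phi$. Bounding operator norms by the triangle inequality and the self-bounding bounds $\nrm*{\nabla^i\Phi}_{\OPNORM}\le\rho_{\Phi,i}(\Phi)\le A(1+\Phi)^p$, and then substituting $\theta'(\Phi)\le c(1+\Phi)^{-k}$, $\abs{\theta''(\Phi)}\le ck(1+\Phi)^{-k-1}$, $\theta'''(\Phi)\le ck(k+1)(1+\Phi)^{-k-2}$, turns each summand into a constant times one of the powers $(1+\Phi)^{p-k}$, $(1+\Phi)^{2p-k-1}$, $(1+\Phi)^{3p-k-2}$; by the choice of $k$ every one of these exponents is $\le 0$, so since $1+\Phi\ge 1$ each power is $\le 1$. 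This gives $\nrm*{\nabla^2 g}_{\OPNORM} \le ckA^2 + cA \le 2ckA^2 = 1$ and $\nrm*{\nabla^3 g}_{\OPNORM} \le ck(k+1)A^3 + 3ckA^2 + cA \le 5ck(k+1)A^3 =: C$, where $C$ depends only on $A$ and $k$, hence only on the forms of $\rho_{\Phi,1},\rho_{\Phi,2},\rho_{\Phi,3}$.

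\emph{Taylor expansion and the last inequality.} Since $\Phi$ is $C^3$ and $\theta$ is smooth on $[0,\infty)\supseteq\mathrm{range}(\Phi)$, $g$ is $C^3$, so Taylor's theorem with integral remainder gives $g(\vecW+\vecU) = g(\vecW) + \tri*{\nabla g(\vecW),\vecU} + \tfrac12\tri*{\nabla^2 g(\vecW)\vecU,\vecU} + \int_0^1 \tfrac{(1-t)^2}{2}\,\nabla^3 g(\vecW+t\vecU)\brk*{\vecU,\vecU,\vecU}\,\DERIV t$, whose remainder is at most $\tfrac{C}{6}\nrm*{\vecU}^3$. Using $\tri*{\nabla g(\vecW),\vecU}=\theta'(\Phi(\vecW))\tri*{\nabla\Phi(\vecW),\vecU}$ and $\tri*{\nabla^2 g(\vecW)\vecU,\vecU} = \theta''(\Phi(\vecW))\tri*{\nabla\Phi(\vecW),\vecU}^2 + \theta'(\Phi(\vecW))\tri*{\nabla^2\Phi(\vecW)\vecU,\vecU}$ and discarding the first term (which is $\le 0$ since $\theta''<0$) yields the first displayed inequality. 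Applying Taylor to second order instead, with remainder $\tfrac12\nrm*{\nabla^2 g(\vecW+t\vecU)}_{\OPNORM}\nrm*{\vecU}^2\le\tfrac12\nrm*{\vecU}^2$, yields the second stated inequality. For the last one, $\nrm*{\nabla\Phi(\vecW)}\le\rho_{\Phi,1}(\Phi(\vecW))\le\rho_{\Phi}(\Phi(\vecW))\le\rho_{\Phi}(\Phi(\vecW))\sqrt{2\theta(\Phi(\vecW))}$, since $\theta(\Phi(\vecW))\ge 1\ge\tfrac12$.

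\emph{Main obstacle.} The substance is the exponent bookkeeping in the smoothness step: one must pick $k$ large enough, namely $k\ge\max(p,\ 2p-1,\ 3p-2,\ 1)$, that every power of $(1+\Phi)$ produced in $\nabla^2 g$ and $\nabla^3 g$ is nonpositive, while simultaneously keeping $\theta$ increasing and concave (with $\theta'''\ge 0$) and ensuring $\nrm*{\nabla^2 g}_{\OPNORM}\le 1$; the constants $c$ and $C$ then drop out mechanically. When $p\le 1$, as under \pref{ass:selfboundingPhipoly}, the simpler choice $k=1$, i.e.\ $\theta(z)\propto\log(1+z)$, already works.
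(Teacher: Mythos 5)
Your construction is correct, and the skeleton is the same as the paper's: dominate the self-bounding functions by a single power $A(1+z)^p$, take $\theta$ with $\theta'$ a reciprocal power so that the chain-rule derivatives of $\theta\circ\Phi$ are uniformly bounded, Taylor-expand to third (resp.\ second) order, and discard the nonpositive $\theta''\tri*{\nabla\Phi,\vecU}^2$ term. The differences are in the bookkeeping and in the last claim. For the bookkeeping: you keep the three bounds at exponent $p$ and push the decay of $\theta'$ up to $k=\max(3p-2,1)$, rescaling by $c$ so that $\nrm*{\nabla^2(\theta\circ\Phi)}_{\OPNORM}\le 1$ exactly; the paper instead folds the worst-case growth into a single $\rho_{\Phi}$ dominating $\rho_{\Phi,1}^3$, $\rho_{\Phi,1}\rho_{\Phi,2}$ and $\rho_{\Phi,3}$, sets $\theta'=1/\rho_{\Phi}$ with $\theta(0)=0$, and bounds $\abs*{\theta''}\rho_{\Phi}\le p$, $\abs*{\theta'''}\rho_{\Phi}\le p(p+1)$, $\theta'\rho_{\Phi,2}\le 1$. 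These are equivalent ways of killing the cubic growth of $\nrm*{\nabla\Phi}^3$, and your exponent check ($k\ge\max(p,2p-1,3p-2,1)$) is right. For the third display, the routes genuinely diverge: the paper normalizes $\theta(0)=0$ and obtains $\nrm*{\nabla\Phi}\le\rho_{\Phi}(\Phi)\sqrt{2\theta(\Phi)}$ as a real self-bounding gradient inequality, i.e.\ $\nrm*{\nabla(\theta\circ\Phi)}\le\sqrt{2(\theta\circ\Phi)}$, deduced from the second display and nonnegativity (the Lemma 11 argument of the prior paper); you instead shift $\theta(0)=1$ so that $\sqrt{2\theta}\ge 1$ and the display becomes vacuous with your choice $\rho_{\Phi}=\max_i\rho_{\Phi,i}$. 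Since the lemma is existential and leaves $\rho_{\Phi}$ to be specified by the construction, this is admissible for the statement as written, and in fact your $\theta$ still satisfies the operative form used later (e.g.\ in \pref{lem:azumahoeffding}), namely $\theta'(\Phi)\nrm*{\nabla\Phi}\le cA\le\sqrt{2\theta(\Phi)}$, so nothing downstream would break. But be aware that in the paper $\rho_{\Phi}$ is pinned to $\theta$ via $\theta'=1/\rho_{\Phi}$ and $\theta(0)=0$ (so $\sqrt{2\theta}$ can be small and the inequality has content, and side facts like $\theta\le 1$ after rescaling are used); your trivialization proves the literal display but not the self-bounding property it is standing in for, and you should note that your $\rho_{\Phi}$ and normalization differ from the conventions the rest of the appendix assumes.
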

\begin{proof}
Note we can assume without loss of generality that all the $c_{i,j} \ge 0$, and thus again we can assume without loss of generality that for all $z \ge 0$ we have
\[ \max\prn*{\rho_{\Phi,1}(z), \rho_{\Phi,1}(z)^3,\rho_{\Phi,2}(z),\rho_{\Phi,3}(z), \rho_{\Phi,1}(z)\rho_{\Phi,2}(z)} \le A+Az^p \le 2A(z+1)^p\]
for some $A \ge 0, p \ge 0$. The last step follows from \pref{lem:upperboundpower}.

Next, define $\rho_{\Phi}(z) := 2A(z+1)^p$, which is clearly non-negative and increasing. Thus for all $z \ge 0$ we have
\[ \rho_{\Phi}(z) \ge \max\prn*{\rho_{\Phi,1}(z), \rho_{\Phi,1}(z)^3,\rho_{\Phi,2}(z),\rho_{\Phi,3}(z), \rho_{\Phi,1}(z)\rho_{\Phi,2}(z)}.\]
Now let $\theta(z)$ be defined by $\theta'(z)=\frac1{\rho_{\Phi}(z)}$ and $\theta(0)=0$. The potential $\Phi$ we consider is non-negative and so we only consider $z \ge 0$; thus, $\theta$ is differentiable to all orders. Clearly $\theta'(z) > 0$. We can also check that $\theta''(z)=-\frac{p}{2A} (z+1)^{-p-1} < 0$, thus
\[ \abs*{\theta''(z)}\rho_{\Phi}(z) = \frac{p}{2A}(z+1)^{-p-1} \cdot 
2A(z+1)^p \le p(z+1)^{-1} \le p.\]
for all $z \ge 0$. Finally, we can compute $\theta'''(z) = \frac{p(p+1)}{2A} (z+1)^{-p-2}$, thus
\[ \abs*{\theta'''(z)} \rho_{\Phi}(z) = \theta'''(z)\rho_{\Phi}(z) = \frac{p(p+1)}{2A}(z+1)^{-p-2} \cdot 2A(z+1)^p=\frac{p(p+1)}{(z+1)^2} \le p(p+1)\]
for all $z \ge 0$.

Now define for all $0 \le \alpha \le 1$,
\[l(\alpha) := \theta\prn*{\Phi(\vecW+\alpha \vecU)}.\]
Recall $\Phi$ is non-negative, so all the inputs here to $\theta$ are non-negative. $l(\alpha)$ is differentiable to third order, since $\Phi$ is and $\theta$ is for non-negative inputs.

By standard calculation using the Chain Rule (this is also done in the proof of Lemma 11 of \cite{priorpaper}),
\begin{align*}l'(\alpha) = \theta'\prn*{\Phi(\vecW + \alpha \vecU)} \tri*{ \nabla \Phi(\vecW + \alpha \vecU), \vecU}.\end{align*}
We also have, from similar calculation (also done in the proof of Lemma 11 of \cite{priorpaper}) and using that $\theta''(z) \le 0$ for all $z \ge 0$ which was established earlier,
\begin{align*}l''(\alpha) &= \theta''\prn*{\Phi(\vecW + \alpha \vecU)} \tri*{\grad \Phi(\vecW + \alpha \vecU), \vecU}^2+\theta'\prn*{\Phi(\vecW + \alpha \vecU)} \tri*{ \grad^2 \Phi(\vecW + \alpha \vecU) \vecU, \vecU} \\
&\le \theta'\prn*{\Phi(\vecW + \alpha \vecU)}\tri*{ \nabla^2 \Phi(\vecW + \alpha \vecU) \vecU, \vecU}.\end{align*}
Similar calculation, noting $\theta'(z) \ge 0$ and the bounds we established earlier on $\abs*{\theta''(z)}\rho_{\Phi}(z)$ and $\abs*{\theta'''(z)}\rho_{\Phi}(z)$, gives
\begin{align*}l'''(\alpha) &= \theta'''\prn*{\Phi(\vecW + \alpha \vecU)} \tri*{ \grad \Phi(\vecW + \alpha \vecU), \vecU}\cdot \tri*{ \grad \Phi(\vecW + \alpha \vecU), \vecU}^2 \\
&\hspace{1in}+ \theta''\prn*{\Phi(\vecW + \alpha \vecU)} \cdot 2\tri*{ \nabla \Phi(\vecW + \alpha \vecU), \vecU}\tri*{ \nabla^2 \Phi(\vecW + \alpha \vecU) \vecU, \vecU} \\
&\hspace{1in} + \theta''\prn*{\Phi(\vecW + \alpha \vecU)}\tri*{ \nabla \Phi(\vecW + \alpha \vecU), \vecU} \cdot \tri*{ \nabla^2 \Phi(\vecW + \alpha \vecU) \vecU, \vecU} \\
&\hspace{1in} + \theta'\prn*{\Phi(\vecW + \alpha \vecU)}\nabla^3 \Phi(\vecW + \alpha \vecU) [\vecU,\vecU,\vecU]\\
&=\theta'''\prn*{\Phi(\vecW + \alpha \vecU)} \tri*{ \grad \Phi(\vecW + \alpha \vecU), \vecU}^3\\
&\hspace{1in}+3\theta''\prn*{\Phi(\vecW + \alpha \vecU)}\tri*{ \nabla^2 \Phi(\vecW + \alpha \vecU) \vecU, \vecU} \tri*{ \nabla \Phi(\vecW + \alpha \vecU), \vecU}\\
& \hspace{1in} + \theta'\prn*{\Phi(\vecW + \alpha \vecU)}\nabla^3 \Phi(\vecW + \alpha \vecU) [\vecU,\vecU,\vecU] \\
&\le \abs*{\theta'''(\Phi(\vecW + \alpha \vecU))} \rho_{\Phi,1}(\Phi(\vecW + \alpha \vecU))^3 \nrm*{\vecU}^3 \\
&\hspace{1in}+ 3\abs*{\theta''(\Phi(\vecW + \alpha \vecU))} \rho_{\Phi,1}\prn*{\Phi(\vecW + \alpha \vecU)} \rho_{\Phi,2}\prn*{\Phi(\vecW + \alpha \vecU)} \nrm*{\vecU}^3\\
&\hspace{1in}+\theta'(\Phi(\vecW + \alpha \vecU)) \rho_{\Phi,3}(\Phi(\vecW + \alpha \vecU)) \nrm*{\vecU}^3 \\
&\le \rho_{\Phi}\prn*{\Phi(\vecW + \alpha \vecU)}\prn*{\abs*{\theta'''(\Phi(\vecW + \alpha \vecU))}+3\abs*{\theta''\prn*{\Phi(\vecW + \alpha \vecU)}}+\theta'\prn*{\Phi(\vecW + \alpha \vecU)}}\nrm*{\vecU}^3\\
&\le (p^2+p+3p+1) \nrm*{\vecU}^3.\end{align*}
From here, we consider Taylor expansion of $l(1)$ around $0$. By Taylor's formula for the remainder, we know for some $\alpha \in [0,1]$ that
\begin{align*}
l(1) &=l(0)+l'(0)+\frac12 l''(0)+\frac16 l'''(\alpha).
\end{align*}
Plugging in the above inequalities, we get
\[ \theta\prn*{\Phi(\vecW+\vecU)} \le \theta\prn*{\Phi(\vecW)}+\theta'\prn*{\Phi(\vecW)} \tri*{ \nabla \Phi(\vecW), \vecU}+\frac12\theta'\prn*{\Phi(\vecW)}\tri*{ \nabla^2 \Phi(\vecW) \vecU, \vecU} + \frac{p^2+4p+1}6 \nrm*{\vecU}^3.\]
The result follows since $C=p^2+4p+1$ only depends on the form of the functions $\rho_{\Phi,1}, \rho_{\Phi,2}$, and $\rho_{\Phi,3}$.

The second part follows from noticing that $\rho_{\Phi}$ as defined here is an upper bound on $\rho_{\Phi,1}$ and $\rho_{\Phi,2}$, so the same derivation as in the proof of Lemma 11 of \citet{priorpaper} suffices.

Finally, if $\max_j \prn*{d_{i,j}} \le 1$ for all $1 \le i \le 3$ (i.e. the max degree of the self-bounding regularity functions is at most 1), we can be a bit tighter in how we define $\theta$. Instead we can just say
\[ \max\prn*{\rho_{\Phi,1}(z), \rho_{\Phi,1}(z),\rho_{\Phi,1}(z)} \le A+Az^p \le 2A(z+1)^p\]
where $0\le p\le 1$, and we define $\rho_{\Phi}(z)=A(z+1)^p$. Defining $\theta$ by $\theta'(z)=\frac1{\rho_{\Phi}(z)}$, $\theta(0)=0$ analogously as before, note we have for any $z \ge 0$ that
\[ \theta'(z) > 0, \theta''(z) < 0, \theta'''(z)>0,\]
\[ \abs*{\theta'''(z)}\rho_{\Phi,1}(z)^3 = \frac{p(p+1)}{A} (z+1)^{-p-2} \cdot 8A^3 (z+1)^{3p} =8A^2 p(p+1) (z+1)^{2p-2} \le 8A^2 p(p+1),\]
\[ \abs*{\theta''(z)} \rho_{\Phi,1}(z) \rho_{\Phi,2}(z) = \frac{p}{A} (z+1)^{-p-1} \cdot 4A^2 (z+1)^{2p}= 4Ap (z+1)^{p-1} \le 4Ap,\]
\[ \abs*{\theta'(z)} \rho_{\Phi,3}(z) = \frac{1}{A(z+1)^p} \cdot 2A(z+1)^p = 2.\]
The above three lines all use $p \le 1$ in the last inequality of those lines. Therefore, an analogous derivation as above gives
\begin{align*}
\theta\prn*{\Phi(\vecW+\vecU)} &\le \theta\prn*{\Phi(\vecW)}+\theta'\prn*{\Phi(\vecW)} \tri*{ \nabla \Phi(\vecW), \vecU}+\frac12\theta'\prn*{\Phi(\vecW)}\tri*{ \nabla^2 \Phi(\vecW) \vecU, \vecU} \\
&\hspace{1in}+ \frac{4A^2 p(p+1)+2Ap+1}3 \nrm*{\vecU}^3.
\end{align*}
\end{proof}

\begin{lemma}\label{lem:onesteprecursion}
For one iteration of \GLDTEXTSPACE starting at arbitrary $\vecW_t$,
\begin{align*}
\mathbb{E}_{\vecEps_t} \brk*{\theta\prn*{\Phi(\vecW_{t+1})}} &\le \theta\prn*{\Phi(\vecW_t)} - \eta  \theta'\prn*{\Phi(\vecW_t)} F_{\epsilon}(\vecW_t)\\
&\hspace{1in}+ \frac12 \eta^2 \nrm*{\nabla F(\vecW_t)}^2+\frac{2C}3 \eta^3\nrm*{\nabla F(\vecW_t)}^3 + 2C(\eta d/\beta)^{3/2},
\end{align*}
where $p$ and $C$ are defined from \pref{lem:thirdordersmoothfromregularity}.
\end{lemma}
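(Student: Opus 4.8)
The plan is to feed the \GLDTEXTSPACE update $\vecW_{t+1}=\vecW_t+\vecU$, with $\vecU:=-\eta\nabla F(\vecW_t)+\sqrt{2\eta/\beta}\,\vecEps_t$ and $\vecEps_t$ uniform on $\sqrt d\,\mathcal{S}^{d-1}$, into the third-order Taylor estimate of \pref{lem:thirdordersmoothfromregularity} at base point $\vecW_t$:
\[ \theta\prn*{\Phi(\vecW_{t+1})} \le \theta\prn*{\Phi(\vecW_t)} + \theta'\prn*{\Phi(\vecW_t)}\tri*{\nabla\Phi(\vecW_t),\vecU} + \frac12\,\theta'\prn*{\Phi(\vecW_t)}\tri*{\nabla^2\Phi(\vecW_t)\vecU,\vecU} + \frac{C}{6}\,\nrm*{\vecU}^3. \]
I would then take $\mathbb{E}_{\vecEps_t}$. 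Since $\vecEps_t$ is symmetric, $\mathbb{E}[\vecEps_t]=\vecOrigin$, so the linear term contributes $-\eta\,\theta'(\Phi(\vecW_t))\tri*{\nabla\Phi(\vecW_t),\nabla F(\vecW_t)}$; and since $\mathbb{E}[\vecEps_t\vecEps_t^{\top}]=\bbI_d$ (rotational invariance of the uniform law on $\sqrt d\,\mathcal{S}^{d-1}$ together with $\nrm*{\vecEps_t}^2=d$ fixing the trace), the quadratic term contributes $\frac12\,\theta'(\Phi(\vecW_t))\bigl(\eta^2\tri*{\nabla^2\Phi(\vecW_t)\nabla F(\vecW_t),\nabla F(\vecW_t)}+\frac{2\eta}{\beta}\Delta\Phi(\vecW_t)\bigr)$, the $\vecEps_t$-linear cross term vanishing in expectation. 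This step is morally a rederivation of the Langevin generator, which is why the full Hessian quadratic form must be retained rather than the cruder $\frac12\nrm*{\vecU}^2$ bound also available from \pref{lem:thirdordersmoothfromregularity}.

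Next I would invoke the admissibility condition \eqref{eq:admissablepotentialF}. Because $\theta'>0$ everywhere on $\mathbb{R}_{\ge0}$ (part of \pref{lem:thirdordersmoothfromregularity}) and $\Phi\ge0$, multiplying \eqref{eq:admissablepotentialF} by $\eta\,\theta'(\Phi(\vecW_t))$ preserves its direction and gives $-\eta\,\theta'(\Phi(\vecW_t))\tri*{\nabla\Phi(\vecW_t),\nabla F(\vecW_t)}\le -\eta\,\theta'(\Phi(\vecW_t))F_{\epsilon}(\vecW_t)-\frac{\eta}{\beta}\theta'(\Phi(\vecW_t))\Delta\Phi(\vecW_t)$. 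The $-\frac{\eta}{\beta}\theta'(\Phi(\vecW_t))\Delta\Phi(\vecW_t)$ term here cancels exactly against the $+\frac{\eta}{\beta}\theta'(\Phi(\vecW_t))\Delta\Phi(\vecW_t)$ term produced by the quadratic form — this cancellation is the conceptual heart of the estimate. For the leftover $\eta^2$-term I would bound $\tri*{\nabla^2\Phi(\vecW_t)\nabla F(\vecW_t),\nabla F(\vecW_t)}\le\nrm*{\nabla^2\Phi(\vecW_t)}_{\OPNORM}\nrm*{\nabla F(\vecW_t)}^2\le\rho_{\Phi,2}(\Phi(\vecW_t))\nrm*{\nabla F(\vecW_t)}^2\le\rho_{\Phi}(\Phi(\vecW_t))\nrm*{\nabla F(\vecW_t)}^2$, so that multiplying by $\theta'(\Phi(\vecW_t))=1/\rho_{\Phi}(\Phi(\vecW_t))$ collapses the whole term to at most $\frac12\eta^2\nrm*{\nabla F(\vecW_t)}^2$.

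Finally, for the cubic remainder I would use the triangle inequality with $\nrm*{\vecEps_t}=\sqrt d$ to get the deterministic bound $\nrm*{\vecU}^3\le\bigl(\eta\nrm*{\nabla F(\vecW_t)}+\sqrt{2\eta d/\beta}\bigr)^3\le 4\eta^3\nrm*{\nabla F(\vecW_t)}^3+4(2\eta d/\beta)^{3/2}$, where the last step is convexity of $x\mapsto x^3$; multiplying by $C/6$ yields $\frac{2C}{3}\eta^3\nrm*{\nabla F(\vecW_t)}^3+\frac{2^{5/2}}{3}C(\eta d/\beta)^{3/2}$, and $\frac{2^{5/2}}{3}<2$ lets this be absorbed into the claimed $2C(\eta d/\beta)^{3/2}$. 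Collecting the four surviving contributions gives exactly the stated inequality. I do not expect a genuine obstacle: the only delicate points are checking $\mathbb{E}[\vecEps_t\vecEps_t^{\top}]=\bbI_d$ so that the two Laplacian terms match and cancel, and confirming the inequality direction is preserved when multiplying \eqref{eq:admissablepotentialF} through by the strictly positive $\theta'$.
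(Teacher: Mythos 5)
Your proposal is correct and follows essentially the same route as the paper's proof: third-order expansion via \pref{lem:thirdordersmoothfromregularity}, expectation over the sphere noise giving $\mathbb{E}[\langle\nabla^2\Phi(\vecW_t)\vecEps_t,\vecEps_t\rangle]=\Delta\Phi(\vecW_t)$, the admissibility condition multiplied by $\eta\theta'>0$ cancelling the Laplacian terms, the operator-norm bound with $\theta'=1/\rho_{\Phi}\le 1/\rho_{\Phi,2}$ collapsing the $\eta^2$ term, and the cubic remainder handled by $\nrm*{\mathbf{a}+\mathbf{b}}^3\le 4\nrm*{\mathbf{a}}^3+4\nrm*{\mathbf{b}}^3$ with $\nrm*{\vecEps_t}=\sqrt d$. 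The only cosmetic difference is that you justify the noise covariance via rotational invariance plus the trace constraint, whereas the paper argues coordinatewise by symmetry; both are valid and yield the same identity.
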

\begin{proof}
First, apply \pref{lem:thirdordersmoothfromregularity} with $\vecW=\vecW_t$ and $\vecU = -\eta \nabla F(\vecW_t)+\sqrt{2\eta/\beta} \vecEps_t$ to obtain
\begin{align*}
\theta\prn*{\Phi(\vecW_{t+1})}
&= \theta\prn*{\Phi(\vecW_t-\eta \nabla F(\vecW_t)+\sqrt{2\eta/\beta} \vecEps_t)} \\
&\le\theta\prn*{\Phi(\vecW_t)}+ \theta'\prn*{\Phi(\vecW_t)}\tri*{ \nabla \Phi(\vecW_t),-\eta \nabla F(\vecW_t)+\sqrt{2\eta/\beta} \vecEps_t} \\
&+\frac12 \theta'\prn*{\Phi(\vecW_t)} \tri*{ \nabla^2 \Phi(\vecW_t) \prn*{-\eta \nabla F(\vecW_t)+\sqrt{2\eta/\beta} \vecEps_t}, -\eta \nabla F(\vecW_t)+\sqrt{2\eta/\beta} \vecEps_t } \\
&\hspace{1in}+\frac{C}6 \nrm*{-\eta \nabla F(\vecW_t)+\sqrt{2\eta/\beta} \vecEps_t}^3
\end{align*}
where $C$ is defined in the proof of \pref{lem:thirdordersmoothfromregularity}.

We take expectations of this inequality with respect to $\vecEps_t$. Let's consider what each term of the upper bound becomes when we take expectations.
\begin{itemize}
\item First order term: Since $\vecEps_t$ has mean as the 0 vector,
\[ \mathbb{E}_{\vecEps_t}\brk*{\theta'\prn*{\Phi(\vecW_t)}\tri*{ \nabla \Phi(\vecW_t),-\eta \nabla F(\vecW_t)+\sqrt{2\eta/\beta} \vecEps_t}} =  -\eta \theta'\prn*{\Phi(\vecW_t)}\tri*{\grad \Phi(\vecW_t), \grad F(\vecW_t)}.\]
\item Second order term: Note
\begin{align*}
&\mathbb{E}_{\vecEps_t} \brk*{\theta'\prn*{\Phi(\vecW_t)} \tri*{ \nabla^2 \Phi(\vecW_t) (-\eta \nabla F(\vecW_t)+\sqrt{2\eta/\beta} \vecEps_t), -\eta \nabla F(\vecW_t)+\sqrt{2\eta/\beta} \vecEps_t }}\\
&= \eta^2\theta'\prn*{\Phi(\vecW_t)} \tri*{\nabla^2 \Phi(\vecW_t) \nabla F(\vecW_t), \nabla F(\vecW_t)} \\
&\hspace{1in}- 2\eta \prn*{2\eta/\beta}^{1/2} \theta'\prn*{\Phi(\vecW_t)} \tri*{ \nabla^2 \Phi(\vecW_t) \nabla F(\vecW_t),\mathbb{E}_{\vecEps_t}\brk*{\vecEps_t}} \\
&\hspace{1in} + \prn*{2\eta/\beta} \theta'\prn*{\Phi(w_t)} \mathbb{E}_{\vecEps_t}\brk*{\tri*{\nabla^2\Phi(\vecW_t)\vecEps_t,\vecEps_t}}.
\end{align*}
In the above, the cross terms cancel because $\vecEps_t$ has mean of the 0 vector.

Now, consider $\mathbb{E}_{\vecEps_t}\brk*{\tri*{\nabla^2\Phi(\vecW_t)\vecEps_t,\vecEps_t}}$. We perform similar analysis as in the derivation and application of Ito's Lemma. This is where we see how the Laplacian term here actually helps. To make the parallels and motivation to Stochastic Calculus clear, here $\eta$ corresponds to $\DERIV t$, and $\sqrt{\eta} (\vecEps_t)_i$ corresponds to $(\DERIV \vecB_t)_i$. Note 
\begin{align*}
\mathbb{E}_{\vecEps_t}\brk*{\tri*{\nabla^2\Phi(\vecW_t)\vecEps_t,\vecEps_t}} &= \sum_{1 \le i, j \le d} \mathbb{E}_{\vecEps_t}\brk*{(\vecEps_t)_i (\vecEps_t)_j \prn*{\nabla^2 \Phi(\vecW_t)}_{ij}} \\
&= \sum_{1 \le i, j \le d} \nabla^2 \Phi(\vecW_t)_{ij}\mathbb{E}_{\vecEps_t}\brk*{(\vecEps_t)_i (\vecEps_t)_j}.
\end{align*}
We break into cases:
\begin{enumerate}
    \item When $i \neq j$: Note by symmetry of the unit sphere that
    \[ \mathbb{E}_{\vecEps_t}\brk*{(\vecEps_t)_i (\vecEps_t)_j}= 0.\]
    In particular this follows because for any $\mathbf{x}\in\cS^{d-1}$, $(\vecEps_t)_j$ has equal probability of being $\mathbf{x}$ or $-\mathbf{x}$. 
    \item When $i=j$: This is where we pick up the Laplacian. Note by symmetry,
    \[\mathbb{E}_{\vecEps_t}\brk*{(\vecEps_t)^2_i}=\mathbb{E}_{\vecEps_t}\brk*{(\vecEps_t)^2_j} \FORALLTEXT i, j, \text{ and }d = \mathbb{E}_{\vecEps_t}[\sum_{i=1}^d (\vecEps_t)^2_i] =\sum_{i=1}^d \mathbb{E}_{\vecEps_t}\brk*{(\vecEps_t)^2_i}.\]
    Therefore, 
    \[ \mathbb{E}_{\vecEps_t}\brk*{(\vecEps_t)^2_i}=1\FORALLTEXT 1 \le i \le d.\]
\end{enumerate}
Hence, we obtain the Laplacian $\Delta \Phi(\vecW_t)$: we have plugging this into the above that
\[ \mathbb{E}_{\vecEps_t}\brk*{\tri*{\nabla^2\Phi(\vecW_t)\vecEps_t,\vecEps_t}} = \sum_{i=1}^d (\nabla^2 \Phi(\vecW_t))_{ii} = \Delta \Phi(\vecW_t).\]
Hence, 
\begin{align*}
&\mathbb{E}_{\vecEps_t} \brk*{\theta'\prn*{\Phi(\vecW_t)} \tri*{ \nabla^2 \Phi(\vecW_t) (-\eta \nabla F(\vecW_t)+\sqrt{2\eta/\beta} \vecEps_t), -\eta \nabla F(\vecW_t)+\sqrt{2\eta/\beta} \vecEps_t }}\\
&=\eta^2\theta'\prn*{\Phi(\vecW_t)} \tri*{\nabla^2 \Phi(\vecW_t) \nabla F(\vecW_t), \nabla F(\vecW_t)} + (2\eta/\beta) \theta'\prn*{\Phi(\vecW_t)} \Delta\Phi(\vecW_t).
\end{align*}
\item Third order term: By AM-GM, we can prove for all $\mathbf{a},\mathbf{b}\in\mathbb{R}^d$,
\[ \nrm*{\mathbf{a}+\mathbf{b}}^3 \le \prn*{\nrm*{\mathbf{a}}+\nrm{\mathbf{b}}}^3 \le 4\nrm*{\mathbf{a}}^3+4\nrm{\mathbf{b}}^3.\]
Thus using this inequality pointwise we obtain
\begin{align*}
\mathbb{E}_{\vecEps_t}\brk*{\nrm*{-\eta \nabla F(\vecW_t)+\sqrt{2\eta/\beta} \vecEps_t}^3} &\le 4\eta^3 \nrm*{\nabla F(\vecW_t)}^3+4(2\eta/\beta)^{3/2} d^{3/2}.
\end{align*}
The last step is because deterministically $\nrm*{\vecEps_t}\le\sqrt{d}$ always.
\end{itemize}
Using the geometric property \pref{eq:admissablepotentialF} and $\theta'\prn*{\Phi(\vecW_t)} \ge 0$ from \pref{lem:thirdordersmoothfromregularity},
\[ -\eta \theta'\prn*{\Phi(\vecW_t)}\tri*{\grad \Phi(\vecW_t), \grad F(\vecW_t)} \le -\eta \theta'\prn*{\Phi(\vecW_t)}\prn*{F_{\epsilon}(\vecW_t) + \frac1{\beta} \Delta \Phi(\vecW_t)}. \]
Putting these together, this gives
\begin{align*}
&\mathbb{E}_{\vecEps_t} \brk*{\theta\prn*{\Phi(\vecW_{t+1})}} \\
&\le \theta\prn*{\Phi(\vecW_t)} -\eta \theta'\prn*{\Phi(\vecW_t)}\tri*{ \nabla \Phi(\vecW_t),\nabla F(\vecW_t)}\\
&\hspace{1in}+\frac12\prn*{\eta^2\theta'\prn*{\Phi(\vecW_t)} \tri*{\nabla^2 \Phi(\vecW_t) \nabla F(\vecW_t), \nabla F(\vecW_t)} + (2\eta/\beta) \theta'\prn*{\Phi(\vecW_t)} \Delta\Phi(\vecW_t)} \\
&\hspace{1in}+\frac{C}6 (4\eta^3 \nrm*{\nabla F(\vecW_t)}^3+4(2\eta/\beta)^{3/2} d^{3/2})  \\
&\le \theta\prn*{\Phi(\vecW_t)} -\eta \theta'\prn*{\Phi(\vecW_t)}\prn*{F_{\epsilon}(\vecW_t) + \frac1{\beta} \Delta \Phi(\vecW_t)}\\
&\hspace{1in}+\frac12\prn*{\eta^2\theta'\prn*{\Phi(\vecW_t)} \tri*{\nabla^2 \Phi(\vecW_t) \nabla F(\vecW_t), \nabla F(\vecW_t)} + (2\eta/\beta) \theta'\prn*{\Phi(\vecW_t)} \Delta\Phi(\vecW_t)} \\ 
&\hspace{1in}+\frac{C}6 \prn*{4\eta^3 \nrm*{\nabla F(\vecW_t)}^3+4(2\eta/\beta)^{3/2} d^{3/2}}.
\end{align*}
Note the terms $\eta \theta'\prn*{\Phi(\vecW_t)} \cdot \frac1{\beta} \Delta \Phi(\vecW_t)$ and $\frac12 (2\eta/\beta) \theta'\prn*{\Phi(\vecW_t)}\Delta \Phi(\vecW_t)$ cancel out. Moreover, note by definition of operator norm and since we set $\theta'(z) = \frac1{\rho_{\Phi}(z)} \le \frac1{\rho_{\Phi,2}(z)}$, we obtain
\begin{align*}
\frac12 \eta^2\theta'\prn*{\Phi(\vecW_t)} \tri*{\nabla^2 \Phi(\vecW_t)\nabla F(\vecW_t), \nabla F(\vecW_t)} &\le \frac12 \eta^2\theta'\prn*{\Phi(\vecW_t)} \nrm*{\nabla F(\vecW_t)}^2 \rho_2\prn*{\Phi(\vecW_t)}\\ 
&\le \frac12 \eta^2\nrm*{\nabla F(\vecW_t)}^2.
\end{align*}
Thus our above bound becomes
\begin{align*}
\mathbb{E}_{\vecEps_t} \brk*{\theta\prn*{\Phi(\vecW_{t+1})}} &\le \theta\prn*{\Phi(\vecW_t)} - \eta  \theta'\prn*{\Phi(\vecW_t)} F_{\epsilon}(\vecW_t)\\
&\hspace{1in}+ \frac12 \eta^2 \nrm*{\nabla F(\vecW_t)}^2+\frac{2C}3 \eta^3\nrm*{\nabla F(\vecW_t)}^3 + 2C(\eta d/\beta)^{3/2}.
\end{align*}
This is the desired result.
\end{proof}

The above result gives us a way to upper bound $\theta\prn*{\Phi(\vecW_t)}$. To control this we will need to control the $\Phi(\vecW_t)$ which we do as follows.
\begin{lemma}\label{lem:actuallypotentialsimplesetting}
Suppose $\beta \ge d$ and $\epsilon \le 1/e$. Additionally suppose $\epsilon < C'$ where $z=\frac1{C'}$ is the largest solution to $\frac{\log(\sqrt{10}z(\log z))}{\log^2(z)}=1/2$ (the existence of finitely many such $C'>0$ is obvious). 

With probability at least $0.9$, we have that
\[ \rho_{\Phi}\prn*{\Phi(\vecW_t)} \le \kappa' \rho_{\Phi}\prn*{\Phi(\vecW_0)}\]
for all $0 \le t \le T-1$ if $T, \eta$ are chosen as follows, where $\kappa'$ comes from \pref{ass:iteratesinballassumption}. First define 
\[ A_0(\vecW_0) = \theta\prn*{\rho_{\Phi}^{-1}\prn*{\kappa' \rho_{\Phi}\prn*{\Phi(\vecW_0)}}}-\theta\prn*{\Phi(\vecW_0)} >0, A_1(\vecW_0) = 12\sqrt{2} CB_{\textsc{grad}}^3.\]
Here $C$ comes from \pref{lem:onesteprecursion} and $B_{\textsc{grad}}:=L\prn*{R_1+\nrm*{\vecW^{\star}}}^s$, $L$ comes from \pref{ass:holderF} and $R_1$ comes from \pref{ass:iteratesinballassumption}. (Again if necessary take $C\leftarrow\max(C,1)$ and $B_{\textsc{grad}}\leftarrow\max(B_{\textsc{grad}},1)$.)

Also define
\[ r(\vecW_0) = \min\prn*{1, \frac3{4C}, \frac1{B_{\textsc{grad}}}}, C(\vecW_0) = \min\prn*{1,\frac{A_0(\vecW_0)^2 r(\vecW_0)}{128A_1(\vecW_0)^2}}.\] 

Now we choose $T, \eta$ based on cases:
\begin{enumerate}
\item If $\beta \le \frac{d}{\epsilon/\prn*{\log 1/\epsilon}^2}$: Take $T, \eta$ such that
\[ \eta = r(\vecW_0) \frac{d}{\beta}\]
and where $T$ is the floor of the unique solution to the equation 
\[ z \log (10z) = C(\vecW_0) \frac{\beta^2}{d^2}.\]
Existence and uniqueness to this equation is clear since $z \log(10z)$ is surjective on $\mathbb{R}_{\ge 0}$ and for every positive real $t$, exactly one positive real $z$ is such that $z \log (10z)=t$. Now note if $C(\vecW_0) \cdot \frac{\beta^2}{d^2} \ge 1$ then this means $T \le C(\vecW_0)\frac{\beta^2}{d^2} \le \frac{\beta^2}{d^2}$, and otherwise we have $T < 1 \implies T=0$. However, recall $\beta \ge d$ so in all cases we have $T \le \frac{\beta^2}{d^2}$.

Also note this means 
\[ \eta \le \min\prn*{1, \frac3{4C}, \frac1{B_{\textsc{grad}}},  \frac{d}{\beta}}.\]
\item If $\beta \ge \frac{d}{\epsilon/\prn*{\log 1/\epsilon}^2}$: Take $T, \eta$ such that 
\[ \eta = r(\vecW_0)\frac{\epsilon}{\prn*{\log 1/\epsilon}^2}, T = \floor{ \frac{C(\vecW_0)}{\epsilon^2}\prn*{\log 1/\epsilon}^2 }.\]
Note this implies $T \le C(\vecW_0) \cdot \frac{1}{\epsilon^2}\prn*{\log 1/\epsilon}^2 \le \frac{1}{\epsilon^2}\prn*{\log 1/\epsilon}^2$.
\end{enumerate}
Note as $\epsilon \le 1/e$, if $\beta \ge d\sqrt{\log (10)/C(\vecW_0)}$ and $\epsilon \le \sqrt{C(\vecW_0)}$, then $T \ge 1$. 

Also note in all cases that $\eta \le \min(1, r(\vecW_0))$, since $\epsilon \le 1$, $\beta \ge d$.
\end{lemma}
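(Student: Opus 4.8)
The plan is an ``optimization--style'' argument tracking the scalar process $\theta(\Phi(\vecW_t))$ along the iterates. Let $\tau$ be the first time $t$ at which $\rho_\Phi(\Phi(\vecW_t)) > \kappa'\rho_\Phi(\Phi(\vecW_0))$; since $\rho_\Phi$ is strictly increasing with a well--defined inverse and $\theta$ is increasing, this is exactly the first time $\theta(\Phi(\vecW_t)) - \theta(\Phi(\vecW_0)) > A_0(\vecW_0)$, so the target event is $\{\tau \ge T\}$ and it suffices to show $\Pr[\tau \le T-1] \le 0.1$. (That $A_0(\vecW_0) > 0$ is immediate from $\kappa' > 1$ and strict monotonicity of $\rho_\Phi$.) The first observation is that on $\{t < \tau\}$ the iterate $\vecW_t$ stays in $\ball(\vecOrigin, R_1)$: this uses \pref{ass:iteratesinballassumption} together with $\vecW_0 \in \ball(\vecOrigin, R_1')$, and then \pref{ass:holderF} with $\nabla F(\vecW^{\star}) = 0$ gives $\nrm*{\nabla F(\vecW_t)} \le L(R_1 + \nrm*{\vecW^{\star}})^s = B_{\textsc{grad}}$.

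Next, on $\{t < \tau\}$ plug $\nrm*{\nabla F(\vecW_t)} \le B_{\textsc{grad}}$ into \pref{lem:onesteprecursion} and drop the nonpositive term $-\eta\theta'(\Phi(\vecW_t))F_\epsilon(\vecW_t)$ to get the one--step drift bound
\[ \mathbb{E}_{\vecEps_t}\brk*{\theta(\Phi(\vecW_{t+1}))} \le \theta(\Phi(\vecW_t)) + \delta, \qquad \delta := \tfrac12\eta^2 B_{\textsc{grad}}^2 + \tfrac{2C}3\eta^3 B_{\textsc{grad}}^3 + 2C(\eta d/\beta)^{3/2}. \]
Using the step--size constraints $\eta \le \min(1,\tfrac3{4C},\tfrac1{B_{\textsc{grad}}})$ (which hold for the stated choices, with $C,B_{\textsc{grad}} \ge 1$) one simplifies $\delta \le 4C B_{\textsc{grad}}^3 \eta^{3/2} \le A_1(\vecW_0)\,\eta^{3/2}$. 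Simultaneously, since $\nabla(\theta\circ\Phi) = \theta'(\Phi)\nabla\Phi$ has norm $\theta'(\Phi)\nrm*{\nabla\Phi} \le \theta'(\Phi)\rho_{\Phi}(\Phi) = 1$, the map $\theta\circ\Phi$ is $1$--Lipschitz, so by the mean value theorem the per--step increment is bounded \emph{deterministically}, $\abs*{\theta(\Phi(\vecW_{t+1})) - \theta(\Phi(\vecW_t))} \le \nrm*{\vecU_t} \le \eta B_{\textsc{grad}} + \sqrt{2\eta d/\beta} =: R$, where we used $\nrm*{\vecEps_t} = \sqrt d$ (this is precisely why \pref{alg:langevingradoracle} samples $\vecEps_t$ from $\sqrt d\,\mathcal S^{d-1}$); in Case~1 one checks $R \le 3d/\beta$.

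With ``bounded--increment supermartingale with tiny positive drift'' in hand, the stopped process $\theta(\Phi(\vecW_{t\wedge\tau})) - \delta(t\wedge\tau)$ is a supermartingale with initial value $\theta(\Phi(\vecW_0))$, and a Freedman/Azuma--type maximal inequality for its martingale part (increments bounded by $R$) yields, provided $\delta T \le A_0(\vecW_0)/2$,
\[ \Pr\brk*{\tau \le T-1} \le \Pr\brk*{\max_{0\le t\le T}\crl*{\theta(\Phi(\vecW_{t\wedge\tau})) - \theta(\Phi(\vecW_0)) - \delta(t\wedge\tau)} \ge \tfrac12 A_0(\vecW_0)} \le \exp\prn*{-\frac{A_0(\vecW_0)^2}{c\,R^2\,T}} \]
for an absolute constant $c$. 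The choices in the lemma are calibrated exactly so that both $\delta T \le A_0/2$ (using $\delta \le A_1\eta^{3/2}$, $\eta = r(\vecW_0)d/\beta$ and $T\log(10T) \le C(\vecW_0)\beta^2/d^2$ with $C(\vecW_0) = \min(1,\tfrac{A_0^2 r}{128 A_1^2})$) and $A_0^2 \ge c\log(10)\,R^2 T$ hold, forcing the right--hand side $\le 0.1$; Case~2 ($\beta \ge d\epsilon^{-1}(\log1/\epsilon)^2$) is identical with $\tfrac\epsilon{(\log1/\epsilon)^2}$ replacing $\tfrac d\beta$ throughout. The remaining assertions are bookkeeping: $T \le \beta^2/d^2$ (resp. $\epsilon^{-2}(\log1/\epsilon)^2$) follows from $C(\vecW_0) \le 1$ and $z\log(10z) \ge z$ for $z \ge 1$; $\eta \le \min(1, r(\vecW_0))$ follows from $\epsilon \le 1$ and $\beta \ge d$; and $T \ge 1$ follows from $\beta \ge d\sqrt{\log10/C(\vecW_0)}$ (resp. $\epsilon \le \sqrt{C(\vecW_0)}$), since $z = 1$ is the root of $z\log(10z) = \log 10$.

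I expect the main obstacle to be this concentration step: upgrading the in--expectation bound of \pref{lem:onesteprecursion} to a bound that holds \emph{uniformly over all $t \le T-1$} with constant probability, without paying a factor $T$ (a naive per--step Markov bound fails, since $\theta(\Phi(\vecW_0))$ need not be small relative to $A_0(\vecW_0)$). The bounded--increment structure coming from the spherical noise is what makes an Azuma/Freedman argument available, and threading the higher--order (cubic and quadratic--fluctuation) terms through the martingale decomposition cleanly — rather than only in expectation — while keeping $\delta T$ and $R\sqrt T$ both below $A_0(\vecW_0)$ is the delicate part, which is exactly why the horizon $T$ and step size $\eta$ are pinned down so precisely.
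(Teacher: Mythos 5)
Your overall strategy is the same as the paper's: stop the process at the first time $\rho_{\Phi}(\Phi(\vecW_t))$ exceeds $\kappa'\rho_{\Phi}(\Phi(\vecW_0))$, use \pref{ass:iteratesinballassumption} to get $\nrm*{\nabla F(\vecW_t)}\le B_{\textsc{grad}}$ before the stopping time, turn \pref{lem:onesteprecursion} into a supermartingale statement for $\theta(\Phi(\vecW_{t\wedge\tau}))$ minus an accumulated drift, and rule out early stopping by an Azuma-type deviation bound calibrated against $A_0(\vecW_0)$. Two of your choices are genuine (and nice) simplifications relative to the paper: you bound the increments deterministically by observing that $\theta\circ\Phi$ is globally $1$-Lipschitz (since $\theta'(\Phi)\nrm*{\nabla\Phi}\le\theta'(\Phi)\rho_{\Phi}(\Phi)=1$), whereas the paper's \pref{lem:azumahoeffding} derives two-sided increment bounds of the form $4\sqrt{\theta(\cdot)}\nrm*{\vecU}+4\nrm*{\vecU}^2$ from the second-order expansion; and you invoke a maximal Azuma/Freedman inequality for the stopped supermartingale instead of the paper's per-$t$ union bound, which is where its $\log(10T)$ factor comes from. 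Both substitutions are legitimate, and your bound $R\le 3d/\beta$ in Case~1 (resp.\ $R\lesssim \eta/r$ in Case~2) gives $R^2T\le 9\,C(\vecW_0)\le 9A_0^2 r/(128A_1^2)$, which leaves ample slack against the $\exp(-A_0^2/(cR^2T))$ tail since $A_1=12\sqrt2\,CB_{\textsc{grad}}^3$.

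There is, however, one concrete step that fails as written: the drift calibration ``$\delta T\le A_0/2$ using $\delta\le A_1\eta^{3/2}$.'' In Case~1 you have $\eta=r(\vecW_0)d/\beta$ and $T\asymp C(\vecW_0)\beta^2/(d^2\log(10T))$, so
\[
A_1\,\eta^{3/2}T \;\gtrsim\; A_1\,r^{3/2}C(\vecW_0)\,\frac{\sqrt{\beta/d}}{\log(\beta/d)},
\]
which diverges as $\beta/d\to\infty$ and therefore cannot be forced below $A_0(\vecW_0)/2$. The lossy step is collapsing $\tfrac12\eta^2B_{\textsc{grad}}^2$, $\tfrac{2C}{3}\eta^3B_{\textsc{grad}}^3$ and $2C(\eta d/\beta)^{3/2}$ into a single $\eta^{3/2}$ term. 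Keep them separate, as the paper does: with $\eta=rd/\beta$ one gets $\eta^2T\le r^2C(\vecW_0)$ and $(\eta d/\beta)^{3/2}T\le r^{3/2}C(\vecW_0)\,d/\beta\le C(\vecW_0)$, hence $\delta T\le O\bigl(CB_{\textsc{grad}}^3\,C(\vecW_0)\bigr)\le A_0/2$ using $C(\vecW_0)\le A_0^2r/(128A_1^2)$, $A_1=12\sqrt2\,CB_{\textsc{grad}}^3$, and the normalization $\theta\le 1$ (so $A_0\le 1$); the same bookkeeping closes Case~2 with $\epsilon/(\log 1/\epsilon)^2$ in place of $d/\beta$. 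With that repair your argument goes through and matches the paper's proof in substance.
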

\begin{proof}
Define $\mathfrak{F}_t$ by the natural filtration with respect to $\vecW_j, \vecEps_j$ for all $j \le t$. Let 
\[ \tau := \min\{T, \inf\{t:\rho_{\Phi}\prn*{\Phi(\vecW_t)} > \kappa' \rho_{\Phi}\prn*{\Phi(\vecW_0)}\}\] 
where $\kappa'$ comes from \pref{ass:iteratesinballassumption}. 

Define a stochastic process $Y_t$ by 
\[ Y_t := \begin{cases}\theta\prn*{\Phi(\vecW_t)}+\sum_{j=0}^{t-1} \prn*{\eta F_{\epsilon}(\vecW_j) - R(\vecW_j, \Phi, \eta, \beta, d)}&\IF t \le \tau \\ Y_{\tau}&\OTHERWISEIF t > \tau \end{cases}\]
where 
\[ R(\vecW_j, \Phi, \eta, \beta, d) := \frac12 \eta^2 \nrm*{\nabla F(\vecW_t)}^2+\frac{2C}3 \eta^3\nrm*{\nabla F(\vecW_t)}^3 + 2C(\eta d/\beta)^{3/2}.\]
We show properties of $Y_t$ in \pref{lem:Ytsupermartingalesimplesetting} and \pref{lem:azumahoeffding} for the sake of presentation. Together, these prove that with probability at $1-\delta$, we have
\[ Y_{t} - Y_0 \le \sqrt{\frac12 \prn*{\sum_{t=0}^{\tau-1} C(\eta, t, d, \beta)^2}\log(T/\delta)}\]
for all $1 \le t \le T$, where 
\[C(\eta, t, d, \beta) = 4\sqrt{\theta\prn*{\rho_{\Phi}^{-1}\prn*{\kappa' \rho_{\Phi}\prn*{\Phi(\vecW_0)}}}} \cdot \nrm*{-\eta \nabla F(\vecW_t)+\sqrt{\frac{2\eta}{\beta}} \vecEps_t}+ 4\nrm*{-\eta \nabla F(\vecW_t)+\sqrt{\frac{2\eta}{\beta}} \vecEps_t}^2.\]
Denote the event from \pref{lem:azumahoeffding} with $\delta=0.1$ by $E_1$, which occurs with probability at least $0.9$. We claim that if $E_1$ occurs, then for all $0 \le t \le T-1$ we have $\rho_{\Phi}\prn*{\Phi(\vecW_t)} \le\kappa' \rho_{\Phi}\prn*{\Phi(\vecW_0)}$. This clearly finishes the proof.

Suppose for the sake of contradiction that conditioned on $E_1$, there exists $0 \le t \le T-1$ where $\rho_{\Phi}\prn*{\Phi(\vecW_t)} > \kappa' \rho_{\Phi}\prn*{\Phi(\vecW_0)}$. Hence we have $\tau < T$ and for that $\tau$, $\rho_{\Phi}\prn*{\Phi(\vecW_{\tau})} > \kappa' \rho_{\Phi}\prn*{\Phi(\vecW_0)}$. First note if $T=0$ this is not possible, so suppose $T \ge 1$ from now on. Thus $\log(10T) > 1$. 

Then by \pref{ass:iteratesinballassumption}, for all $t < \tau$ we have $\vecW_t \in \ball(\vecOrigin,R_1')$. Hence for all $t<\tau$ we have by \pref{ass:holderF} that
\[ \nrm*{\grad F(\vecW_t)}=\nrm*{\grad F(\vecW_t)-\grad F(\vecW^{\star})} \le L\nrm*{\vecW_t-\vecW^{\star}}^s \le L\prn*{R_1+\nrm*{\vecW^{\star}}}^s = B_{\textsc{grad}}.\]
We assumed without loss of generality that $L, R_1 \ge 1$, thus the above upper bound $B_{\textsc{grad}}$ is at least 1.

\pref{lem:azumahoeffding} gives us a way to upper bound $Y_{\tau}-Y_0$ (since we condition on $E_1$), so now let's derive a lower bound on $Y_{\tau}-Y_0$. We will then show that these upper and lower bounds are contradictory to complete the proof.

By definition of $\tau$ and as $\rho_{\Phi}, \theta$ are increasing, we have $\theta\prn*{\Phi(\vecW_{\tau})} > \theta\prn*{\rho_{\Phi}^{-1}\prn*{\kappa' \rho_{\Phi}\prn*{\Phi(\vecW_0)}}}$. Moreover, by the above and as $B_{\textsc{grad}} \ge 1$, we have for all $t<\tau$ that
\[ R(\vecW_j, \Phi, \eta, \beta, d) \le \frac12 \eta^2 B_{\textsc{grad}}^3 + \frac{2C}3 \eta^3 B_{\textsc{grad}}^3 + 2C(\eta d/\beta)^{3/2}.\]
Thus as $Y_0=\theta\prn*{\Phi(\vecW_0)}$, we have
\begin{align*}
Y_{\tau}-Y_0 &= \theta\prn*{\Phi(\vecW_{\tau})}+\sum_{j=0}^{\tau-1} \prn*{\eta F_{\epsilon}(\vecW_j) - R(\vecW_j, \Phi, \eta, \beta, d)} - \theta\prn*{\Phi(\vecW_0)} \\
&\ge \theta\prn*{\rho_{\Phi}^{-1}\prn*{\kappa' \rho_{\Phi}\prn*{\Phi(\vecW_0)}}}-\theta\prn*{\Phi(\vecW_0)}-\prn*{\prn*{\frac12 \eta^2 + \frac{2C}3 \eta^3} B_{\textsc{grad}}^3 + 2C(\eta d / \beta)^{3/2}} \tau\\
&\ge \theta\prn*{\rho_{\Phi}^{-1}\prn*{\kappa' \rho_{\Phi}\prn*{\Phi(\vecW_0)}}}-\theta\prn*{\Phi(\vecW_0)}-\prn*{\frac32 C B_{\textsc{grad}}^3 \eta^2 + 2C (\eta d / \beta)^{3/2}}T.\numberthis\label{eq:lowerboundgradoraclepotential}
\end{align*}
Here the first inequality crucially uses the definition of $\tau$ itself. The second inequality follows from $\eta \le \frac3{4C}$. Note as $\theta$ and $\rho_{\Phi}$ are increasing and as $\kappa'>1$, $A_0(\vecW_0)=\theta\prn*{\rho_{\Phi}^{-1}\prn*{\kappa' \rho_{\Phi}\prn*{\Phi(\vecW_0)}}}-\theta\prn*{\Phi(\vecW_0)}>0$.

Now recall that as we condition on $E_1$ from \pref{lem:azumahoeffding}, we have that
\[ Y_{\tau} - Y_0 \le \sqrt{\frac12 \prn*{\sum_{t=0}^{\tau-1} C(\eta, t, d, \beta)^2}\log(10T)}\]
since the above holds for every $1 \le t \le T$, and thus holds for every value $\tau$ could take.
Recall $\nrm*{\vecEps_t}=\sqrt{d}$ always holds. Therefore, we obtain via Triangle Inequality and Young's Inequality that for all $t<\tau$,
\begin{align*}
C(\eta, t, d, \beta) &\le 4\nrm*{-\eta \nabla F(\vecW_t)+\sqrt{\frac{2\eta}{\beta}} \vecEps_t}+ 4\nrm*{-\eta \nabla F(\vecW_t)+\sqrt{\frac{2\eta}{\beta}} \vecEps_t}^2\\
&\le 4 \prn*{B_{\textsc{grad}} \eta+2\sqrt{\eta d / \beta}}+ 8\eta^2 B_{\textsc{grad}}^2+16 \eta d / \beta.
\end{align*}
Here the first inequality follows by definition of $\theta$ in \pref{lem:thirdordersmoothfromregularity}, we can easily check $\theta \le 1$ assuming $\rho_{\Phi}$ is scaled and shifted appropriately by an absolute constant.

Observe that as $\eta \le 1$, we have $\eta d / \beta \le 1$ as $\beta \ge d$. Since in all cases we chose $\eta \le \min\prn*{1,\frac1{B_{\textsc{grad}}}}$, we have for all $t<\tau$ that
\begin{align*}
C(\eta, t, d, \beta) &\le 24 \prn*{B_{\textsc{grad}} \eta+\sqrt{\eta d / \beta} }.
\end{align*}
This implies 
\begin{align*}
Y_{\tau} - Y_0 &\le \sqrt{\frac12 \prn*{\sum_{t=0}^{\tau-1} C(\eta, t, d, \beta)^2}\log(10T)}\\
&\le 12\sqrt{2} \prn*{ B_{\textsc{grad}} \eta+\sqrt{\eta d / \beta} } \sqrt{T\log\prn*{10T}}.\numberthis\label{eq:upperboundgradoraclepotential}
\end{align*}
Putting together these lower and upper bounds \pref{eq:lowerboundgradoraclepotential} and \pref{eq:upperboundgradoraclepotential}, obtain
\begin{align*} 
&\theta\prn*{\rho_{\Phi}^{-1}\prn*{\kappa' \rho_{\Phi}\prn*{\Phi(\vecW_0)}}}-\theta\prn*{\Phi(\vecW_0)}-\prn*{\frac32 C B_{\textsc{grad}}^3 \eta^2 + 2C (\eta d / \beta)^{3/2}}T \\
&\le Y_{\tau}-Y_0 \\
&\le 12\sqrt{2} \prn*{ B_{\textsc{grad}} \eta+\sqrt{\eta d / \beta} } \sqrt{T\log\prn*{10T}}.
\end{align*}
That is, recalling the definition of $A_0(\vecW_0)$,
\begin{align*}
0<A_0(\vecW_0) &= \theta\prn*{\rho_{\Phi}^{-1}\prn*{\kappa' \rho_{\Phi}\prn*{\Phi(\vecW_0)}}}-\theta\prn*{\Phi(\vecW_0)} \\
&\le \prn*{\frac32 C B_{\textsc{grad}}^3 \eta^2 + 2C (\eta d / \beta)^{3/2}}T+12\sqrt{2} \prn*{ B_{\textsc{grad}} \eta+\sqrt{\eta d / \beta} } \sqrt{T\log\prn*{10T}}.\numberthis \label{eq:potentialoracleineq}
\end{align*}
Noting the left hand side is a positive constant, we aim to show with our choice of $\eta$ and $T$ that this gives contradiction. Break into our original cases:
\begin{enumerate}
    \item If $\beta \le \frac{d}{\epsilon/\prn*{\log 1/\epsilon}^2}$: Recall $r(\vecW_0) \le 1$. By our choice of $\eta=r(\vecW_0) \frac{d}{\beta} \le 1$, we have 
    \[ \prn*{\eta d / \beta}^{1/2} \le \frac{(\eta^2)^{1/2}}{r(\vecW_0)^{1/2}} = \frac{\eta}{r(\vecW_0)^{1/2}}, \prn*{\eta d / \beta}^{3/2} = \frac{(\eta^2)^{3/2}}{r(\vecW_0)^{3/2}} \le \frac{\eta^2}{r(\vecW_0)^{3/2}}.\]
    %\frac{d}{\beta} = \frac{\eta}{r(w_0)} \implies 
    Now, using this we note the right hand side of \pref{eq:potentialoracleineq} is at most
    \begin{align*}
    &A_1(\vecW_0) \prn*{\prn*{\eta^2 + \frac{\eta^2}{r(\vecW_0)^{3/2}}}T + \prn*{\eta + \frac{\eta}{r(\vecW_0)^{1/2}}} \sqrt{T \log (10T)}} \\
    &\le \frac{A_1(\vecW_0)}{r(\vecW_0)^{3/2}} \prn*{2\eta^2 T + 2\eta\sqrt{T\log (10T)}} \\
    &\le \frac{4A_1(\vecW_0)}{r(\vecW_0)^{3/2}} \cdot \eta \sqrt{T\log (10T)} \\
    &\le \frac{4A_1(\vecW_0)}{r(\vecW_0)^{3/2}}\cdot r(\vecW_0) \cdot\frac{d}{\beta} \cdot\frac{A_0(\vecW_0) \cdot r(\vecW_0)^{1/2}}{8A_1(\vecW_0)}\cdot  \frac{\beta}d \\
    &< \frac{A_0(\vecW_0)}2.
    \end{align*}
    The first inequality is because $\eta \le 1$ and definition of $A_1(\vecW_0)$, and because $r(\vecW_0)\le 1$. The second inequality is because $\eta \sqrt{T} \le \frac{d}{\beta} \cdot \frac{\beta}d=1$ (recall $T\ge 1$ else we are done). The third inequality follows recalling the definitions of $\eta$ and $T$ in terms of $A_0(\vecW_0)$, $A_1(\vecW_0)$, $r(\vecW_0)$ and $C(\vecW_0)$ (note $z \log(10z)$ is increasing on $z \ge 1$). The last inequality follows from definition of $C(\vecW_0)$. 
    
    As $A_0(\vecW_0)>0$, this contradicts \pref{eq:potentialoracleineq} which is exactly what we want.
    \item If $\beta \ge \frac{d}{\epsilon/\prn*{\log 1/\epsilon}^2}$: The strategy is similar. This time, we have by the condition that
    \[\frac{d}{\beta} \le \frac{\epsilon}{\prn*{\log 1/\epsilon}^2} = \frac{\eta}{r(\vecW_0)}\]
    hence
    \[ \prn*{\frac{\eta d}{\beta}}^{1/2} \le\frac{\eta}{r(\vecW_0)^{1/2}}.\]
    Thus, we have the right hand side of \pref{eq:potentialoracleineq} is at most
    \begin{align*}
    &A_1(\vecW_0) \prn*{\prn*{\eta^2 + \frac{\eta^2}{r(\vecW_0)^{3/2}}}T + \prn*{\eta + \frac{\eta}{r(\vecW_0)^{1/2}}} \sqrt{T \log (10T)}} \\
    &\le \frac{A_1(\vecW_0)}{r(\vecW_0)^{3/2}} \prn*{2\eta^2 T + 2\eta\sqrt{T\log (10T)}} \\
    &\le \frac{4A_1(\vecW_0)}{r(\vecW_0)^{3/2}} \cdot \eta \sqrt{T\log (10T)} \\
    &\le \frac{4A_1(\vecW_0)}{r(\vecW_0)^{3/2}} \cdot  r(\vecW_0) \cdot \frac{\epsilon}{(\log 1/\epsilon)^2} \cdot \frac{\sqrt{C(\vecW_0)}}{\epsilon} \cdot \log (1/\epsilon) \cdot \sqrt{\log \prn*{\frac{10}{\epsilon^2}\prn*{\log (1/\epsilon)^2}}} \\
    &\le \frac{A_0(\vecW_0)}2 \sqrt{\frac{\log(\frac{\sqrt{10}}{\epsilon}(\log 1/\epsilon))}{\log^2(1/\epsilon)}} \\
    &\le \frac{A_0(\vecW_0)}{2\sqrt{2}}.
    \end{align*}
    The first inequality is because $\eta \le 1$, the definition of $A_1(\vecW_0)$, and because $r(\vecW_0)\le 1$. The second inequality is because $\epsilon \le 1/e$, $T \ge 1$ and so $\eta \sqrt{T} \le \frac{1}{\log\prn*{1/\epsilon}} \le 1$. The third inequality is by definition of $\eta$ and $T$ and as $C(\vecW_0) \le 1$, $T \ge 1$ (note $z \log (10z)$ is increasing on $z \ge 1$). The fourth inequality is by definition of $C(\vecW_0)$. The last inequality is by definition of $\epsilon$ and $C'$. In detail, since $\frac{\log(\sqrt{10}z(\log z))}{\log^2(z)}$ is continuous, decreasing for large enough $z$, and $\lim_{z\rightarrow\infty}\frac{\log(\sqrt{10}z(\log z))}{\log^2(z)}=0$, let $z:=\frac1{C'}$ be the largest solution to $\frac{\log(\sqrt{10}z(\log z))}{\log^2(z)}=1/2$. Thus, as $\epsilon<C'$ we have the last inequality. 
    
    This contradicts \pref{eq:potentialoracleineq} as $A_0(\vecW_0)>0$, which again is exactly what we want.
\end{enumerate}
In all cases we obtain a contradiction conditioned on $E_1$, which occurs with probability at least 0.9 from the earlier discussion. Hence with probability at least 0.9 we have $\rho_{\Phi}(\Phi(\vecW_t)) \le \kappa' \rho_{\Phi}(\Phi(\vecW_0))$ for all $0 \le t \le T-1$ as desired.
\end{proof}

Now, with these parts in hand, we can prove \pref{thm:discretizationsimpleoracleformal}.
\begin{proof}
First, it clearly suffices to prove for small enough $\epsilon$, in particular when $\epsilon \le \min\prn*{1/e, C', \sqrt{\frac{C(\vecW_0)}{\log10}}}$. By the logic in \pref{lem:actuallypotentialsimplesetting}, based on our cases on $\beta$ and $\epsilon$, the $T$ that we choose will always be at least 1. This is as $\epsilon \le 1/e$, so our cutoff of $\min\prn*{1/e, C', \sqrt{\frac{C(\vecW_0)}{\log10}}}$ for $\epsilon$ guarantees that the $\epsilon$ we use is upper bounded by $\sqrt{C(\vecW_0)}$. 
Therefore suppose $\epsilon \le \min\prn*{1/e, C', \sqrt{\frac{C(\vecW_0)}{\log10}}}$. In this case, $\epsilon \le 1$, so we can also assume $\eta \le 1$ by the choice of $\eta$ given in \pref{lem:actuallypotentialsimplesetting}. 

Let $E_1$ be the event that $\rho_{\Phi}(\Phi(\vecW_t)) \le \kappa' \rho_{\Phi}(\Phi(\vecW_0))$ for all $1 \le t \le T$. From \pref{lem:actuallypotentialsimplesetting}, we know $E_1$ holds with probability at least 0.9 for the choice of $\eta, T$ given there. By \pref{ass:iteratesinballassumption}, this means that conditioned on $E_1$, all the $\vecW_t \in \ball(\vecOrigin,R_1)$ for $1 \le t \le T$. By the same derivation as \pref{lem:actuallypotentialsimplesetting}, this means 
\[ \nrm*{\grad F(\vecW_t)} \le B_{\textsc{grad}}\FORALLTEXT1 \le t \le T.\numberthis\label{eq:gradupperboundoracle}\]

As $\theta' \ge 0$ by \pref{lem:thirdordersmoothfromregularity}, $\sum_{t=0}^{T-1} F_{\epsilon}(\vecW_t) \theta'(\Phi(\vecW_t)) \ge 0$, thus we see by Markov's Inequality that with probability at least 0.9,
\[ \sum_{t=0}^{T-1} F_{\epsilon}(\vecW_t) \theta'(\Phi(\vecW_t)) \le 10\mathbb{E}[ \sum_{t=0}^{T-1}  F_{\epsilon}(\vecW_t) \theta'(\Phi(\vecW_t))].\numberthis\label{eq:discretizationoraclemarkov}\]
Let $E_2$ be the event that this above inequality holds. 

Finally, consider $\sum_{t=0}^{T-1} \mathbb{E}\brk*{\nrm*{\nabla F(\vecW_t)}^r}$ for $r \in \{2,3\}$. Note by constructing a martingale from partial sums of the sequence minus the expectation of the partial sum, we can show that with probability at least 0.975 for a given $r \in \{2,3\}$,
\begin{align*}
\sum_{t=0}^{T-1} \nrm*{\nabla F(\vecW_t)}^r &\ge  \sum_{t=0}^{T-1} \mathbb{E}\brk*{\nrm*{\nabla F(\vecW_t)}^r}- 2\sqrt{2}\max_{0 \le t \le T-1}\prn*{\nrm*{\nabla F(\vecW_t)}^r} \cdot \sqrt{T\log 40}.\numberthis\label{eq:discretizationoraclemartingale}
\end{align*}
Let $E_3$ be the intersection of these two events for $r\in\{2,3\}$, so $E_3$ has probability at least 0.95.

The last step we need is the following: summing and telescoping the result from \pref{lem:onesteprecursion}, and using that $\theta\prn*{\Phi(z)}\ge 0$, we obtain
\begin{align*}
\eta \mathbb{E}\brk*{\sum_{t=0}^{T-1}  F_{\epsilon}(\vecW_t) \theta'\prn*{\Phi(\vecW_t)}} &\le \theta\prn*{\Phi(\vecW_0)} + 2C (\eta d / \beta)^{3/2} T \\
&+ \frac12 \eta^2 \sum_{t=0}^{T-1} \mathbb{E}\brk*{\nrm*{\nabla F(\vecW_t)}^2}+\frac{2C}3 \eta^3 \sum_{t=0}^{T-1} \mathbb{E}\brk*{\nrm*{\nabla F(\vecW_t)}^3}.\numberthis\label{eq:discretizationoracletelescope}
\end{align*}
Here, we took full expectations over the noise sequence $\vecEps_t$ in the above.

Let 
\begin{align*}
M_1(\vecW_0) &= 10\max\prn*{\theta\prn*{\Phi(\vecW_0)}, 6C B_{\textsc{grad}}^3},
\end{align*}
which is just a $\vecW_0$-dependent constant. 

Now we put the above steps together and do a Union Bound over $E_1, E_2, E_3$. Let $E=E_1 \cap E_2 \cap E_3$; we have that $E$ occurs with probability at least 0.75. Then conditioned on $E$, we see combining \pref{eq:gradupperboundoracle}, \pref{eq:discretizationoraclemarkov}, \pref{eq:discretizationoraclemartingale}, and \pref{eq:discretizationoracletelescope} that
\begin{align*}
\sum_{t=0}^{T-1}  F_{\epsilon}(\vecW_t) \theta'\prn*{\Phi(\vecW_t)}&\le 10\mathbb{E}\brk*{ \sum_{t=0}^{T-1}  F_{\epsilon}(\vecW_t) \theta'\prn*{\Phi( \vecW_t)}} \\
&\le \frac{\theta\prn*{\Phi(\vecW_0)}}{\eta} + 2C (d / \beta)^{3/2} \eta^{1/2} T + \frac12 \eta \sum_{t=0}^{T-1} \mathbb{E}\brk*{\nrm*{\nabla F(\vecW_t)}^2}+\frac{2C}3 \eta^2 \sum_{t=0}^{T-1} \mathbb{E}\brk*{\nrm*{\nabla F(\vecW_t)}^3}\\
&\le \frac{\theta\prn*{\Phi(\vecW_0)}}{\eta} + 2C (d / \beta)^{3/2} \eta^{1/2} T \\
&\hspace{1in}+\frac12 \eta \prn*{\sum_{t=0}^{T-1} \nrm*{\nabla F(\vecW_t)}^2 + 2\sqrt{2}\max_{0 \le t \le T-1}\prn*{\nrm*{\nabla F(\vecW_t)}^2} \cdot \sqrt{T\log 40}} \\
&\hspace{1in}+\frac{2C}3 \eta^2 \prn*{\sum_{t=0}^{T-1} \nrm*{\nabla F(\vecW_t)}^3 + 2\sqrt{2}\max_{0 \le t \le T-1}\prn*{\nrm*{\nabla F(\vecW_t)}^3} \cdot \sqrt{T\log 40}} \\
&\le M_1(\vecW_0) \prn*{\frac{1}{\eta} + (d / \beta)^{3/2} \eta^{1/2} T + \eta T + \eta^2 T + \sqrt{T}}.
\end{align*}
The last inequality uses \pref{eq:gradupperboundoracle}, $\eta \le 1$, and $B_{\textsc{grad}} \ge 1$.

Note by definition from \pref{lem:thirdordersmoothfromregularity} we know $\theta' > 0$ always holds. Moreover, because $\theta'' < 0$ from \pref{lem:thirdordersmoothfromregularity}, conditioned on $E$ we have 
\[ \theta'\prn*{\Phi(\vecW_t)} \ge \theta'\prn*{\rho_{\Phi}^{-1}\prn*{\kappa' \rho_{\Phi}\prn*{\Phi(\vecW_0)}}}\FORALLTEXT 0 \le t \le T-1. \]
Thus, defining 
\[ M(\vecW_0) = \frac{M_1(\vecW_0)}{\theta'\prn*{\rho_{\Phi}^{-1}\prn*{\kappa' \rho_{\Phi}\prn*{\Phi(\vecW_0)}}}} \in (0,\infty),\]
we see that conditioned on $E$ which occurs with probability at least 0.75 we have
\begin{align*}
\frac1T \sum_{t=0}^{T-1} F_{\epsilon}(\vecW_t) &\le M(\vecW_0) \prn*{\frac{1}{\eta T} + (d / \beta)^{3/2} \eta^{1/2} + \eta + \eta^2 + \frac1{\sqrt{T}}} \\
&\le M(\vecW_0)\prn*{\frac1{\eta T} + \frac{\eta}2 + \frac{(d/\beta)^{3}}2 + 2\eta + \frac{1}{2\eta T} + \frac{\eta}2} \\
&\le 3M(\vecW_0) \prn*{\frac1{\eta T} + \eta + d/\beta}.
\end{align*}
Here we used $\eta \le 1$, $d/\beta \le 1$, and AM-GM. We break into cases based on how we set $\eta, T$ from \pref{lem:actuallypotentialsimplesetting}:
\begin{enumerate}
    \item If $\beta \le \frac{d}{\epsilon/\prn*{\log 1/\epsilon}^2}$: Now recall we set $\eta = r(\vecW_0) \frac{d}{\beta}\le  d/\beta$, and that we let $T$ be the floor of the unique solution to the equation 
    \[ z \log (10z) = C(\vecW_0) \frac{\beta^2}{d^2},\]
    where $C(\vecW_0)$ is defined according to \pref{lem:actuallypotentialsimplesetting}. Recall we had $T \ge 1$ as well as 
    \[ T \le C(\vecW_0)\frac{\beta^2}{d^2} \le \frac{\beta^2}{d^2}.\]
    Since $T \ge 1$, and as $z\log(10z)$ is increasing for $z \ge 1$, it follows via definition of $T$ (note $2\floor{ z } \ge z$ for all $z \ge 1$) that 
    \[ 2T \log(20T) \ge C(\vecW_0) \frac{\beta^2}{d^2},\]
    hence
    \[ \eta T \ge \frac{r(\vecW_0)C(\vecW_0)}{2\log(20T)}\frac{\beta}d.\]
    Thus, we have with probability at least 0.75 that 
    \begin{align*}
    \frac1T \sum_{t=0}^{T-1} F_{\epsilon}(\vecW_t) &\le 3M(\vecW_0) \prn*{\frac{2\log(20 T)}{r(\vecW_0)C(\vecW_0)} + 2}\frac{d}{\beta} \\
    &\le 6M(\vecW_0) \prn*{\frac{\log20+2\log(\beta/d)}{r(\vecW_0)C(\vecW_0)} + 1} \frac{d}{\beta}.
    \end{align*}
    That is, we obtain $\widetilde{O}\prn*{\frac{d}{\beta}}$ suboptimality with at most $\frac{\beta^2}{d^2}$ iterations.
    \item If $\beta \ge \frac{d}{\epsilon/\prn*{\log 1/\epsilon}^2}$: Recalling how we set $\eta$ and the definition of this case, as well as $\epsilon \le 1/e$, we have $\eta, d/\beta \le \epsilon$. Moreover, note $\eta T \ge \frac{r(\vecW_0)C(\vecW_0)}{2\epsilon}$; $T \ge 1$, and $\floor{z} \ge \frac{z}2$ for all $z \ge 1$. Hence, we obtain with probability at least 0.75 that
    \[\frac1T \sum_{t=0}^{T-1} F_{\epsilon}(\vecW_t) \le 3M(\vecW_0) \prn*{\frac{2\epsilon}{r(\vecW_0)C(\vecW_0)}+2\epsilon} = 6M(\vecW_0) \prn*{\frac1{r(\vecW_0)C(\vecW_0)}+1}\epsilon. \]
    That is, we obtain $O(\epsilon)$ suboptimality with at most $T \le \frac1{\epsilon^2}\prn*{\log 1/\epsilon}^2$ iterations.
\end{enumerate}
\end{proof}

\subsection{Proof of \pref{thm:gradestimateFeps}}\label{subsec:gradientestimatediscretize}
In the stochastic gradient oracle setting, we have the following result for optimization of $F_{\epsilon}(\vecW)$. The proof is very similar to the above. Again note this generalizes to cost function $A(\vecW)$ the same way as noted in \pref{rem:costfunction}.

The following is our formal statement of \pref{thm:gradestimateFeps}.
\begin{theorem}\label{thm:discretizationsimpleestimateformal}
Consider $F$ and suppose $F$ is differentiable. Suppose that we have \pref{eq:admissablepotentialF}:
\[ \tri*{ \nabla \Phi(\vecW), \nabla F(\vecW)} \ge F_{\epsilon}(\vecW)+\frac1{\beta} \Delta \Phi(\vecW),\]
for some $\beta>0$. Suppose \pref{ass:holderF}, \ref{ass:polyselfbounding}, and \ref{ass:iteratesinballassumption} hold. Moreover, suppose we have an unbiased stochastic gradient oracle $\nabla f$ that satisfies \pref{ass:gradnoiseassumption}, and that $\beta \ge d\sqrt{\frac{\log^7(20)}{C(\vecW_0)}}$ where $C(\cdot)$ is defined below. 

Define the following quantities:
\[ A_0(\vecW_0) = \theta\prn*{\rho_{\Phi}^{-1}\prn*{\kappa' \rho_{\Phi}\prn*{\Phi(\vecW_0)}}}-\theta\prn*{\Phi(\vecW_0)} >0, A_1(\vecW_0) = 12\sqrt{2} CB_{\textsc{grad}}^3.\]
Here $C$ comes from \pref{lem:onesteprecursion} and $B_{\textsc{grad}}:=2\prn*{L\prn*{R_1+\nrm*{\vecW^{\star}}}^s+\sigma_F}$, $\sigma_F$ comes from \pref{ass:gradnoiseassumption}, $L$ comes from \pref{ass:holderF} and $R_1$ coming from \pref{ass:iteratesinballassumption}. (If necessary take $C\leftarrow\max(C,1)$, $B_{\textsc{grad}}\leftarrow\max(B_{\textsc{grad}},1)$, $\sigma_F \leftarrow \max(\sigma_F,1)$.) In terms of these define (where $\theta$ comes from \pref{lem:thirdordersmoothfromregularity}),
\[ r(\vecW_0) = \min\prn*{1, \frac3{4C}, \frac1{B_{\textsc{grad}}}}, C(\vecW_0) = \min\prn*{1,\frac{A_0(\vecW_0)^2 r(\vecW_0)}{128A_1(\vecW_0)^2}}.\] 
\[M(\vecW_0) = \frac{10\max\prn*{\theta\prn*{\Phi(\vecW_0)}, 6C B_{\textsc{grad}}^3}}{\theta'\prn*{\rho_{\Phi}^{-1}\prn*{\kappa' \rho_{\Phi}\prn*{\Phi(\vecW_0)}}}} \in (0,\infty).\]

Consider running \pref{alg:langevingradoracle} except:
\begin{itemize}
    \item Each instance of $\log10$ is replaced here with $\log^7(20)$.
    \item Instead of using the exact gradient $\nabla F(\vecW_t)$, we use the stochastic gradient oracle $\nabla f(\vecW_t;\vecZ_t)$.
    \item We use $\eta$ given in \pref{lem:actuallypotentialstochasticsetting} (rather than from \pref{lem:actuallypotentialsimplesetting}) and run this process with step size $\eta$ for $T$ steps also given in \pref{lem:actuallypotentialstochasticsetting}.
\end{itemize}

Then our algorithm has the following guarantees. First, its runtime $T$ is as follows:
\begin{enumerate}
    \item If $\epsilon \le \min\prn*{1/e, C', \sqrt{\frac{C(\vecW_0)}{\log^7(20)}}}$: then
    \[ T \le \begin{cases} \frac{\beta^2}{d^2}&\IF \beta \le \frac{d}{\epsilon/\prn*{\log 1/\epsilon}^5} \\
    \frac{1}{\epsilon^2}\prn*{\log 1/\epsilon}^2&\IF\beta \ge \frac{d}{\epsilon/\prn*{\log 1/\epsilon}^5}. \end{cases} \]
    \item If $\epsilon > \min\prn*{1/e, C', \sqrt{\frac{C(\vecW_0)}{\log^7(20)}}}$: then we have the same runtime guarantee as implied by above with $\min\prn*{1/e, C', \sqrt{\frac{C(\vecW_0)}{\log^7(20)}}}$ in place of $\epsilon$.
\end{enumerate}
In terms of error, we have with probability at least 0.75 (where probability is taken over $\{\vecEps_t,\vecZ_t\}_{0\le t \le T-1}$) that 
\begin{align*}
&\frac1T \sum_{t=0}^{T-1}F_{\epsilon}(\vecW_t)\le \begin{cases} 6M(\vecW_0) \prn*{\frac{2^7\log^7(40)+2^{14}\log^7(\beta/d)}{r(\vecW_0)C(\vecW_0)} + 1} \frac{d}{\beta}&\IF \beta \le \frac{d}{\epsilon/\prn*{\log 1/\epsilon}^5} \\
6M(\vecW_0) \prn*{\frac1{r(\vecW_0)C(\vecW_0)}+1}\epsilon \prn*{\log 1/\epsilon}^3&\IF\beta \ge \frac{d}{\epsilon/\prn*{\log 1/\epsilon}^5}. \end{cases}
\end{align*}
Note now that logarithmic boosting tricks proves that a given one of these guarantees can occur with probability at least $1-\delta$ using at most $T \log(1/\delta)$ steps.
\end{theorem}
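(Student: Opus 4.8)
The strategy is to replay the exact-gradient argument behind \pref{thm:discretizationsimpleoracleformal} step by step, tracking exactly where the sub-Gaussian gradient noise of \pref{ass:gradnoiseassumption} enters and forces the replacement of bounded-increment concentration by a truncated version, at the cost of extra logarithmic factors; no polynomial factor in $d$, $\epsilon$, or $\beta$ should change.

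\emph{Step 1: a stochastic one-step recursion.} First I would prove the analogue of \pref{lem:onesteprecursion}, namely \pref{lem:onesteprecursionstochastic}. Apply \pref{lem:thirdordersmoothfromregularity} with $\vecU = -\eta\,\grad f(\vecW_t;\vecZ_t) + \sqrt{2\eta/\beta}\,\vecEps_t$ and take expectation over \emph{both} $\vecEps_t$ and the fresh data point $\vecZ_t$. Unbiasedness of $\grad f$ collapses the first-order term to $-\eta\,\theta'(\Phi(\vecW_t))\tri*{\grad\Phi(\vecW_t),\grad F(\vecW_t)}$, which \pref{eq:admissablepotentialF} bounds exactly as in the deterministic case; independence of $\vecEps_t$ and $\vecZ_t$ kills all cross terms and preserves the exact cancellation of the Laplacian against the $\frac12(2\eta/\beta)\,\theta'(\Phi(\vecW_t))\,\Delta\Phi(\vecW_t)$ coming from $\mathbb{E}[\tri*{\grad^2\Phi(\vecW_t)\vecEps_t,\vecEps_t}] = \Delta\Phi(\vecW_t)$; and in the quadratic and cubic remainders $\nrm*{\grad F(\vecW_t)}^r$ is replaced by $\mathbb{E}_{\vecZ_t}[\nrm*{\grad f(\vecW_t;\vecZ_t)}^r]$, $r\in\{2,3\}$, which \pref{ass:gradnoiseassumption} controls by $O(\nrm*{\grad F(\vecW_t)}^r + \sigma_F^r)$ upon integrating the sub-Gaussian tail. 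This is exactly why the effective gradient magnitude $B_{\textsc{grad}}$ is inflated to $2(L(R_1+\nrm*{\vecW^\star})^s + \sigma_F)$. Summing and telescoping with $\theta(\Phi)\ge 0$ then reproduces the shape of \pref{eq:discretizationoracletelescope}, now in full expectation over $\{\vecEps_t,\vecZ_t\}$.

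\emph{Step 2 (the main obstacle): the iterates stay in the ball.} This is the analogue of \pref{lem:actuallypotentialsimplesetting}, namely \pref{lem:actuallypotentialstochasticsetting}. As before, define the stopping time $\tau$ at the first $t$ with $\rho_\Phi(\Phi(\vecW_t)) > \kappa'\rho_\Phi(\Phi(\vecW_0))$ and the supermartingale $Y_t = \theta(\Phi(\vecW_t)) + \sum_{j< t\land\tau}\prn*{\eta F_\epsilon(\vecW_j) - R(\vecW_j,\dots)}$. The difficulty is that the increments of $Y_t$, of order $\sqrt{\theta}\,\nrm*{\vecU} + \nrm*{\vecU}^2$ with $\nrm*{\vecU}\lesssim \eta\,\nrm*{\grad f(\vecW_t;\vecZ_t)} + \sqrt{\eta d/\beta}$, are no longer almost surely bounded: $\nrm*{\grad f(\vecW_t;\vecZ_t)}$ has only a sub-Gaussian tail, so \pref{lem:azumahoeffding} does not apply directly. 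I would handle this by truncation: on the event $E_0$ that $\nrm*{\grad f(\vecW_t;\vecZ_t) - \grad F(\vecW_t)} \le \sigma_F\sqrt{\log(1/\delta_0)}$ for all $t\le T$ --- which fails with probability at most $T\delta_0$ --- the increments become deterministically bounded on the run up to $\tau$ (where also $\nrm*{\grad F(\vecW_t)}\le B_{\textsc{grad}}/2$), so \pref{lem:azumahoeffding}-style concentration applies, and the same clash as in the deterministic proof --- between the lower bound \pref{eq:lowerboundgradoraclepotential} on $Y_\tau - Y_0$ and the concentration upper bound --- gives the contradiction, provided $\eta$ and $T$ are chosen as in \pref{lem:actuallypotentialsimplesetting} but with the power of $\log(1/\epsilon)$ raised. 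Choosing $\delta_0 = \mathrm{poly}(\epsilon)$ turns $\sqrt{\log(1/\delta_0)}$ into a $\mathrm{polylog}(1/\epsilon)$ factor; this is the origin of the $(\log 1/\epsilon)^5$ in the horizon bound and of the $\log^7(20)$ appearing in the hypothesis $\beta \ge d\sqrt{\log^7(20)/C(\vecW_0)}$.

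\emph{Step 3: conclusion.} This is essentially identical to the end of the proof of \pref{thm:discretizationsimpleoracleformal}. Intersect the event $E_1$ that the iterates stay in $\ball(\vecOrigin,R_1)$ (Step 2, via \pref{ass:iteratesinballassumption}), the event $E_2$ from Markov's inequality applied to the nonnegative sum $\sum_{t}F_\epsilon(\vecW_t)\,\theta'(\Phi(\vecW_t))$ using the telescoped bound of Step 1, and the event $E_3$ from a martingale concentration --- again via truncation on $E_0$ --- that controls $\sum_t \nrm*{\grad f(\vecW_t;\vecZ_t)}^r$ by its expectation plus a $\sqrt{T\log(\cdot)}$ fluctuation for $r\in\{2,3\}$. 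A union bound keeps the total probability at least $0.75$; on that event $\nrm*{\grad f(\vecW_t;\vecZ_t)}\le B_{\textsc{grad}}$, and using $\theta''<0$ from \pref{lem:thirdordersmoothfromregularity} to lower-bound $\theta'(\Phi(\vecW_t))$ by $\theta'\prn*{\rho_\Phi^{-1}(\kappa'\rho_\Phi(\Phi(\vecW_0)))}$ gives $\frac1T\sum_t F_\epsilon(\vecW_t) \le 3M(\vecW_0)\prn*{\frac1{\eta T} + \eta + d/\beta}$ up to $\mathrm{polylog}$ factors. Substituting the two case choices of $(\eta,T)$ from \pref{lem:actuallypotentialstochasticsetting} --- $\eta = r(\vecW_0)\tfrac d\beta$ with $T$ of order $C(\vecW_0)\tfrac{\beta^2}{d^2}$ in the low-$\beta$ regime, and $\eta = r(\vecW_0)\tfrac{\epsilon}{(\log 1/\epsilon)^5}$ with $T$ of order $\tfrac{C(\vecW_0)}{\epsilon^2}(\log 1/\epsilon)^2$ otherwise --- yields the stated runtime and the $\widetilde{O}(d/\beta)$ resp.\ $\widetilde{O}(\epsilon)$ suboptimality bounds, and the standard log-boosting trick upgrades $0.75$ to $1-\delta$ at the cost of a factor $\log(1/\delta)$. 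The only genuinely new work is Step 2, and within it the replacement of Azuma--Hoeffding by its truncated sub-Gaussian counterpart without losing any polynomial factor in $d$, $\epsilon$, or $\beta$ --- only logarithmic ones.
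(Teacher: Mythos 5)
Your proposal is correct and mirrors the paper's own proof: the same stochastic one-step recursion (\pref{lem:onesteprecursionstochastic}), the same stopping-time/supermartingale argument with Azuma--Hoeffding for ball control, and the same union bound and case analysis at the end. The truncation step you flag as the only new work is exactly how the paper handles the unbounded increments -- it conditions on the high-probability event from \pref{lem:stochasticgradsetting} that $\nrm*{\grad f(\vecW_t;\vecZ_t)}\le\nrm*{\grad F(\vecW_t)}+\sigma_F\sqrt{\log(20T)}$ for all $t\le T$, which is your $\delta_0$-truncation with $\delta_0=\Theta(1/T)$ and is likewise the source of the extra $\mathrm{polylog}$ factors in $\eta$, $T$, and the $\log^7$ terms.
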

We prove \pref{thm:discretizationsimpleestimateformal} with a very similar strategy to the proof of \pref{thm:discretizationsimpleoracleformal}.

The first step is to control the error of the gradient estimates to adapt to the stochastic gradient setting; we also use these results to prove \pref{thm:poincareoptformal}.
\begin{lemma}\label{lem:stochasticgradsetting}
Suppose \pref{ass:gradnoiseassumption} holds. Letting $\{\vecW_t\}_{0\le t\le T-1}$ be the sequence of iterates generated by any of the variants of \SGLDTEXTSPACE used in our algorithms on $F$, using stochastic gradient estimates based on $\{\vecZ_t\}_{0\le t\le T-1}$. Then we have 
\[ \mathbb{E}_{\vecZ_t}\brk*{\nrm*{\grad f(\vecW_t;\vecZ_t)-\grad F(\vecW_t)}^2} \le \sigma_F^2,\]
and moreover
\[ \mathbb{E}_{\vecZ_t}\brk*{\nrm*{\nabla f(\vecW_t;\vecZ_t)}^2} \le 2\sigma_F^2+2\nrm*{\nabla F(\vecW_t)}^2, \mathbb{E}_{\vecZ_t}\brk*{\nrm*{\grad f(\vecW_t;\vecZ_t)}^3} \le 8\sigma_F^3 + 4\nrm*{\grad F(\vecW_t)}^3. \]
Also with probability at least $1-\delta$ we have
\[ \nrm*{\grad f(\vecW_t;\vecZ_t)} \le \nrm*{\grad F(\vecW_t)} + \sigma_F \sqrt{\log\prn*{T/\delta}} \FORALLTEXT 0 \le t \le T-1.\]
Here, all probabilities and expectations are taken over the $\vecZ_t$.
\end{lemma}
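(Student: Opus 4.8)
The plan is to obtain all four estimates directly from the sub-Gaussian tail bound of \pref{ass:gradnoiseassumption} via the tail-integral (layer-cake) representation of moments, with the dependence of $\vecW_t$ on past randomness handled by conditioning. Fix $t$ and condition on the natural filtration $\mathfrak{F}_t$ generated by $\vecW_0,\vecEps_0,\vecZ_0,\dots,\vecEps_{t-1},\vecZ_{t-1}$; then $\vecW_t$ is $\mathfrak{F}_t$-measurable and $\vecZ_t$ is a fresh sample, so \pref{ass:gradnoiseassumption} applies to $X_t := \nrm*{\grad f(\vecW_t;\vecZ_t)-\grad F(\vecW_t)}$, giving $\mathbb{P}_{\vecZ_t}(X_t \ge u) \le e^{-u^2/\sigma_F^2}$ for all $u \ge 0$. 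For the second moment,
\[ \mathbb{E}_{\vecZ_t}\brk*{X_t^2} = \int_0^\infty \mathbb{P}_{\vecZ_t}\prn*{X_t \ge \sqrt{u}}\DERIV u \le \int_0^\infty e^{-u/\sigma_F^2}\DERIV u = \sigma_F^2, \]
which is the first claim. For the third moment, the same argument with the substitution $v = u^{2/3}/\sigma_F^2$ yields
\[ \mathbb{E}_{\vecZ_t}\brk*{X_t^3} = \int_0^\infty \mathbb{P}_{\vecZ_t}\prn*{X_t \ge u^{1/3}}\DERIV u \le \int_0^\infty e^{-u^{2/3}/\sigma_F^2}\DERIV u = \tfrac32\,\Gamma(3/2)\,\sigma_F^3 = \tfrac{3\sqrt{\pi}}{4}\sigma_F^3 \le 2\sigma_F^3. \]

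Next I would pass from $X_t$ to $\nrm*{\grad f(\vecW_t;\vecZ_t)}$ using the triangle inequality $\nrm*{\grad f(\vecW_t;\vecZ_t)} \le X_t + \nrm*{\grad F(\vecW_t)}$ together with the elementary bounds $(a+b)^2 \le 2a^2 + 2b^2$ and $(a+b)^3 \le 4a^3 + 4b^3$ for $a,b \ge 0$ (the latter is exactly the inequality already invoked in the proof of \pref{lem:onesteprecursion}). Taking $\mathbb{E}_{\vecZ_t}$ and inserting the two moment bounds above gives $\mathbb{E}_{\vecZ_t}[\nrm*{\grad f(\vecW_t;\vecZ_t)}^2] \le 2\sigma_F^2 + 2\nrm*{\grad F(\vecW_t)}^2$ and $\mathbb{E}_{\vecZ_t}[\nrm*{\grad f(\vecW_t;\vecZ_t)}^3] \le 8\sigma_F^3 + 4\nrm*{\grad F(\vecW_t)}^3$; since $\vecW_t$ was frozen by the conditioning, these hold pointwise as functions of $\mathfrak{F}_t$, i.e.\ in the stated form.

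For the high-probability bound, conditioning on $\mathfrak{F}_t$ again, $\mathbb{P}_{\vecZ_t}\prn*{X_t \ge \sigma_F\sqrt{\log(T/\delta)}} \le e^{-\log(T/\delta)} = \delta/T$, and on the complement the triangle inequality gives $\nrm*{\grad f(\vecW_t;\vecZ_t)} \le \nrm*{\grad F(\vecW_t)} + \sigma_F\sqrt{\log(T/\delta)}$. A union bound over $0 \le t \le T-1$, each term contributing at most $\delta/T$, yields the claim with probability at least $1-\delta$.

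I do not expect a real obstacle here: the proof is routine, and the only two points deserving care are (i) making the conditioning on $\mathfrak{F}_t$ explicit, so that the sub-Gaussian tail of $\vecZ_t$ may legitimately be used even though $\vecW_t$ is itself random, and (ii) the constant bookkeeping in the third-moment integral, where one needs $\tfrac32\Gamma(3/2) = \tfrac{3\sqrt{\pi}}{4} < 2$ in order to land the clean coefficient $8$ in the final bound.
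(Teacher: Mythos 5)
Your proposal is correct and follows essentially the same route as the paper: the tail-integral (layer-cake) bound for the second and third moments of $\nrm*{\grad f(\vecW_t;\vecZ_t)-\grad F(\vecW_t)}$, then the triangle inequality with $(a+b)^2\le 2a^2+2b^2$ and $(a+b)^3\le 4a^3+4b^3$, and finally a per-step tail bound plus a union bound for the high-probability statement. Your explicit evaluation $\tfrac32\Gamma(3/2)\sigma_F^3=\tfrac{3\sqrt{\pi}}{4}\sigma_F^3\le 2\sigma_F^3$ matches the paper's bound of $2\sigma_F^3$, and making the conditioning on $\mathfrak{F}_t$ explicit is a harmless (indeed welcome) refinement of what the paper writes as $\mathbb{E}_{\vecZ_t}$.
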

\begin{proof}
Clearly $\nrm*{\grad f(\vecW_t;\vecZ_t)-\grad F(\vecW_t)}^2$ is non-negative, therefore 
\begin{align*}
\mathbb{E}\brk*{\nrm*{\grad f(\vecW_t;\vecZ_t)-\grad F(\vecW_t)}^2} &= \int_{t=0}^\infty \mathbb{P}\prn*{\nrm*{\grad f(\vecW_t;\vecZ_t)-\grad F(\vecW_t)}^2 \ge t} \DERIV t\\
&=\int_{t=0}^\infty \mathbb{P}\prn*{\nrm*{\grad f(\vecW_t;\vecZ_t)-\grad F(\vecW_t)} \ge \sqrt{t}} \DERIV t\\
&\le \int_{t=0}^\infty e^{-t/\sigma_F^2} \DERIV t\\
&=\sigma_F^2.
\end{align*}
Now by Young's Inequality we have pointwise
\[ \nrm*{\nabla f(\vecW_t;\vecZ_t)}^2 \le 2\nrm*{\nabla f(\vecW_t;\vecZ_t)-\nabla F(\vecW_t)}^2+2\nrm*{\nabla F(\vecW_t)}^2,\]
and combining with the above gives 
\[ \mathbb{E}\brk*{\nrm*{\nabla f(\vecW_t;\vecZ_t)}^2} \le 2\sigma_F^2+2\mathbb{E}\brk*{\nrm*{\nabla F(\vecW_t)}^2}.\]
Analogously, note
\begin{align*}
\mathbb{E}\brk*{\nrm*{\grad f(\vecW_t;\vecZ_t)-\grad F(\vecW_t)}^3} &= \int_{t=0}^\infty \mathbb{P}\prn*{\nrm*{\grad f(\vecW_t;\vecZ_t)-\grad F(\vecW_t)}^3 \ge t} \DERIV t \\
&= \int_{t=0}^\infty \mathbb{P}\prn*{\nrm*{\grad f(\vecW_t;\vecZ_t)-\grad F(\vecW_t)} \ge t^{1/3}} \DERIV t \\
&\le \int_{t=0}^\infty e^{-t^{2/3}/\sigma_F^2} \DERIV t\\
% &= \frac32 \int_{u=0}^\infty \mathbb{P}\prn*{\nrm*{\grad f(\vecW_t;z_t)-\grad F(\vecW_t)}^2 \ge u} u^{1/2} \DERIV u \\
% &\le \frac32 \int_{u=0}^\infty e^{-u^2/\sigma_F^2} u^{1/2}\DERIV u \\
&\le 2\sigma_F^3.
\end{align*}
The inequality $\nrm*{\mathbf{a}+\mathbf{b}}^3 \le 4\nrm*{\mathbf{a}}^3+4\nrm*{\mathbf{b}}^3$ thus yields
\[ \mathbb{E}\brk*{\nrm*{\grad f(\vecW_t;\vecZ_t)}^3} \le 8\sigma_F^3 + 4\mathbb{E}\brk*{\nrm*{\grad F(\vecW_t)}^3}.\]

For a high probability statement, note for any $0 \le t \le T-1$, we have $\nrm*{\grad f(\vecW_t;\vecZ_t)-\grad F(\vecW_t)} \ge \sigma_F \sqrt{\log\prn*{T/\delta}}$ with probability at most $\delta/T$. A Union Bound and Triangle Inequality implies that with probability at least $1-\delta$ we have
\[ \nrm*{\grad f(\vecW_t;\vecZ_t)} \le \nrm*{\grad F(\vecW_t)} + \sigma_F \sqrt{\log\prn*{T/\delta}} \FORALLTEXT 0 \le t \le T-1.\]
\end{proof}

Now analogously as before with \pref{lem:onesteprecursion}, we prove a one-step discretization result. The main difference now is that we have to do the argument in a way that handles the stochasticity of the gradient estimates, but the same idea goes through.
\begin{lemma}\label{lem:onesteprecursionstochastic}
For one iteration of \SGLDTEXTSPACE starting at arbitrary $\vecW_t$,
\begin{align*}
\mathbb{E}_{\vecEps_t,\vecZ_t} \brk*{\theta\prn*{\Phi(\vecW_{t+1})}} &\le \theta\prn*{\Phi(\vecW_t)} - \eta  \theta'\prn*{\Phi(\vecW_t)} F_{\epsilon}(\vecW_t)\\
&+ \frac12 \eta^2 \mathbb{E}_{\vecZ_t} \brk*{\nrm*{\nabla f(\vecW_t;\vecZ_t)}^2}+\frac{2C}3 \eta^3\mathbb{E}_{\vecZ_t} \brk*{\nrm*{\nabla f(\vecW_t;\vecZ_t)}^3} + 2C(\eta d/\beta)^{3/2}
\end{align*}
where $p$ and $C$ are defined from \pref{lem:thirdordersmoothfromregularity}.
\end{lemma}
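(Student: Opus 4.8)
The plan is to follow the proof of \pref{lem:onesteprecursion} essentially verbatim, the one new point being to integrate out the two independent sources of randomness in the right order: the injected noise $\vecEps_t$ and the data sample $\vecZ_t$. I would first apply \pref{lem:thirdordersmoothfromregularity} with $\vecW = \vecW_t$ and displacement $\vecU = -\eta\,\grad f(\vecW_t;\vecZ_t) + \sqrt{2\eta/\beta}\,\vecEps_t$, which gives
\[ \theta\prn*{\Phi(\vecW_{t+1})} \le \theta\prn*{\Phi(\vecW_t)} + \theta'\prn*{\Phi(\vecW_t)}\tri*{\grad\Phi(\vecW_t),\vecU} + \frac12\theta'\prn*{\Phi(\vecW_t)}\tri*{\grad^2\Phi(\vecW_t)\vecU,\vecU} + \frac{C}6\nrm*{\vecU}^3. \]
Then I would take $\mathbb{E}_{\vecEps_t}$ \emph{first}, holding $\vecZ_t$ fixed. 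Exactly as in \pref{lem:onesteprecursion}: the first-order term becomes $-\eta\,\theta'\prn*{\Phi(\vecW_t)}\tri*{\grad\Phi(\vecW_t),\grad f(\vecW_t;\vecZ_t)}$ because $\vecEps_t$ has mean zero; in the quadratic form the cross term $\tri*{\grad^2\Phi(\vecW_t)\grad f(\vecW_t;\vecZ_t),\vecEps_t}$ vanishes and the purely-$\vecEps_t$ piece contributes $\frac{\eta}{\beta}\theta'\prn*{\Phi(\vecW_t)}\Delta\Phi(\vecW_t)$ via $\mathbb{E}\brk*{(\vecEps_t)_i(\vecEps_t)_j}=\delta_{ij}$; and the cubic term is bounded by $\frac{C}6\prn*{4\eta^3\nrm*{\grad f(\vecW_t;\vecZ_t)}^3 + 4(2\eta/\beta)^{3/2}d^{3/2}}$ using $\nrm*{\mathbf{a}+\mathbf{b}}^3 \le 4\nrm*{\mathbf{a}}^3 + 4\nrm*{\mathbf{b}}^3$ and $\nrm*{\vecEps_t} = \sqrt d$ deterministically.

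Next I would take $\mathbb{E}_{\vecZ_t}$. The only place the distinction between $f$ and $F$ enters is the surviving first-order term: by unbiasedness $\mathbb{E}_{\vecZ_t}\brk*{\grad f(\vecW_t;\vecZ_t)}=\grad F(\vecW_t)$, so this term equals $-\eta\,\theta'\prn*{\Phi(\vecW_t)}\tri*{\grad\Phi(\vecW_t),\grad F(\vecW_t)}$, and now — and only now — I invoke the geometric condition \pref{eq:admissablepotentialF} together with $\theta'\ge 0$ (from \pref{lem:thirdordersmoothfromregularity}) to bound it by $-\eta\,\theta'\prn*{\Phi(\vecW_t)}\prn*{F_\epsilon(\vecW_t) + \frac1\beta\Delta\Phi(\vecW_t)}$. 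The Laplacian term produced here cancels the $\frac{\eta}{\beta}\theta'\prn*{\Phi(\vecW_t)}\Delta\Phi(\vecW_t)$ coming from the injected noise, just as in \pref{lem:onesteprecursion}. For the remaining quadratic term I would bound $\tri*{\grad^2\Phi(\vecW_t)\grad f(\vecW_t;\vecZ_t),\grad f(\vecW_t;\vecZ_t)} \le \nrm*{\grad^2\Phi(\vecW_t)}_{\OPNORM}\nrm*{\grad f(\vecW_t;\vecZ_t)}^2 \le \rho_{\Phi,2}\prn*{\Phi(\vecW_t)}\nrm*{\grad f(\vecW_t;\vecZ_t)}^2$ and use $\theta'(z) = 1/\rho_\Phi(z) \le 1/\rho_{\Phi,2}(z)$ to collapse it to $\frac12\eta^2\nrm*{\grad f(\vecW_t;\vecZ_t)}^2$, then pass to $\mathbb{E}_{\vecZ_t}$. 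Collecting the surviving terms yields exactly the claimed bound.

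The proof presents no genuine obstacle — it is structurally identical to \pref{lem:onesteprecursion} — but the one subtlety worth flagging is precisely the \emph{order} of expectations: integrating out $\vecEps_t$ before $\vecZ_t$ (legitimate since the two are independent and every term is integrable, by \pref{ass:gradnoiseassumption} for the gradient estimate and the deterministic bound $\nrm*{\vecEps_t}=\sqrt d$) is what simultaneously kills the noise-linear cross terms and leaves the expression in a form where unbiasedness of the oracle can be applied exactly at the step that invokes the Lyapunov inequality \pref{eq:admissablepotentialF} on $F$. In particular this avoids ever needing a geometric condition on $\grad f(\vecW_t;\vecZ_t)$ itself (cf.\ \pref{ass:stochasticgradcontrol}), which is not assumed at this stage.
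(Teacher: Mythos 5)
Your proposal is correct and follows essentially the same route as the paper's proof: Taylor expansion via \pref{lem:thirdordersmoothfromregularity} with $\vecU=-\eta\grad f(\vecW_t;\vecZ_t)+\sqrt{2\eta/\beta}\,\vecEps_t$, cancellation of the cross terms and recovery of the Laplacian from $\mathbb{E}[(\vecEps_t)_i(\vecEps_t)_j]=\delta_{ij}$, the operator-norm bound with $\theta'\le 1/\rho_{\Phi,2}$ on the quadratic term, the cubic bound via $\nrm{\mathbf a+\mathbf b}^3\le 4\nrm{\mathbf a}^3+4\nrm{\mathbf b}^3$ and $\nrm{\vecEps_t}=\sqrt d$, and applying \pref{eq:admissablepotentialF} only to the unbiased first-order term so the Laplacian contributions cancel. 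Your explicit ordering of the two expectations (first $\vecEps_t$, then $\vecZ_t$) is an immaterial presentational difference from the paper's joint expectation using independence, and your observation that \pref{ass:stochasticgradcontrol} is not needed at this step is accurate.
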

\begin{proof}
First, apply \pref{lem:thirdordersmoothfromregularity} with $\vecW=\vecW_t$ and $\vecU = -\eta \nabla f(\vecW_t;\vecZ_t)+\sqrt{2\eta/\beta} \vecEps_t$ to obtain
\begin{align*}
\theta\prn*{\Phi(\vecW_{t+1})}
&= \theta\prn*{\Phi(\vecW_t-\eta \nabla f(\vecW_t;\vecZ_t)+\sqrt{2\eta/\beta} \vecEps_t)} \\
&\le\theta\prn*{\Phi(\vecW_t)}+ \theta'\prn*{\Phi(\vecW_t)}\tri*{ \nabla \Phi(\vecW_t),-\eta \nabla f(\vecW_t;\vecZ_t)+\sqrt{2\eta/\beta} \vecEps_t}\\
&+\frac12 \theta'\prn*{\Phi(\vecW_t)} \tri*{ \nabla^2 \Phi(\vecW_t) \prn*{-\eta \nabla f(\vecW_t;\vecZ_t)+\sqrt{2\eta/\beta} \vecEps_t}, -\eta \nabla f(\vecW_t;\vecZ_t)+\sqrt{2\eta/\beta} \vecEps_t } \\
&\hspace{1in}+\frac{C}6 \nrm*{-\eta \nabla f(\vecW_t;\vecZ_t)+\sqrt{2\eta/\beta} \vecEps_t}^3
\end{align*}
where $C$ is defined in the proof of \pref{lem:thirdordersmoothfromregularity}.

We take expectations of this inequality with respect to $\vecEps_t$ and $\vecZ_t$. Let's consider what each term of the upper bound becomes when we take expectations.
\begin{itemize}
\item First order term: Since $\nabla f(\vecW_t;\vecZ_t)$ is unbiased, $\mathbb{E}_{\vecEps_t,\vecZ_t}\brk*{\nabla f(\vecW_t;\vecZ_t)} = \grad F(\vecW_t)$. Thus as $\vecEps_t$ has mean of the 0 vector,
\begin{align*}
&\mathbb{E}_{\vecEps_t,\vecZ_t}\brk*{\theta'\prn*{\Phi(\vecW_t)}\tri*{ \nabla \Phi(\vecW_t),-\eta \nabla f(\vecW_t;\vecZ_t)+\sqrt{2\eta/\beta} \vecEps_t}} \\
&=  \theta'\prn*{\Phi(\vecW_t)} \prn*{\tri*{ \nabla\Phi(\vecW_t), -\eta \mathbb{E}_{\vecEps_t,\vecZ_t}\brk*{\nabla f(\vecW_t;\vecZ_t)}} + \tri*{ \nabla\Phi(\vecW_t),\sqrt{2\eta/\beta}\mathbb{E}_{\vecEps_t,\vecZ_t}\brk*{\vecEps_t} }}\\
&= -\eta \theta'\prn*{\Phi(\vecW_t)}\tri*{ \nabla \Phi(\vecW_t),\nabla F(\vecW_t)}.
\end{align*}
\item Second order term: Note
\begin{align*}
&\mathbb{E}_{\vecEps_t,\vecZ_t} \brk*{\theta'\prn*{\Phi(\vecW_t)} \tri*{ \nabla^2 \Phi(\vecW_t) \prn*{-\eta \nabla f(\vecW_t;\vecZ_t)+\sqrt{2\eta/\beta} \vecEps_t}, -\eta \nabla f(\vecW_t;\vecZ_t)+\sqrt{2\eta/\beta} \vecEps_t}}\\
&= \eta^2\theta'\prn*{\Phi(\vecW_t)} \mathbb{E}_{\vecEps_t,\vecZ_t}\brk*{\tri*{ \nabla^2 \Phi(\vecW_t)\nabla f(\vecW_t;\vecZ_t), \nabla f(\vecW_t;\vecZ_t)}} \\
&\hspace{1in}- 2\eta \prn*{2\eta/\beta}^{1/2} \theta'\prn*{\Phi(\vecW_t)} \tri*{\nabla^2 \Phi(\vecW_t)\mathbb{E}_{\vecEps_t}\brk*{\vecEps_t},\mathbb{E}_{\vecZ_t}\brk*{\nabla f(\vecW_t;\vecZ_t)}}  \\
&\hspace{1in} + \prn*{2\eta/\beta }\theta'\prn*{\Phi(\vecW_t)} \mathbb{E}_{\vecEps_t,\vecZ_t}\brk*{\tri*{\nabla^2 \Phi(\vecW_t) \vecEps_t,\vecEps_t}}\\
&= \eta^2\theta'\prn*{\Phi(\vecW_t)} \mathbb{E}_{\vecZ_t}\brk*{\tri*{ \nabla^2 \Phi(\vecW_t)\nabla f(\vecW_t;\vecZ_t), \nabla f(\vecW_t;\vecZ_t)}} + \prn*{2\eta/\beta} \theta'\prn*{\Phi(\vecW_t)} \Delta\Phi(\vecW_t).
\end{align*}
Here we used that $\vecEps_t$ has zero mean as a vector and that $\vecEps_t, \vecZ_t$ are clearly independent to compute the cross term.
The calculation of 
\[ \mathbb{E}_{\vecEps_t,\vecZ_t}\brk*{\tri*{\nabla^2 \Phi(w_t) \vecEps_t,\vecEps_t}} = \Delta\Phi(\vecW_t)\]
is the same as before. Note this expectation has no $\vecZ_t$ dependence.
\item Third order term: Again we use for all $\mathbf{a},\mathbf{b}\in\mathbb{R}^d$,
\[ \nrm*{\mathbf{a}+\mathbf{b}}^3 \le4\nrm*{\mathbf{a}}^3+4\nrm{\mathbf{b}}^3.\]
Using this inequality pointwise we obtain
\begin{align*}
\mathbb{E}_{\vecEps_t,\vecZ_t}\brk*{\nrm*{-\eta \nabla f(\vecW_t;\vecZ_t)+\sqrt{2\eta/\beta} \vecEps_t}^3} &\le 4\eta^3 \mathbb{E}_{\vecZ_t}\brk*{\nrm*{\nabla f(\vecW_t;\vecZ_t)}^3}+4\prn*{2\eta/\beta}^{3/2} d^{3/2}.
\end{align*}
The last step is because $\nrm*{\vecEps_t}=\sqrt{d}$ always holds deterministically.
\end{itemize}
We put this together, noting $\theta'\prn*{\Phi(\vecW_t)} \ge 0$ from \pref{lem:thirdordersmoothfromregularity} which means we can use the admissability condition \pref{eq:admissablepotentialF} which we use to upper bound the first order term. This gives
\begin{align*}
&\mathbb{E}_{\vecEps_t,\vecZ_t} \brk*{\theta\prn*{\Phi(\vecW_{t+1})}}\\
&\le \theta\prn*{\Phi(\vecW_t)} -\eta \theta'\prn*{\Phi(\vecW_t)}\tri*{ \nabla \Phi(\vecW_t),\nabla F(\vecW_t)}\\
&+\frac12\prn*{\eta^2\theta'\prn*{\Phi(\vecW_t)} \mathbb{E}_{\vecZ_t}\brk*{\tri*{ \nabla^2 \Phi(\vecW_t)\nabla f(\vecW_t;\vecZ_t), \nabla f(\vecW_t;\vecZ_t)}} + \prn*{2\eta/\beta} \theta'\prn*{\Phi(\vecW_t)} \Delta\Phi(\vecW_t)} \\
&\hspace{1in}+\frac{C}6 \prn*{4\eta^3 \mathbb{E}_{\vecZ_t}\brk*{\nrm*{\nabla f(\vecW_t;\vecZ_t)}^3}+4\prn*{2\eta/\beta}^{3/2} d^{3/2}}  \\
&\le \theta\prn*{\Phi(\vecW_t)} -\eta \theta'\prn*{\Phi(\vecW_t)}\prn*{F_{\epsilon}(\vecW_t)+\frac1{\beta}\Delta\Phi(\vecW_t)}\\
&+\frac12\prn*{\eta^2\theta'\prn*{\Phi(\vecW_t)} \nrm*{\grad^2\Phi(\vecW_t)}_{\OPNORM}\mathbb{E}_{\vecZ_t}\brk*{\nrm*{\nabla f(\vecW_t;\vecZ_t)}^2} + \prn*{2\eta/\beta} \theta'\prn*{\Phi(\vecW_t)} \Delta\Phi(\vecW_t) }\\
&\hspace{1in}+\frac{C}6 \prn*{4\eta^3 \mathbb{E}_{\vecZ_t}\brk*{\nrm*{\nabla f(\vecW_t;\vecZ_t)}^3}+4\prn*{2\eta/\beta}^{3/2} d^{3/2}}. 
\end{align*}
The second inequality follows analogously as in the proof of \pref{lem:onesteprecursion}; pointwise we have
\[ \nabla f(\vecW_t;\vecZ_t)^T \nabla^2 \Phi(\vecW_t) \nabla f(\vecW_t;\vecZ_t) \le \nrm*{\nabla f(\vecW_t;\vecZ_t)}^2 ||\nabla^2 \Phi(\vecW_t)||_{\OPNORM},\]
and the fact that
\[ \theta'(z) \le \frac1{\rho_{\Phi,2}(z)}\]
always holds. Also note the terms $\eta \theta'\prn*{\Phi(\vecW_t)} \cdot \frac1{\beta} \Delta \Phi(\vecW_t)$ and $\frac12 \prn*{2\eta/\beta} \theta'\prn*{\Phi(\vecW_t)}\Delta \Phi(\vecW_t)$ cancel out. Thus our above bound becomes
\begin{align*}
\mathbb{E}_{\vecEps_t,\vecZ_t} \brk*{\theta\prn*{\Phi(\vecW_{t+1})}} &\le \theta\prn*{\Phi(\vecW_t)} -\eta \theta'\prn*{\Phi(\vecW_t)}F_{\epsilon}(\vecW_t)\\
&+\frac12\eta^2\mathbb{E}_{\vecZ_t}\brk*{\nrm*{\nabla f(\vecW_t;\vecZ_t)}^2} +\frac{2C}3 \eta^3 \mathbb{E}_{\vecZ_t}\brk*{\nrm*{\nabla f(\vecW_t;\vecZ_t)}^3}+2C\prn*{\eta/\beta}^{3/2} d^{3/2}. %\\
\end{align*}
\end{proof}

Next, analogously as \pref{lem:actuallypotentialsimplesetting}, we prove that $\Phi$ indeed is a potential.
\begin{lemma}\label{lem:actuallypotentialstochasticsetting}
Suppose $\beta \ge d$ and $\epsilon < 1/e$. Additionally suppose $\epsilon < C'$ where $z=\frac1{C'}$ is the largest solution to $\frac{\log^7(\sqrt{20}z(\log z))}{\log^8(z)}=1/2$ (the existence of such a $C'>0$ is obvious). 

With probability at least $0.9$, we have that
\[ \rho_{\Phi}\prn*{\Phi(\vecW_t)} \le \kappa' \rho_{\Phi}\prn*{\Phi(\vecW_0)},\]
for all $0 \le t \le T-1$ if $T, \eta$ are chosen as follows, where $\kappa'$ comes from \pref{ass:iteratesinballassumption}. First define 
\[ A_0(\vecW_0) = \theta\prn*{\rho_{\Phi}^{-1}\prn*{\kappa' \rho_{\Phi}\prn*{\Phi(\vecW_0)}}}-\theta\prn*{\Phi(\vecW_0)} >0, A_1(\vecW_0) = 12\sqrt{2} CB_{\textsc{grad}}^3.\]
Here $C$ comes from \pref{lem:onesteprecursion} and $B_{\textsc{grad}}:=2\prn*{L\prn*{R_1+\nrm*{\vecW^{\star}}}^s+\sigma_F}$, $\sigma_F$ comes from \pref{ass:gradnoiseassumption}, $L$ comes from \pref{ass:holderF} and $R_1$ comes from \pref{ass:iteratesinballassumption}. (Again if necessary take $C\leftarrow\max(C,1)$, $B_{\textsc{grad}}\leftarrow\max(B_{\textsc{grad}},1)$, $\sigma_F \leftarrow \max(\sigma_F, 1)$.)

Also define
\[ r(\vecW_0) = \min\prn*{1, \frac3{4C}, \frac1{B_{\textsc{grad}}}}, C(\vecW_0) = \min\prn*{1,\frac{A_0(\vecW_0)^2 r(\vecW_0)}{128A_1(\vecW_0)^2}}.\] 

Now we choose $T, \eta$ based on cases:
\begin{enumerate}
\item If $\beta \le \frac{d}{\epsilon/\prn*{\log 1/\epsilon}^5}$: Take $T, \eta$ such that
\[ \eta = r(\vecW_0) \frac{d}{\beta},\]
and $T$ is the unique solution to the equation 
\[ z \log^7(20z) = C(\vecW_0) \frac{\beta^2}{d^2}.\]
Existence and uniqueness is clear since $z \log^7(20z)$ is surjective on $\mathbb{R}_{\ge 0}$ and for every positive real $t$, exactly one positive real $z$ is such that $z \log^7 (20z)=t$. By the same argument as in \pref{lem:actuallypotentialsimplesetting}, this means $T \le \frac{\beta^2}{d^2}$. Also note this means 
\[ \eta \le \min\prn*{1, \frac3{4C}, \frac1{B_{\textsc{grad}}},  \frac{d}{\beta}}.\]
\item If $\beta \ge \frac{d}{\epsilon/\prn*{\log 1/\epsilon}^5}$: Take $T, \eta$ such that 
\[ \eta = r(\vecW_0)\cdot\frac{\epsilon}{\prn*{\log 1/\epsilon}^5}, T = \floor{ \frac{C(\vecW_0)}{\epsilon^2}\cdot\prn*{\log 1/\epsilon}^2}.\]
Note this implies $T \le C(\vecW_0) \cdot \frac{1}{\epsilon^2}\prn*{\log 1/\epsilon}^2 \le \frac{1}{\epsilon^2}\prn*{\log 1/\epsilon}^2$.
\end{enumerate}
Note as $\epsilon \le 1/e$, if $\beta \ge d\sqrt{\log^7 (20)/C(\vecW_0)}$ and $\epsilon \le \sqrt{C(\vecW_0)}$, then $T \ge 1$. 

Also note in all cases that $\eta \le \min\prn*{1, r(\vecW_0)}$, since $\epsilon \le 1$, $\beta \ge d$.
\end{lemma}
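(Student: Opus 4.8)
The plan is to follow the proof of \pref{lem:actuallypotentialsimplesetting} essentially line for line, with two substitutions: wherever that proof invokes the one-step discretization bound \pref{lem:onesteprecursion}, we instead invoke its stochastic counterpart \pref{lem:onesteprecursionstochastic}; and wherever it bounds the gradient norm $\nrm*{\grad F(\vecW_t)}$ along the iterates, we instead bound the stochastic gradient norm $\nrm*{\grad f(\vecW_t;\vecZ_t)}$ using \pref{lem:stochasticgradsetting}. The heavier powers of logarithm appearing in the hypotheses ($\log^7(20)$ in the lower bound on $\beta$, $(\log 1/\epsilon)^5$ in the step size, and the root $z=1/C'$ of $\log^7(\sqrt{20}z\log z)/\log^8 z = 1/2$) are precisely the slack needed to absorb the new logarithmic factors this substitution introduces.

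Concretely, I would first set up the identical scaffolding: let $\mathfrak{F}_t$ be the natural filtration generated by $\vecW_j,\vecEps_j,\vecZ_j$ for $j\le t$; define the stopping time $\tau := \min\crl*{T,\ \inf\crl*{t : \rho_{\Phi}\prn*{\Phi(\vecW_t)} > \kappa' \rho_{\Phi}\prn*{\Phi(\vecW_0)}}}$; and define the auxiliary process $Y_t$ exactly as in the proof of \pref{lem:actuallypotentialsimplesetting}, except that the per-step remainder is now $R(\vecW_j,\Phi,\eta,\beta,d) = \tfrac12\eta^2\mathbb{E}_{\vecZ_j}\brk*{\nrm*{\grad f(\vecW_j;\vecZ_j)}^2} + \tfrac{2C}3\eta^3\mathbb{E}_{\vecZ_j}\brk*{\nrm*{\grad f(\vecW_j;\vecZ_j)}^3} + 2C(\eta d/\beta)^{3/2}$. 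By \pref{lem:onesteprecursionstochastic}, $Y_t$ is a supermartingale with respect to $\mathfrak{F}_t$ (the stochastic analogue of the supermartingale lemma used in the exact-gradient proof), so the same Azuma--Hoeffding argument produces an event $E_1$ of probability at least $0.95$ on which $Y_t - Y_0 \le \sqrt{\tfrac12\prn*{\sum_{t=0}^{\tau-1}C(\eta,t,d,\beta)^2}\log(20T)}$ for all $1\le t\le T$, where the increment size $C(\eta,t,d,\beta)$ is controlled by $\nrm*{-\eta\grad f(\vecW_t;\vecZ_t)+\sqrt{2\eta/\beta}\,\vecEps_t}$.

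The one genuinely new ingredient is the control of the stochastic gradients. For $t<\tau$ the definition of $\tau$ together with \pref{ass:iteratesinballassumption} forces $\vecW_t\in\ball(\vecOrigin,R_1)$, so \pref{ass:holderF} gives $\nrm*{\grad F(\vecW_t)}\le L\prn*{R_1+\nrm*{\vecW^{\star}}}^s$, and the sub-Gaussian tail bound of \pref{lem:stochasticgradsetting} produces an event $E_0$ of probability at least $0.95$ on which $\nrm*{\grad f(\vecW_t;\vecZ_t)}\le \nrm*{\grad F(\vecW_t)}+\sigma_F\sqrt{\log(20T)} \le B_{\textsc{grad}}\sqrt{\log(20T)}$ for all $t<\tau$, with $B_{\textsc{grad}}=2\prn*{L\prn*{R_1+\nrm*{\vecW^{\star}}}^s+\sigma_F}$ as in the statement; the in-expectation bounds of \pref{lem:stochasticgradsetting} similarly give $R(\vecW_j,\Phi,\eta,\beta,d)\le \tfrac32 CB_{\textsc{grad}}^3\eta^2 + 2C(\eta d/\beta)^{3/2}$ up to absolute constants on this event. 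From here the contradiction argument is structurally identical to the exact-gradient case: assuming some $\vecW_t$ with $t\le T-1$ violates the claimed bound forces $\tau<T$; telescoping the supermartingale (and using $\eta\le\tfrac3{4C}$, $F_{\epsilon}\ge 0$) gives the lower bound $Y_\tau-Y_0 \ge A_0(\vecW_0) - \prn*{\tfrac32 CB_{\textsc{grad}}^3\eta^2 + 2C(\eta d/\beta)^{3/2}}T$ as in \pref{eq:lowerboundgradoraclepotential}, while the concentration bound plus the gradient bound gives $Y_\tau-Y_0 \le 12\sqrt{2}\prn*{B_{\textsc{grad}}\eta\sqrt{\log(20T)}+\sqrt{\eta d/\beta}}\sqrt{T\log(20T)}$, the only difference from \pref{eq:upperboundgradoraclepotential} being the extra $\sqrt{\log(20T)}$ factor multiplying $B_{\textsc{grad}}\eta$. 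Then one substitutes the case-specific $(\eta,T)$ and checks, via the same two chains of inequalities as in \pref{lem:actuallypotentialsimplesetting}, that this upper bound is strictly below $A_0(\vecW_0)>0$, the desired contradiction; a union bound over $E_0,E_1$ then gives the claimed probability $0.9$.

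The main obstacle is the logarithmic bookkeeping in this last step. In case 1, $\eta = r(\vecW_0)d/\beta$ and $T$ solves $z\log^7(20z)=C(\vecW_0)\beta^2/d^2$, so $T\le\beta^2/d^2$ and $\log(20T)\le \log 20 + 2\log(\beta/d)$; the upper bound on $Y_\tau-Y_0$ scales like $\eta\sqrt{T}\cdot B_{\textsc{grad}}\cdot\log(20T)$, and one must verify that choosing $T$ via the $\log^7(20z)$ equation (rather than the $\log(10z)$ of \pref{lem:actuallypotentialsimplesetting}) makes this product $<A_0(\vecW_0)/2$ — the $\sqrt{\log(20T)}$ from the stochastic gradient bound enters once linearly in the martingale increments and, through $R$, once cubed, which is exactly what raises the required power from $1$ to $7$. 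In case 2 the analogous check, with $\eta=r(\vecW_0)\epsilon/(\log 1/\epsilon)^5$ and $T=\floor*{C(\vecW_0)\epsilon^{-2}(\log 1/\epsilon)^2}$, is what forces the $(\log 1/\epsilon)^5$ in the step size and the hypothesis $\epsilon<C'$. Everything else — the supermartingale property, the Azuma--Hoeffding step, the telescoping lower bound, the bound $\eta\le\min(1,r(\vecW_0))$, and the claim $T\ge 1$ under $\beta\ge d\sqrt{\log^7(20)/C(\vecW_0)}$ — transfers mechanically from the exact-gradient proof once $\grad F$ is replaced by $\grad f$ and \pref{lem:stochasticgradsetting} is invoked.
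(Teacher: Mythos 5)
Your proposal matches the paper's own proof essentially step for step: the same stopping time $\tau$, the same process $Y_t$ (your $R$ via conditional moments of $\grad f$ is equivalent to the paper's deterministic bound built from \pref{lem:stochasticgradsetting}), the same two events — an Azuma--Hoeffding event and a high-probability bound $\nrm*{\grad f(\vecW_t;\vecZ_t)}\le B_{\textsc{grad}}\sqrt{\log(20T)}$ — each at level $0.95$, and the same contradiction argument with the case-specific $(\eta,T)$. The only inaccuracy is in your side commentary on where $\log^7$ comes from: $R$ is bounded through moments and contributes no logarithm, and the squared increment term contributes a full $\log(20T)$ rather than $\sqrt{\log(20T)}$, but the slack in the definition of $T$ absorbs this and the argument goes through unchanged.
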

\begin{proof}
Define $\mathfrak{F}_t$ by the natural filtration with respect to $\vecW_j, \vecEps_j,\vecZ_j$ for all $j \le t$. Again, let 
\[ \tau := \min\{T, \inf\{t:\rho_{\Phi}\prn*{\Phi(\vecW_t)} > \kappa' \rho_{\Phi}\prn*{\Phi(\vecW_0)}\},\] 
where $\kappa'$ comes from \pref{ass:iteratesinballassumption}.

Again, define a stochastic process $Y_t$ by 
\[ Y_t := \begin{cases}\theta\prn*{\Phi(\vecW_t)}+\sum_{j=0}^{t-1} \prn*{\eta F_{\epsilon}(\vecW_j) - R(\vecW_j, \Phi, \eta, \beta, d)}&\IF t \le \tau \\ Y_{\tau}&\OTHERWISEIF t > \tau, \end{cases}\]
where now we have
\[ R(\vecW_j, \Phi, \eta, \beta, d) := \frac12 \eta^2 \prn*{2\sigma_F^2+2\nrm*{\nabla F(\vecW_t)}^2}+\frac{2C}3 \eta^3\prn*{8\sigma_F^3 + 4\nrm*{\grad F(\vecW_t)}^3} + 2C(\eta d/\beta)^{3/2}.\]
Note from \pref{lem:stochasticgradsetting} that 
\[ R(\vecW_j, \Phi, \eta, \beta, d) \ge \frac12 \eta^2 \nrm*{\nabla f(\vecW_t;\vecZ_t)}^2+\frac{2C}3 \nrm*{\grad f(\vecW_t;\vecZ_t)}^3 + 2C(\eta d/\beta)^{3/2}.\]
The same derivation as in the proofs of \pref{lem:Ytsupermartingalesimplesetting} (now using \pref{lem:onesteprecursionstochastic}) and \pref{lem:azumahoeffding} give the following two results adapted to this setting:
\begin{lemma}\label{lem:Ytsupermartingalestochasticgradsetting}
$Y_t$ is a supermartingale with respect to $\mathfrak{F}_t$.
\end{lemma}
\begin{lemma}\label{lem:azumahoeffdingstochasticgradsetting}
With probability at least $1-\delta$, we have
\[ Y_t - Y_0 \le \sqrt{\frac12 \prn*{\sum_{t=0}^{T-1} C(\eta, t, d, \beta)^2}\log(T/\delta)}\]
for all $1 \le t \le T$, where 
\[C(\eta, t, d, \beta) = 4\sqrt{\theta\prn*{\rho_{\Phi}^{-1}\prn*{\kappa \rho_{\Phi}\prn*{\Phi(w_0)}}}} \cdot \nrm*{-\eta \nabla f(\vecW_t;\vecZ_t)+\sqrt{\frac{2\eta}{\beta}} \vecEps_t}+ 4\nrm*{-\eta \nabla f(\vecW_t;\vecZ_t)+\sqrt{\frac{2\eta}{\beta}} \vecEps_t}^2.\]
\end{lemma}
Note we need to have high-probability control over the $\nabla f(\vecW_t;\vecZ_t)$, rather than just control over their moments, to upper bound the $C(\eta, t, d, \beta)$ in the above.

Denote the event from \pref{lem:azumahoeffdingstochasticgradsetting} with $\delta=0.05$ by $E_1'$, which occurs with probability at least $0.95$. 

Also denote the event from \pref{lem:stochasticgradsetting} with $\delta=0.05$ by $E_1''$, which occurs with probability at least $0.95$. 

Now define $E_1=E_1' \cap E_1''$, which occurs with probability at least 0.9. We claim that if $E_1$ occurs, then for all $0 \le t \le T-1$ we have $\rho_{\Phi}\prn*{\Phi(\vecW_t)} \le\kappa' \rho_{\Phi}\prn*{\Phi(\vecW_0)}$. This clearly finishes the proof.

Suppose for the sake of contradiction that there exists $0 \le t \le T-1$ where $\rho_{\Phi}\prn*{\Phi(\vecW_t)} > \kappa' \rho_{\Phi}\prn*{\Phi(\vecW_0)}$. Hence we have $\tau < T$ and for that $\tau$, $\rho_{\Phi}\prn*{\Phi(\vecW_{\tau})} > \kappa' \rho_{\Phi}\prn*{\Phi(\vecW_0)}$. First note if $T=0$ this is not possible, so suppose $T \ge 1$ from now on. Thus $\log(20T) > 1$.

Then by \pref{ass:iteratesinballassumption}, for all $t < \tau$ we have $\vecW_t \in \ball(\vecOrigin,R_1')$. Hence for all $t<\tau$ we have by \pref{ass:holderF} that
\[ \nrm*{\grad F(\vecW_t)}=\nrm*{\grad F(\vecW_t)-\grad F(\vecW^{\star})} \le L\nrm*{\vecW_t-\vecW^{\star}}^s \le L\prn*{R_1+\nrm*{\vecW^{\star}}}^s.\]
Since $E_1''$ holds as we condition on $E_1$, by \pref{lem:stochasticgradsetting} gives that for all $0 \le t \le T-1$ we have
\[ \nrm*{\grad f(\vecW_t;\vecZ_t)} \le \nrm*{\grad F(\vecW_t)}+\sigma_F\sqrt{\log(20T)} \le B_{\textsc{grad}}\sqrt{\log(20T)}.\numberthis\label{eq:upperboundgradhighprob}\]
We assumed without loss of generality that $L, R_1 \ge 1$, thus the above upper bound is at least 1. (Note compared to the proof of \pref{lem:actuallypotentialsimplesetting} that the definition of $B_{\textsc{grad}}$ changed.)

\pref{lem:azumahoeffding} gives us a way to upper bound $Y_{\tau}-Y_0$ (since we condition on $E_1$), so now let's derive a lower bound on $Y_{\tau}-Y_0$. We will then show that these upper and lower bounds are contradictory to complete the proof.

Note 
\begin{align*}
R(\vecW_j, \Phi, \eta, \beta, d) &= \frac12 \eta^2 \prn*{2\sigma_F^2+2\nrm*{\nabla F(\vecW_t)}^2}+\frac{2C}3 \eta^3\prn*{8\sigma_F^3 + 4\nrm*{\grad F(\vecW_t)}^3} + 2C(\eta d/\beta)^{3/2} \\
&\le \frac12\eta^2 B_{\textsc{grad}}^2 + \frac{2C}3 \eta^3 B_{\textsc{grad}}^3 + 2C(\eta d/\beta)^{3/2} \\
&\le \frac{3}{2} C B_{\textsc{grad}}^3+ 2C(\eta d/\beta)^{3/2}.
\end{align*}
The above uses $\eta \le 1$, our assumption we made without loss of generality that $C \ge 1$, the definition of $B_{\textsc{grad}}$, and that $B_{\textsc{grad}} \ge 1$. 

Now by definition of $\tau$ and as $Y_0=\theta\prn*{\Phi(\vecW_0)}$, similarly to the proof of \pref{lem:actuallypotentialsimplesetting} we get 
\begin{align*}
Y_{\tau}-Y_0 &> \theta\prn*{\rho_{\Phi}^{-1}\prn*{\kappa' \rho_{\Phi}\prn*{\Phi(\vecW_0)}}}-\theta\prn*{\Phi(\vecW_0)}-\prn*{\frac32 C B_{\textsc{grad}}^3 \eta^2 + 2C (\eta d / \beta)^{3/2}}T. \numberthis\label{eq:lowerboundgradestimatepotential}
\end{align*}
Now we use \pref{lem:azumahoeffdingstochasticgradsetting} to upper bound $Y_{\tau}-Y_0$. Again recall $\nrm{\vecEps_t}=\sqrt{d}$ always holds, as well as $\theta\le1$. Thus via the same derivation as in the proof of \pref{lem:actuallypotentialsimplesetting}, using \pref{eq:upperboundgradhighprob}, we obtain for all $t<\tau$ that
\begin{align*}
C(\eta, t, d, \beta) &\le 24 \prn*{B_{\textsc{grad}} \eta+\sqrt{\eta d / \beta} }\log^6(20T).
\end{align*}

This implies 
\begin{align*}
Y_{\tau} - Y_0 &\le \sqrt{\frac12 \prn*{\sum_{t=0}^{\tau-1} C(\eta, t, d, \beta)^2}\log^7(20T)}\\
&\le 12\sqrt{2} \prn*{ B_{\textsc{grad}} \eta+\sqrt{\eta d / \beta} } \sqrt{T\log^7\prn*{20T}}.\numberthis\label{eq:upperboundgradestimatepotential}
\end{align*}
Similarly as before, putting together our lower and upper bounds \pref{eq:lowerboundgradestimatepotential} and \pref{eq:upperboundgradestimatepotential} on $Y_{\tau}-Y_0$, now we aim to show the following cannot hold:
\begin{align*}
0<A_0(\vecW_0) &= \rho_{\Phi}^{-1}\prn*{\kappa' \rho_{\Phi}\prn*{\Phi(\vecW_0)}}-\Phi(\vecW_0) \\
&\le \prn*{\frac32 C B_{\textsc{grad}}^3 \eta^2 + 2C (\eta d / \beta)^{3/2}}T + 12\sqrt{2} \prn*{ B_{\textsc{grad}} \eta+\sqrt{\eta d / \beta} } \sqrt{T\log^7\prn*{20T}}. \numberthis \label{eq:potentialestimateineq}
\end{align*}
Again note the left hand side is a positive constant. We aim to show with our choice of $\eta$ and $T$ that this gives contradiction. Break into our original cases:
\begin{enumerate}
    \item If $\beta \le \frac{d}{\epsilon/\prn*{\log 1/\epsilon}^5}$: Once more by our choice of $\eta$, we have $\eta \le 1, \frac{d}{\beta}=\frac{\eta}{r(\vecW_0)}$, thus
    \[  \prn*{\eta d / \beta}^{1/2}, \prn*{\eta d / \beta}^{3/2} \le \frac{\eta}{r(\vecW_0)^{3/2}}. \]
    Therefore, an analogous derivation as in the proof of \pref{lem:actuallypotentialsimplesetting} gives that the right hand side of \pref{eq:potentialestimateineq} is at most 
    \begin{align*}  
    \frac{A_1(\vecW_0)}{r(\vecW_0)^{3/2}} \prn*{2\eta^2 T + 2\eta\sqrt{T\log^7 (20T)}} &\le \frac{4A_1(\vecW_0)}{r(\vecW_0)^{3/2}} \eta \sqrt{T\log^7 (20T)} \\
    &\le \frac{4A_1(\vecW_0)}{r(\vecW_0)^{3/2}} \cdot r(\vecW_0) \cdot \frac{d}{\beta} \cdot\frac{A_0(\vecW_0) r(\vecW_0)^{1/2}}{8A_1(\vecW_0)} \cdot \frac{\beta}d \\
    &< \frac{A_0(\vecW_0)}2.
    \end{align*}
    The first inequality follows as $T \ge 1$ and $\eta\sqrt{T} \le 1$. The fourth inequality follows recalling the definitions of $\eta$ and $T$ in terms of $A_0(\vecW_0)$, $A_1(\vecW_0)$, $r(\vecW_0)$ and $C(\vecW_0)$ (note $z \log^7(20z)$ is increasing on $x \ge 1$). The last inequality follows from definition of $C(\vecW_0)$. As $A_0(\vecW_0)>0$, this contradicts \pref{eq:potentialestimateineq} which is exactly what we want.
    \item If $\beta \ge \frac{d}{\epsilon/\prn*{\log 1/\epsilon}^5}$: The strategy is similar. This time, we have by the condition that
    \[ \frac{d}{\beta} \le \frac{\epsilon}{\prn*{\log 1/\epsilon}^5} = \frac{\eta}{r(\vecW_0)},\]
    which implies 
    \[\prn*{\frac{\eta d}{\beta}}^{1/2} \le \frac{\eta}{r(\vecW_0)^{1/2}}.\]
    Therefore, an analogous derivation as in the proof of \pref{lem:actuallypotentialsimplesetting} gives that the right hand side of \pref{eq:potentialestimateineq} is at most 
    \begin{align*}  
    &\frac{A_1(\vecW_0)}{r(\vecW_0)^{3/2}} \prn*{2\eta^2 T + 2\eta\sqrt{T\log^7 (20T)}} \\
    &\le \frac{4A_1(\vecW_0)}{r(\vecW_0)^{3/2}} \eta \sqrt{T\log^7 (20T)} \\
    &\le \frac{4A_1(\vecW_0)}{r(\vecW_0)^{3/2}} \cdot r(\vecW_0) \cdot \frac{\epsilon}{\prn*{\log 1/\epsilon}^5} \cdot \frac{\sqrt{C(\vecW_0)}}{\epsilon} \cdot (\log 1/\epsilon)\cdot  \sqrt{\log^7 \prn*{\frac{20}{\epsilon^2}\prn*{\log \prn*{1/\epsilon}^2}}} \\
    &\le \frac{A_0(\vecW_0)}2 \sqrt{\frac{\log^7\prn*{\frac{\sqrt{20}}{\epsilon}\prn*{\log 1/\epsilon}}}{\log^8\prn*{1/\epsilon}}} \\
    &\le \frac{A_0(\vecW_0)}{2\sqrt{2}}.
    \end{align*}
    The first inequality follows as $T \ge 1$, $\epsilon \le 1/e$ and so $\eta\sqrt{T} \le 1$. The second inequality is by definition of $\eta$ and $T$ and as $C(\vecW_0) \le 1$, $T \ge 1$ (note $z \log^7 (20z)$ is increasing on $z \ge 1$). The third inequality is by definition of $C(\vecW_0)$. The last inequality is by definition of $\epsilon$ and $C'$. In detail, since $\frac{\log^7(\sqrt{20}z(\log z))}{\log^8(z)}$ is continuous, decreasing for large enough $z$, and $\lim_{z\rightarrow\infty}\frac{\log^7(\sqrt{20}z(\log z))}{\log^8(z)}=0$, let $z:=\frac1{C'}$ be the largest solution to $\frac{\log^7(\sqrt{20}z(\log z))}{\log^8(z)}=1/2$. Thus, as $\epsilon<C'$ we have the last inequality. This contradicts \pref{eq:potentialestimateineq} as $A_0(\vecW_0)>0$, which again is exactly what we want.
\end{enumerate}
In all cases we obtain a contradiction conditioned on $E_1$, which occurs with probability at least 0.9 from the earlier discussion. Hence with probability at least 0.9 we have $\rho_{\Phi}(\Phi(\vecW_t)) \le \kappa' \rho_{\Phi}(\Phi(\vecW_0))$ for all $0 \le t \le T-1$ as desired.
\end{proof}

Finally, we can conclude again similarly as the proof of \pref{thm:discretizationsimpleoracleformal} to prove \pref{thm:discretizationsimpleestimateformal}.
\begin{proof}
Again note by the logic in \pref{lem:actuallypotentialstochasticsetting}, based on our cases on $\beta$ and $\epsilon$, the $T$ that we choose will always be at least 1. Moreover, in the same way as in the proof of \pref{thm:discretizationsimpleoracleformal}, we can reduce to proving the case when $\epsilon \le \min\prn*{1/e, C', \sqrt{\frac{C(\vecW_0)}{\log^7(20)}}}$ where $C'$ is defined from \pref{lem:actuallypotentialstochasticsetting}. We also have $\epsilon, \eta \le 1$ as a consequence.

Let $E_1$ be the event that $\rho_{\Phi}(\Phi(\vecW_t)) \le \kappa' \rho_{\Phi}(\Phi(\vecW_0))$ for all $1 \le t \le T$. From \pref{lem:actuallypotentialstochasticsetting}, we know $E_1$ holds with probability at least 0.9 for the choice of $\eta, T$ given there. By \pref{ass:iteratesinballassumption}, this means that conditioned on $E_1$, all the $\vecW_t \in \ball(\vecOrigin,R_1)$ for $1 \le t \le T$. By the same derivation as \pref{lem:actuallypotentialsimplesetting}, this means 
\[ \nrm*{\grad F(\vecW_t)} \le L\prn*{R_1+\nrm*{\vecW^{\star}}}^s \FORALLTEXT 1 \le t \le T.\numberthis\label{eq:gradupperboundestimate}\]
Once more, by Markov's Inequality, with probability at least 0.9,
\[ \sum_{t=0}^{T-1} F_{\epsilon}(\vecW_t) \theta'\prn*{\Phi(\vecW_t)} \le 10\mathbb{E}\brk*{ \sum_{t=0}^{T-1} F_{\epsilon}(\vecW_t) \theta'\prn*{\Phi(\vecW_t)}}.\numberthis\label{eq:markovestimate}\]
Let $E_2$ be the event that this above inequality holds.

Summing and telescoping from \pref{lem:onesteprecursionstochastic} and using that $\theta\prn*{\Phi(z)} \ge 0$, we obtain
\begin{align*}
\eta \mathbb{E}\brk*{\sum_{t=0}^{T-1} F_{\epsilon}(\vecW_t) \theta'\prn*{\Phi(\vecW_t)}} &\le \theta(\Phi(\vecW_0)) + 2C \prn*{\eta d / \beta}^{3/2} T \\
&+ \frac12 \eta^2 \sum_{t=0}^{T-1} \mathbb{E}\brk*{\nrm*{\nabla f(\vecW_t;\vecZ_t)}^2}+\frac{2C}3 \eta^3 \sum_{t=0}^{T-1} \mathbb{E}\brk*{\nrm*{\nabla f(\vecW_t;\vecZ_t)}^3}.
\end{align*}
Here, we took full expectations over $\{\vecEps_t,\vecZ_t\}_{0\le t \le T-1}$ in the above.

Using \pref{lem:stochasticgradsetting} and taking full expectations, we have
\[ \mathbb{E}\brk*{\nrm*{\nabla f(\vecW_t;\vecZ_t)}^2} \le 2\sigma_F^2+2\mathbb{E}\brk*{\nrm*{\nabla F(\vecW_t)}^2}.\]
\[ \mathbb{E}\brk*{\nrm*{\nabla f(\vecW_t;\vecZ_t)}^3} \le 8\sigma_F^3+4\mathbb{E}\brk*{\nrm*{\nabla F(\vecW_t)}^3}.\]
Using these in the above we see that
\begin{align*}
\eta \mathbb{E}\brk*{\sum_{t=0}^{T-1} F_{\epsilon}(\vecW_t) \theta'\prn*{\Phi(\vecW_t)}} &\le \theta(\Phi(\vecW_0)) + 2C \prn*{\eta d / \beta}^{3/2} T \\
&\hspace{1in}+ \frac12 \eta^2 \sum_{t=0}^{T-1} \prn*{2\sigma_F^2+2\mathbb{E}\brk*{\nrm*{\nabla F(\vecW_t)}^2}}\\
&\hspace{1in}+\frac{2C}3 \eta^3 \sum_{t=0}^{T-1} \prn*{8\sigma_F^3+4\mathbb{E}\brk*{\nrm*{\nabla F(\vecW_t)}^3}}. \numberthis\label{eq:expupperboundestimate}
\end{align*}

Finally, consider $\sum_{t=0}^{T-1} \mathbb{E}\brk*{\nrm*{\nabla F(\vecW_t)}^r}$ for $r \in \{2,3\}$. The same logic as the proof of \pref{thm:discretizationsimpleoracleformal} gives that with probability at least 0.975 for a given $r \in \{2,3\}$
\begin{align*}
\sum_{t=0}^{T-1} \nrm*{\nabla F(\vecW_t)}^r &\ge  \sum_{t=0}^{T-1} \mathbb{E}\brk*{\nrm*{\nabla F(\vecW_t)}^r}- 2\sqrt{2}\max_{0 \le t \le T-1}\prn*{\nrm*{\nabla F(\vecW_t)}^r} \cdot \sqrt{T\log 40}.\numberthis\label{eq:discretizationestimatemartingale}
\end{align*}
Let $E_3$ be the intersection of these two events for $r\in\{2,3\}$, so $E_3$ has probability at least 0.95.

Let 
\begin{align*}
M_1(\vecW_0) &= 10\max\prn*{\theta\prn*{\Phi(\vecW_0)}, 6C B_{\textsc{grad}}^3},
\end{align*}
which is just a $\vecW_0$-dependent constant. (Note compared to the proof of \pref{thm:discretizationsimpleoracleformal} that the definition of $B_{\textsc{grad}}$ changed.)

Now we put these steps together and do a Union Bound over $E_1, E_2, E_3$. Let $E=E_1 \cap E_2 \cap E_3$; we have that $E$ occurs with probability at least 0.75. 

Then conditioned on $E$, combining \pref{eq:gradupperboundestimate}, \pref{eq:markovestimate}, \pref{eq:expupperboundestimate}, \pref{eq:discretizationestimatemartingale} in the same manner we used to prove \pref{thm:discretizationsimpleoracleformal}, we see
\begin{align*}
\sum_{t=0}^{T-1}  F_{\epsilon}(\vecW_t) \theta'\prn*{\Phi(\vecW_t)}&\le 10\mathbb{E}\brk*{ \sum_{t=0}^{T-1}  F_{\epsilon}(\vecW_t) \theta'\prn*{\Phi( \vecW_t)}} \\
&\le M_1(\vecW_0) \prn*{\frac{1}{\eta} + (d / \beta)^{3/2} \eta^{1/2} T + \eta T + \eta^2 T + \sqrt{T}}.
\end{align*}
This uses that $B_{\textsc{grad}} = L\prn*{R_1+\nrm*{\vecW^{\star}}}^s+\sigma_F$ and straightforward estimates.

As before, conditioned on $E$ we have 
\[ \theta'\prn*{\Phi(\vecW_t)} \ge \theta'\prn*{\rho_{\Phi}^{-1}\prn*{\kappa' \rho_{\Phi}\prn*{\Phi(\vecW_0)}}}\FORALLTEXT 0 \le t \le T-1. \]
Thus, defining 
\[ M(\vecW_0) = \frac{M_1(\vecW_0)}{\theta'\prn*{\rho_{\Phi}^{-1}\prn*{\kappa' \rho_{\Phi}\prn*{\Phi(\vecW_0)}}}} \in (0,\infty)\]
we see that conditioned on $E$ which occurs with probability at least 0.75 we have, via identical steps as before,
\begin{align*}
\frac1T \sum_{t=0}^{T-1} F_{\epsilon}(\vecW_t) &\le 3M(\vecW_0) \prn*{\frac1{\eta T} + \eta + d/\beta}.
\end{align*}

We break into cases based on how we set $\eta, T$:
\begin{enumerate}
    \item If $\beta \le \frac{d}{\epsilon/\prn*{\log 1/\epsilon}^5}$: Now recall we set $\eta = r(\vecW_0) \frac{d}{\beta}\le  d/\beta$, and that we let $T$ be the floor of the unique solution to the equation 
    \[ z \log^7 \prn*{20z} = C(\vecW_0) \frac{\beta^2}{d^2},\]
    where $C(\vecW_0)$ is defined according to \pref{lem:actuallypotentialstochasticsetting}. Recall we had $T \ge 1$ as well as 
    \[ T \le C(\vecW_0) \frac{\beta^2}{d^2} \le \frac{\beta^2}{d^2}.\]
    Since $T \ge 1$, and as $z\log^7(20z)$ is increasing for $z \ge 1$, it follows via definition of $T$ (note $2\floor{z} \ge z$ for all $z \ge 1$) that 
    \[ 2T \log^7(40T) \ge C(\vecW_0) \frac{\beta^2}{d^2}, \]
    hence
    \[ \eta T \ge \frac{r(\vecW_0)C(\vecW_0)}{2\log^7 (40T)}\frac{\beta}d.\]
    Thus, we have with probability at least 0.75 that 
    \begin{align*}
    \frac1T \sum_{t=0}^{T-1} F_{\epsilon}(\vecW_t) &\le 6M(\vecW_0) \left(\frac{2^7\log^7(40)+2^{14}\log^7(\beta/d)}{r(\vecW_0)C(\vecW_0)} + 1\right) \frac{d}{\beta}
    \end{align*}
    that is, we obtain $\widetilde{O}(d/\beta)$ suboptimality with at most $\frac{\beta^2}{d^2}$ iterations. (This step uses the inequality $(a+b)^7 \le 2^7(a^7+b^7)$ for $a,b\ge 0$.)
    \item If $\beta \ge \frac{d}{\epsilon/(\log 1/\epsilon)^5}$: Recalling how we set $\eta$ and the definition of this case, we have $\eta, d/\beta \le \epsilon$. Moreover, note $\eta T \ge \frac{r(\vecW_0)C(\vecW_0)}{2\epsilon \prn*{\log 1/\epsilon}^3}$ by analogous logic as in the proof of \pref{thm:discretizationsimpleoracleformal}. Hence, we obtain with probability at least 0.75 that
    \begin{align*}
    \frac1T \sum_{t=0}^{T-1} F_{\epsilon}(\vecW_t) &\le 3M(\vecW_0) \prn*{\frac{2\epsilon (\log 1/\epsilon)^3}{r(\vecW_0)C(\vecW_0)}+2\epsilon} \\
    &\le 6M(\vecW_0) \prn*{\frac1{r(\vecW_0)C(\vecW_0)}+1}\epsilon \prn*{\log 1/\epsilon}^3.
    \end{align*}
    That is, we obtain $\widetilde{O}(\epsilon)$ suboptimality with at most $T \le \frac1{\epsilon^2}\prn*{\log 1/\epsilon}^2$ iterations.
\end{enumerate}
\end{proof}

\subsection{Proof of \pref{thm:discretizationloosenoracle}}\label{subsec:loosenconditionproofs}
We formally state \pref{thm:discretizationloosenoracle} and the algorithm in question as follows.
\begin{theorem}\label{thm:discretizationloosenoracleformal}
Suppose the geometric property \pref{eq:loosenedcondition} holds:
\[ \tri*{ \nabla \Phi(\vecW), \nabla F(\vecW)} \ge F_{\epsilon}(\vecW) +\min\prn*{0, \frac1{\beta} \Delta\Phi(\vecW)}.\]
Consider running \pref{alg:sgldmodlaplacian}, following the same $\eta, T$ as well as cutoff for $\epsilon$ as from \pref{lem:actuallypotentialsimplesetting}. Then we have the same runtime and error guarantees as \pref{thm:discretizationsimpleoracleformal}.
\end{theorem}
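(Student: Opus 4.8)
The plan is to show that \pref{alg:sgldmodlaplacian}, despite deciding at each step whether or not to inject Gaussian noise, still satisfies at every iterate the one-step recursion of \pref{lem:onesteprecursion}, after which the entire argument proving \pref{thm:discretizationsimpleoracleformal} --- namely \pref{lem:actuallypotentialsimplesetting} together with the telescoping-and-concentration conclusion --- transfers essentially verbatim. Recall \pref{alg:sgldmodlaplacian} queries $q_t := \tri*{\grad\Phi(\vecW_t),\grad F(\vecW_t)} - F_{\epsilon}(\vecW_t)$ at each step; if $q_t \ge 0$ it performs a plain gradient step $\vecW_{t+1} = \vecW_t - \eta\grad F(\vecW_t)$, and if $q_t < 0$ it performs a Langevin step $\vecW_{t+1} = \vecW_t - \eta\grad F(\vecW_t) + \sqrt{2\eta/\beta}\,\vecEps_t$. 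The point is that this decision is measurable with respect to the natural filtration $\mathfrak{F}_t$, so conditionally on $\mathfrak{F}_t$ the step is either deterministic or a standard Langevin step, and in both cases we recover the same bound.

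\textbf{One-step recursion.} When $q_t \ge 0$: applying \pref{lem:thirdordersmoothfromregularity} with $\vecU = -\eta\grad F(\vecW_t)$, using $\tri*{\grad\Phi(\vecW_t),\grad F(\vecW_t)}\ge F_{\epsilon}(\vecW_t)$ together with $\theta'\ge 0$, and bounding the quadratic term via $\theta'(z)\le 1/\rho_{\Phi,2}(z)$, one obtains $\theta(\Phi(\vecW_{t+1})) \le \theta(\Phi(\vecW_t)) - \eta\theta'(\Phi(\vecW_t))F_{\epsilon}(\vecW_t) + \tfrac12\eta^2\nrm*{\grad F(\vecW_t)}^2 + \tfrac{C}{6}\eta^3\nrm*{\grad F(\vecW_t)}^3$, which is pointwise no larger than the right-hand side of \pref{lem:onesteprecursion} (all the remaining terms there are nonnegative), and there is no randomness so the conditional-expectation form holds trivially. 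When $q_t < 0$, i.e. $\tri*{\grad\Phi(\vecW_t),\grad F(\vecW_t)} < F_{\epsilon}(\vecW_t)$: comparing with \pref{eq:loosenedcondition} forces $\min\prn*{0, \tfrac1\beta\Delta\Phi(\vecW_t)} < 0$, hence $\Delta\Phi(\vecW_t) < 0$ and $\tri*{\grad\Phi(\vecW_t),\grad F(\vecW_t)} \ge F_{\epsilon}(\vecW_t) + \tfrac1\beta\Delta\Phi(\vecW_t)$ --- exactly the inequality that \pref{lem:onesteprecursion} exploited through \pref{eq:admissablepotentialF}. Repeating the Taylor expansion of \pref{lem:onesteprecursion} with $\vecU = -\eta\grad F(\vecW_t) + \sqrt{2\eta/\beta}\,\vecEps_t$ and taking $\mathbb{E}_{\vecEps_t}$, the Laplacian term $\tfrac{\eta}{\beta}\theta'(\Phi(\vecW_t))\Delta\Phi(\vecW_t)$ produced by the noise cancels against the term produced by $-\eta\theta'(\Phi(\vecW_t))\tri*{\grad\Phi(\vecW_t),\grad F(\vecW_t)}$, yielding precisely the conclusion of \pref{lem:onesteprecursion}. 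Thus \pref{lem:onesteprecursion} holds at every step of \pref{alg:sgldmodlaplacian}.

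\textbf{Transfer of the rest.} Define $Y_t$ exactly as in the proof of \pref{lem:actuallypotentialsimplesetting}. Since the noise-injection decision is $\mathfrak{F}_t$-measurable and the one-step recursion holds conditionally in either case, $Y_t$ is still a supermartingale, and the bounded-increment hypothesis of the Azuma--Hoeffding step still holds: on a no-noise step the increment is $\nrm*{\vecW_{t+1}-\vecW_t} = \eta\nrm*{\grad F(\vecW_t)}$, which only improves the bound on $C(\eta,t,d,\beta)$ used there. Hence the same choice of $\eta, T$ and the same $\epsilon$ cutoff keep $\rho_{\Phi}(\Phi(\vecW_t)) \le \kappa'\rho_{\Phi}(\Phi(\vecW_0))$ for all $t\le T-1$ with probability at least $0.9$, which by \pref{ass:iteratesinballassumption} confines the iterates to $\ball(\vecOrigin,R_1)$, so $\nrm*{\grad F(\vecW_t)}\le B_{\textsc{grad}}$ via \pref{ass:holderF}. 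The concluding step of the proof of \pref{thm:discretizationsimpleoracleformal} --- summing \pref{lem:onesteprecursion}, a Markov bound on $\sum_t F_{\epsilon}(\vecW_t)\theta'(\Phi(\vecW_t))$, and the martingale tail bounds on $\sum_t\nrm*{\grad F(\vecW_t)}^r$ for $r\in\{2,3\}$ --- then goes through unchanged and gives the identical runtime and error guarantees.

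\textbf{Main obstacle.} The only genuine subtlety is confirming that interleaving deterministic and stochastic steps does not break the supermartingale and concentration arguments; this is resolved by the observation that the decision is $\mathfrak{F}_t$-adapted, so each conditional increment is either trivially bounded (deterministic case, where there is no Laplacian term to cancel in the first place) or bounded exactly as before (Langevin case, where \pref{eq:loosenedcondition} with $\Delta\Phi<0$ supplies the needed cancellation). A secondary point is that $F_{\epsilon}$ is discontinuous, but this is immaterial since none of these arguments use continuity of the cost function, as already noted in \pref{rem:costfunction}. (The stochastic-gradient variant, if desired, follows identically, using \pref{lem:onesteprecursionstochastic} and \pref{lem:stochasticgradsetting} in place of \pref{lem:onesteprecursion}.)
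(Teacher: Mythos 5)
Your proposal is correct and follows essentially the same route as the paper: establish the one-step recursion of \pref{lem:onesteprecursion} in both branches of \pref{alg:sgldmodlaplacian} (no-noise rounds via the plain gradient-domination inequality with no Laplacian to cancel, noise rounds via \pref{eq:loosenedcondition} forcing $\Delta\Phi<0$ so the admissibility-style cancellation goes through), then observe that the supermartingale/Azuma--Hoeffding machinery of \pref{lem:actuallypotentialsimplesetting} and the concluding telescoping argument of \pref{thm:discretizationsimpleoracleformal} are unaffected since $\nrm*{\vecEps_t}\in\{0,\sqrt{d}\}\le\sqrt{d}$ in all rounds. This is exactly the paper's proof (its \pref{lem:onesteprecursionloosen} and \pref{lem:actuallypotentialloosenedsimplesetting}), up to the immaterial choice of expanding to third rather than second order in the no-noise branch.
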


\begin{algorithm}[h!]
\caption{Modified Langevin Dynamics using Gradient Domination Information} 
\label{alg:sgldmodlaplacian}
\begin{algorithmic}
\State Initialize at $\vecW_0$. 
\For{each $t \ge 0$} 
\State Compute $\Delta_t:=\tri*{ \nabla \Phi(\vecW), \nabla F(\vecW)} - F_{\epsilon}(\vecW)$. 
    \If{$\Delta_t \ge 0$}
        \State $\vecW_{t+1} \leftarrow \vecW_t - \eta \nabla F(\vecW_t)$.
    \Else{If} 
        \State $\vecW_{t+1} \leftarrow \vecW_t - \eta \nabla F(\vecW_t) + \sqrt{\eta/\beta} \vecEps_t$ where $\vecEps_t \sim \sqrt{d}\mathcal{S}^{d-1}$ uniformly. (Recall $\mathcal{S}^{d-1}$ is the unit sphere in $\mathbb{R}^d$ so $\vecEps_t$ is philosophically a Gaussian.)%\ascomment{Why is there a \(\sqrt{d}\) factor here?} 
    \EndIf
\EndFor 
\end{algorithmic}
\end{algorithm} 

To prove \pref{thm:discretizationloosenoracleformal}, in fact the exact same proof of \pref{thm:discretizationsimpleoracleformal} will suffice. The main idea is that in all of our bounds involving $\nrm*{\vecEps_t}$, we use either Triangle Inequality or Young's Inequality to bound $\nrm*{-\eta \nabla F(\vecW_t)+\sqrt{\eta / \beta} \vecEps_t}$ and use that $\nrm*{\vecEps_t} \le \sqrt{d}$. However these results still hold when $\vecEps_t=0$, when no noise is added. Moreover, this result holds if $F_{\epsilon}$ is replaced with $A$ and $A$ is query-able, in the same way as described in \pref{rem:costfunction}.

Again we break the proof into similar parts, starting with the one-step discretization bound.
\begin{lemma}\label{lem:onesteprecursionloosen}
For one iteration starting at arbitrary $\vecW_t$,
\begin{align*}
\mathbb{E}_{\vecEps_t} \brk*{\theta\prn*{\Phi(\vecW_{t+1})}} &\le \theta\prn*{\Phi(\vecW_t)} - \eta  \theta'\prn*{\Phi(\vecW_t)} F_{\epsilon}(\vecW_t)\\
&\hspace{1in}+ \frac12 \eta^2 \nrm*{\nabla F(\vecW_t)}^2+1_{\cN_t}\prn*{\frac{2C}3 \eta^3\nrm*{\nabla F(\vecW_t)}^3 + 2C(\eta d/\beta)^{3/2}},
\end{align*}
where $p$ and $C$ are defined from \pref{lem:thirdordersmoothfromregularity}, and where $\cN_t$ is the indicator of if noise was added on round $t$.
\end{lemma}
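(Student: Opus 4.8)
The plan is to split the argument along the two branches of \pref{alg:sgldmodlaplacian}, determined by the sign of $\Delta_t=\tri*{\grad\Phi(\vecW_t),\grad F(\vecW_t)}-F_{\epsilon}(\vecW_t)$, and in each branch reuse a piece of the proof of \pref{lem:onesteprecursion} together with \pref{lem:thirdordersmoothfromregularity}. On rounds where $\Delta_t\ge0$ no noise is added, so $1_{\cN_t}=0$ and $\vecW_{t+1}=\vecW_t-\eta\grad F(\vecW_t)$ is deterministic; here I would apply the second (purely quadratic) inequality of \pref{lem:thirdordersmoothfromregularity} with $\vecU=-\eta\grad F(\vecW_t)$ to obtain $\theta\prn*{\Phi(\vecW_{t+1})}\le\theta\prn*{\Phi(\vecW_t)}-\eta\theta'\prn*{\Phi(\vecW_t)}\tri*{\grad\Phi(\vecW_t),\grad F(\vecW_t)}+\tfrac12\eta^2\nrm*{\grad F(\vecW_t)}^2$, and then use $\Delta_t\ge0$, i.e. $\tri*{\grad\Phi(\vecW_t),\grad F(\vecW_t)}\ge F_{\epsilon}(\vecW_t)$, together with $\theta'\ge0$ from \pref{lem:thirdordersmoothfromregularity}, to bound the middle term by $-\eta\theta'\prn*{\Phi(\vecW_t)}F_{\epsilon}(\vecW_t)$. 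This is precisely the stated inequality with the $1_{\cN_t}$-term vanishing, and $\mathbb{E}_{\vecEps_t}$ is trivial on this branch.

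On rounds where $\Delta_t<0$ the perturbed update is used ($\cN_t$ active), and this branch is essentially \pref{lem:onesteprecursion}. The one new observation I would record first is that $\Delta_t<0$, combined with $\Delta_t\ge\min\prn*{0,\tfrac1\beta\Delta\Phi(\vecW_t)}$ from \eqref{eq:loosenedcondition}, forces $\Delta\Phi(\vecW_t)<0$, so $\min\prn*{0,\tfrac1\beta\Delta\Phi(\vecW_t)}=\tfrac1\beta\Delta\Phi(\vecW_t)$ and on this round \eqref{eq:loosenedcondition} collapses to the admissibility-type inequality $\tri*{\grad\Phi(\vecW_t),\grad F(\vecW_t)}\ge F_{\epsilon}(\vecW_t)+\tfrac1\beta\Delta\Phi(\vecW_t)$ that powered \pref{lem:onesteprecursion}. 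From there the computation is identical: third-order Taylor-expand via \pref{lem:thirdordersmoothfromregularity} with $\vecU$ the perturbed step (the noise being scaled exactly so the Laplacian cancels, as in \pref{lem:onesteprecursion}); take $\mathbb{E}_{\vecEps_t}$, using that $\vecEps_t$ is mean-zero so the linear and cross terms drop and $\mathbb{E}_{\vecEps_t}\brk*{\tri*{\grad^2\Phi(\vecW_t)\vecEps_t,\vecEps_t}}=\Delta\Phi(\vecW_t)$ since $\mathbb{E}\brk*{(\vecEps_t)_i(\vecEps_t)_j}=\delta_{ij}$ for $\vecEps_t\sim\sqrt d\,\cS^{d-1}$; apply the collapsed form of \eqref{eq:loosenedcondition} and $\theta'\ge0$ so the two $\tfrac1\beta\Delta\Phi(\vecW_t)$ terms cancel; bound $\tfrac12\eta^2\theta'\prn*{\Phi(\vecW_t)}\tri*{\grad^2\Phi(\vecW_t)\grad F(\vecW_t),\grad F(\vecW_t)}\le\tfrac12\eta^2\nrm*{\grad F(\vecW_t)}^2$ via $\theta'(z)\le1/\rho_{\Phi,2}(z)$ and $\nrm*{\grad^2\Phi}_{\OPNORM}\le\rho_{\Phi,2}(\Phi)$; and bound the cubic remainder using $\nrm*{\mathbf a+\mathbf b}^3\le4\nrm*{\mathbf a}^3+4\nrm*{\mathbf b}^3$ and $\nrm*{\vecEps_t}=\sqrt d$, absorbing the absolute constants into the stated $\tfrac{2C}3$ and $2C$. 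This yields the claimed bound with the $1_{\cN_t}$-term present.

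The \textbf{main obstacle} is conceptual rather than computational: every inequality in the $\Delta_t<0$ branch is a copy of one in \pref{lem:onesteprecursion} or \pref{lem:thirdordersmoothfromregularity}, so the real work is in seeing that this branch split is the right one and that on noise rounds \eqref{eq:loosenedcondition} "self-activates" to exactly the inequality needed for the Laplacian cancellation. Concretely, the algorithm only ever queries $\Delta_t$, never $\Delta\Phi(\vecW_t)$ itself, so one must verify that the rounds on which $\Delta_t<0$ are precisely the rounds on which $\Delta\Phi(\vecW_t)<0$, and that \eqref{eq:loosenedcondition} then supplies the full $\tfrac1\beta\Delta\Phi(\vecW_t)$ term — which is the content of the first observation above. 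As in \pref{rem:costfunction}, the same argument goes through with $F_{\epsilon}$ replaced by any nonnegative queryable cost $A$.
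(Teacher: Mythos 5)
Your proposal is correct and follows essentially the same route as the paper: the paper likewise splits on whether noise is added, observes that on noise rounds the loosened condition \pref{eq:loosenedcondition} forces the full admissibility inequality $\tri*{\grad\Phi(\vecW_t),\grad F(\vecW_t)}\ge F_{\epsilon}(\vecW_t)+\tfrac1\beta\Delta\Phi(\vecW_t)$ (your ``self-activation'' observation, stated there tersely as ``By our condition, this implies\ldots'') and then invokes \pref{lem:onesteprecursion}, while on no-noise rounds it applies the second-order bound of \pref{lem:thirdordersmoothfromregularity} with $\theta'\ge 0$ exactly as you do. The only cosmetic difference is that you re-derive the noise-branch Taylor expansion and Laplacian cancellation rather than citing \pref{lem:onesteprecursion} directly.
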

\begin{proof}
In rounds where we add noise, we have $\tri*{ \nabla\Phi(\vecW_t), \nabla F(\vecW_t)} <F_{\epsilon}(\vecW_t)$. By our condition, this implies we must have 
\[ F_{\epsilon}(\vecW_t)+\frac1{\beta} \Delta \Phi(\vecW_t)\le \tri*{ \nabla\Phi(\vecW_t), \nabla F(\vecW_t)}.\]
Thus, in these rounds this result follows immediately from \pref{lem:onesteprecursion}. 

Otherwise if we do not add noise, we have $\tri*{\nabla\Phi(\vecW_t), \nabla F(\vecW_t)} \ge F_{\epsilon}(\vecW_t)$. The proof now is the same as in \citet{priorpaper}. Applying \pref{lem:thirdordersmoothfromregularity} again, this time for the expansion to at most second order, and using that $\theta' \ge 0$, we get from this condition that
\begin{align*}
\theta\prn*{\Phi\prn*{\vecW_{t+1}}} &\le \theta\prn*{\Phi\prn*{\vecW_t}} + \theta'\prn*{\Phi\prn*{\vecW_t}} \tri*{ \nabla \Phi\prn*{\vecW_t}, -\eta \nabla F\prn*{\vecW_t}}+ \frac{\eta^2}2 \nrm*{\nabla F(\vecW_t)}^2 \\
&\le \theta\prn*{\Phi(\vecW_t)} -\eta \theta'\prn*{\Phi(\vecW_t)} F_{\epsilon}(\vecW_t) + \frac{\eta^2}2 \nrm*{\nabla F(\vecW_t)}^2,
\end{align*}
implying the result.
\end{proof}

This result gives us a way to upper bound $\theta'\prn*{\Phi(\vecW_t)}$. To control this, we will again need to control the $\Phi(\vecW_t)$ which we do as follows.
\begin{lemma}\label{lem:actuallypotentialloosenedsimplesetting}
Follow the same notation, assumptions, and choice of $\eta, T$ as in \pref{lem:actuallypotentialsimplesetting}. Then with probability at least $0.9$, we have that
\[ \rho_{\Phi}\prn*{\Phi(\vecW_t)} \le \kappa' \rho_{\Phi}\prn*{\Phi(\vecW_0)}\FORALLTEXT 0 \le t \le T-1.\]
\end{lemma}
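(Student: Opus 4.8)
\textbf{Proof plan for \pref{lem:actuallypotentialloosenedsimplesetting}.}
The plan is to mirror the argument of \pref{lem:actuallypotentialsimplesetting} essentially verbatim, substituting the one-step bound of \pref{lem:onesteprecursionloosen} for that of \pref{lem:onesteprecursion}. First I would set up the same objects: the stopping time $\tau := \min\{T, \inf\{t : \rho_{\Phi}(\Phi(\vecW_t)) > \kappa' \rho_{\Phi}(\Phi(\vecW_0))\}\}$, and the stochastic process
\[ Y_t := \begin{cases}\theta\prn*{\Phi(\vecW_t)}+\sum_{j=0}^{t-1} \prn*{\eta F_{\epsilon}(\vecW_j) - R(\vecW_j, \Phi, \eta, \beta, d)}&\IF t \le \tau \\ Y_{\tau}&\OTHERWISEIF t > \tau, \end{cases}\]
with the \emph{same} $R(\vecW_j, \Phi, \eta, \beta, d) := \frac12 \eta^2 \nrm*{\nabla F(\vecW_t)}^2+\frac{2C}3 \eta^3\nrm*{\nabla F(\vecW_t)}^3 + 2C(\eta d/\beta)^{3/2}$ as in \pref{lem:actuallypotentialsimplesetting}. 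The key observation is that the bound of \pref{lem:onesteprecursionloosen} is, term by term, \emph{no larger} than that of \pref{lem:onesteprecursion}: in rounds where noise is added it is identical, and in rounds where no noise is added the higher-order terms $\frac{2C}3 \eta^3\nrm*{\nabla F(\vecW_t)}^3 + 2C(\eta d/\beta)^{3/2}$ are simply dropped (multiplied by $1_{\cN_t}=0$). Hence the supermartingale property (the analogue of \pref{lem:Ytsupermartingalesimplesetting}) goes through unchanged: taking conditional expectations, $\mathbb{E}[Y_{t+1}\mid\mathfrak{F}_t]\le Y_t$ holds on $\{t<\tau\}$ because we still subtract the full $R$ while the expected increment of $\theta(\Phi(\vecW_t))$ is upper-bounded by what $R$ subtracts.

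Next I would establish the Azuma--Hoeffding-type concentration (the analogue of \pref{lem:azumahoeffding}). Here the bounded-difference constant is
\[C(\eta, t, d, \beta) = 4\sqrt{\theta\prn*{\rho_{\Phi}^{-1}\prn*{\kappa' \rho_{\Phi}\prn*{\Phi(\vecW_0)}}}} \cdot \nrm*{-\eta \nabla F(\vecW_t)+\sqrt{\tfrac{2\eta}{\beta}} 1_{\cN_t}\vecEps_t}+ 4\nrm*{-\eta \nabla F(\vecW_t)+\sqrt{\tfrac{2\eta}{\beta}} 1_{\cN_t}\vecEps_t}^2,\]
and since $\nrm*{1_{\cN_t}\vecEps_t}\le\nrm*{\vecEps_t}\le\sqrt d$ whether or not noise is added, every estimate on $C(\eta,t,d,\beta)$ from the proof of \pref{lem:actuallypotentialsimplesetting} remains valid, giving with probability at least $0.9$ that $Y_t - Y_0 \le 12\sqrt{2}\prn*{B_{\textsc{grad}}\eta + \sqrt{\eta d/\beta}}\sqrt{T\log(10T)}$ for all $1\le t\le T$ (call this event $E_1$). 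Then I would run exactly the same contradiction argument: assuming some $\vecW_t$ with $t\le T-1$ escapes, \pref{ass:iteratesinballassumption} forces $\vecW_t\in\ball(\vecOrigin,R_1')$ for $t<\tau$, so \pref{ass:holderF} gives $\nrm*{\grad F(\vecW_t)}\le B_{\textsc{grad}}$, yielding the lower bound \pref{eq:lowerboundgradoraclepotential} on $Y_\tau-Y_0$; combining with the upper bound and plugging in the case-based choices of $\eta,T$ contradicts $A_0(\vecW_0)>0$, exactly as in the two cases $\beta\le d/(\epsilon/(\log 1/\epsilon)^2)$ and $\beta\ge d/(\epsilon/(\log 1/\epsilon)^2)$ there.

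I do not expect a genuine obstacle here, since the loosened condition \pref{eq:loosenedcondition} was designed precisely so that in the noise-free rounds the needed inequality $\tri*{\nabla\Phi(\vecW_t),\nabla F(\vecW_t)}\ge F_{\epsilon}(\vecW_t)$ holds by the algorithm's branching rule, and in the noisy rounds $\tri*{\nabla\Phi(\vecW_t),\nabla F(\vecW_t)}<F_{\epsilon}(\vecW_t)$ combined with \pref{eq:loosenedcondition} forces $\frac1\beta\Delta\Phi(\vecW_t)<0$, hence $\min(0,\frac1\beta\Delta\Phi(\vecW_t))=\frac1\beta\Delta\Phi(\vecW_t)$ and \pref{eq:loosenedcondition} reduces to the admissibility inequality \pref{eq:admissablepotentialF} needed to invoke \pref{lem:onesteprecursion} in \pref{lem:onesteprecursionloosen}. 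The only point requiring a word of care is that $\cN_t$ is $\mathfrak{F}_t$-measurable (it is determined by $\vecW_t$ via the sign of $\Delta_t$), so the martingale-difference structure is not disturbed; I would note this explicitly. With that, the conclusion $\rho_{\Phi}(\Phi(\vecW_t))\le\kappa'\rho_{\Phi}(\Phi(\vecW_0))$ for all $0\le t\le T-1$ follows on the probability-$0.9$ event $E_1$, completing the proof.
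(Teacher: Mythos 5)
Your proposal is correct and follows essentially the same route as the paper's proof: reuse the same $\tau$, $Y_t$, and $R$, observe that the one-step bound from \pref{lem:onesteprecursionloosen} is no larger than that of \pref{lem:onesteprecursion} so the supermartingale property and Azuma--Hoeffding bound carry over (since $\nrm*{\vecEps_t}\in\{0,\sqrt{d}\}$ keeps all bounded-difference estimates valid), and then run the identical contradiction argument with the same $\eta,T$. Your explicit remarks on the $\mathfrak{F}_t$-measurability of $\cN_t$ and on why \pref{eq:loosenedcondition} reduces to admissibility in noisy rounds are fine additions but not a departure from the paper's argument.
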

\begin{proof}
Note defining $R\prn*{\vecW_j, \Phi, \eta, \beta, d}$ identically as in \pref{lem:actuallypotentialsimplesetting}, we still have from \pref{lem:onesteprecursionloosen} that
\begin{align*}
\mathbb{E}_{\vecEps_t} \brk*{\theta\prn*{\Phi(\vecW_{t+1})}} &\le \theta\prn*{\Phi(\vecW_t)} - \eta  \theta'\prn*{\Phi(\vecW_t)} F_{\epsilon}(\vecW_t)\\
&\hspace{1in}+ \frac12 \eta^2 \nrm*{\nabla F(\vecW_t)}^2+\frac{2C}3 \eta^3\nrm*{\nabla F(\vecW_t)}^3 + 2C(\eta d/\beta)^{3/2}.
\end{align*}
Therefore, \pref{lem:Ytsupermartingalesimplesetting} still holds.

Moreover, following the same proof we see that \pref{lem:azumahoeffding} still holds here too, except now $\vecEps_t$ is 0 when $t \in \cN_t$.

Define $\tau$ analogously as in the proof of \pref{lem:actuallypotentialsimplesetting}. The same derivation as earlier establishes
\begin{align*}
Y_{\tau}-Y_0 &\ge \rho_{\Phi}^{-1}\prn*{\kappa' \rho_{\Phi}\prn*{\Phi(\vecW_0)}}-\Phi(\vecW_0)-\prn*{\frac32 C B_{\textsc{grad}}^3 \eta^2 + 2C (\eta d / \beta)^{3/2}}T.
\end{align*}
%Note the left hand side here is strictly positive.
Now we use \pref{lem:azumahoeffding}, which still holds there, to upper bound $Y_{\tau}-Y_0$. Denote the event from \pref{lem:azumahoeffding} with $\delta=0.1$ by $E_1$. Conditioned on $E_1$, which occurs with probability at least $0.9$, we have that
\[ Y_{\tau} - Y_0 \le \sqrt{\frac12 \prn*{\sum_{t=0}^{\tau-1} C(\eta, t, d, \beta)^2}\log(10T)}\]
where
\[C(\eta, t, d, \beta) = 4\sqrt{\theta\prn*{\rho_{\Phi}^{-1}\prn*{\kappa' \rho_{\Phi}\prn*{\Phi(\vecW_0)}}}} \cdot \nrm*{-\eta \nabla F(\vecW_t)+\sqrt{\frac{2\eta}{\beta}} \vecEps_t}+ 4\nrm*{-\eta \nabla F(\vecW_t)+\sqrt{\frac{2\eta}{\beta}} \vecEps_t}^2.\]
To control $C(\eta, t, d, \beta, \rho)$, note in the proof of \pref{lem:actuallypotentialsimplesetting}, we used Triangle Inequality or Young's Inequality to split up each of the $\nrm*{-\eta \nabla F(\vecW_t)+\sqrt{2\eta / \beta} \vecEps_t}$ or this quantity squared and isolated the $\nrm*{\vecEps_t}$. Note now that we still have $\nrm*{\vecEps_t}\le\sqrt{d}$ always ($\nrm*{\vecEps_t}=0,\sqrt{d}$), so the same upper bound for $Y_{\tau}-Y_0$ holds. 

Thus the same steps as in the proof of \pref{lem:actuallypotentialsimplesetting}, with the same choice of $T$ and $\eta$, allow us to conclude.
\end{proof}

Now, with these parts in hand, we can prove \pref{thm:discretizationloosenoracleformal}.
\begin{proof}
The proof is nearly identical to the finish of the proof of \pref{thm:discretizationsimpleoracleformal}. Again, we can reduce to proving the main case. The only difference is that we now use \pref{lem:onesteprecursionloosen}, but we can still upper bound the discretization error from that step as $\frac12 \eta^2 \nrm*{\nabla F(\vecW_t)}^2+\frac{2C}3 \eta^3\nrm*{\nabla F(\vecW_t)}^3 + 2C(\eta d/\beta)^{3/2}$. Hence we have the same inequality obtained from telescoping as in the proof of \pref{thm:discretizationsimpleoracleformal}, and the same proof finishes.
\end{proof}

\subsection{Additional Discussion}\label{subsec:loosenconditiondiscussion}
We discuss how \pref{thm:discretizationloosenoracle} implies optimization for P\LCHAR functions and also K\LCHAR functions. In particular, they satisfy this condition \pref{eq:loosenedcondition}: when $\Phi(\vecW)=\lambda F(\vecW)$, we obtain Polyak-\L ojasiewicz (P\L) functions \citep{polyak1963gradient, lojasiewicz1963topological}. 
\begin{definition}[Polyak-\L ojasiewicz (P\L)]
A differentiable function $F$ is Polyak-\L ojasiewicz (P\L) if $\nrm*{\grad F(\vecW)}^2 \ge \lambda F(\vecW)$ for some $\lambda>0$, for all $\vecW\in\mathbb{R}^d$.
\end{definition}
P\LCHAR functions are a classic class of functions for which GD and SGD can be proved to succeed as a strategy for global optimization, but are not necessarily convex. It is well known $\alpha$-strong convexity of $F$ implies that $F$ satisfies the P\LCHAR inequality with parameter $2\alpha$, but \textit{not} vice-versa \citep{chewi21analysis}. Additional examples of P\LCHAR functions have been found in recent literature, for example transformers \citep{zhang2024trained}.

Moreover, \pref{eq:loosenedcondition} can even handle the case of Kurdyka-\L ojasiewicz (K\L) functions \citep{kurdyka1998gradients}, which are a generalization of P\LCHAR functions.
\begin{definition}[Kurdyka-\L ojasiewicz (K\L)]
A differentiable function $F$ is Kurdyka-\L ojasiewicz (K\L) if $\nrm*{\grad F(\vecW)}^2 \ge \lambda F(\vecW)^{1+\theta}$ for some $\lambda>0$ and $\theta \in [0,1)$ for all $\vecW\in\mathbb{R}^d$.
\end{definition}
This can be seen by taking $\Phi(\vecW)=\frac{1}{\lambda(1-\theta)} F(\vecW)^{1-\theta}$ (note \pref{eq:loosenedcondition} can actually handle any $\theta<1$, even if $\theta$ is negative). There are many examples of K\LCHAR functions from generalized linear models \citep{mei2021leveraging} to reinforcement learning \citep{agarwal2021theory, mei2020global, yuan2022general} to over-parametrized nueral networks \citep{zeng2018global, allen2019convergence} to low-rank matrix recovery \citep{bi2022local} to optimal control \citep{bu2019lqr, fatkhullin2021optimizing}. 

In all these cases it is reasonable to assume we have query access to $\tri*{ \nabla \Phi(\vecW), \nabla F(\vecW)} - F_{\epsilon}(\vecW)$, since $\Phi$ solely depends on $F$. Thus we obtain the following corollary:
\begin{corollary}\label{corr:PLloosenoracle}
Suppose $F$ is P\LCHAR/K\LCHAR and satisfies \pref{ass:holderF}, \pref{ass:polyselfbounding}, and \pref{ass:iteratesinballassumption} (the latter two assumptions with $F$ in place of $\Phi$). Then running \pref{alg:sgldmodlaplacian} with constant step size, we can optimize P\LCHAR and K\LCHAR functions to any precision $\epsilon \ge \widetilde{\Omega}\prn*{\frac{d}{\beta}}$ in $\widetilde{O}(\frac1{\epsilon^2})$ iterations.
\end{corollary}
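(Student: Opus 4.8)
The idea is to read off \pref{corr:PLloosenoracle} as a direct instantiation of \pref{thm:discretizationloosenoracle} (formally \pref{thm:discretizationloosenoracleformal}). Concretely I would (i) produce, for a P\L\ resp.\ K\L\ function $F$, an explicit Lyapunov potential $\Phi$ expressed through $F$ that satisfies the looser geometric condition \pref{eq:loosenedcondition} and for which the query $\tri*{\nabla\Phi(\vecW),\nabla F(\vecW)}-F_\epsilon(\vecW)$ demanded by \pref{alg:sgldmodlaplacian} is available; (ii) check that this $\Phi$ inherits \pref{ass:polyselfbounding} and \pref{ass:iteratesinballassumption} from $F$; and (iii) translate the average-of-$F_\epsilon$ guarantee output by \pref{thm:discretizationloosenoracleformal} into the claimed average-of-$F$ bound and iteration count.

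\textbf{The potential and the geometric condition.} For step (i), in the P\L\ case take $\Phi = \lambda^{-1}F$, so that the P\L\ inequality gives $\tri*{\nabla\Phi(\vecW),\nabla F(\vecW)} = \lambda^{-1}\nrm*{\nabla F(\vecW)}^2 \ge F(\vecW) \ge F_\epsilon(\vecW)$; since $\min\prn*{0,\tfrac1\beta\Delta\Phi(\vecW)} \le 0$ this is \pref{eq:oldconditiongf}, hence a fortiori \pref{eq:loosenedcondition}. In the K\L\ case take $\Phi = \tfrac1{\lambda(1-\theta)}F^{1-\theta}$; on $\{F>0\}$ the chain rule gives $\nabla\Phi = \lambda^{-1}F^{-\theta}\nabla F$, hence $\tri*{\nabla\Phi(\vecW),\nabla F(\vecW)} = \lambda^{-1}F(\vecW)^{-\theta}\nrm*{\nabla F(\vecW)}^2 \ge \lambda^{-1}F(\vecW)^{-\theta}\cdot\lambda F(\vecW)^{1+\theta} = F(\vecW) \ge F_\epsilon(\vecW)$, again yielding \pref{eq:oldconditiongf} and so \pref{eq:loosenedcondition}. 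In both cases $\Phi\ge 0$, and $\tri*{\nabla\Phi(\vecW),\nabla F(\vecW)}-F_\epsilon(\vecW)$ equals $\lambda^{-1}\nrm*{\nabla F(\vecW)}^2-F_\epsilon(\vecW)$ resp.\ $\lambda^{-1}F(\vecW)^{-\theta}\nrm*{\nabla F(\vecW)}^2-F_\epsilon(\vecW)$, an elementary function of $F(\vecW)$ and $\nabla F(\vecW)$, so \pref{alg:sgldmodlaplacian} is implementable, as claimed in \pref{subsec:loosenconditiondiscussion}.

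\textbf{Regularity transfer and the conclusion.} For step (ii), \pref{ass:iteratesinballassumption} is automatic: the sublevel sets of $\Phi$ coincide with those of $F$ (both $t\mapsto t/\lambda$ and $t\mapsto t^{1-\theta}$ are increasing), so the ball-containment property assumed for $F$ passes verbatim to $\Phi$. \pref{ass:polyselfbounding} is immediate in the P\L\ case, where $\Phi$ is a constant multiple of $F$. For K\L\ one differentiates $\Phi = cF^{1-\theta}$ up to third order, obtaining terms $F^{1-\theta-j}$ times products of the $\nabla^{i}F$, and then wants to absorb the negative powers of $F$ using the bounds on $\nabla^{i}F$ from \pref{ass:polyselfbounding} for $F$ (rewriting everything in terms of $\Phi$ via $F\asymp\Phi^{1/(1-\theta)}$); near the minimizer set $\{F=0\}$ the naive estimate degenerates, so one first smooths $\Phi$ on a neighborhood of $\{F=0\}$ — which lies inside $\{F<\epsilon\}$, where $F_\epsilon\equiv 0$ — exactly via the bump-function construction used in the proof of \pref{thm:maingeometricconditionformal}, so that \pref{eq:loosenedcondition} still holds there with both sides essentially zero. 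This bookkeeping, together with tracking that the resulting constants are absorbed into the $\widetilde{O}/\widetilde{\Omega}$, is where the real work lies; the rest is mechanical. Finally, for step (iii), \pref{thm:discretizationloosenoracleformal} (applied with cost function $F_\epsilon$) gives $\tfrac1T\sum_{t=1}^{T} F_\epsilon(\vecW_t) = \widetilde{O}(\epsilon)$ with probability $1-\delta$ after $T = O(\epsilon^{-2}\log(1/\delta))$ iterations of \pref{alg:sgldmodlaplacian}; since $F(\vecW)\le F_\epsilon(\vecW)+\epsilon$ pointwise, $\tfrac1T\sum_{t=1}^{T}F(\vecW_t) = \widetilde{O}(\epsilon)$, so returning the best (or a uniformly random) iterate yields an $\widetilde{O}(\epsilon)$-suboptimal point, while the requirement $\epsilon = \widetilde{\Omega}(d/\beta)$ is precisely the one inherited from \pref{thm:discretizationloosenoracleformal}.
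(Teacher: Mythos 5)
You take exactly the paper's route: the corollary is read off from \pref{thm:discretizationloosenoracle} (formally \pref{thm:discretizationloosenoracleformal}) by instantiating $\Phi=\lambda^{-1}F$ in the P\L{} case and $\Phi=\frac{1}{\lambda(1-\theta)}F^{1-\theta}$ in the K\L{} case, noting that these satisfy the gradient-flow condition \pref{eq:oldconditiongf} and hence \pref{eq:loosenedcondition}, that $\Delta_t$ is queryable because $\Phi$ depends only on $F$, and that both the restriction $\epsilon=\widetilde{\Omega}\prn*{\frac{d}{\beta}}$ and the $\widetilde{O}(1/\epsilon^2)$ iteration count are inherited from that theorem; your passage from $\frac1T\sum_t F_\epsilon(\vecW_t)$ to $\frac1T\sum_t F(\vecW_t)$ via the pointwise bound $F\le F_\epsilon+\epsilon$ is also how the paper's statement is to be read.

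The one place you go beyond the paper --- transferring \pref{ass:polyselfbounding} to $\Phi=cF^{1-\theta}$ in the K\L{} case --- is where your sketch is shaky. An additive bump-function modification of $\Phi$ near $\{F=0\}$, as in the proof of \pref{thm:maingeometricconditionformal}, does not automatically preserve \pref{eq:loosenedcondition}: on the modified region $\nabla F$ need not vanish, the bump perturbs $\nabla\Phi$ and $\Delta\Phi$ arbitrarily, and the required inequality $\tri*{\nabla\Phi,\nabla F}\ge\min\prn*{0,\beta^{-1}\Delta\Phi}$ is not ``both sides essentially zero''; for a generic bump it can fail and must be checked. A cleaner repair keeps $\Phi$ a smooth increasing function of $F$, e.g.\ $\Phi=\frac{1}{\lambda(1-\theta)}\prn*{(F+\epsilon')^{1-\theta}-(\epsilon')^{1-\theta}}$ with $0<\epsilon'\le\epsilon$: then $\tri*{\nabla\Phi,\nabla F}=\lambda^{-1}(F+\epsilon')^{-\theta}\nrm*{\nabla F}^2\ge 0$ everywhere, so \pref{eq:loosenedcondition} holds trivially where $F<\epsilon$, while on $\{F\ge\epsilon\}$ the K\L{} inequality gives $\tri*{\nabla\Phi,\nabla F}\ge 2^{-\theta}F_\epsilon$, a constant-factor loss absorbed through \pref{rem:costfunction}; moreover $F+\epsilon'\ge\epsilon'>0$ keeps all derivatives of $\Phi$ finite, so the self-bounding bookkeeping from $F$ goes through (at the price of $\epsilon'$-dependent constants, which the paper's informal statement likewise does not track).
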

However, note \pref{eq:loosenedcondition} is much looser than P\LCHAR and K\LCHAR functions. P\LCHAR and K\LCHAR functions here correspond to when we \textit{never} add noise in \pref{alg:sgldmodlaplacian}. Thus we believe \pref{eq:loosenedcondition} encompasses many more non-convex optimization problems of interest. 

\section{Proofs for \pref{sec:contributions}}\label{sec:PIoptproofs}
In this section, as with \pref{sec:corediscreteproofs}, we state all guarantees with constant probability. To obtain those results with probability $1-\delta$, one can simply use the standard log-boosting trick.
\subsection{Proofs of \pref{thm:poincareoptlipschitz}, \ref{thm:poincareoptlipschitzestimate}, and \ref{thm:smoothdissipativesettingpoincareopt}}
Here we formally state and prove \pref{thm:poincareoptlipschitz}, \ref{thm:poincareoptlipschitzestimate}, and \ref{thm:smoothdissipativesettingpoincareopt}, which are all subsumed by the following result.
\begin{theorem}\label{thm:poincareoptformal}
Suppose that $F$ satisfies \pref{ass:holderF} and \pref{ass:weakdissipation}. 
Suppose $\mu_{\beta}$ has second moment $S<\infty$ and satisfies a Poincar\'e Inequality with constant $\CPI(\mu_{\beta})$ with $\beta = \widetilde{\Theta}\prn*{\frac{d}{\epsilon}}$, namely
\[ \epsilon \ge \frac{2d}{\beta} \log(4\pi e \beta L d S).\] 
Suppose $\Phi$ (from \pref{thm:maingeometriccondition}) satisfies \pref{ass:selfboundingPhipoly} with $0 \le p\le 1$, and define $\rho_{\Phi} = \max\prn*{\rho_{\Phi,1}, \rho_{\Phi,2}, \rho_{\Phi,3}}$. We can assume without loss of generality that $\rho_{\Phi}(z)=A(z+1)^p$ for some constant $A>0$.

Then consider running either \GLDTEXT, or \SGLDTEXTSPACE using a stochastic gradient oracle $\grad f$ satisfying \pref{ass:gradnoiseassumption} and \pref{ass:stochasticgradcontrol}, with constant step size for $T$ iterations. 
We will reach a $\vecW$ in $\{\vecW:F(\vecW) \le \epsilon\}$ with probability at least 0.8 in at most $T$ gradient (for \GLDTEXT) or stochastic gradient (for \SGLDTEXT) evaluations respectively, where we set
\begin{align*}
T &\le 8^3 C_0 \max\crl*{1, \frac{4L^2}m, \frac{4\max(L,B)}{m}, \frac{4B^2}m, 6B, 120^2 A^2 B^2 M^2, 120 AC_0 M}\\
&\hspace{1in}\cdot \max\crl*{\beta \max\prn*{\CPI(\mu_{\beta}),2}, d^3 \max\prn*{\CPI(\mu_{\beta}),2}^3, \beta^{2+s/2}\max\prn*{\CPI(\mu_{\beta}),2}^{2+s/2}}.
\end{align*}
(An explicit expression can be found in our proof.)

Here we define the above constants as follows:
\[ L_2 := \prn*{\nrm*{\vecW_0}^4+8\prn*{\frac{4\prn*{m+b+\frac{4d+2}{\beta}}}{m \wedge 1}}^{\frac{1+\gamma}{\gamma} \lor 2}}^{s/2}, L_3 := \prn*{\nrm*{\vecW_0}^4+8\prn*{\frac{4\prn*{m+b+\frac{4d+2}{\beta}}}{m \wedge 1}}^{\frac{1+\gamma}{\gamma} \lor 2}}^{3s/4},\]
\[ B = \max\prn*{L\max(1,\nrm{\vecW^{\star}}), \sigma_F}, C_0 = 50A\theta(\Phi(\vecW_0)) \lor 1, C = \frac{4A^2 p(p+1)+2Ap+1}3,\]
\[ M = \max\prn*{\frac12, 2C} \cdot \prn*{8\sigma_F^3+16\max(L,B)^3 (\max\prn*{L_2,L_3}+1)}.\]
Here $\theta = \frac1{\rho_{\Phi}}$, as defined in \pref{lem:thirdordersmoothfromregularity}. (Take $L \leftarrow \max(1,L)$, $\sigma_F \leftarrow \max(\sigma_F,1)$ if necessary.)

Moreover, this generalizes to $s=0,1$ as follows:
\begin{enumerate}
    \item In the case when $s=0$, this result holds with no dependence on $L_2, L_3$ and instead we have
\[ M = \max\prn*{\frac12, 2C} \cdot \prn*{8\sigma_F^3+16\max(L,B)^3}.\]
    \item In the case when $s=1$, we no longer need to make assumptions on $\mu_{\beta}$: as $2s \le \gamma \le s+1$, $s=1$ forces $\gamma=1$, the setting of $F$ being $L$-smooth and $(m,b)$ dissipative from \citet{raginsky2017non}, \citet{xu2018global}, and \citet{zou2021faster}. As shown in \citet{raginsky2017non}, these conditions imply $\mu_{\beta}$ satisfies a Poincar\'e Inequality for $\beta \ge \frac{2}m$, and also that $\mu_{\beta}$ has finite second moment $S\le \frac{b+d/\beta}m$.

    Moreover, our guarantees improve as follows. Instead letting
\[ L_2 := \nrm*{\vecW_0}^2+\frac2m\prn*{b+ 2B^2+ \frac{d}{\beta}}, L_3 := \prn*{\nrm*{\vecW_{0}}^4 + C'' \lor \frac{2C''}{m}}^{3/4},\]
where
\[ C'' = \frac{4}{m}\prn*{2C'^2\prn*{4+\frac1m} \lor \frac1m\prn*{3mB+C'}^2}, C'=m+b+\frac{4d+2}{\beta},\]
we have a runtime guarantee of 
\begin{align*}
T &\le 8^3 C_0 \max\crl*{1, \frac{4L^2}m, \frac{4\max(L,B)}{m}, \frac{4B^2}m, 6B, 120^2 A^2 B^2 M^2, 120 AC_0 M}\\
&\hspace{1in}\cdot \max\crl*{\beta \max\prn*{\CPI(\mu_{\beta}),2}, d^3 \max\prn*{\CPI(\mu_{\beta}),2}^3, \beta^{2}\max\prn*{\CPI(\mu_{\beta}),2}^{2}}.
\end{align*}
\end{enumerate}
\end{theorem}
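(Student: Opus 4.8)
\textbf{Proof proposal for \pref{thm:poincareoptformal}.} The plan is to instantiate the machinery developed in \pref{sec:corediscreteproofs} with the specific Lyapunov potential $\Phi(\vecW')=\mathbb{E}_{\vecW'}[\exp(\lambda\tau_{\cA_\epsilon})]$ supplied by \pref{thm:maingeometriccondition}, treating the geometric inequality \pref{eq:poincaregeomconditioninformal} as an instance of the general ``cost function'' condition $\tri*{\grad F(\vecW),\grad\Phi(\vecW)}-\frac1\beta\Delta F(\vecW)\ge A(\vecW)$ from \pref{rem:costfunction}, where here the cost function is $A(\vecW)=\lambda\Phi(\vecW)\mathbf{1}_{\cA_\epsilon^c}$. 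Concretely: first, invoke \pref{thm:maingeometriccondition} to obtain $\Phi$ satisfying \pref{eq:poincaregeomconditioninformal} with $\lambda\in\brk*{\frac1{8\beta}\min(\frac1{\CPI(\mu_\beta)},\frac12),\frac1{4\beta}\min(\frac1{\CPI(\mu_\beta)},\frac12)}$, using \pref{lem:measureoflargeF} (and hence $\epsilon=\widetilde\Omega(d/\beta)$) to certify $\mu_\beta(\cA_\epsilon)\ge\frac12$ so that $\lambda$ is of the stated order. Second, apply \pref{ass:selfboundingPhipoly} so that $\Phi$ has polynomial self-bounding regularity with degree-$\le1$ monomials, and run \pref{lem:thirdordersmoothfromregularity} to get the composed function $\theta=\int_0^\cdot\frac{dz}{\rho_\Phi(z)}$ with $\theta'>0$, $\theta''<0$, $\theta'''\ge0$ and the third-order Taylor bound with constant $C=\frac{4A^2p(p+1)+2Ap+1}{3}$.

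Third, establish the one-step recursion: since the cost function here is $\lambda\Phi(\vecW_t)$ rather than $F_\epsilon(\vecW_t)$, I would re-run the computations of \pref{lem:onesteprecursion} (exact-gradient case) and \pref{lem:onesteprecursionstochastic} (stochastic-gradient case), where the Laplacian term $\frac\eta\beta\Delta\Phi(\vecW_t)$ again cancels against $\mathbb{E}[\frac12\tri*{\grad^2\Phi(\vecW_t)\sqrt{2\eta/\beta}\,\vecEps_t,\sqrt{2\eta/\beta}\,\vecEps_t}]$, yielding
\[
\mathbb{E}_{\vecEps_t,\vecZ_t}[\theta(\Phi(\vecW_{t+1}))]\le\theta(\Phi(\vecW_t))-\eta\lambda\,\theta'(\Phi(\vecW_t))\Phi(\vecW_t)\mathbf{1}_{\cA_\epsilon^c}+\{\text{discretization error}\}.
\]
Since $\tau:=\tau_{\cA_\epsilon}(\vecW_0)$ means none of $\vecW_0,\dots,\vecW_{\tau-1}$ lies in $\cA_\epsilon$, on those rounds $\Phi(\vecW_t)\ge1$, so $\theta'(\Phi(\vecW_t))\Phi(\vecW_t)\ge\theta'(\Phi(\vecW_t))\ge\theta'(\rho_\Phi^{-1}(\kappa'\rho_\Phi(\Phi(\vecW_0))))>0$ once the iterates are confined to a sublevel set of $\Phi$. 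Fourth, control the iterates: I would show via the supermartingale argument (analogues of \pref{lem:actuallypotentialsimplesetting}/\pref{lem:actuallypotentialstochasticsetting}, built from \pref{lem:Ytsupermartingalesimplesetting} and Azuma--Hoeffding) that with constant probability $\rho_\Phi(\Phi(\vecW_t))\le\kappa'\rho_\Phi(\Phi(\vecW_0))$ for all $t\le T-1$; this requires \pref{ass:weakdissipation} to produce the radius $(b/m)^{1/\gamma}$ bounding all critical points, so that the gradient norm along confined iterates is bounded by the stated $B$-type quantities, with the $L_2,L_3$ factors tracking the $s$-dependent gradient growth (for $s>0$) via \pref{ass:holderF}. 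The dissipativity also furnishes the second-moment bound $S$ needed for \pref{lem:measureoflargeF} (and in the $s=1$ case gives $S\le(b+d/\beta)/m$ directly, removing the standalone Poincaré hypothesis since \citet{raginsky2017non} shows smooth + dissipative $\Rightarrow$ Poincaré).

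Fifth and finally, telescope: summing the one-step recursion from $t=0$ to $\tau-1$, using $\theta\ge0$ and the lower bound on $\theta'$, gives
\[
\eta\lambda\,\theta'\!\left(\rho_\Phi^{-1}(\kappa'\rho_\Phi(\Phi(\vecW_0)))\right)\tau\le\theta(\Phi(\vecW_0))+\{\text{accumulated discretization error over }\tau\text{ steps}\}.
\]
Choosing $\eta$ small enough (of order $d/\beta$, or $\epsilon/\mathrm{polylog}$, as in \pref{lem:actuallypotentialsimplesetting}) makes the per-step error terms $\frac12\eta^2\|\grad F\|^2$, $\frac{2C}3\eta^3\|\grad F\|^3$, $2C(\eta d/\beta)^{3/2}$ small relative to $\eta\lambda$, so that $\tau\lesssim\frac{\theta(\Phi(\vecW_0))}{\eta\lambda\,\theta'(\cdots)}$; substituting $\lambda\gtrsim\frac1\beta\min(\frac1{\CPI(\mu_\beta)},\frac12)$ and the chosen $\eta$ yields $T=\widetilde O(\beta^2\max(\CPI(\mu_\beta),2)^2/d)$-type terms which, combined with the confinement-radius requirements on $\eta$ (forcing $\eta\lesssim d/\beta$ and picking up the extra $d^3\CPI^3$ and $\beta^{2+s/2}\CPI^{2+s/2}$ regimes), gives the stated maximum. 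A standard log-boosting trick upgrades constant success probability to $1-\delta$.

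\textbf{Main obstacle.} The delicate part is the coupled choice of $\eta$ and $T$: we need $\eta$ large enough that $\tau\le T$ with $T$ not too large, yet small enough that (a) the accumulated higher-order discretization error stays below half of $\eta\lambda\tau$ (so the telescoped inequality actually bounds $\tau$), and (b) the supermartingale deviation bound keeps the iterates inside the sublevel set of $\Phi$ where $\theta'$ is bounded below and where $\|\grad F\|$ is controlled. Since $\lambda$ itself scales like $1/(\beta\,\CPI(\mu_\beta))$ — much smaller than the $\epsilon$-scale cost in \pref{sec:corediscreteproofs} — the error-versus-progress bookkeeping is tighter here, and threading the three regimes ($\beta$ small, $\beta$ moderate, $\beta$ large relative to $d/\epsilon$) while keeping every constant explicit is where the real work lies; the $s$-dependence enters only through the crude polynomial gradient-growth bounds $L_2,L_3$, which is why the exponent on $\epsilon$ degrades to $2+s/2$.
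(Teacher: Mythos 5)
There is a genuine gap in your fourth and fifth steps. You propose to confine the iterates to a sublevel set of $\Phi$ (showing $\rho_{\Phi}(\Phi(\vecW_t))\le\kappa'\rho_{\Phi}(\Phi(\vecW_0))$ via the supermartingale/Azuma--Hoeffding machinery of \pref{lem:actuallypotentialsimplesetting}) and then lower bound the progress term by $\theta'\prn*{\rho_{\Phi}^{-1}\prn*{\kappa'\rho_{\Phi}(\Phi(\vecW_0))}}$. But $\kappa'$ and the confinement radius come from \pref{ass:iteratesinballassumption}, which is \emph{not} among the hypotheses of \pref{thm:poincareoptformal}; without an upper bound on $\Phi(\vecW_t)$ along the trajectory, your lower bound on $\theta'(\Phi(\vecW_t))$ is unjustified, and the high-probability gradient bound $B_{\textsc{grad}}$ you import from the confinement argument is likewise unavailable. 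The paper's proof avoids this entirely: because $p\le 1$ and $\theta'(z)=\frac1{A(z+1)^p}$, one has $z\theta'(z)\ge\frac1{2A}$ for all $z\ge1$, so on $\cA_{\epsilon}^c$ (where $\Phi\ge1$ by \pref{rem:philowerboundremark}) the cost term satisfies $\Phi(\vecW_t)\theta'(\Phi(\vecW_t))\ge\frac1{2A}$ with no upper bound on $\Phi$ needed; the gradient moments are then controlled unconditionally (in expectation, not via confinement) by \pref{lem:upperboundgradFholder} together with the dissipativity-based moment bounds of \pref{lem:smoothdissipativeinball}.

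Relatedly, your telescoping ``from $t=0$ to $\tau-1$'' with expectations at the random time $\tau$ needs justification: the paper truncates to the stopping time $\tau_{\cA_{\epsilon},T}=\min(\tau_{\cA_{\epsilon}},T)$ and invokes the discrete-time Dynkin formula (\pref{thm:discretetimedynkin} and \pref{corr:discretetimedynkinconvenient}), then bounds $\mathbb{E}\brk*{\tau_{\cA_{\epsilon},T}}<T/10$ by contradiction (\pref{lem:expectedvaluesmall}) and finishes with Markov's inequality to get probability $0.8$ --- an expectation-plus-Markov argument, not an Azuma-type high-probability deviation bound. Your overall skeleton (instantiate $\Phi$ from \pref{thm:maingeometriccondition}, compose with $\theta$ from \pref{lem:thirdordersmoothfromregularity}, one-step recursion with Laplacian cancellation, choose $\eta$ so discretization error is dominated by $\eta\lambda$) matches the paper, but the two missing ingredients --- the $p\le1$ trick replacing sublevel-set confinement, and Dynkin's formula for the stopped sum --- are exactly what make the proof go through under the stated assumptions, so as written your proposal does not close.
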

First, note from our assumption that $\epsilon \ge \frac{2d}{\beta} \log(4\pi e \beta L d S)$, we may apply \pref{thm:maingeometriccondition} (in particular, by combining \pref{thm:maingeometricconditionformal}, \pref{lem:measureoflargeF}) to obtain $\Phi$ satisfying the properties described in \pref{thm:maingeometriccondition}. 

Next, we show a Lemma showing the iterates of GLD and SGLD are controlled. We will need this only when $s > 0$. Similar results have been shown in \cite{raginsky2017non} and \cite{balasubramanian2022towards}.
\begin{lemma}\label{lem:smoothdissipativeinball}
Suppose $F$ satisfies \pref{ass:holderF} and \pref{ass:weakdissipation}. Consider the $\{\vecW_t\}_{t \ge 0}$ generated by \GLDTEXTSPACE / \SGLDTEXTSPACE (for \SGLDTEXTSPACE we need \pref{ass:stochasticgradcontrol}), run for $T$ iterations for $T<\infty$ (we only use this for the $T$ we set later). If the step size $\eta \in (0, 1 \wedge \frac{m}{4L^2} \wedge \frac{m}{4\max(L,B)} \wedge \frac{m}{4B^2} \wedge \frac{1}{6B})$, then we have the following bounds:
\[ \mathbb{E}\brk*{\nrm*{\vecW_t}^{2s}} \le L_2\max(\eta T, 1)^{s/2}, \mathbb{E}\brk*{\nrm*{\vecW_t}^{3s}}\le L_3\max(\eta T, 1)^{3s/4},\]
where we define
\[ L_2 := \prn*{\nrm*{\vecW_0}^4+8\prn*{\frac{4\prn*{m+b+\frac{4d+2}{\beta}}}{m \wedge 1}}^{\frac{1+\gamma}{\gamma} \lor 2}}^{s/2}, L_3 := \prn*{\nrm*{\vecW_0}^4+8\prn*{\frac{4\prn*{m+b+\frac{4d+2}{\beta}}}{m \wedge 1}}^{\frac{1+\gamma}{\gamma} \lor 2}}^{3s/4}.\]
Here $B = \max\prn*{L\max(1,\nrm{\vecW^{\star}}), \sigma_F}$, where $\sigma_F$ comes from \pref{ass:gradnoiseassumption}. (Recall we took $L\leftarrow\max\prn*{L,1}$ if necessary earlier in the statement of \pref{thm:poincareoptformal}.)

Moreover, if $s=1$ (which implies $F$ is $L$-smooth and $(m,b)$ dissipative), we have the following uniform bounds:
\[ \mathbb{E}\brk*{\nrm*{\vecW_{t}}^2} \le L_2, \mathbb{E}\brk*{\nrm*{\vecW_{t}}^3} \le L_3 \]
for
\[ L_2 := \nrm*{\vecW_0}^2+\frac2m\prn*{b+ 2B^2+ \frac{d}{\beta}}, L_3 := \prn*{\nrm*{\vecW_{0}}^4 + C'' \lor \frac{2C''}{m}}^{3/4}\]
where
\[ C'' = \frac{4}{m}\prn*{2C'^2\prn*{4+\frac1m} \lor \frac1m\prn*{3mB+C'}^2}, C'=m+b+\frac{4d+2}{\beta}.\]
\end{lemma}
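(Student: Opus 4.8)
\textbf{Proof proposal for \pref{lem:smoothdissipativeinball}.}

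The plan is to track the second-moment-like quantity $V_t := \nrm*{\vecW_t}^2$ (and its higher powers) through one step of the recursion and show that the dissipativity assumption \pref{ass:weakdissipation} produces a contraction toward a bounded region, so that a simple supermartingale/recursion argument closes the bound. First I would expand one step: for \GLDTEXT, $\nrm*{\vecW_{t+1}}^2 = \nrm*{\vecW_t}^2 - 2\eta\tri*{\vecW_t, \grad F(\vecW_t)} + \sqrt{2\eta/\beta}\cdot 2\tri*{\vecW_t, \vecEps_t} + \nrm*{-\eta\grad F(\vecW_t)+\sqrt{2\eta/\beta}\vecEps_t}^2$. Taking $\mathbb{E}_{\vecEps_t}$ kills the cross term, the Gaussian-square term contributes $2\eta d/\beta$, and \pref{ass:weakdissipation} gives $-2\eta\tri*{\vecW_t,\grad F(\vecW_t)} \le -2\eta m\nrm*{\vecW_t}^{\gamma} + 2\eta b$. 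The gradient-norm-squared term I bound using \pref{ass:holderF}: $\nrm*{\grad F(\vecW_t)} \le \nrm*{\grad F(\vecW^\star)} + L\nrm*{\vecW_t-\vecW^\star}^s = L\nrm*{\vecW_t - \vecW^\star}^s \le B(1+\nrm*{\vecW_t})^s$ after absorbing $\vecW^\star$-dependence into $B$, so $\eta^2\nrm*{\grad F(\vecW_t)}^2 \lesssim \eta^2 B^2(1+\nrm*{\vecW_t})^{2s}$. In the \SGLDTEXT case I do the same but with $\grad f(\vecW_t;\vecZ_t)$, first taking $\mathbb{E}_{\vecZ_t}$ so unbiasedness kills the cross term and \pref{ass:gradnoiseassumption} (plus \pref{lem:stochasticgradsetting}) controls $\mathbb{E}\nrm*{\grad f}^2 \lesssim \sigma_F^2 + \nrm*{\grad F(\vecW_t)}^2$; this is why $B$ is defined as $\max(L\max(1,\nrm*{\vecW^\star}),\sigma_F)$.

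The crux is that since $\gamma \ge 2s$, for $\nrm*{\vecW_t}$ large the $-2\eta m\nrm*{\vecW_t}^\gamma$ term dominates the positive $O(\eta^2 B^2 \nrm*{\vecW_t}^{2s})$ term (using $\eta$ small, specifically the stated bounds $\eta \le m/(4B^2)$ etc.\ ensure the coefficient works out). So I would split into the two regimes $\nrm*{\vecW_t}^2 \le \mathcal{R}$ and $\nrm*{\vecW_t}^2 > \mathcal{R}$ where $\mathcal{R} = \Theta\prn*{((m+b+d/\beta)/(m\wedge 1))^{(1+\gamma)/\gamma \vee 2}}$ is the radius appearing in $L_2$. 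In the far regime one gets $\mathbb{E}_t[\nrm*{\vecW_{t+1}}^2] \le \nrm*{\vecW_t}^2 - c\eta\nrm*{\vecW_t}^2$ for some $c>0$ (after converting the $\nrm*{\vecW_t}^\gamma$ gain into a $\nrm*{\vecW_t}^2$ gain on that set, since $\gamma \ge 2s \ge$ relevant exponent — here I would use $\gamma \ge 2$ or interpolate carefully when $\gamma < 2$, noting $\nrm*{\vecW}^\gamma \ge \nrm*{\vecW}^2$ fails for small $\gamma$, so I'd instead argue $\nrm*{\vecW}^\gamma \gtrsim \mathcal{R}^{(\gamma-2)/2}\nrm*{\vecW}^2$ on the far set which is where the $(1+\gamma)/\gamma$ exponent comes from); in the near regime, $\mathbb{E}_t[\nrm*{\vecW_{t+1}}^2] \le \mathcal{R} + O(\eta(\text{stuff bounded by }\mathcal R))$. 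Combining, $\mathbb{E}[\nrm*{\vecW_{t+1}}^2] \le \max(\mathbb{E}[\nrm*{\vecW_t}^2], O(\mathcal R))$ up to lower-order terms, but because we only run $T$ steps with step size $\eta$ and do not assume $\eta T$ small, the honest bound is of the form $\mathbb{E}\brk*{\nrm*{\vecW_t}^2}^{?}$ growing like $\max(\eta T,1)$ — matching the $\max(\eta T,1)^{s/2}$ and $\max(\eta T,1)^{3s/4}$ factors in the statement (each step can add $O(\eta\cdot\mathcal R)$ net when near the boundary). For the $2s$ and $3s$ powers I would either raise the $V_t$-recursion to the relevant power directly, or bound $\mathbb{E}\nrm*{\vecW_t}^{2s} \le (\mathbb{E}\nrm*{\vecW_t}^2)^s$ by Jensen (valid since $s \le 1$) and similarly use $\mathbb{E}\nrm*{\vecW_t}^{3s} \le (\mathbb{E}\nrm*{\vecW_t}^4)^{3s/4}$, which forces me to additionally run the same argument on $\nrm*{\vecW_t}^4$ to get the $\nrm*{\vecW_0}^4 + \dots$ terms appearing inside $L_2, L_3$.

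For the improved $s=1$ (smooth, $(m,b)$-dissipative) bounds, the argument simplifies: now $\gamma = 2$, so $-2\eta m\nrm*{\vecW_t}^2$ directly gives a genuine contraction $\mathbb{E}_t[\nrm*{\vecW_{t+1}}^2] \le (1-\eta m)\nrm*{\vecW_t}^2 + \eta(2b + 4B^2 + 2d/\beta) + O(\eta^2)$, and solving this linear recursion with $\eta m < 1$ yields the \emph{uniform} bound $\mathbb{E}\nrm*{\vecW_t}^2 \le \nrm*{\vecW_0}^2 + \frac{2}{m}(b+2B^2+d/\beta) = L_2$ with no $\eta T$ dependence; the $\nrm*{\vecW_t}^3$ bound follows by the same linear-recursion trick applied to $\nrm*{\vecW_t}^4$ and Jensen. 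The main obstacle I anticipate is bookkeeping the exact constants so they match the stated $L_2, L_3, C', C''$ — in particular getting the exponent $\frac{1+\gamma}{\gamma}\vee 2$ right requires carefully choosing the threshold radius $\mathcal R$ so that both (i) on the far set the $\gamma$-power dominates \emph{both} the $b$ constant and the $\nrm*{\vecW}^{2s}$ gradient term, and (ii) on the near set the one-step increment is controlled by a power of $\mathcal R$; everything else (the Gaussian second/third moment computations, Young's inequality splittings, the $\mathbb{E}_{\vecZ_t}$ step for SGLD) is routine given \pref{lem:stochasticgradsetting}.
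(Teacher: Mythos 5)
Your overall skeleton (one-step expansion of $\nrm*{\vecW_t}^2$ and $\nrm*{\vecW_t}^4$, dissipativity plus a small step size, Jensen to pass to the $2s$ and $3s$ moments, and a genuine contraction only in the $s=1$ case) matches the paper, which runs the same computation through a continuous-time interpolation and It\^{o}'s Lemma rather than a direct discrete expansion --- an immaterial difference. The genuine problem is your far-regime step for general $\gamma$. You want $\mathbb{E}\brk*{\nrm*{\vecW_{t+1}}^2 \mid \mathfrak{F}_t}\le\nrm*{\vecW_t}^2-c\eta\nrm*{\vecW_t}^2$ there, and to handle $\gamma<2$ you propose $\nrm*{\vecW}^{\gamma}\gtrsim \mathcal{R}^{(\gamma-2)/2}\nrm*{\vecW}^2$ on the far set; this inequality is reversed: for $\gamma<2$ and $\nrm*{\vecW}\ge\sqrt{\mathcal{R}}$ one has $\nrm*{\vecW}^{\gamma}=\nrm*{\vecW}^{\gamma-2}\nrm*{\vecW}^2\le\mathcal{R}^{(\gamma-2)/2}\nrm*{\vecW}^2$, and no bound of the form $\nrm*{\vecW}^{\gamma}\ge c\nrm*{\vecW}^2$ can hold on the unbounded far set, so the claimed contraction cannot be extracted. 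Your subsequent patch, the recursion $\mathbb{E}\brk*{\nrm*{\vecW_{t+1}}^2}\le\max\prn*{\mathbb{E}\brk*{\nrm*{\vecW_t}^2},O(\mathcal{R})}$, also does not survive taking expectations (a conditional bound by a max does not become a max of unconditional expectations), and if it did hold it would give a uniform bound, contradicting the $\max(\eta T,1)$ growth you assert at the end.

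The fix --- and what the paper actually does --- is that no contraction is needed in the general case, because the lemma only claims moments growing linearly in $\eta T$. Using $\gamma\ge 2s$, $\eta\le m/(4B^2)$ and $\nrm*{\vecW}^{2s}\le\nrm*{\vecW}^{\gamma}+1$, the dissipative term absorbs the gradient-squared term globally, so the conditional one-step increment of $\nrm*{\vecW_t}^2$ is at most $\eta\prn*{4m+2b+2d/\beta}$ with no case split at all; for $\nrm*{\vecW_t}^4$ one splits only on the sign of the drift factor ($\nrm*{\vecW_t}$ above or below $(2C'/m)^{1/\gamma}$), getting a nonpositive increment in the far case and an increment of order $\eta\prn*{C'/(m\wedge 1)}^{\frac{1+\gamma}{\gamma}\vee 2}$ in the near case (this is exactly where that exponent comes from), and then sums over at most $T$ steps. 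Note also that to reproduce the stated exponents you should apply Jensen to the fourth moment for both bounds, i.e.\ $\mathbb{E}\brk*{\nrm*{\vecW_t}^{2s}}\le\mathbb{E}\brk*{\nrm*{\vecW_t}^4}^{s/2}$, not $\prn*{\mathbb{E}\brk*{\nrm*{\vecW_t}^2}}^{s}$, which would give $\max(\eta T,1)^{s}$ rather than $\max(\eta T,1)^{s/2}$. Your $s=1$ argument (linear recursion with factor $1-\Theta(\eta m)$ for the second and fourth moments, then monotonicity of moments) is essentially the paper's and is fine; for SGLD the paper uses the pointwise dissipativity and H\"{o}lder continuity of the stochastic gradients (\pref{ass:stochasticgradcontrol}) rather than your unbiasedness-first route, but both are workable given \pref{lem:stochasticgradsetting}.
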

\begin{proof}
Our goal is to use Proposition 14 of \citet{balasubramanian2022towards} to control the second and fourth moments of the $\nrm*{\vecW_t}$. Intuitively, our result should be the same as theirs except their $V$ is replaced with $\beta F$, and then the relevant parameters change (except for $d$, the rest of them are all scaled by $\beta$). However, this gives some unnecessary $\beta$ dependence which arises for technical reasons in their analysis (intuitively, they should cancel), so we need to modify their proof slightly to improve this dependence.

As done in the sampling literature \citep{chewi2024log}, define the continuous-time interpolation of \pref{eq:SGLDiterates} by 
\[ \vecW_r = \vecW_{t} - (r-t \eta) \grad F(\vecW_{t}) + \sqrt{\frac{2}{\beta}}\prn*{\vecB(r)-\vecB(t \eta)} \FORALLTEXT r \in [t \eta, (t+1)\eta).\]
This appears somewhat different than the interpolation defined in the literature, but it is actually the same. Our process \pref{eq:SGLDiterates} with step size $\eta$ is equivalent to theirs with their $V=\beta F$ and their step size $h=\frac{\eta}{\beta}$. They index by `time' where the subscript $th$ corresponds to the $t$-th iterate whereas we index iterates simply by the iteration count (which is at `time' $t \eta$ in the above interpolation) and are indexing time by $r$ to avoid confusion.\footnote{Using this correspondence one can actually carefully track the proof of Proposition 14 of \citet{balasubramanian2022towards} to show a similar result to what we show here.}

For the stochastic gradient case, this will be instead
\[ \vecW_r = \vecW_{t} - (r-t \eta) \grad f(\vecW_{t};\vecZ_t) + \sqrt{\frac{2}{\beta}}\prn*{\vecB(r)-\vecB(t \eta)} \FORALLTEXT r \in [t \eta, (t+1)\eta).\]
Note for both these interpolations, all functions of quantities at time $t\eta$/iteration count $t$ are constant (including $\vecZ_t$), the randomness being over the Brownian motion $\vecB(r)-\vecB(t \eta)$. 

We will do the proofs in the stochastic gradient case, and the proofs in the exact gradient case are the exact same. 

First, we control the second moment. Let $\mathfrak{F}_t$ be defined identically as in \pref{sec:corediscreteproofs}. Analogously to the proof of Proposition 14 of \cite{balasubramanian2022towards}, It\^{o}'s Lemma applied to $\nrm*{\vecW}^2$ conditioned on $\mathfrak{F}_t$ yields for all $r \in [t\eta, (t+1)\eta]$,
\begin{align*}
\frac{\DERIV}{\DERIV r} \mathbb{E}\brk*{\nrm*{\vecW_r}^2 |\mathfrak{F}_t} &= 2\mathbb{E}\brk*{\tri*{\vecW_r,-\grad f(\vecW_t;\vecZ_t)}|\mathfrak{F}_t} + \frac12 \cdot \sqrt{\frac{2}{\beta}}^2 \cdot 2\text{tr}(\bbI_d) \\
&= -2\mathbb{E}\brk*{\tri*{\vecW_t-(r-t \eta) \grad f(\vecW_t;\vecZ_t)+\sqrt{\frac2{\beta}}\prn*{\vecB(r)-\vecB(t\eta)}, \grad f(\vecW_t;\vecZ_t)}|\mathfrak{F}_t}+ \frac{2d}{\beta} \\
&= -2\mathbb{E}\brk*{\tri*{\vecW_t-(r-t \eta) \grad f(\vecW_t;\vecZ_t), \grad f(\vecW_t;\vecZ_t}|\mathfrak{F}_t}+ \frac{2d}{\beta} \\
&\le 2b-2m\nrm*{\vecW_t}^{\gamma}+2(r-t \eta) \nrm*{\grad f(\vecW_t;\vecZ_t)}^2+ \frac{2d}{\beta} \\
&\le 2b-2m\nrm*{\vecW_t}^{\gamma}+4\eta \cdot L^2 \max(1,\nrm*{\vecW^{\star}})^{2s} \prn*{\nrm*{\vecW_t}^{2s}+1} + \frac{2d}{\beta} \\
&\le 4m+2b+\frac{2d}{\beta}. \numberthis\label{eq:secondmomentitobound}
\end{align*}
In the above we use that $\mathbb{E}\brk*{\tri*{\prn*{\vecB(r)-\vecB(t\eta)}, \grad f(\vecW_t;\vecZ_t)}|\mathfrak{F}_t}=0$, \pref{ass:stochasticgradcontrol}, \pref{lem:upperboundgradFholder}, $\gamma \ge 2s$, $\eta \le \frac{m}{2B^2}$, and $r-t \eta \le \eta$. Integrating this over $r \in [t\eta, (t+1)\eta]$ and iterating yields 
\[ \mathbb{E}\brk*{\nrm*{\vecW_t}^2} \le \nrm*{\vecW_0}^2+\prn*{4m+2b+\frac{2d}{\beta}} \cdot \eta t \le \prn*{\nrm*{\vecW_0}^2+4m+2b+\frac{2d}{\beta}} \max(\eta T, 1).\]
We now control the fourth moment with the same idea. Applying It\^{o}'s Lemma to $\nrm*{\vecW}^4=\prn*{\nrm*{\vecW}^2}^2$, we obtain
\begin{align*}
&\frac{\DERIV}{\DERIV r} \mathbb{E}\brk*{\nrm*{\vecW_r}^4 |\mathfrak{F}_t} \\
&= -4\mathbb{E}\brk*{\nrm*{\vecW_r}^2\tri*{\vecW_r,\grad f(\vecW_t;\vecZ_t)}|\mathfrak{F}_t} + \frac12 \cdot \sqrt{\frac{2}{\beta}}^2 \cdot (4d+2) \mathbb{E}\brk*{\nrm*{\vecW_r}^2 | \mathfrak{F}_t} \\
&= -4\mathbb{E}\brk*{\nrm*{\vecW_r}^2\tri*{\vecW_t-(r-t \eta) \grad f(\vecW_t;\vecZ_t)+\sqrt{\frac2{\beta}}\prn*{\vecB(r)-\vecB(t\eta)},\grad f(\vecW_t;\vecZ_t)}|\mathfrak{F}_t} + \frac{4d+2}{\beta} \mathbb{E}\brk*{\nrm*{\vecW_r}^2 | \mathfrak{F}_t}.
\end{align*}
Let $\mathbf{x} = \frac{\vecB(r)-\vecB(t\eta)}{\sqrt{r-t\eta}}$ be a standard Gaussian vector. Using Gaussian Integration by Parts on $h(\mathbf{x}) = \nrm*{\vecW_t-(r-t\eta) \grad f(\vecW_t;\vecZ_t)+\sqrt{\frac{2}{\beta}} \cdot \sqrt{r-t\eta} \mathbf{x}}^2=\nrm*{\vecW_r}^2$, we have 
\begin{align*}
&\mathbb{E}\brk*{\nrm*{\vecW_r}^2 \tri*{\sqrt{\frac2{\beta}}\prn*{\vecB(r)-\vecB(t\eta)},\grad f(\vecW_t;\vecZ_t)}|\mathfrak{F}_t} \\
&= \sqrt{\frac2{\beta}} \cdot \sqrt{r-t\eta} \cdot \tri*{\mathbb{E}\brk*{\mathbf{x} h(\mathbf{x}) |\mathfrak{F}_t}, \grad f(\vecW_t;\vecZ_t)} \\
&= \sqrt{\frac2{\beta}} \cdot \sqrt{r-t\eta} \cdot \tri*{\mathbb{E}\brk*{\grad h(\mathbf{x})|\mathfrak{F}_t}, \grad f(\vecW_t;\vecZ_t)} \\
&= \frac{4}{\beta}(r-t\eta)\tri*{\vecW_t - (r-t \eta)\grad f(\vecW_t;\vecZ_t), \grad f(\vecW_t;\vecZ_t)}.
\end{align*}
The above follows since $\grad h(\mathbf{x}) = \sqrt{\frac2{\beta}} \cdot \sqrt{r-t\eta} \cdot \prn*{\vecW_t-(r-t\eta) \grad f(\vecW_t;\vecZ_t)+\sqrt{\frac{2}{\beta}} \cdot \sqrt{r-t\eta} \mathbf{x}}$ and as $\mathbf{x}$ is independent of $\mathfrak{F}_t$ and has mean of the 0 vector.

Hence, we have for all $r \in [t\eta, (t+1)\eta]$,
\begin{align*}
\frac{\DERIV}{\DERIV r} \mathbb{E}\brk*{\nrm*{\vecW_r}^4 |\mathfrak{F}_t} &=4\mathbb{E}\brk*{\nrm*{\vecW_r}^2 | \mathfrak{F}_t} \prn*{-\tri*{\vecW_t,\grad f(\vecW_t;\vecZ_t)}+(r-t\eta) \nrm*{\grad f(\vecW_t;\vecZ_t)}^2 + \frac{4d+2}{\beta}} \\
&\hspace{1in}- \frac{16}{\beta}(r-t\eta)\tri*{\vecW_t - (r-t \eta)\grad f(\vecW_t;\vecZ_t), \grad f(\vecW_t;\vecZ_t)} \\
&\le 4 \prn*{\mathbb{E}\brk*{\nrm*{\vecW_r}^2 | \mathfrak{F}_t}+\frac{4}{\beta}(r-t\eta)} \prn*{-\tri*{\vecW_t,\grad f(\vecW_t;\vecZ_t)}+(r-t\eta) \nrm*{\grad f(\vecW_t;\vecZ_t)}^2 + \frac{4d+2}{\beta}} \\
&\le 4 \prn*{\mathbb{E}\brk*{\nrm*{\vecW_r}^2 | \mathfrak{F}_t}+\frac{4}{\beta}(r-t\eta)} \prn*{-m\nrm*{\vecW_t}^{\gamma} + b+2\eta \cdot L^2 \max(1,\nrm*{\vecW^{\star}})^{2s} \prn*{\nrm*{\vecW_t}^{2s}+1} + \frac{4d+2}{\beta}} \\
&\le 4 \prn*{\mathbb{E}\brk*{\nrm*{\vecW_r}^2 | \mathfrak{F}_t}+\frac{4}{\beta}(r-t\eta)}\prn*{-\frac{m}2\nrm*{\vecW_t}^{\gamma}+m+b+\frac{4d+2}{\beta}}.\numberthis\label{eq:fourthmomentitobound}
\end{align*}
The above follows as $r \ge t\eta$ and so the first factor in the above is always non-negative, as well as $\eta \le \frac{m}{4B^2}$ and $\gamma \ge 2s$.

Define $C' := m+b+\frac{4d+2}{\beta}$ for convenience. If $\nrm*{\vecW_t} \ge \prn*{\frac{2C'}{m}}^{1/\gamma}$, this means $\frac{\DERIV}{\DERIV r} \mathbb{E}\brk*{\nrm*{\vecW_r}^4 |\mathfrak{F}_t} \le 0$. Otherwise if $\nrm*{\vecW_t} \le \prn*{\frac{2C'}{m}}^{1/\gamma}$, using our upper bound on $\frac{\DERIV}{\DERIV r} \mathbb{E}\brk*{\nrm*{\vecW_r}^2 |\mathfrak{F}_t}$ gives
\[ \mathbb{E}\brk*{\nrm*{\vecW_r}^2 | \mathfrak{F}_t}+\frac{4}{\beta}(r-t\eta) \le \nrm*{\vecW_t}^2+\prn*{4m+2b+\frac{2d}{\beta}} (r-t\eta)+\frac{4}{\beta}(r-t\eta) \le \prn*{\frac{2C'}{m}}^{1/\gamma} + 4C',\]
as $r-t\eta \le \eta \le 1$. Note $\nrm*{\vecW_t} \le \prn*{\frac{2C'}{m}}^{1/\gamma}$ implies the second factor in \pref{eq:fourthmomentitobound} is non-negative, and the second factor is at most $C'$ clearly. Thus, in this case we have
\[ \frac{\DERIV}{\DERIV r} \mathbb{E}\brk*{\nrm*{\vecW_r}^4 |\mathfrak{F}_t} \le 4C'\prn*{\prn*{\frac{2C'}{m}}^{1/\gamma} + 4C'} \le 8\prn*{\frac{4C'}{m \wedge 1}}^{\frac{1+\gamma}{\gamma} \lor 2}. \]
Hence the above is an upper bound on $\frac{\DERIV}{\DERIV r} \mathbb{E}\brk*{\nrm*{\vecW_r}^4 |\mathfrak{F}_t}$ in all cases, and iterating this gives the desired fourth moment bound
\[ \mathbb{E}\brk*{\nrm*{\vecW_t}^4} \le \nrm*{\vecW_0}^4+8\prn*{\frac{4C'}{m \wedge 1}}^{\frac{1+\gamma}{\gamma} \lor 2} \eta t \le \prn*{\nrm*{\vecW_0}^4+8\prn*{\frac{4C'}{m \wedge 1}}^{\frac{1+\gamma}{\gamma} \lor 2}}\max(\eta T, 1).\]
From here, to obtain the desired conclusion, use monotonicity of moments (as $s \le 1$):
\[ \mathbb{E}\brk*{\nrm*{\vecW_t}^{2s}} \le \mathbb{E}\brk*{\nrm*{\vecW_t}^{4}}^{2s/4} \le \prn*{\nrm*{\vecW_0}^4+8\prn*{\frac{4C'}{m \wedge 1}}^{\frac{1+\gamma}{\gamma} \lor 2}}^{s/2}\max(\eta T, 1)^{s/2}.\]
\[ \mathbb{E}\brk*{\nrm*{\vecW_t}^{3s}}\le \mathbb{E}\brk*{\nrm*{\vecW_t}^{4}}^{3s/4} \le \prn*{\nrm*{\vecW_0}^4+8\prn*{\frac{4C'}{m \wedge 1}}^{\frac{1+\gamma}{\gamma} \lor 2}}^{3s/4}\max(\eta T, 1)^{3s/4}.\]
When $s=1$ and hence $\gamma=2$ (which implies $(m,b)$ dissipativeness), we can be tighter in the above analysis. First, using \pref{lem:stochasticgradsetting}, we have
\[ \mathbb{E}_{\vecZ_t}\brk*{\nrm*{\grad f(\vecW_t;\vecZ_t)-\grad F(\vecW_t)}^2} \le \sigma_F^2 \le B^2.\]
With the above, identically as the steps of the proof of Lemma 3 of \citet{raginsky2017non}, using $(m,b)$ dissipativeness and our constant upper bound on $\eta$, we can show a uniform bound on the second moment for both exact and stochastic gradients:
\[ \mathbb{E}\brk*{\nrm*{\vecW_{t}}^2} \le \nrm*{\vecW_0}^2+\frac2m\prn*{b+ 2B^2+ \frac{d}{\beta}}.\]
We also claim we have a uniform upper bound on the fourth moment. We break into two cases, both using a similar strategy. 
\begin{enumerate}
    \item $\nrm*{\vecW_t} \le \prn*{\frac{2C'}{m}}^{1/2}$: In this case, the second factor in \pref{eq:fourthmomentitobound} is non-negative. Recall the upper bound we showed from \pref{eq:secondmomentitobound}:
\[ \frac{\DERIV}{\DERIV r} \mathbb{E}\brk*{\nrm*{\vecW_r}^2 |\mathfrak{F}_t} \le 4m+2b+\frac{2d}{\beta}. \]
This implies
\[ \mathbb{E}\brk*{\nrm*{\vecW_r}^2 | \mathfrak{F}_t} \le \nrm*{\vecW_t}^2 + \prn*{4m+2b+\frac{2d}{\beta}}(r-t\eta).\]
Now as the second factor in \pref{eq:fourthmomentitobound} is non-negative, we obtain using $r-t\eta \le \eta \le1$,
\begin{align*}
\frac{\DERIV}{\DERIV r} \mathbb{E}\brk*{\nrm*{\vecW_r}^4 |\mathfrak{F}_t} &\le 4 \prn*{\mathbb{E}\brk*{\nrm*{\vecW_r}^2 | \mathfrak{F}_t}+\frac{4}{\beta}(r-t\eta)}\prn*{-\frac{m}2\nrm*{\vecW_t}^{2}+m+b+\frac{4d+2}{\beta}} \\
&\le 4\prn*{\nrm*{\vecW_t}^2+\prn*{4m+2b+\frac{2d}{\beta}} (r-t\eta)+\frac{4}{\beta}(r-t\eta)}\prn*{-\frac{m}2\nrm*{\vecW_t}^{2}+m+b+\frac{4d+2}{\beta}}\\
&\le 4\prn*{\nrm*{\vecW_t}^2+4C'}\prn*{-\frac{m}2\nrm*{\vecW_t}^{2}+C'} \\
&\le 4\prn*{-\frac{m}2\nrm*{\vecW_t}^4 + C' \nrm*{\vecW_t}^2+4C'^2} \\
&\le 4\prn*{-\frac{m}4\nrm*{\vecW_t}^4+C'^2\prn*{4+\frac{1}m}} = -m\nrm*{\vecW_t}^4+4C'^2\prn*{4+\frac{1}m}.
\end{align*}
The last step uses AM-GM.
\item $\nrm*{\vecW_t} > \prn*{\frac{2C'}{m}}^{1/2}$: This time, the second factor in \pref{eq:fourthmomentitobound} is negative, so we aim to lower bound $\mathbb{E}\brk*{\nrm*{\vecW_r}^2 | \mathfrak{F}_t}$. Recalling the intermediate steps in \pref{eq:secondmomentitobound}, we have 
\begin{align*}
\frac{\DERIV}{\DERIV r} \mathbb{E}\brk*{\nrm*{\vecW_r}^2 |\mathfrak{F}_t} &= -2\mathbb{E}\brk*{\tri*{\vecW_t-(r-t \eta) \grad f(\vecW_t;\vecZ_t), \grad f(\vecW_t;\vecZ_t}|\mathfrak{F}_t}+ \frac{2d}{\beta} \\
&\ge -2\tri*{\vecW_t, \grad f(\vecW_t;\vecZ_t)} \\
&\ge -2\nrm*{\vecW_t}\nrm*{\grad f(\vecW_t;\vecZ_t)} \\
&\ge -2B\nrm*{\vecW_t} \prn*{\nrm*{\vecW_t}+1} \\
&\ge -3B\prn*{\nrm*{\vecW_t}^2+1},
\end{align*}
where we upper bound $\grad f(\vecW_t;\vecZ_t)$ via \pref{lem:upperboundgradFholder} and use AM-GM in the last step. 

This implies
\[ \mathbb{E}\brk*{\nrm*{\vecW_r}^2 | \mathfrak{F}_t} \ge \nrm*{\vecW_t}^2 -  3B\prn*{\nrm*{\vecW_t}^2+1}(r-t\eta) \ge \frac12\nrm*{\vecW_t}^2-3B,\]
since $r-t\eta \le \eta$, $\eta \le \frac1{6B}\le 1$.

The second factor in \pref{eq:fourthmomentitobound} is negative, so we may apply this in \pref{eq:fourthmomentitobound} to give
\begin{align*}
\frac{\DERIV}{\DERIV r} \mathbb{E}\brk*{\nrm*{\vecW_r}^4 |\mathfrak{F}_t} &\le 4 \prn*{\mathbb{E}\brk*{\nrm*{\vecW_r}^2 | \mathfrak{F}_t}+\frac{4}{\beta}(r-t\eta)}\prn*{-\frac{m}2\nrm*{\vecW_t}^{2}+m+b+\frac{4d+2}{\beta}} \\
&\le 4\prn*{\frac12\nrm*{\vecW_t}^2-3B}\prn*{-\frac{m}2\nrm*{\vecW_t}^{2}+C'} \\
&= 4\prn*{-\frac{m}4\nrm*{\vecW_t}^4 + \prn*{\frac{3mB}2 + \frac{C'}2}\nrm*{\vecW_t}^2 - 3BC'} \\
&\le 4\prn*{-\frac{m}8 \nrm*{\vecW_t}^4 + \frac{(3mB+C')^2}{2m}} = -\frac{m}2 \nrm*{\vecW_t}^4 + \frac2{m}\cdot (3mB+C')^2.
\end{align*}
Again, the last step uses AM-GM.
\end{enumerate}
From the above we see that in either case we have
\[ \frac{\DERIV}{\DERIV r} \mathbb{E}\brk*{\nrm*{\vecW_r}^4 |\mathfrak{F}_t} \le -\frac{m}2 \nrm*{\vecW_t}^4 + C''\]
where $C'' = 4C'^2\prn*{4+\frac{1}m} \lor \frac2{m}\cdot (3mB+C')^2$. 

Iterating the above for one step and then taking full expectation yields the recursion
\[ \mathbb{E}\brk*{\nrm*{\vecW_{t+1}}^4 } \le \mathbb{E}\brk*{\nrm*{\vecW_{t}}^4 }+\eta\prn*{-\frac{m}2\mathbb{E}\brk*{\nrm*{\vecW_t}^4}+C''} = \prn*{1-\frac{\eta m}2}\mathbb{E}\brk*{\nrm*{\vecW_{t}}^4 }+\eta C''.\]
If $1-\frac{\eta m}2 \le 0$ we obtain $\mathbb{E}\brk*{\nrm*{\vecW_{t}}^4 } \le C''$, and otherwise if $1-\frac{\eta m}2 \in (0,1)$, iterating the above and summing the resulting geometric series gives 
\[ \mathbb{E}\brk*{\nrm*{\vecW_{t}}^4 } \le \prn*{1-\frac{\eta m}2}^t\nrm*{\vecW_{0}}^4+\frac{2\eta C''}{\eta m} \le \nrm*{\vecW_{0}}^4 + \frac{2C''}{m}.\] 
The desired upper bound on the third moment in this case now just comes from monotonicity of moments.
\end{proof}

We now are ready to prove \pref{thm:poincareoptformal}. We do the proof when $0<s \le 1$ (when $s>0$, $\gamma\ge 2s>0$ so we can certainly use \pref{lem:smoothdissipativeinball}), and we discuss the simple extension to $s=0$ and the tighter results when $s=1$ at the end.
\begin{proof}
Consider $\theta$ and $C = \frac{Ap^2+4Ap+1}6$ defined in terms of $\rho_{\Phi}$ in \pref{lem:thirdordersmoothfromregularity} for the $p \le 1$ case. 

We set 
\[ C_0 = 50A\theta(\Phi(\vecW_0)) \lor 1, M = \max\prn*{\frac12, 2C} \cdot \prn*{8\sigma_F^3+16\max(L,B)^3 (\max\prn*{L_2,L_3}+1)},\]
\[ \eta = \min\prn*{1,\frac{m}{4L^2}, \frac{m}{4\max(L,B)}, \frac{m}{4B^2},\frac{1}{6B},\frac{1}{120^2 A^2 B^2 M^2} \cdot \frac{\beta^3 \lambda^2}{d^3}, \frac{\lambda^{1+s/2}}{120AC_0 M}},\]
\[ T = \frac{C_0}{\eta \lambda}.\]
Here $\lambda \in \brk*{\frac1{8\beta} \min\prn*{\frac1{{\CPI}(\mu_{\beta})}, \frac12}, \frac1{4\beta} \min\prn*{\frac1{{\CPI}(\mu_{\beta})}, \frac12}}$, as with $\Phi$, comes from \pref{thm:maingeometriccondition}. Thus, using 
\[ T = C_0 \max\crl*{\frac1{\lambda}\max\prn*{1, \frac{4L^2}m, \frac{4\max(L,B)}m, \frac{4B^2}m, 6B}, 120^2 A^2 B^2 M^2 \cdot \frac{d^3}{\beta^3 \lambda^3}, 120AC_0 M \cdot \frac{1}{\lambda^{2+s/2}}} \]
and
\[ \frac1{\lambda} \le 8\beta \max\prn*{\CPI(\mu_{\beta}), 2},\]
we see that our definition of $T$ above is consistent with the statement of \pref{thm:poincareoptformal}. Moreover, note $\eta T=\frac{C_0}{\lambda} \ge 1$. 

As with before let $\mathfrak{F}_t$ be the natural filtration with respect to $\vecEps_{t'},\vecZ_{t'}$ for all $0 \le t' \le t$ in the SGLD case, and with respect to $\vecEps_{t'}$ for all $0 \le t' \le t$ in the GLD case.   

Define
\[ \tau_{\cA_{\epsilon}, T}(\vecW_0) = \min\prn*{\tau_{\cA_{\epsilon}}(\vecW_0), T},\]
where in a slight abuse of notation, $\tau_{\cA_{\epsilon}}$ now denotes the hitting time of discrete-time \GLDTEXT/\SGLDTEXTSPACE to $\cA_{\epsilon}$ with the choice of $\eta$ above. Note $\tau_{\cA_{\epsilon}, T}(\vecW_0)$ is a stopping time that is at most $T<\infty$.

Consider $\vecW_t$ for $t < \tau_{\cA_{\epsilon}, T}(\vecW_0)$, thus $\vecW_t \in \cA_{\epsilon}^c$. By \pref{thm:maingeometriccondition}, this implies for this $\vecW_t$, \pref{eq:poincaregeomconditioninformal} holds:
\[ \tri*{\grad F(\vecW), \grad \Phi(\vecW)} \ge\lambda\Phi(\vecW) + \frac1{\beta} \Delta\Phi(\vecW).\]
Recall $\theta' > 0$ from \pref{lem:thirdordersmoothfromregularity}, including in this case where $p \le 1$. Analogously to the proof of \pref{lem:onesteprecursion}, and using the geometric condition \pref{eq:lyapunovbaby}, we obtain
\begin{align*}
\mathbb{E}_{\vecEps_t,\vecZ_t} \brk*{\theta(\Phi(\vecW_{t+1}))} &\le \theta\prn*{\Phi(\vecW_t)} - \eta  \theta'\prn*{\Phi(\vecW_t)} \cdot \lambda \Phi(\vecW_t)\\
&\hspace{1in}+ \frac12 \eta^2 \nrm*{\grad F(\vecW_t)}^2+C \eta^3 \nrm*{\grad F(\vecW_t)}^3 + 2C\prn*{\frac{\eta d}{\beta}}^{3/2}.
\end{align*}
This uses \pref{lem:thirdordersmoothfromregularity} in the $p \le 1$ case. 

In the stochastic gradient case we analogously have via the same logic as \pref{lem:onesteprecursionstochastic} that
\begin{align*}
\mathbb{E}_{\vecEps_t,\vecZ_t} \brk*{\theta(\Phi(\vecW_{t+1}))} &\le \theta\prn*{\Phi(\vecW_t)} - \eta  \theta'\prn*{\Phi(\vecW_t)} \cdot \lambda \Phi(\vecW_t)\\
&\hspace{1in}+ \frac12 \eta^2 \mathbb{E}_{\vecZ_t}\brk*{\nrm*{\grad f(\vecW_t;\vecZ_t)}^2}+C \eta^3 \mathbb{E}_{\vecZ_t}\brk*{\nrm*{\grad f(\vecW_t;\vecZ_t)}^3} \\
&\hspace{1in}+ 2C\prn*{\frac{\eta d}{\beta}}^{3/2}.
\end{align*}
(Note these results can be proved for either $\vecEps_t \sim \cS^{d-1}$ or $\vecEps_t \sim \mathcal{N}(0,\bbI_d)$ by the exact same proof as \pref{lem:onesteprecursion}, \pref{lem:onesteprecursionstochastic}.)

Applying \pref{lem:stochasticgradsetting} and then \pref{lem:upperboundgradFholder}, Young's Inequality, and $\nrm*{\mathbf{a}+\mathbf{b}}^3 \le 4\nrm*{\mathbf{a}}^3+4\nrm*{\mathbf{b}}^3$, and noting $\sigma_F \ge 0$, we see in both the \GLDTEXTSPACE and \SGLDTEXTSPACE cases that 
\begin{align*}
\mathbb{E}_{\vecEps_t,\vecZ_t} \brk*{\theta(\Phi(\vecW_{t+1}))} &\le \theta\prn*{\Phi(\vecW_t)} - \eta  \theta'\prn*{\Phi(\vecW_t)}\cdot \lambda \Phi(\vecW_t)\\
&\hspace{1in}+ \frac12 \eta^2 \prn*{2\sigma_F^2+2\nrm*{\grad F(\vecW_t)}^2}+C \eta^3 \prn*{8\sigma_F^3+4\nrm*{\grad F(\vecW_t}^3}\\
&\hspace{1in}+ 2C\prn*{\frac{\eta d}{\beta}}^{3/2} \\
&\le \theta\prn*{\Phi(\vecW_t)} - \eta  \theta'\prn*{\Phi(\vecW_t)} \cdot \lambda \Phi(\vecW_t)\\
&\hspace{1in}+ \frac12 \eta^2 \prn*{2\sigma_F^2+4\max(L,B)^2 \prn*{\nrm*{\vecW_t}^{2s}+1}}\\
&\hspace{1in}+C \eta^3 \prn*{8\sigma_F^3+16\max(L,B)^3 \prn*{\nrm*{\vecW_t}^{3s}+1}} + 2C\prn*{\frac{\eta d}{\beta}}^{3/2}.
\end{align*}
Recall that $\theta'(z)=\frac{1}{A(z+1)^p}$ where $p \le 1$, which is increasing on $z \ge 0$. Therefore, $z \theta'(z) = \frac{z}{A(z+1)^p} \ge \frac1{2 A}$ for $z \ge 1$. Recall $\Phi(\vecW_t) \ge 1$ from \pref{rem:philowerboundremark}, because $t < \tau_{\cA_{\epsilon}, T}(\vecW_0)$ and so $\vecW_t \in \cA_{\epsilon}^c$. Thus, $\Phi(\vecW_t) \theta'\prn*{\Phi(\vecW_t)} \ge \frac1{2A}$. Therefore we can rearrange the above as 
\begin{align*}
\mathbb{E}_{\vecEps_t,\vecZ_t} \brk*{\theta(\Phi(\vecW_{t+1}))} &\le \theta\prn*{\Phi(\vecW_t)}- \eta  \theta'\prn*{\Phi(\vecW_t)} \cdot \lambda \Phi(\vecW_t)+ \frac12 \eta^2 \prn*{2\sigma_F^2+4\max(L,B)^2 \prn*{\nrm*{\vecW_t}^{2s}+1}}\\
&\hspace{1in}+C \eta^3 \prn*{8\sigma_F^3+16\max(L,B)^3 \prn*{\nrm*{\vecW_t}^{3s}+1}} + 2C\prn*{\frac{\eta d}{\beta}}^{3/2} \\
&\le \theta\prn*{\Phi(\vecW_t)}-\frac{\eta \lambda}{2A}+ \frac12 \eta^2 \prn*{2\sigma_F^2+4\max(L,B)^2 \prn*{\nrm*{\vecW_t}^{2s}+1}}\\
&\hspace{1in}+C \eta^3 \prn*{8\sigma_F^3+16\max(L,B)^3 \prn*{\nrm*{\vecW_t}^{3s}+1}} + 2C\prn*{\frac{\eta d}{\beta}}^{3/2} \\
&= \theta\prn*{\Phi(\vecW_t)}-\frac{\eta \lambda}{2A}+\text{err}(\vecW_t), \numberthis\label{eq:generalonesteprecursion}
\end{align*}
where we define
\[ \text{err}(\vecW) := \frac12 \eta^2 \prn*{2\sigma_F^2+4\max(L,B)^2 \prn*{\nrm*{\vecW}^{2s}+1}}+C \eta^3 \prn*{8\sigma_F^3+16\max(L,B)^3 \prn*{\nrm*{\vecW}^{3s}+1}} + 2C\prn*{\frac{\eta d}{\beta}}^{3/2} > 0.\]

Now with \pref{eq:generalonesteprecursion}, the idea is to sum and telescope this relations over $\tau_{\cA_{\epsilon},T+1}$ time steps, as discussed in \pref{subsec:strategy}. The way to do this is using discrete-time Dynkin's Formula, stated in Theorem 11.3.1 of \cite{meyn2012markov}:
\begin{theorem}[Theorem 11.3.1 of \cite{meyn2012markov}]\label{thm:discretetimedynkin}
Let $Z_t$ be any $\mathfrak{F}_t$-measurable function of $\vecW_0, \ldots, \vecW_t$. Consider any stopping time $\tau$ and define $\tau^n := \min\crl*{n, \tau, \inf\prn*{t \ge 0: \vecZ_t \ge n}}$. Then we have for all $n \ge 0$ and $\vecW_0 \in\mathbb{R}^d$ that 
\[ \mathbb{E}\brk*{Z_{\tau^n}}=\mathbb{E}\brk*{Z_{0}}+\mathbb{E}\brk*{\sum_{t=1}^{\tau^n} \prn*{\mathbb{E}\brk*{Z_t|\mathfrak{F}_t}- Z_{t-1}}}.\]
\end{theorem}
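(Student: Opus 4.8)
\textbf{Proof idea for \pref{thm:discretetimedynkin}.} Read as stated, with the conditional expectation taken with respect to $\mathfrak{F}_t$, the identity is a trivial telescoping (since $Z_t$ is $\mathfrak{F}_t$-measurable, $\mathbb{E}[Z_t\mid\mathfrak{F}_t]=Z_t$, so the summand is $Z_t-Z_{t-1}$); the content of Theorem 11.3.1 of \cite{meyn2012markov}, and the version we invoke, is the one in which the conditioning is on $\mathfrak{F}_{t-1}$. Assuming (as holds in all our uses) that $Z_0,\dots,Z_n$ are integrable, the plan is as follows.

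First I would form the Doob decomposition of $\{Z_t\}$. Put $A_0:=0$ and, for $t\ge 1$, $A_t:=\sum_{s=1}^{t}\bigl(\mathbb{E}[Z_s\mid\mathfrak{F}_{s-1}]-Z_{s-1}\bigr)$; each summand is $\mathfrak{F}_{s-1}$-measurable, so $A_t$ is $\mathfrak{F}_{t-1}$-measurable (predictable). Define $M_t:=Z_t-Z_0-A_t$. A one-line computation gives $M_t-M_{t-1}=Z_t-\mathbb{E}[Z_t\mid\mathfrak{F}_{t-1}]$, hence $\mathbb{E}[M_t-M_{t-1}\mid\mathfrak{F}_{t-1}]=0$, so $\{M_t\}_{t\ge0}$ is an $\mathfrak{F}_t$-martingale with $M_0=0$.

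Next I would apply optional stopping. The random time $\tau^n=\min\{n,\tau,\inf\{t:Z_t\ge n\}\}$ is an $\mathfrak{F}_t$-stopping time bounded by $n$, so the bounded-time optional stopping theorem gives $\mathbb{E}[M_{\tau^n}]=\mathbb{E}[M_0]=0$. Evaluating $M$ at the random time $\tau^n$, that is $M_{\tau^n}=Z_{\tau^n}-Z_0-A_{\tau^n}$ with $A_{\tau^n}=\sum_{t=1}^{\tau^n}\bigl(\mathbb{E}[Z_t\mid\mathfrak{F}_{t-1}]-Z_{t-1}\bigr)$, and taking expectations yields exactly
\[ \mathbb{E}[Z_{\tau^n}]=\mathbb{E}[Z_0]+\mathbb{E}\Bigl[\sum_{t=1}^{\tau^n}\bigl(\mathbb{E}[Z_t\mid\mathfrak{F}_{t-1}]-Z_{t-1}\bigr)\Bigr], \]
which is the claim.

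The only delicate point — and the one I expect to be the main obstacle in a fully rigorous write-up — is integrability when $Z_t$ is not integrable a priori. This is precisely why the definition of $\tau^n$ carries the extra localization $\inf\{t:Z_t\ge n\}$: with it, the stopped trajectory is controlled (for instance $Z_t<n$ for all $t<\tau^n$), so that $M_{t\wedge\tau^n}$ is an integrable martingale and the optional stopping step is legitimate; in applications where $Z_t\ge 0$ one then passes $n\to\infty$ by monotone convergence / Fatou. In our setting $Z_t$ is a $\theta(\Phi(\vecW_t))$-type quantity that stays bounded along the relevant stopped paths (using \pref{ass:iteratesinballassumption} and \pref{lem:smoothdissipativeinball}), so these technicalities do not bite and the decomposition-plus-optional-stopping argument goes through directly.
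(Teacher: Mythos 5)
Your proof is correct. The paper does not prove this statement — it is imported verbatim (modulo a transcription slip) from Meyn and Tweedie — and your Doob-decomposition-plus-bounded-optional-stopping argument is exactly the standard proof of that result: Meyn and Tweedie's own argument writes $Z_{\tau^n}=Z_0+\sum_{i=1}^{n}\mathbf{1}\{\tau^n\ge i\}(Z_i-Z_{i-1})$ and uses that $\{\tau^n\ge i\}\in\mathfrak{F}_{i-1}$ to insert the conditional expectation, which is your optional-stopping step inlined. You are also right to flag that the conditioning must be on $\mathfrak{F}_{t-1}$ rather than $\mathfrak{F}_t$ (as literally typeset, the summand telescopes to $Z_t-Z_{t-1}$ and the identity is vacuous); this is a typo in the paper's restatement, and the corrected reading is the one actually used downstream, since the one-step recursion \pref{eq:onesteprecursionfordynkin} bounds $\mathbb{E}[Z_{t+1}\mid\mathfrak{F}_t]$, i.e., conditions on the previous $\sigma$-algebra. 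Your remarks on integrability (localization via $\inf\{t:Z_t\ge n\}$, nonnegativity of $Z_t=\theta(\Phi(\vecW_t))$, and passing $n\to\infty$) match the hypotheses under which the cited theorem is stated and applied here.
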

As a simple corollary of \pref{thm:discretetimedynkin}, we have the following, Proposition 11.3.2 of \cite{meyn2012markov}. Unlike the above, it holds for \textit{any stopping time}.
\begin{corollary}[Proposition 11.3.2 of \cite{meyn2012markov}]\label{corr:discretetimedynkinconvenient}
Suppose there exists non-negative functions $s_t, f_t$\footnote{The result in \cite{meyn2012markov} states this for positive $s_t,f_t$, but it is clear their proof still works when the functions are non-negative.} such that 
\[ \mathbb{E}\brk*{Z_{t+1} | \mathfrak{F}_t} \le Z_t - f_t\prn*{\vecW_t} + s_t\prn*{\vecW_t}.\numberthis\label{eq:onesteprecursionfordynkin}\]
Then for any $\vecW_0 \in\mathbb{R}^d$ and any stopping time $\tau$,
\[ \mathbb{E}\brk*{\sum_{t=0}^{\tau-1} f_t\prn*{\vecW_t}} \le Z_0 + \mathbb{E}\brk*{\sum_{t=0}^{\tau-1} s_t\prn*{\vecW_t}}.\]
\end{corollary}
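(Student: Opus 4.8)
The plan is to obtain \pref{corr:discretetimedynkinconvenient} directly from the discrete-time Dynkin formula \pref{thm:discretetimedynkin} by four moves: (i) apply \pref{thm:discretetimedynkin} at the truncated stopping times $\tau^{n}:=\min\{n,\tau,\inf\{t\ge 0:Z_{t}\ge n\}\}$, for which all quantities appearing in \pref{thm:discretetimedynkin} are well defined; (ii) substitute the one-step drift hypothesis \pref{eq:onesteprecursionfordynkin} into the telescoping sum; (iii) discard the boundary term $\mathbb{E}\brk*{Z_{\tau^{n}}}$ using non-negativity of $Z$ (the standing convention for the Lyapunov/energy functions to which this statement is applied, and the only place non-negativity of $Z$ is used); and (iv) let $n\to\infty$ and invoke monotone convergence, using $f_{t}\ge 0$. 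Since $\vecW_{0}$ is deterministic, $\mathbb{E}\brk*{Z_{0}}=Z_{0}$ throughout.

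For fixed $n$, applying \pref{thm:discretetimedynkin} at the stopping time $\tau$ and reindexing the telescoping sum gives
\[ \mathbb{E}\brk*{Z_{\tau^{n}}} = Z_{0} + \mathbb{E}\brk*{\sum_{t=0}^{\tau^{n}-1}\prn*{\mathbb{E}\brk*{Z_{t+1}\mid\mathfrak{F}_{t}}-Z_{t}}}. \]
On the event $\{t<\tau^{n}\}\subseteq\{t<\tau\}$ the drift hypothesis \pref{eq:onesteprecursionfordynkin} applies termwise, so $\mathbb{E}\brk*{Z_{t+1}\mid\mathfrak{F}_{t}}-Z_{t}\le -f_{t}(\vecW_{t})+s_{t}(\vecW_{t})$, and hence
\[ \mathbb{E}\brk*{Z_{\tau^{n}}} \le Z_{0} + \mathbb{E}\brk*{\sum_{t=0}^{\tau^{n}-1}s_{t}(\vecW_{t})} - \mathbb{E}\brk*{\sum_{t=0}^{\tau^{n}-1}f_{t}(\vecW_{t})}. \]
Because $Z_{\tau^{n}}\ge 0$, $s_{t}\ge 0$, and $\tau^{n}\le\tau$, rearranging yields the uniform-in-$n$ bound
\[ \mathbb{E}\brk*{\sum_{t=0}^{\tau^{n}-1}f_{t}(\vecW_{t})} \le Z_{0} + \mathbb{E}\brk*{\sum_{t=0}^{\tau^{n}-1}s_{t}(\vecW_{t})} \le Z_{0} + \mathbb{E}\brk*{\sum_{t=0}^{\tau-1}s_{t}(\vecW_{t})}. \]

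To finish I would send $n\to\infty$. Since $Z_{t}$ is a.s.\ finite for every $t$, $\inf\{t:Z_{t}\ge n\}\uparrow\infty$ a.s., and therefore $\tau^{n}\uparrow\tau$ a.s.; as $f_{t}\ge 0$, the partial sums $\sum_{t=0}^{\tau^{n}-1}f_{t}(\vecW_{t})$ increase to $\sum_{t=0}^{\tau-1}f_{t}(\vecW_{t})$ pointwise, so the Monotone Convergence Theorem gives
\[ \mathbb{E}\brk*{\sum_{t=0}^{\tau-1}f_{t}(\vecW_{t})} = \lim_{n\to\infty}\mathbb{E}\brk*{\sum_{t=0}^{\tau^{n}-1}f_{t}(\vecW_{t})} \le Z_{0} + \mathbb{E}\brk*{\sum_{t=0}^{\tau-1}s_{t}(\vecW_{t})}, \]
which is the claim. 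The only genuinely delicate point — and the step I would be most careful about — is this passage to the limit, namely verifying $\tau^{n}\uparrow\tau$; in every application here $\tau$ is bounded ($\tau\le T<\infty$), so $\max_{0\le t<\tau}Z_{t}<\infty$ a.s.\ and the truncation level $n$ eventually exceeds it, making the convergence immediate, and the general case is identical given a.s.\ finiteness of each $Z_{t}$. Apart from this, the argument is a one-line substitution of the drift hypothesis into Dynkin's formula followed by monotone convergence, requiring no growth control on $Z$, $f$, or $s$ beyond non-negativity — which is exactly why this form is the convenient one to invoke in the proofs of \pref{thm:discretizationsimpleoracleformal}, \pref{thm:discretizationsimpleestimateformal}, and \pref{thm:poincareoptformal}.
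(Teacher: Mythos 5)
Your proof is correct and is precisely the standard argument behind Proposition 11.3.2 of \citet{meyn2012markov}, which the paper simply cites rather than reproves: apply \pref{thm:discretetimedynkin} at the truncated times $\tau^{n}$, substitute the drift bound termwise, drop $\mathbb{E}\brk*{Z_{\tau^{n}}}\ge 0$, and pass to the limit by monotone convergence using $\tau^{n}\uparrow\tau$ and $f_{t}\ge 0$. Your observation that in every application here $\tau\le T<\infty$ (so the limit step is immediate) is also accurate, and your reindexing of the sum in \pref{thm:discretetimedynkin} to $\mathbb{E}\brk*{Z_{t+1}\mid\mathfrak{F}_{t}}-Z_{t}$ is the correct reading of that formula.
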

Apply \pref{corr:discretetimedynkinconvenient} for the stopping time $\tau=\tau_{\cA_{\epsilon},T+1}$, $Z_t = \theta\prn*{\Phi\prn*{\vecW_t}}$, and the functions $f_t, s_t$ defined as follows. Take
\[ f_t(\vecW) = \begin{cases}\frac{\eta \lambda}{2A}&\IF \vecW \in \cA_{\epsilon}^c \\ 0&\OTHERWISE\end{cases}. \]
In the \GLDTEXTSPACE case take
\[ s_t(\vecW) = \begin{cases} \text{err}(\vecW)&\IF \vecW \in \cA_{\epsilon}^c \\ \mathbb{E}_{\vecEps, \vecZ}\brk*{\theta\prn*{\Phi\prn*{\vecW-\eta \grad F(\vecW)+\sqrt{\frac{2\eta}{\beta}} \vecEps}}}&\OTHERWISE\end{cases},\]
and in the \SGLDTEXTSPACE case take
\[ s_t(\vecW) = \begin{cases} \text{err}(\vecW)&\IF \vecW \in \cA_{\epsilon}^c \\ \mathbb{E}_{\vecEps, \vecZ}\brk*{\theta\prn*{\Phi\prn*{\vecW-\eta \grad f(\vecW;\vecZ)+\sqrt{\frac{2\eta}{\beta}} \vecEps}}}&\OTHERWISE\end{cases}.\]
where $\vecEps \sim \mathcal{N}(0,\bbI_d)$ and $\vecZ$ is an arbitrary data sample. Note the $\{f_t\}$, as well as the $\{s_t\}$, are the same function for all $t$. Since $\theta \ge 0$, the $f_t$ and $s_t$ are non-negative. As $\mathbb{E}_{\vecEps_t, \vecZ_t}\brk*{\cdot}$ is the same as $\mathbb{E}\brk*{\cdot | \mathfrak{F}_t}$, \pref{eq:generalonesteprecursion} proves that \pref{eq:onesteprecursionfordynkin} holds if $\vecW_t\in\cA_{\epsilon}^c$, and \pref{eq:onesteprecursionfordynkin} holds for $\vecW_t\in\cA_{\epsilon}$ as the $Z_t \ge 0$ and as the $s_t(\vecW_t)=\mathbb{E}\brk*{Z_{t+1}|\mathfrak{F}_t}$\footnote{But this is not relevant, since we apply \pref{corr:discretetimedynkinconvenient} with $\tau=\tau_{\cA_{\epsilon},T+1}$.}. Thus, \pref{corr:discretetimedynkinconvenient} yields
\[ \mathbb{E}\brk*{\sum_{t=0}^{\tau_{\cA_{\epsilon}, T}\prn*{\vecW_0}-1}\frac{\eta \lambda}{2A}}=\mathbb{E}\brk*{\sum_{t=0}^{\tau_{\cA_{\epsilon}, T}\prn*{\vecW_0}-1} f_t\prn*{\vecW_t}} \le Z_0 + \mathbb{E}\brk*{\sum_{t=0}^{\tau_{\cA_{\epsilon}, T}\prn*{\vecW_0}-1} s_t\prn*{\vecW_t}}=\theta\prn*{\Phi(\vecW_0)}+\mathbb{E}\brk*{\sum_{t=0}^{\tau_{\cA_{\epsilon}, T}\prn*{\vecW_0}-1} \text{err}(\vecW_t)},\]
since $\vecW_t \in \cA_{\epsilon}^c$ for all $t \le \tau_{\cA_{\epsilon}, T}\prn*{\vecW_0}-1$, and using the definition of $f_t, s_t$ in that case.

Clearly we can simplify the left hand side as $\frac{\eta \lambda}{2A} \mathbb{E}\brk*{\tau_{\cA_{\epsilon}, T}\prn*{\vecW_0}}$. For the right hand side, note pointwise we have $\sum_{t=0}^{\tau_{\cA_{\epsilon}, T}\prn*{\vecW_0}-1}\text{err}(\vecW_t) \le \sum_{t=0}^{T-1}\text{err}(\vecW_t)$ by definition of $\tau_{\cA_{\epsilon}, T}\prn*{\vecW_0}$ and as the $\text{err}(\vecW) \ge 0$. Moreover, all the relevant expectations are finite (by \pref{lem:smoothdissipativeinball} and as $\tau_{\cA_{\epsilon},T+1} \le T < \infty$). Therefore we see
\[ \frac{\eta \lambda}{2A} \mathbb{E}\brk*{\tau_{\cA_{\epsilon}, T}\prn*{\vecW_0}} \le \theta\prn*{\Phi(\vecW_0)}+\mathbb{E}\brk*{\sum_{t=0}^{T-1}\text{err}(\vecW_t)}.\]
We now show that the random variable $\tau_{\cA_{\epsilon}, T}$ is well-controlled.
\begin{lemma}\label{lem:expectedvaluesmall}
We have 
\[ \mathbb{E}\brk*{\tau_{\cA_{\epsilon}, T}\prn*{\vecW_0}} < \frac{T}{10}.\]
\end{lemma}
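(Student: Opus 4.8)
The plan is to combine the telescoped inequality already established,
\[\frac{\eta\lambda}{2A}\,\mathbb{E}\brk*{\tau_{\cA_{\epsilon},T}(\vecW_0)}\le \theta\prn*{\Phi(\vecW_0)}+\mathbb{E}\brk*{\sum_{t=0}^{T-1}\text{err}(\vecW_t)},\]
with the explicit choices $T=\frac{C_0}{\eta\lambda}$ and the stated upper bounds on $\eta$. Since $\frac{C_0}{20A}=\frac{\eta\lambda}{2A}\cdot\frac{T}{10}$, it suffices to show the right-hand side is strictly below $\frac{C_0}{20A}$; dividing through by $\frac{\eta\lambda}{2A}$ then gives the claim. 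The first term is handled immediately: $C_0\ge 50A\,\theta(\Phi(\vecW_0))$ gives $\theta(\Phi(\vecW_0))\le \frac{C_0}{50A}$. So the whole content of the lemma is the estimate $\mathbb{E}\brk*{\sum_{t<T}\text{err}(\vecW_t)}<\frac{C_0}{50A}$.

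To get this, I would first bound $\mathbb{E}[\text{err}(\vecW_t)]$ termwise using \pref{lem:smoothdissipativeinball}. Since $\eta T=\frac{C_0}{\lambda}\ge 1$, the factor $\max(\eta T,1)$ equals $\frac{C_0}{\lambda}$, so $\mathbb{E}[\nrm{\vecW_t}^{2s}]\le L_2(C_0/\lambda)^{s/2}$ and $\mathbb{E}[\nrm{\vecW_t}^{3s}]\le L_3(C_0/\lambda)^{3s/4}$ (here we use $0<s\le 1$, and that $\eta$ lies in the range required by that lemma, which holds since $\eta\le 1\wedge\frac{m}{4L^2}\wedge\frac{m}{4\max(L,B)}\wedge\frac{m}{4B^2}\wedge\frac1{6B}$ by construction). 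Plugging these into the definition of $\text{err}$ and using $\eta\le 1$, $\sigma_F,L,B\ge1$, $C_0\ge1$, $A\ge1$, and $\lambda<1$ (indeed $\lambda\le\frac1{8\beta}$), so that $C_0^{s/2},C_0^{3s/4}\le C_0$ and $\lambda^{2+s}\le\lambda^{1+3s/4}\le\lambda^{1+s/2}\le\lambda$, each of the finitely many summands of $\text{err}(\vecW_t)$ is at most an absolute-constant multiple of one of the three quantities $\eta^2 M(C_0/\lambda)^{s/2}$, $\eta^3 M(C_0/\lambda)^{3s/4}$, or $(\eta d/\beta)^{3/2}$. The only nontrivial point is recognizing that $M=\max(\tfrac12,2C)\prn*{8\sigma_F^3+16\max(L,B)^3(\max(L_2,L_3)+1)}$ dominates, up to absolute constants, each of $\sigma_F^2$, $\sigma_F^3$, $\max(L,B)^2$, $\max(L,B)^3$, $\max(L,B)^2 L_2$ and $\max(L,B)^3 L_3$, and that $M\ge 16C$.

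Next I would multiply through by $T=\frac{C_0}{\eta\lambda}$. The first two families of terms become of order $\eta M\,C_0^{1+s/2}/\lambda^{1+s/2}$ and $\eta^2 M\,C_0^{1+3s/4}/\lambda^{1+3s/4}$; plugging in $\eta\le\frac{\lambda^{1+s/2}}{120A C_0 M}$ (and $\eta\le 1$ to reduce the second exponent of $\eta$), and using $C_0^{s/2-1}\le 1$, $C_0^{3s/4-1}\le1$, bounds each of them by $\frac{C_0}{200A}$. The last term becomes $\frac{2CC_0\,\eta^{1/2}d^{3/2}}{\lambda\beta^{3/2}}$, and plugging in $\eta\le\frac{\beta^3\lambda^2}{120^2A^2B^2M^2 d^3}$ together with $M\ge 16C$ and $B\ge 1$ bounds it by $\frac{C_0}{200A}$ as well. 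Summing the (at most seven) pieces gives $\mathbb{E}\brk*{\sum_{t<T}\text{err}(\vecW_t)}<\frac{C_0}{50A}$, hence $\theta(\Phi(\vecW_0))+\mathbb{E}\brk*{\sum_{t<T}\text{err}(\vecW_t)}<\frac{2C_0}{50A}=\frac{C_0}{25A}<\frac{C_0}{20A}$, exactly what was needed.

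The main obstacle is purely bookkeeping: one must check that the numerator constants $120$ and $120^2$ baked into the definition of $\eta$, together with the structural lower bounds $M\ge 16C$, $M$ at least a constant times $\sigma_F^3$, $M$ at least a constant times $\max(L,B)^3\max(L_2,L_3)$, $C_0\ge1$, and $A,B,C,M$ bounded below by absolute constants, really do absorb every absolute constant produced by expanding $\text{err}$, invoking \pref{lem:smoothdissipativeinball}, applying \pref{lem:thirdordersmoothfromregularity}, and the elementary inequalities $\nrm{\mathbf a+\mathbf b}^r\le c_r(\nrm{\mathbf a}^r+\nrm{\mathbf b}^r)$. There is nothing deeper here — the definition of $\eta$ in \pref{thm:poincareoptformal} was engineered precisely so that this goes through with room to spare.
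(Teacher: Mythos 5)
Your overall route is the same as the paper's: both arguments boil down to showing that $\theta\prn*{\Phi(\vecW_0)}+\mathbb{E}\brk*{\sum_{t<T}\text{err}(\vecW_t)}$ is below $\frac{\eta\lambda T}{20A}=\frac{C_0}{20A}$, using the moment bounds of \pref{lem:smoothdissipativeinball} and the step-size constraints; the paper phrases this as a contradiction after dividing by $\eta\,\mathbb{E}\brk*{\tau_{\cA_{\epsilon},T}}\ge \eta T/10$, while you multiply through by $T$ and compare directly, which is a purely cosmetic difference.

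There is, however, one step that fails as written. For the cubic family you arrive (correctly) at a term of order $\eta^2 M\,C_0^{1+3s/4}/\lambda^{1+3s/4}$, and you propose to handle the two powers of $\eta$ by "plugging in $\eta\le\frac{\lambda^{1+s/2}}{120AC_0M}$ (and $\eta\le 1$ to reduce the second exponent of $\eta$)." Using the constraint on one factor and $\eta\le1$ on the other only supplies $\lambda^{1+s/2}$ in the numerator against $\lambda^{1+3s/4}$ in the denominator, leaving a residual factor $\lambda^{-s/4}$; since $\lambda=\Theta\prn*{\frac1{\beta\max(\CPI(\mu_{\beta}),2)}}$ is tiny rather than bounded below by a constant, this is not $\le \frac{C_0}{200A}$ for $s>0$. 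The fix is to spend the step-size constraint on \emph{both} factors of $\eta$: $\eta^2\le \lambda^{2+s}/(120AC_0M)^2$ and $\lambda^{2+s}\le\lambda^{1+3s/4}$ (an inequality you already record), or equivalently the paper's device that $\lambda\le1$, $A,C_0\ge1$, $M\ge\tfrac12$ give $\eta\le\frac{\lambda^{1/2+3s/8}}{(120AC_0M)^{1/2}}$, hence $\eta^2\le\frac{\lambda^{1+3s/4}}{120AC_0M}$. With that substitution the cubic terms are bounded exactly as you intend, and the rest of your bookkeeping (the $\eta^{1/2}$ term via $M\ge16C$, the quadratic terms via $\eta\le\frac{\lambda^{1+s/2}}{120AC_0M}$, and $\theta(\Phi(\vecW_0))\le\frac{C_0}{50A}$) goes through.
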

\begin{proof}
Suppose otherwise that $\mathbb{E}\brk*{\tau_{\cA_{\epsilon}, T}\prn*{\vecW_0}} \ge \frac{T}{10} > 0$. Rearranging the above gives 
\begin{align*}
\frac{\lambda}{2A} &\le \frac{\theta\prn*{\Phi(\vecW_0)}}{\eta \mathbb{E}\brk*{\tau_{\cA_{\epsilon}, T}\prn*{\vecW_0}}} + \frac1{\eta \mathbb{E}\brk*{\tau_{\cA_{\epsilon}, T}(\vecW_0)}} \mathbb{E}\brk*{\sum_{t=0}^{T-1}\text{err}(\vecW_t)} \\
&= \frac{\theta\prn*{\Phi(\vecW_0)}}{\eta \mathbb{E}\brk*{\tau_{\cA_{\epsilon}, T}\prn*{\vecW_0}}} + \frac1{\eta \mathbb{E}\brk*{\tau_{\cA_{\epsilon}, T}(\vecW_0)}}\sum_{t=0}^{T-1}\mathbb{E}\brk*{\text{err}(\vecW_t)}.\numberthis\label{eq:contradictionineq}
\end{align*}
By \pref{lem:smoothdissipativeinball}, which we may apply as our choice of $\eta$ is small enough, we have 
\[ \mathbb{E}\brk*{\nrm*{\vecW_t}^{2s}} \le L_2\max(\eta T, 1)^{s/2}, \mathbb{E}\brk*{\nrm*{\vecW_t}^{3s}}\le L_3\max(\eta T, 1)^{3s/4}.\]
Therefore,
\begin{align*}
\mathbb{E}\brk*{\text{err}(\vecW_t)} &\le \frac12 \eta^2 \prn*{2\sigma_F^2+4\max(L,B)^2 (L_2 \max(\eta T, 1)^{s/2}+1)}\\
&\hspace{1in}+C \eta^3 \prn*{8\sigma_F^3+16\max(L,B)^3 (L_3 \max(\eta T, 1)^{3s/4}+1)} + 2C\prn*{\frac{\eta d}{\beta}}^{3/2} \\
&\le M \prn*{ (\eta d / \beta)^{3/2}+ \eta^{2} \cdot (\eta T)^{s/2} + \eta^3  \cdot (\eta T)^{3s/4}}.
\end{align*}
The last line follows as $\eta T \ge 1$ and from definition of $M$ (recall we took $\sigma_F \leftarrow \max(\sigma_F,1)$ if necessary); recall 
\[ M = \max\prn*{\frac12, 2C} \cdot \prn*{8\sigma_F^3+16\max(L,B)^3 (\max\prn*{L_2,L_3}+1)}.\]
Recall our choice of $T$ such that $\eta T = 
\frac{C_0}{\lambda}$, and also our choice of $C_0 = 50A\theta(\Phi(\vecW_0)) \lor 1$. Therefore, \pref{eq:contradictionineq} becomes
\begin{align*}
\frac{\lambda}{2A} &\le \frac{\theta\prn*{\Phi(\vecW_0)}}{\eta \mathbb{E}\brk*{\tau_{\cA_{\epsilon}, T}\prn*{\vecW_0}}} + \frac1{\eta \mathbb{E}\brk*{\tau_{\cA_{\epsilon}, T}(\vecW_0)}}\sum_{t=0}^{T-1}\mathbb{E}\brk*{\text{err}(\vecW_t)} \\
&\le \frac{10\theta\prn*{\Phi(\vecW_0)}}{\eta T} + \frac{10}{\eta T} \cdot T \cdot M \prn*{ (\eta d / \beta)^{3/2}+ \eta^{2} \cdot (\eta T)^{s/2} + \eta^3  \cdot (\eta T)^{3s/4}} \\
&= 10\prn*{\frac{\theta(\Phi(\vecW_0)) \lambda}{C_0} + M \prn*{ \eta^{1/2} (d / \beta)^{3/2}+ \eta \cdot \frac{C_0^{s/2}}{\lambda^{s/2}} + \eta^2 \cdot \frac{C_0^{3s/4}}{\lambda^{3s/4}}}} \\
&< 10\prn*{\frac{\lambda}{40A} + MC_0 \prn*{ \eta^{1/2} (d / \beta)^{3/2}+ \frac{\eta}{\lambda^{s/2}} + \frac{\eta^{2}}{\lambda^{3s/4}}}} \\
&<10 \cdot \frac{\lambda}{20A} = \frac{\lambda}{2A}.
\end{align*}
In the second inequality we use $\mathbb{E}\brk*{\tau_{\cA_{\epsilon}, T}\prn*{\vecW_0}} \ge \frac{T}{10}$ which we are supposing for contradiction. The last ienquality uses 
\[ \eta \le \min\prn*{\frac{1}{120^2 A^2 B^2 M^2} \cdot \frac{\beta^3 \lambda^2}{d^3}, \frac{\lambda^{1+s/2}}{120AC_0 M}}. \]
Note as we have $\lambda \le 1$ and $A \ge 1$, $C_0 \ge 1$, $M \ge \frac12$, this implies 
\[ \frac{\lambda^{1+s/2}}{120AC_0 M} \le \frac{\lambda^{\frac12+\frac{3s}8}}{(120 AC_0 M)^{1/2}},\]
which we also use to show $MC_0 \cdot \frac{\eta^2}{\lambda^{3s/4}} \le \frac{\lambda}{120A}$. This yields contradiction, and so we have the Lemma.
\end{proof}

With \pref{lem:expectedvaluesmall}, the finish is straightforward. By Markov's Inequality, with probability at least 0.8, 
\[ \tau_{\cA_{\epsilon}, T}\prn*{\vecW_0} \le 5\mathbb{E}\brk*{\tau_{\cA_{\epsilon}, T}\prn*{\vecW_0}} < \frac{T}2.\]
However, $\tau_{\cA_{\epsilon}, T}\prn*{\vecW_0} < T$ implies $\tau_{\cA_{\epsilon}, T}\prn*{\vecW_0} = \tau_{\cA_{\epsilon}}\prn*{\vecW_0}$. Thus, with probability at least 0.8, we have $\tau_{\cA_{\epsilon}}\prn*{\vecW_0} < T$. That is, with probability at least 0.8 we hit $\cA_{\epsilon} = \{\vecW:F(\vecW) \le \epsilon\}$ within $T$ steps.

When $s=0$ and $\gamma=0$, we cannot use \pref{lem:smoothdissipativeinball} anymore. But just note whenever $s=0$, we can use the upper bound $\mathbb{E}\brk*{\nrm*{\grad F(\vecW_t)}^p} \le L^p \le L^3$ for $p=2,3$ in our upper bound of $\mathbb{E}_{\vecEps_t, \vecZ_t}\brk*{\theta\prn*{\Phi\prn*{\vecW_{t+1}}}}$. Defining instead
\[ \text{err}(\vecW) := \frac12 \eta^2 \prn*{2\sigma_F^2+4\max(L,B)^2}+C \eta^3 \prn*{8\sigma_F^3+16\max(L,B)^3} + 2C\prn*{\frac{\eta d}{\beta}}^{3/2} > 0,\]
we see the rest of the proof goes through the same, with no use of \pref{lem:smoothdissipativeinball}.

The tighter results in the case when $s=1$ (the $L$-smooth and $(m,b)$-dissipative setting) are also proved identically. They follow from plugging in the uniform moment bounds from \pref{lem:smoothdissipativeinball} rather than the general ones into the proof of \pref{lem:expectedvaluesmall}. Then, $L_2, L_3$ (which are different in this case) appear in the proof of \pref{lem:expectedvaluesmall} with no $\max(\eta T, 1)$ term present, and again we finish the same as above.
\end{proof}

\subsection{Details for Comparison to Literature}\label{subsec:poincareoptcomparisontoliteratureproofs}
Here, we discuss how we derived optimization results using sampling results from literature, that we discussed in \pref{sec:introduction}. As mentioned there, we assume an $O(1)$ warm-start for all of the literature, which is the least favorable for us. Consider as an example how we obtained results for SGLD the smooth and dissipative case from \citet{raginsky2017non}, \citet{xu2018global}, and \citet{zou2021faster}.

Theorem 1 of \citet{raginsky2017non} requires gradient noise $\delta$ to be exponentially small in $d$, which does not make sense (we only require gradient noise of constant order, which is more realistic). Theorem 3.6, Corollary 3.7, and Remark 3.9 of \citet{xu2018global} reports an iteration count of $K=\widetilde{O}\prn*{\frac{d}{\epsilon \lambda_{*}}}$ where $\lambda_{*}$ is spectral gap of the discrete-time Markov Chain given by \pref{eq:SGLDiterates}, however they do not count the iteration count $B$ to compute each stochastic gradient from $B$ data samples. Either they also require exponentially small gradient noise, or $B = \widetilde{O}\prn*{\frac{d^6}{\epsilon^4 \lambda_{*}^4}}$, and their total gradient complexity should be 
\[ K \cdot B = \widetilde{O}\prn*{\frac{d^7}{\epsilon^5 \lambda_{*}^5}}.\] 
Similarly, for the same paper's claimed runtime for Stochastic Variance Reduced Gradient Langevin Dynamics (SVRG-LD) in Theorem 3.10 and Corollary 3.11, noting the correct runtime should be $K \cdot B$, we obtain a runtime of
\[ \widetilde{O}\prn*{\frac{Ld^5}{\lambda_{*}^4 \epsilon^4}}\ge\widetilde{O}\prn*{\frac{d^5}{\lambda_{*}^4 \epsilon^4}}. \]
The last step simply follows from noting their $L \ge 1$, being the length of an inner loop. 

This accounting must also for the result Theorem 4.5 and Corollary 4.7 of \citet{zou2021faster}. Accounting for $K \cdot B$, they obtain a rate of at least $\widetilde{O}\prn*{\frac{d^4 \beta^2}{\rho^4 \epsilon^2}}$, where $\rho$ is the Cheeger constant of $\mu_{\beta}$, to obtain a TV distance of $\epsilon$ to the Gibbs measure. By Cheeger's Inequality, we have $\frac1{\rho^4} \ge \CPI(\mu_{\beta})^2$. 
However to convert from TV distance results to optimization results using Corollary 4.8 of their same paper \citet{zou2021faster}, we need a TV distance of $\frac{\epsilon}{d}$ (and this is necessary due to dissipativeness) to obtain an optimization result, which leads to additional dimension dependence. Combined with noting $\beta$ is (at least) on the same order as $\frac{d}{\epsilon}$ up to $\log$ factors, this gives a rate of at least
\[ \widetilde{O}\prn*{\frac{d^{8}\CPI(\mu_{\beta})^2}{\epsilon^4}}.\]
for optimizing $F$ to $\widetilde{O}\prn*{\frac{d}{\beta}+\epsilon} = \widetilde{O}\prn*{\epsilon}$ tolerance..

We now discuss how we obtained results from the rest of literature. Generally the rest of literature handles exact gradients and so does not have the problem of those above two works. One point of note is that in some of the sampling literature, such as \citet{vempala2019rapid,balasubramanian2022towards,huang2024faster}, sampling is done from $e^{-f}/Z$. That is, sampling is presumed to be done at constant temperature, a different setting than optimization. In our setting $f=\beta F$, and the smoothness parameter $L$ or condition number in these works is that of $f$. Thus their smoothness parameter $L$ scales like $\tilde{\Omega}\prn*{\frac{d}{\epsilon}}$. The rest of the rates from literature were then derived by converting KL divergence guarantees into TV distance guarantees via Pinkser's Inequality, and then using Corollary 4.8 of \citet{zou2021faster}, analogously to the above example. In more detail, by Pinkser's Inequality, if $F$ is $s$-H\"{o}lder continuous we need KL divergence to be at most $\frac{\epsilon^2}{d^{s+1}}$.

Following \pref{rem:cleversamplingtoopt}, it follows that the $\epsilon$ in the sampling results can be taken to be $\Theta(1)$. However, where $\epsilon$ denotes the desired optimization tolerance, the smoothness parameter $L$ still scales like $\tilde{\Omega}\prn*{\frac{d}{\epsilon}}$. Plugging in these choices, we obtained the results from \pref{sec:introduction}.

As another example, we mention how we derived a rate from Corollary 19 of \citet{balasubramanian2022towards} (which still requires exact knowledge of gradient) in the GLD, Poincar\'e, and Lipschitz case. Taking $s=0$ in Corollary 19 of \citet{balasubramanian2022towards}, and even supposing a warm start of $K_0 = O(1)$ is possible, we see they obtain a TV distance of $\sqrt{\epsilon}$ in 
\[ \widetilde{O}\prn*{\frac{\beta^6 d^{3}\CPI(\mu_{\beta})^3}{\epsilon^{5}}}.\]
However, since $F$ is Lipschitz, we require a TV distance of $\frac{\epsilon}{\sqrt{d}}$, the dimensionality again coming from Remark 4.6 of \citet{zou2021faster}. This yields a rate of 
\[ \widetilde{O}\prn*{\frac{\beta^6 d^{8}\CPI(\mu_{\beta})^3}{\epsilon^{10}}}.\]
We must have $\beta = \widetilde{\Omega}\prn*{\frac{d}{\epsilon}}$, so in this case this gives a rate of at least $\widetilde{O}\prn*{\frac{\CPI(\mu_{\beta})^3 d^{14}}{\epsilon^{16}}}$ for optimizing $F$ to $\widetilde{O}\prn*{\epsilon}$ tolerance. We can derive a faster rate from this result using \pref{rem:cleversamplingtoopt}, which is also mentioned in \pref{sec:introduction}.

Finally, we mention that we can compare the above results from \citet{zou2021faster}, \citet{chewi21analysis}, and \citet{balasubramanian2022towards} for general $\beta = \widetilde{\Omega}\prn*{\frac{d}{\epsilon}}$; as mentioned in \pref{sec:introduction}, to use the results of \citet{chewi21analysis} and \citet{balasubramanian2022towards} of optimization, their $\beta$ dependence will be their stated dependence on the smoothness parameter $L$. Our dependence on $\beta$ is always better than that of \citet{zou2021faster}, and using $\beta = \widetilde{\Omega}\prn*{\frac{d}{\epsilon}}$, we see for any such $\beta$ our dependence in all parameters is better than that of \citet{chewi21analysis, balasubramanian2022towards} when $s \le \frac12$.

\section{Additional Proofs}\label{sec:additionaldiscretetimeproofs}
\subsection{Potential Argument Details}
These are Lemmas from the proof of \pref{thm:gradexactFeps} deferred here for the ease of presentation.
\begin{lemma}\label{lem:Ytsupermartingalesimplesetting}
$Y_t$ is a supermartingale with respect to $\mathfrak{F}_t$.
%\ascomment{For filtration: $\mathfrak{F}$}  
\end{lemma}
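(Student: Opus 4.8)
The statement to prove is \pref{lem:Ytsupermartingalesimplesetting}: that the stochastic process $Y_t$ defined in the proof of \pref{lem:actuallypotentialsimplesetting} is a supermartingale with respect to the filtration $\mathfrak{F}_t$.

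\textbf{Approach.} The plan is to verify the supermartingale property directly from the one-step recursion established in \pref{lem:onesteprecursion}. Recall that for $t \le \tau$ we have $Y_t = \theta(\Phi(\vecW_t)) + \sum_{j=0}^{t-1}\prn*{\eta F_{\epsilon}(\vecW_j) - R(\vecW_j,\Phi,\eta,\beta,d)}$, and $Y_t = Y_\tau$ for $t > \tau$, where $R(\vecW_j,\Phi,\eta,\beta,d) = \frac12\eta^2\nrm*{\nabla F(\vecW_t)}^2 + \frac{2C}3\eta^3\nrm*{\nabla F(\vecW_t)}^3 + 2C(\eta d/\beta)^{3/2}$. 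The first thing to note is that $\{\tau \ge t+1\} = \{\tau > t\}$ is $\mathfrak{F}_t$-measurable, since $\tau$ is a stopping time determined by $\rho_\Phi(\Phi(\vecW_0)),\ldots,\rho_\Phi(\Phi(\vecW_t))$ and by $T$. So I would split the conditional expectation $\mathbb{E}\brk*{Y_{t+1}\mid\mathfrak{F}_t}$ into the event $\{\tau \le t\}$ and the event $\{\tau \ge t+1\}$.

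\textbf{Key steps.} On the event $\{\tau \le t\}$, we have $Y_{t+1} = Y_\tau = Y_t$ deterministically (conditionally on $\mathfrak{F}_t$), so $\mathbb{E}\brk*{Y_{t+1}\mid\mathfrak{F}_t} = Y_t$ there, which is consistent with the supermartingale inequality. On the event $\{\tau \ge t+1\}$, we have $Y_{t+1} = \theta(\Phi(\vecW_{t+1})) + \sum_{j=0}^{t}\prn*{\eta F_{\epsilon}(\vecW_j) - R(\vecW_j,\ldots)}$, while $Y_t = \theta(\Phi(\vecW_t)) + \sum_{j=0}^{t-1}\prn*{\eta F_{\epsilon}(\vecW_j) - R(\vecW_j,\ldots)}$. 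Taking the conditional expectation over $\vecEps_t$ (note that $\vecW_j$ and $F_\epsilon(\vecW_j), R(\vecW_j,\ldots)$ for $j \le t$ are $\mathfrak{F}_t$-measurable, so they pull out), we get
\[
\mathbb{E}\brk*{Y_{t+1}\mid\mathfrak{F}_t} = \mathbb{E}_{\vecEps_t}\brk*{\theta(\Phi(\vecW_{t+1}))} + \eta F_\epsilon(\vecW_t) - R(\vecW_t,\ldots) + \sum_{j=0}^{t-1}\prn*{\eta F_\epsilon(\vecW_j) - R(\vecW_j,\ldots)}.
\]
Now apply \pref{lem:onesteprecursion}, which gives $\mathbb{E}_{\vecEps_t}\brk*{\theta(\Phi(\vecW_{t+1}))} \le \theta(\Phi(\vecW_t)) - \eta\theta'(\Phi(\vecW_t))F_\epsilon(\vecW_t) + R(\vecW_t,\ldots)$. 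Substituting, the $R(\vecW_t,\ldots)$ terms cancel, and we obtain $\mathbb{E}\brk*{Y_{t+1}\mid\mathfrak{F}_t} \le \theta(\Phi(\vecW_t)) + (\eta F_\epsilon(\vecW_t) - \eta\theta'(\Phi(\vecW_t))F_\epsilon(\vecW_t)) + \sum_{j=0}^{t-1}(\cdots) = Y_t + \eta F_\epsilon(\vecW_t)(1 - \theta'(\Phi(\vecW_t)))$. The remaining point is to check this extra term is $\le 0$: since $\theta' \ge \frac1{\rho_\Phi}$ is, after scaling $\rho_\Phi$ by an absolute constant, at most $1$ (this is the same normalization used elsewhere, e.g. in bounding $C(\eta,t,d,\beta)$), we have $1 - \theta'(\Phi(\vecW_t)) \ge 0$, but $F_\epsilon(\vecW_t) \ge 0$, so $\eta F_\epsilon(\vecW_t)(1 - \theta'(\Phi(\vecW_t))) \ge 0$ — wait, this has the wrong sign. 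I need to be more careful: actually one should use $\theta' \le 1$ is not what's wanted; rather the term $-\eta\theta'(\Phi(\vecW_t))F_\epsilon(\vecW_t) + \eta F_\epsilon(\vecW_t) = \eta F_\epsilon(\vecW_t)(1-\theta'(\Phi(\vecW_t))) \ge 0$, so this does not immediately give a supermartingale. The correct resolution is that in the definition of $Y_t$ one should read the increment as $\eta F_\epsilon(\vecW_j)\theta'(\Phi(\vecW_j))$ rather than $\eta F_\epsilon(\vecW_j)$, OR note that \pref{lem:onesteprecursion} is stated with $\theta'(\Phi(\vecW_t))F_\epsilon(\vecW_t)$ and the telescoping in \pref{thm:discretizationsimpleoracleformal} uses exactly $\sum F_\epsilon(\vecW_t)\theta'(\Phi(\vecW_t))$; so the increment in $Y_t$ should indeed be $\eta\theta'(\Phi(\vecW_j))F_\epsilon(\vecW_j) - R(\vecW_j,\ldots)$, in which case the $F_\epsilon$ terms cancel exactly and we get $\mathbb{E}\brk*{Y_{t+1}\mid\mathfrak{F}_t} \le Y_t$ cleanly.

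\textbf{Main obstacle.} The one genuine subtlety — and the thing I would take care to get exactly right — is matching the $\theta'(\Phi(\vecW_j))$ weighting in the definition of the increment of $Y_t$ with the form of \pref{lem:onesteprecursion}, so that the $F_\epsilon$ and $R$ terms telescope/cancel and the residual conditional increment is nonpositive. Once the bookkeeping is consistent, the proof is a one-line application of \pref{lem:onesteprecursion} plus the observation that $\{\tau \ge t+1\}$ is $\mathfrak{F}_t$-measurable and that on $\{\tau \le t\}$ the process is frozen. I would also note integrability of $Y_t$, which holds because $\vecW_t \in \ball(\vecOrigin, R_1')$ whenever $t \le \tau$ (by \pref{ass:iteratesinballassumption} and the definition of $\tau$), so $\Phi(\vecW_t)$, $F_\epsilon(\vecW_t)$, $\nabla F(\vecW_t)$ are all bounded on the relevant event, and $\theta$ is bounded; hence all terms defining $Y_t$ are bounded and $Y_t \in L^1$.
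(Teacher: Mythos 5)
Your proof is correct and follows essentially the same route as the paper: freeze the process on $\{\tau \le t\}$, and on the $\mathfrak{F}_t$-measurable event $\{\tau \ge t+1\}$ apply \pref{lem:onesteprecursion} so that the $R(\vecW_t,\Phi,\eta,\beta,d)$ terms cancel.

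The one substantive point you flag is real, and you resolve it the right way. As written, the increments of $Y_t$ carry $\eta F_{\epsilon}(\vecW_j)$, while \pref{lem:onesteprecursion} only yields the drift term $-\eta\,\theta'(\Phi(\vecW_t))F_{\epsilon}(\vecW_t)$; the paper's own proof quotes the one-step bound with $-\eta F_{\epsilon}(\vecW_t)$, which does not follow since $\theta'\le 1$ (after the normalization of $\rho_{\Phi}$) and $F_{\epsilon}\ge 0$, leaving exactly the residual $\eta F_{\epsilon}(\vecW_t)\bigl(1-\theta'(\Phi(\vecW_t))\bigr)\ge 0$ that you isolate. Your repair --- take the increment to be $\eta\,\theta'(\Phi(\vecW_j))F_{\epsilon}(\vecW_j)-R(\vecW_j,\Phi,\eta,\beta,d)$ --- is the intended reading: the cancellation is then exact, and it is harmless downstream, since the lower bound on $Y_{\tau}-Y_0$ in \pref{lem:actuallypotentialsimplesetting} only uses nonnegativity of the $F_{\epsilon}$ terms (preserved because $\theta'\ge 0$), \pref{lem:azumahoeffding} only uses that $C_t$ is $\mathfrak{F}_t$-measurable with the same increment bounds, and the telescoping in \pref{thm:discretizationsimpleoracleformal} is carried out directly from \pref{lem:onesteprecursion} with the $\theta'$ weight anyway. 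Your closing remark on integrability (iterates before $\tau$ confined via \pref{ass:iteratesinballassumption}, one further step bounded since $\nrm{\vecEps_t}=\sqrt{d}$ and the gradient is bounded on the ball) is a reasonable addition that the paper leaves implicit.
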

\begin{proof}
This is obvious if $t+1 > \tau$ as then we take $Y_t=Y_{\tau}$. Else, suppose $t+1 \le \tau$. By \pref{lem:onesteprecursion}, we have the inequality 
\begin{align*}
\mathbb{E}_{\vecEps_t}\brk*{\theta\prn*{\Phi(\vecW_{t+1})} |\mathfrak{F}_{t}}\le \theta\prn*{\Phi(\vecW_t)} - \eta F_{\epsilon}(\vecW_t) + R(\vecW_t, \Phi, \eta, \beta, d).
\end{align*}
This means, since $t+1 \le \tau$,
\begin{align*}
\mathbb{E}_{\vecEps_t}\brk*{Y_{t+1}|\mathfrak{F}_{t}} &= \mathbb{E}_{\vecEps_t}\brk*{\theta\prn*{\Phi(\vecW{}_{t+1})}|\mathfrak{F}_{t}}+\sum_{j=0}^t \prn*{\eta F_{\epsilon}(\vecW_j) - R(\vecW_j, \Phi, \eta, \beta, d)}\\
&\le \theta\prn*{\Phi(\vecW_t)} - \eta F_{\epsilon}(\vecW_t) + R(\vecW_t, \Phi, \eta, \beta, d)+\sum_{j=0}^t \prn*{\eta F_{\epsilon}(\vecW_j) - R(\vecW_j, \Phi, \eta, \beta, d)}\\
&= Y_t,
\end{align*}
proving this part.
\end{proof}
\begin{lemma}\label{lem:azumahoeffding}
With probability at least $1-\delta$, we have
\[ Y_t - Y_0 \le \sqrt{\frac12 \left(\sum_{t=0}^{T-1} C(\eta, t, d, \beta)^2\right)\log(T/\delta)}\]
for all $1 \le t \le T$, where 
\[C(\eta, t, d, \beta) := 4\sqrt{\theta\prn*{\rho_{\Phi}^{-1}\prn*{\kappa' \rho_{\Phi}\prn*{\Phi(\vecW_0)}}}} \cdot \nrm*{-\eta \nabla F(\vecW_t)+\sqrt{\frac{2\eta}{\beta}} \vecEps_t}+ 4\nrm*{-\eta \nabla F(\vecW_t)+\sqrt{\frac{2\eta}{\beta}} \vecEps_t}^2.\]
\end{lemma}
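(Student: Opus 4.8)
\textbf{Proof proposal for \pref{lem:azumahoeffding}.}
The plan is to exhibit $Y_t - Y_0$ (or rather a closely related quantity) as a martingale with bounded increments and apply the Azuma--Hoeffding inequality, then take a union bound over $1 \le t \le T$. First I would isolate the martingale part of $Y_t$. From \pref{lem:Ytsupermartingalesimplesetting} we know $Y_t$ is a supermartingale, so by the Doob decomposition we can write $Y_t = M_t - A_t$ where $M_t$ is a martingale with $M_0 = Y_0$ and $A_t$ is a non-decreasing predictable process with $A_0 = 0$. Since $A_t \ge 0$, we have $Y_t - Y_0 \le M_t - M_0$ pointwise, so it suffices to prove the high-probability bound for $M_t - M_0$ instead. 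The martingale difference is $M_{t+1} - M_t = Y_{t+1} - \mathbb{E}[Y_{t+1} \mid \mathfrak{F}_t]$, which vanishes for $t+1 > \tau$ and otherwise equals $\theta(\Phi(\vecW_{t+1})) - \mathbb{E}_{\vecEps_t}[\theta(\Phi(\vecW_{t+1})) \mid \mathfrak{F}_t]$ (the sums of $\eta F_\epsilon(\vecW_j) - R(\vecW_j,\Phi,\eta,\beta,d)$ are $\mathfrak{F}_t$-measurable and cancel).

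Next I would bound the increment $|M_{t+1} - M_t|$ by $C(\eta,t,d,\beta)$. For $t < \tau$ we have $\rho_\Phi(\Phi(\vecW_t)) \le \kappa'\rho_\Phi(\Phi(\vecW_0))$ by definition of $\tau$, hence $\Phi(\vecW_t) \le \rho_\Phi^{-1}(\kappa'\rho_\Phi(\Phi(\vecW_0)))$ since $\rho_\Phi$ is increasing. Using the composition bound from \pref{lem:thirdordersmoothfromregularity} in its second-order form, $\theta(\Phi(\vecW+\vecU)) \le \theta(\Phi(\vecW)) + \theta'(\Phi(\vecW))\tri*{\nabla\Phi(\vecW),\vecU} + \frac12\nrm*{\vecU}^2$, together with the lower bound $\nrm*{\nabla\Phi(\vecW)} \le \rho_\Phi(\Phi(\vecW))\sqrt{2\theta(\Phi(\vecW))}$ and $\theta'(z) = 1/\rho_\Phi(z)$, one gets $|\theta(\Phi(\vecW+\vecU)) - \theta(\Phi(\vecW))| \lesssim \sqrt{2\theta(\Phi(\vecW))}\nrm*{\vecU} + \frac12\nrm*{\vecU}^2$ applied with $\vecW = \vecW_t$, $\vecU = -\eta\nabla F(\vecW_t) + \sqrt{2\eta/\beta}\,\vecEps_t$; applying it both to $\vecW_{t+1}$ and (in expectation) to the conditional mean, and using $\theta(\Phi(\vecW_t)) \le \theta(\rho_\Phi^{-1}(\kappa'\rho_\Phi(\Phi(\vecW_0))))$ by monotonicity of $\theta$, yields the stated form of $C(\eta,t,d,\beta)$ up to the constants $4$ and $4$ chosen in the lemma (the factor $4$ absorbs the two applications and a triangle inequality between $Y_{t+1}$ and its conditional expectation). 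Crucially $C(\eta,t,d,\beta)$ itself depends on $\vecEps_t$, but it is an a.s.\ upper bound on the increment, so Azuma--Hoeffding in the form with per-step bounds $c_t := C(\eta,t,d,\beta)$ still applies — one can either treat the $c_t$ as deterministic after noting $\nrm*{\vecEps_t} \le \sqrt d$ gives a uniform bound, or invoke a version of Azuma allowing $\mathfrak{F}_t$-measurable bounds; I would take the latter route, since the statement keeps $C(\eta,t,d,\beta)$ random, and the downstream use in \pref{lem:actuallypotentialsimplesetting} sums $C(\eta,t,d,\beta)^2$ and bounds it afterward.

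Then I would apply Azuma--Hoeffding: for a fixed $t$, $\mathbb{P}(M_t - M_0 > \sqrt{\frac12(\sum_{s=0}^{T-1}c_s^2)\log(T/\delta)}) \le \mathbb{P}(M_t - M_0 > \sqrt{\frac12(\sum_{s=0}^{t-1}c_s^2)\log(T/\delta)}) \le \delta/T$, using $\sum_{s=0}^{t-1}c_s^2 \le \sum_{s=0}^{T-1}c_s^2$. A union bound over $t = 1,\dots,T$ gives that with probability at least $1-\delta$, $M_t - M_0 \le \sqrt{\frac12(\sum_{s=0}^{T-1}c_s^2)\log(T/\delta)}$ for all $t$, and since $Y_t - Y_0 \le M_t - M_0$ this is exactly the claim. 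The main obstacle I anticipate is the careful bookkeeping in the increment bound — verifying that the two-sided bound $|\theta(\Phi(\vecW_{t+1})) - \mathbb{E}[\theta(\Phi(\vecW_{t+1}))\mid\mathfrak{F}_t]|$ really is dominated by $C(\eta,t,d,\beta)$ with the specific constants, and handling the conditional-mean term (which involves an expectation over $\vecEps_t$ of $\nrm*{-\eta\nabla F(\vecW_t) + \sqrt{2\eta/\beta}\,\vecEps_t}$ and its square, both controlled by $\nrm*{\nabla F(\vecW_t)} + \sqrt{2\eta d/\beta}$ via Jensen and $\nrm*{\vecEps_t} \le \sqrt d$) — but this is routine given \pref{lem:thirdordersmoothfromregularity} and the deterministic bound on $\nrm*{\vecEps_t}$. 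A minor subtlety worth flagging is that $Y_t$ is only a supermartingale, not a martingale, which is why the Doob-decomposition step (or equivalently, directly checking $M_{t+1}-M_t = Y_{t+1} - \mathbb{E}[Y_{t+1}\mid\mathfrak{F}_t]$ is a martingale difference with the same increment bound) is needed rather than applying Azuma to $Y_t$ itself.
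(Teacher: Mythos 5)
Your proposal is correct and follows essentially the same route as the paper: a union-bounded Azuma--Hoeffding bound applied to the supermartingale $Y_t$, with the increment $\theta\prn*{\Phi(\vecW_{t+1})}-\theta\prn*{\Phi(\vecW_t)}$ bounded two-sidedly via the second-order inequality of \pref{lem:thirdordersmoothfromregularity} and $\Phi(\vecW_t)\le\rho_{\Phi}^{-1}\prn*{\kappa'\rho_{\Phi}\prn*{\Phi(\vecW_0)}}$ for $t\le\tau$; your Doob-decomposition step is just the standard packaging of the one-sided supermartingale version the paper invokes directly, and your handling of the measurability of $C(\eta,t,d,\beta)$ (replace $\nrm*{\vecEps_t}$ by $\sqrt{d}$, as the downstream \pref{lem:actuallypotentialsimplesetting} does anyway) is consistent with how the lemma is actually used. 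The only step you compress into ``$\lesssim$'' is the lower bound on the increment, where expanding at $\vecW_{t+1}$ produces $\sqrt{\theta\prn*{\Phi(\vecW_{t+1})}}$ and one must bootstrap it back to $\sqrt{\theta\prn*{\Phi(\vecW_t)}}$ via $\sqrt{a+b}\le\sqrt{a}+\sqrt{b}$, AM--GM, and a case analysis on the sign --- this is exactly where the paper's constants $4$ come from, but it is routine.
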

\begin{proof}
We aim to apply Azuma-Hoeffding. When $t > \tau$ then $Y_{t+1}-Y_t=0$, so suppose $t \le \tau$ in the following. Define $C_t = \eta F_{\epsilon}(\vecW_t) - R(\vecW_t, \Phi, \eta, \beta, d)$ which is $\mathfrak{F}_t$-measurable and note
\[ Y_{t+1}-Y_t = \theta\prn*{\Phi(\vecW_{t+1})}-\theta\prn*{\Phi(\vecW_t)} + C_t.\]
Let's now bound $\theta\prn*{\Phi(\vecW_{t+1})}-\theta\prn*{\Phi(\vecW_t)}$ from above and below. The idea here is to not Taylor expand to third order but rather second order to obtain simpler estimates; we used Taylor expansion to third order in \pref{lem:onesteprecursion} to use the admissibility condition, but to establish these bounds we don't need said condition. This is a very similar strategy as in the proof of Lemma 11 of \citet{priorpaper}.

For an upper bound, applying the second part of \pref{lem:thirdordersmoothfromregularity} with $\vecW = \vecW_t$ and $\mathbf{u} = -\eta \nabla F(\vecW_t)+\sqrt{2\eta/\beta} \vecEps_t$ gives by definition of $\vecW_{t+1}-\vecW_t$ that
\begin{align*}
\theta\prn*{\Phi(\vecW_{t+1})}-\theta\prn*{\Phi(\vecW_t)} &\le \theta'\prn*{\Phi(\vecW_t)}\tri*{ \nabla \Phi(\vecW_t), -\eta \nabla F(\vecW_t)+\sqrt{2\eta/\beta} \vecEps_t}\\
&\hspace{1in}+\frac12 \nrm*{-\eta \nabla F(\vecW_t)+\sqrt{2\eta/\beta} \vecEps_t}^2 \\
&\le \theta'\prn*{\Phi(\vecW_t)}\nrm*{ \nabla \Phi(\vecW_t)} \nrm*{-\eta \nabla F(\vecW_t)+ \sqrt{2\eta/\beta}\vecEps_t} \\
&\hspace{1in}+\frac12 \nrm*{-\eta \nabla F(\vecW_t)+\sqrt{2\eta/\beta} \vecEps_t}^2\\
&\le \sqrt{2\theta\prn*{\Phi(\vecW_{t})}}\nrm*{-\eta \nabla F(\vecW_t)+ \sqrt{2\eta/\beta}\vecEps_t}\\
&\hspace{1in}+\frac12 \nrm*{-\eta \nabla F(\vecW_t)+\sqrt{2\eta/\beta} \vecEps_t}^2.
\end{align*}
The last step uses that from \pref{lem:thirdordersmoothfromregularity},
\[ \nrm*{\nabla \Phi(\vecW_{t})}\theta'\prn*{\Phi(\vecW_{t})} \le \rho_2\prn*{\Phi(\vecW_{t})}\theta'\prn*{\Phi(\vecW_{t})} \sqrt{2\theta\prn*{\Phi(\vecW_{t})}} \le \sqrt{2\theta\prn*{\Phi(\vecW_{t})}},\]
as $\rho_2(z) \le \rho(z)$ always holds for $z \ge 0$ and by definition of $\theta'=\frac1{\rho}$. 
This upper bound on $\theta\prn*{\Phi(\vecW_{t+1})}-\theta\prn*{\Phi(\vecW_t)}$ is clearly $\mathfrak{F}_t$ measurable. 

Similarly, again applying the second part of \pref{lem:thirdordersmoothfromregularity} with $\vecW = \vecW_{t+1}$ and $\vecU = \eta \nabla F(\vecW_t)-\sqrt{2\eta/\beta} \vecEps_t$ gives 
\begin{align*}
\theta\prn*{\Phi(\vecW_t)}-\theta\prn*{\Phi(\vecW_{t+1})} &\le \theta'\prn*{\Phi(\vecW_{t+1})} \tri*{ \nabla \Phi(\vecW_{t+1}), \eta \nabla F(\vecW_t)-\sqrt{2\eta/\beta} \vecEps_t} \\
&\hspace{1in}+\frac12 \nrm*{-\eta \nabla F(\vecW_t)+\sqrt{2\eta/\beta} \vecEps_t}^2 \\
&\le \theta'\prn*{\Phi(\vecW_{t+1})} \nrm*{\nabla \Phi(\vecW_{t+1})}\nrm*{-\eta \nabla F(\vecW_t)+\sqrt{2\eta/\beta} \vecEps_t} \\
&\hspace{1in}+\frac12 \nrm*{-\eta \nabla F(\vecW_t)+\sqrt{2\eta/\beta} \vecEps_t}^2.
\end{align*}
To make this $\mathfrak{F}_t$ measurable and relate this to the upper bound by employing a similar strategy as in the proof of Lemma 11 of \citet{priorpaper}. Again we use from \pref{lem:thirdordersmoothfromregularity} that
\[ \nrm*{\nabla \Phi(\vecW_{t+1})}\theta'\prn*{\Phi(\vecW_{t+1})} \le \rho_2\prn*{\Phi(\vecW_{t+1})}\theta'\prn*{\Phi(\vecW_{t+1})} \sqrt{2\theta\prn*{\Phi(\vecW_{t+1})}} \le \sqrt{2\theta\prn*{\Phi(\vecW_{t+1})}}.\]
Since $\sqrt{a+b} \le \sqrt{\abs{a}+b} \le \sqrt{a}+\sqrt{b}$ for all reals $a$ and $b\ge 0$ with $a+b\ge 0$, we obtain
\begin{align*}
\nrm*{\nabla \Phi(\vecW_{t+1})}\theta'\prn*{\Phi(\vecW_{t+1})} &\le   \sqrt{2\theta\prn*{\Phi(\vecW_{t+1})}}\\
&= \sqrt{2}\cdot \sqrt{\theta\prn*{\Phi(\vecW_{t+1})}-\theta\prn*{\Phi(\vecW_t)}+\theta\prn*{\Phi(w_t)}} \\
&\le \sqrt{2}\prn*{\sqrt{\theta\prn*{\Phi(w_t)}}+\sqrt{\abs*{\theta\prn*{\Phi(\vecW_{t+1})}-\theta\prn*{\Phi(\vecW_t)}}}}.
\end{align*}
Using this we have
\begin{align*}
\theta\prn*{\Phi(\vecW_{t+1})}-\theta\prn*{\Phi(w_t)} &\ge -\theta'\prn*{\Phi(\vecW_{t+1})} \nrm*{\nabla \Phi(\vecW_{t+1})}\nrm*{-\eta \nabla F(\vecW_t)+\sqrt{2\eta/\beta} \vecEps_t}\\
&\hspace{1in}-\frac12\nrm*{-\eta \nabla F(\vecW_t)+\sqrt{2\eta/\beta} \vecEps_t}^2 \\
&\ge -\sqrt{2\abs*{\theta\prn*{\Phi(\vecW_{t+1})}-\theta\prn*{\Phi(\vecW_t)}}}\nrm*{-\eta \nabla F(\vecW_t)+\sqrt{2\eta/\beta} \vecEps_t}\\
&\hspace{1in}-\sqrt{2\theta\prn*{\Phi(\vecW_t)}}\nrm*{-\eta \nabla F(\vecW_t)+\sqrt{2\eta/\beta} \vecEps_t}\\
&\hspace{1in}-\frac12\nrm*{-\eta \nabla F(\vecW_t)+\sqrt{2\eta/\beta} \vecEps_t}^2. 
\end{align*}
By AM-GM we have
\begin{align*}
&\sqrt{2\abs*{\theta\prn*{\Phi(\vecW_{t+1})}-\theta\prn*{\Phi(\vecW_t)}}}\nrm*{-\eta \nabla F(\vecW_t)+\sqrt{2\eta/\beta} \vecEps_t} \\
\le &\frac{\abs*{\theta\prn*{\Phi(\vecW_{t+1})}-\theta\prn*{\Phi(\vecW_t)}}+2\nrm*{-\eta \nabla F(\vecW_t)+\sqrt{2\eta/\beta} \vecEps_t}^2}{2}.
\end{align*}
Using this gives
\begin{align*}
\theta\prn*{\Phi(\vecW_{t+1})}-\theta\prn*{\Phi(\vecW_t)} &\ge-\frac{\abs*{\theta\prn*{\Phi(\vecW_{t+1}}-\theta\prn*{\Phi(\vecW_t)}}}{2}\\
&\hspace{1in}-\sqrt{2\theta\prn*{\Phi(\vecW_t)}}\nrm*{-\eta \nabla F(\vecW_t)+\sqrt{2\eta/\beta} \vecEps_t}\\
&\hspace{1in}-\frac32\nrm*{-\eta \nabla F(\vecW_t)+\sqrt{2\eta/\beta} \vecEps_t}^2.
\end{align*}
Doing cases on the sign of $\theta\prn*{\Phi(\vecW_{t+1})}-\theta\prn*{\Phi(\vecW_t)}$, we get that in all cases
\begin{align*}\theta\prn*{\Phi(\vecW_{t+1})}-\theta\prn*{\Phi(\vecW_t)} &\ge -2\sqrt{2\theta\prn*{\Phi(\vecW_t)}}\nrm*{-\eta \nabla F(\vecW_t)+\sqrt{2\eta/\beta} \vecEps_t}\\
&\hspace{1in}-3\nrm*{-\eta \nabla F(\vecW_t)+\sqrt{2\eta/\beta} \vecEps_t}^2.\end{align*}
This yields a lower bound on $\theta\prn*{\Phi(\vecW_{t+1})}-\theta\prn*{\Phi(\vecW_t)}$ that is $\mathfrak{F}_t$-measurable. 

Now, to finish the setup for the concentration bound via Azuma-Hoeffding, we just need to upper bound the difference between these bounds. The above shows that this difference is at most
\begin{align*}
\theta\prn*{\Phi(\vecW_{t+1})}-\theta\prn*{\Phi(\vecW_t)} &\le 4\sqrt{\theta\prn*{\Phi(\vecW_t)}} \nrm*{-\eta \nabla F(\vecW_t)+\sqrt{2\eta/\beta} \vecEps_t}+ 4\nrm*{-\eta \nabla F(\vecW_t)+\sqrt{2\eta/\beta} \vecEps_t}^2\\
&\le4\sqrt{\theta\prn*{\rho_{\Phi}^{-1}\prn*{\kappa' \rho_{\Phi}\prn*{\Phi(\vecW_0)}}}} \nrm*{-\eta \nabla F(\vecW_t)+\sqrt{2\eta/\beta} \vecEps_t}\\
&\hspace{1in}+ 4\nrm*{-\eta \nabla F(\vecW_t)+\sqrt{2\eta/\beta} \vecEps_t}^2.
\end{align*}
The last step follows since $t \le \tau$ and since $\rho_{\Phi}$ is increasing, we have by definition of $\tau$ that
\[ \Phi(\vecW_t) \le \rho_{\Phi}^{-1}\prn*{\kappa' \rho_{\Phi}\prn*{\Phi(\vecW_0)}}. \]
Now by \pref{lem:thirdordersmoothfromregularity}, we know $\theta$ is increasing so
\[ \sqrt{\theta\prn*{\Phi(\vecW_t)}} \le \sqrt{\theta\prn*{\rho_{\Phi}^{-1}\prn*{\kappa \rho_{\Phi}\prn*{\Phi(\vecW_0)}}}}.\]
As defined earlier, we denote the above expression by $C(\eta, t, d, \beta, \rho) \ge 0$ for convenience. This serves as an upper bound regardless of whether $t \le \tau$ by the initial discussion. Thus, applying Azuma Hoeffding, which we may apply by \pref{lem:Ytsupermartingalesimplesetting} and since $Y_{t+1}-Y_t=\theta\prn*{\Phi\prn*{\vecW_{t+1}}}-\theta\prn*{\Phi\prn*{\vecW_t}}$, gives with probability at least $1-\frac{\delta}T$ we have that
\[ Y_t - Y_0 \le \sqrt{\frac12 \prn*{\sum_{t=0}^{t-1} C(\eta, t, d, \beta, \rho)^2}\log\prn*{T/\delta}},\]
and we conclude via Union Bound.
\end{proof}
\subsection{Additional Helper Results}
Here we establish many of the results we used in the main discretization proofs.

\begin{lemma}\label{lem:upperboundgradFholder}
Suppose $F$ satisfies \pref{ass:holderF}. Then for all $\vecW \in \mathbb{R}^d$, 
\[ \nrm*{\grad F(\vecW)} \le L\max(1,\nrm*{\vecW^{\star}})^s \prn*{\nrm*{\vecW}^s+1},\]
where $\vecW^{\star}$ is any global minima of $F$. Moreover, if \pref{ass:stochasticgradcontrol} holds, the above also holds for the stochastic gradient estimates $\nrm*{\grad f(\vecW;\vecZ)}$.
\end{lemma}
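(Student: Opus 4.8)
The plan is to use first-order optimality at a global minimizer to make the gradient vanish there, combine this with the assumed H\"older continuity of $\grad F$, and then convert the resulting bound into the stated form via the elementary inequality $(a+b)^{s}\le a^{s}+b^{s}$ valid for $0\le s\le 1$.

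Concretely, I would first fix any global minimizer $\vecW^{\star}$ of $F$ (which exists by standing assumption) and observe that, as $F$ is differentiable, $\grad F(\vecW^{\star})=\vecOrigin$. Applying \pref{ass:holderF} with $\mathbf{u}=\vecW$, $\mathbf{v}=\vecW^{\star}$ gives $\nrm*{\grad F(\vecW)}=\nrm*{\grad F(\vecW)-\grad F(\vecW^{\star})}\le L\nrm*{\vecW-\vecW^{\star}}^{s}$. Then I would bound $\nrm*{\vecW-\vecW^{\star}}^{s}$: by the triangle inequality $\nrm*{\vecW-\vecW^{\star}}\le\nrm*{\vecW}+\nrm*{\vecW^{\star}}$, and since $0\le s\le 1$ the map $t\mapsto t^{s}$ is subadditive on $\mathbb{R}_{\ge 0}$, so $\nrm*{\vecW-\vecW^{\star}}^{s}\le\nrm*{\vecW}^{s}+\nrm*{\vecW^{\star}}^{s}$. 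Finally, since $\max(1,\nrm*{\vecW^{\star}})^{s}\ge 1$ and $\max(1,\nrm*{\vecW^{\star}})^{s}\ge\nrm*{\vecW^{\star}}^{s}$, we get $\nrm*{\vecW}^{s}+\nrm*{\vecW^{\star}}^{s}\le\max(1,\nrm*{\vecW^{\star}})^{s}\prn*{\nrm*{\vecW}^{s}+1}$; chaining the three displays yields the claimed bound for $\grad F$.

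For the stochastic gradient estimates, under \pref{ass:stochasticgradcontrol} each $f(\cdot;\vecZ)$ has a gradient that is $L$-H\"older (\pref{ass:holderF}, possibly with larger $L$) and satisfies \pref{ass:weakdissipation} (possibly with larger $b$, smaller $m$). As noted in the discussion following \pref{ass:weakdissipation}, dissipation forces $f(\cdot;\vecZ)$ to be strictly increasing along every ray leaving $\ball\prn*{\vecOrigin,(b/m)^{1/\gamma}}$, so $f(\cdot;\vecZ)$ attains a global minimum at some point $\vecW^{\star}_{\vecZ}$ with $\nrm*{\vecW^{\star}_{\vecZ}}\le(b/m)^{1/\gamma}$, which being an interior minimizer satisfies $\grad f(\vecW^{\star}_{\vecZ};\vecZ)=\vecOrigin$. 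Repeating the two steps above with $\vecW^{\star}$ replaced by $\vecW^{\star}_{\vecZ}$ gives the same estimate, with constants uniform in $\vecZ$ since $\nrm*{\vecW^{\star}_{\vecZ}}$ is bounded uniformly.

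The argument is entirely routine; the only point needing a moment's care is the stochastic case, namely confirming that weak dissipation indeed guarantees an interior global minimizer of $f(\cdot;\vecZ)$ so that its gradient vanishes there — but this follows directly from the ray-monotonicity observation already recorded after \pref{ass:weakdissipation}, so there is no real obstacle.
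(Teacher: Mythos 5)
Your bound for $\grad F$ is correct and follows exactly the paper's own chain: $\grad F(\vecW^{\star})=\vecOrigin$, the H\"older estimate, the triangle inequality, and an elementary inequality to reach $L\max(1,\nrm*{\vecW^{\star}})^{s}\prn*{\nrm*{\vecW}^{s}+1}$ (the paper packages the last step as $(az+b)^s\le\max(a,b)^s(z+1)^s$ together with subadditivity, which is the same computation as yours). For the stochastic part the paper merely declares the extension ``immediate,'' so your explicit route is the more careful one: weak dissipation of $f(\cdot;\vecZ)$ makes $f$ increasing along rays outside $\ball\prn*{\vecOrigin,(b/m)^{1/\gamma}}$, hence a global minimizer $\vecW^{\star}_{\vecZ}$ exists in that ball, and since it is an unconstrained global minimizer (interiority is not even needed) its gradient vanishes, after which the same two steps apply. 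Two small caveats: the constant you obtain is $L\max(1,\nrm*{\vecW^{\star}_{\vecZ}})^{s}$ with the per-sample $L,m,b$, not literally $\max(1,\nrm*{\vecW^{\star}})^{s}$ for the minimizer of $F$ as the lemma is phrased — this is harmless given the paper's convention that the per-sample constants may be worse, but it is worth stating; and the ray argument needs $\gamma>0$ with $m r^{\gamma}>b$ for large $r$, which is automatic when $s>0$ (since $\gamma\ge 2s$) but degenerates in the $s=\gamma=0$ corner, a case the paper itself handles by the trivial bound $\nrm*{\grad F(\vecW)}\le L$ and glosses over for $f(\cdot;\vecZ)$.
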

\begin{proof}
Note $\grad F(\vecW^{\star})=0$. By Triangle Inequality and \pref{ass:holderF},
\begin{align*}
\nrm*{\grad F(\vecW)} &= \nrm*{\grad F(\vecW)-\grad F(\vecW^{\star})} \\
&\le L\nrm*{\vecW-\vecW^{\star}}^s\\
&\le L\prn*{\nrm*{\vecW}+\nrm*{\vecW^{\star}}}^s \\
&\le L\max(1,\nrm*{\vecW^{\star}})^s \prn*{\nrm*{\vecW}^s+1}.
\end{align*}
The last two steps used the following elementary inequalities:
\[ (az+b)^s \le \max(a,b)^s (z+1)^s \FORALLTEXT a, b, z\ge 0.\]
\[(z+1)^{1/s'} \le z^{1/s'} + 1 \iff z+1 \le (z^{1/s'} + 1)^{s'} \FORALLTEXT s' \ge 1.\]
The extension to stochastic gradients given \pref{ass:stochasticgradcontrol} is immediate.
\end{proof}

The following result is used to control the values of $F$ using \pref{ass:holderF}.
\begin{lemma}\label{lem:upperboundFholder}
Suppose $F$ satisfies \pref{ass:holderF}. Then for all $\vecW \in \mathbb{R}^d$, 
\[ F(\vecW) \le L\nrm*{\vecW-\vecW^{\star}}^{s+1}.\]
\end{lemma}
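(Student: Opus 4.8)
\textbf{Proof proposal for \pref{lem:upperboundFholder}.}

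The plan is to integrate the \Holder{} continuity bound on $\nabla F$ along the straight-line segment from $\vecW^{\star}$ to $\vecW$, using that $F(\vecW^{\star}) = 0$ and $\nabla F(\vecW^{\star}) = \vecOrigin$ (the latter since $\vecW^{\star}$ is a global minimum of the differentiable function $F$). Concretely, write $\gamma(\alpha) = \vecW^{\star} + \alpha(\vecW - \vecW^{\star})$ for $\alpha \in [0,1]$, so that $F(\vecW) - F(\vecW^{\star}) = \int_0^1 \tri*{\nabla F(\gamma(\alpha)), \vecW - \vecW^{\star}} \DERIV \alpha$ by the fundamental theorem of calculus applied to $\alpha \mapsto F(\gamma(\alpha))$.

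The key steps, in order: first, since $F(\vecW^{\star}) = 0$, the left-hand side is just $F(\vecW)$. Second, subtract off $\nabla F(\vecW^{\star}) = \vecOrigin$ inside the integrand (which changes nothing) to get $F(\vecW) = \int_0^1 \tri*{\nabla F(\gamma(\alpha)) - \nabla F(\vecW^{\star}), \vecW - \vecW^{\star}} \DERIV \alpha$. Third, apply Cauchy--Schwarz pointwise and then \pref{ass:holderF} with $\mathbf{u} = \gamma(\alpha)$ and $\mathbf{v} = \vecW^{\star}$: since $\gamma(\alpha) - \vecW^{\star} = \alpha(\vecW - \vecW^{\star})$, we have $\nrm*{\nabla F(\gamma(\alpha)) - \nabla F(\vecW^{\star})} \le L \nrm*{\alpha(\vecW - \vecW^{\star})}^s = L\alpha^s \nrm*{\vecW - \vecW^{\star}}^s$. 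Fourth, combine to bound the integrand by $L \alpha^s \nrm*{\vecW - \vecW^{\star}}^{s+1}$ and integrate: $\int_0^1 \alpha^s \DERIV \alpha = \frac{1}{s+1} \le 1$ for $s \ge 0$, which yields $F(\vecW) \le L \nrm*{\vecW - \vecW^{\star}}^{s+1}$ as claimed.

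There is essentially no obstacle here; the only mild point worth stating carefully is that $F$ being differentiable with a global minimum at $\vecW^{\star}$ forces $\nabla F(\vecW^{\star}) = \vecOrigin$, which is what lets the \Holder{} bound produce the extra power of $\nrm*{\vecW - \vecW^{\star}}$ rather than just a linear-in-distance bound. One should also note this is exactly the standard descent-lemma-type computation specialized to a \Holder{}-continuous gradient, and the factor $\frac{1}{s+1}$ is discarded since the statement only requires the clean bound with constant $L$. This lemma is then used (together with \pref{lem:extensionofgaussianintegral}) in the proof of \pref{lem:measureoflargeF} to lower bound the partition function $Z$.
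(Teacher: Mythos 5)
Your proposal is correct and matches the paper's own argument essentially verbatim: both integrate along the segment from $\vecW^{\star}$ to $\vecW$, use $F(\vecW^{\star})=0$ and $\grad F(\vecW^{\star})=0$, apply Cauchy--Schwarz together with \pref{ass:holderF}, and discard the $\frac{1}{s+1}$ factor. No gaps.
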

\begin{proof}
The proof is very similar to Lemma 3.4 of \citet{bubeck2015convex}. Let $\vecW^{\star}$ be any global minima of $F$, thus $F(\vecW^{\star})=0$ and $\grad F(\vecW^{\star})=0$. We see from calculus and Cauchy-Schwartz that
\begin{align*}
F(\vecW)&=\abs*{F(\vecW)-F(\vecW^{\star}) - \tri*{\grad F(\vecW^{\star}), \vecW-\vecW^{\star}}} \\
&= \abs*{\int_{t=0}^1 \tri*{\grad F(\vecW^{\star}+t(\vecW-\vecW^{\star})) - \grad F(\vecW^{\star}), \vecW-\vecW^{\star}} \DERIV t} \\
&\le \abs*{\int_{t=0}^1 \nrm*{\grad F(\vecW^{\star}+t(\vecW-\vecW^{\star})) - \grad F(\vecW^{\star})}\nrm*{\vecW-\vecW^{\star}} \DERIV t} \\
&\le \abs*{\int_{t=0}^1 Lt^s \nrm*{\vecW-\vecW^{\star}}^s\nrm*{\vecW-\vecW^{\star}} \DERIV t } \\
&\le L\nrm*{\vecW-\vecW^{\star}}^{s+1},
\end{align*}
where we apply Cauchy-Schwartz to obtain the first inequality and \pref{ass:holderF} for the second.
\end{proof}

We also need the following simple integral to prove \pref{lem:measureoflargeF}.
\begin{lemma}\label{lem:extensionofgaussianintegral}
We have for any $0 \le s \le 1$ and $M \ge 0$ that 
\[ \int_{\mathbb{R}^d} e^{-M\nrm*{\vecW}^{s+1}} \DERIV \vecW=\frac{2\pi^{d/2}}{\Gamma(d/2)} \cdot \frac1{s+1} \cdot M^{-\frac{d}{s+1}}\cdot \Gamma\prn*{\frac{d}{s+1}}.\]
\end{lemma}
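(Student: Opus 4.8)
The plan is to reduce the $d$-dimensional integral to a one-dimensional radial integral via spherical coordinates, and then to recognize the radial integral as a Gamma function after a single change of variables. Throughout I assume $M>0$; for $M=0$ the integral diverges and this edge case is never actually invoked (the formula should be read with $M>0$), and the cases $s=0,1$ require nothing special.

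First I would pass to spherical coordinates: writing $\vecW = r\vecU$ with $r = \nrm*{\vecW}\ge 0$ and $\vecU \in \mathcal{S}^{d-1}$, the Lebesgue volume element factors as $\DERIV\vecW = r^{d-1}\,\DERIV r\,\DERIV\sigma(\vecU)$, where $\sigma$ is the surface measure on the unit sphere. Since the integrand $e^{-M\nrm*{\vecW}^{s+1}}$ depends only on $r$, Tonelli's theorem (applicable since the integrand is non-negative) lets us separate the angular and radial parts, and the angular integral contributes exactly the surface area of $\mathcal{S}^{d-1}$, which is the standard identity $\sigma(\mathcal{S}^{d-1}) = \frac{2\pi^{d/2}}{\Gamma(d/2)}$. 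Hence
\[ \int_{\mathbb{R}^d} e^{-M\nrm*{\vecW}^{s+1}}\,\DERIV\vecW = \frac{2\pi^{d/2}}{\Gamma(d/2)}\int_0^\infty e^{-Mr^{s+1}} r^{d-1}\,\DERIV r. \]

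Next I would evaluate the radial integral by substituting $u = Mr^{s+1}$, so that $r = (u/M)^{1/(s+1)}$ and $\DERIV r = \tfrac1{s+1}M^{-1/(s+1)}u^{1/(s+1)-1}\,\DERIV u$. Collecting powers of $u$, the exponent becomes $\tfrac{d-1}{s+1} + \tfrac1{s+1} - 1 = \tfrac{d}{s+1}-1$, while the powers of $M$ combine to $M^{-(d-1)/(s+1)}\cdot M^{-1/(s+1)} = M^{-d/(s+1)}$. Therefore
\[ \int_0^\infty e^{-Mr^{s+1}} r^{d-1}\,\DERIV r = \frac1{s+1}\,M^{-\frac{d}{s+1}}\int_0^\infty e^{-u}\, u^{\frac{d}{s+1}-1}\,\DERIV u = \frac1{s+1}\,M^{-\frac{d}{s+1}}\,\Gamma\!\prn*{\frac{d}{s+1}}, \]
by the definition of the Gamma function (the integral converges since $\tfrac{d}{s+1}>0$). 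Multiplying this by $\sigma(\mathcal{S}^{d-1})$ yields the claimed identity.

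There is no genuine obstacle here; this is an entirely routine computation. The only points deserving a word of care are convergence of the radial integral near $r=0$ (guaranteed by $d\ge 1$, so $r^{d-1}$ is locally integrable) and near $r=\infty$ (guaranteed by $M>0$ and $s+1>0$), the legitimacy of separating the angular and radial integrals (immediate from non-negativity via Tonelli), and quoting the standard value of $\sigma(\mathcal{S}^{d-1})$.
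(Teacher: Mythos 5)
Your proof is correct and follows essentially the same route as the paper's: spherical coordinates giving the surface-area factor $\frac{2\pi^{d/2}}{\Gamma(d/2)}$, then a power substitution reducing the radial integral to a Gamma function (the paper substitutes $u=r^{s+1}$ and quotes $M^{-t}\Gamma(t)=\int_0^\infty e^{-Mu}u^{t-1}\,\DERIV u$, while you fold $M$ into the substitution — an immaterial difference). Your remark that the identity requires $M>0$ (despite the statement's ``$M\ge 0$'') is a fair observation, and harmless since the lemma is only invoked with $M=\beta L\ge 1$.
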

\begin{proof}
The surface area of $\cS^{d-1}$ is $\frac{2\pi^{d/2}}{\Gamma(d/2)}$, which scales by $r^{d-1}$ for an arbitrary radius $r$. Consider partitioning $\mathbb{R}^d$ into spheres of radius $r$: upon making this change of variables, which formally is $\DERIV \vecW = \frac{2\pi^{d/2}}{\Gamma(d/2)} r^{d-1} \DERIV r$, we obtain
\[ \int_{\mathbb{R}^d} e^{-M\nrm*{\vecW}^{s+1}} \DERIV \vecW=\frac{2\pi^{d/2}}{\Gamma(d/2)}\int_{0}^\infty e^{-Mr^{s+1}} r^{d-1} \DERIV r. \]
Let $u = r^{s+1}$, therefore $r = u^{\frac1{s+1}}$ and $\DERIV r = \frac1{s+1} u^{-\frac{s}{s+1}} \DERIV u$. Thus
\begin{align*}
\int_{\mathbb{R}^d} e^{-M\nrm*{\vecW}^{s+1}} \DERIV \vecW&=\frac{2\pi^{d/2}}{\Gamma(d/2)} \cdot \frac1{s+1} \int_{0}^\infty e^{-M u} u^{\frac{d-1-s}{s+1}} \DERIV u \\
&= \frac{2\pi^{d/2}}{\Gamma(d/2)} \cdot \frac1{s+1} \cdot M^{-\frac{d}{s+1}}\cdot \Gamma\prn*{\frac{d}{s+1}}.
\end{align*}
Here, the last equality is a well known integral essentially following from definition of the Gamma function, specifically
\[ M^{-t} \Gamma(t) = \int_0^{\infty} e^{-Mu} u^{t-1} \DERIV u.\]
It follows since $d \ge 1 \ge s$ and $s \ge 0$, hence $\frac{d-1-s}{s+1}=\frac{d}{s+1}-1 \ge -1$, so we may apply these results regarding the Gamma function.
\end{proof}

The last lemma is used to upper bound $z^p+1$ for all $z \ge 0$ and any $p \ge 0$.
\begin{lemma}\label{lem:upperboundpower}
For all $z \ge 0$ and any $p \ge 0$, $z^p+1 \le 2(z+1)^p$. 
\end{lemma}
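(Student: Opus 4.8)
\textbf{Proof proposal for \pref{lem:upperboundpower}.}

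The plan is to reduce the inequality $z^p + 1 \le 2(z+1)^p$ to a convexity (or monotonicity) argument, handling the regimes $p \le 1$ and $p \ge 1$ separately since the function $t \mapsto t^p$ changes its curvature behavior at $p=1$. First I would dispose of the trivial edge cases: when $z = 0$ both sides equal $1$ (since $0^p = 0$ for $p>0$ and $0^0=1$, and in either case $z^p+1 \le 2 = 2(z+1)^p$), and when $p=0$ both sides equal $2$. So assume $z > 0$ and $p > 0$ from now on.

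For the case $0 \le p \le 1$: here $t \mapsto t^p$ is concave and subadditive on $\mathbb{R}_{\ge 0}$, so $z^p + 1 = z^p + 1^p \ge (z+1)^p$ in the wrong direction — instead I would use that $z^p \le (z+1)^p$ and $1 \le (z+1)^p$ (both because $t \mapsto t^p$ is nondecreasing and $z+1 \ge z$, $z+1 \ge 1$), hence $z^p + 1 \le 2(z+1)^p$ directly. For the case $p \ge 1$: the cleanest route is to note $z^p \le (z+1)^p$ and $1 \le (z+1)^p$ in exactly the same way — this monotonicity argument in fact works for \emph{all} $p \ge 0$ and $z \ge 0$ with no case split at all, since $z+1 \ge \max(z,1) \ge 0$ and $t \mapsto t^p$ is nondecreasing on $\mathbb{R}_{\ge 0}$. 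Adding the two bounds gives the claim.

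So the actual proof is a two-line monotonicity argument:
\[ z^p + 1 \le (z+1)^p + (z+1)^p = 2(z+1)^p, \]
valid for every $z \ge 0$ and every $p \ge 0$, using only that $z+1 \ge z \ge 0$, $z+1 \ge 1 \ge 0$, and that $x \mapsto x^p$ is nondecreasing on $[0,\infty)$ when $p \ge 0$. There is essentially no obstacle here; the only thing to be careful about is the convention $0^0 = 1$ and $0^p = 0$ for $p>0$ in the degenerate corner $z=0$, which I would mention in one sentence. The lemma is invoked in \pref{lem:thirdordersmoothfromregularity} precisely to absorb a sum $\sum_j c_j z^{d_j}$ of monomials into a single term of the form $A(z+1)^p$, so the constant $2$ (rather than something sharper) is all that is needed and no refinement is pursued.
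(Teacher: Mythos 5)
Your proof is correct, and it is genuinely simpler than the paper's. You bound each term separately, $z^p \le (z+1)^p$ and $1 \le (z+1)^p$, using only that $x \mapsto x^p$ is nondecreasing on $[0,\infty)$ for $p \ge 0$, and add; no case split and no calculus is needed. The paper instead splits at $p=1$: for $p \ge 1$ it differentiates $f(z) = (z+1)^p - (z^p+1)$ to get the stronger bound $z^p+1 \le (z+1)^p$, and for $0 \le p < 1$ it minimizes the ratio $(z+1)^p/(z^p+1)$ over $z \ge 0$ (the minimum is $2^{p-1}$ at $z=1$), which yields the sharper constant $2^{1-p}$ before relaxing to $2$. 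What the paper's route buys is sharpness of the constant, which is then discarded; what your route buys is brevity and the observation that the constant $2$ requires nothing beyond monotonicity. Since the lemma is only used in \pref{lem:thirdordersmoothfromregularity} to absorb a sum of monomials into $A(z+1)^p$, the constant $2$ is all that matters, so your argument fully suffices. Your handling of the edge cases ($z=0$, $p=0$, and the $0^0$ convention) is careful but not strictly necessary, since the monotonicity inequality already covers them under the standard conventions.
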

\begin{proof}
First suppose $p \ge 1$. Here we show $z^p+1 \le (z+1)^p$, which clearly suffices. Letting $f(z) = (z+1)^p-(z^p+1)$, we see $f'(z) \ge 0$ always. Therefore $f(z) \ge f(0)=0$, proving this case.

Now suppose $0 \le p < 1$. Let $f(z)=\frac{(z+1)^p}{z^p+1}$. Then, 
\[ f'(z) = \frac{p(z+1)^{p-1} \cdot (z^p+1)-(z+1)^p \cdot pz^{p-1}}{(z^p+1)^2}=\frac{p(z+1)^{p-1}\prn*{1-z^{p-1}}}{(z^p+1)^2}.\]
Therefore $f'(z) \le 0$ for $z \in [0,1]$ and $f'(z) \ge 0$ for $z \in [1,\infty)$, so $f(z)$ is minimized on $[0,\infty)$ when $z=1$. Hence, $f(z) \ge f(1) = 2^{p-1}$. Thus, $z^p+1 \le 2^{1-p} (z+1)^p \le 2(z+1)^p$ as desired.
\end{proof}

\clearpage 

\end{document}